\documentclass{article}

\usepackage{microtype}
\usepackage{graphicx}
\usepackage{subcaption}
\usepackage{booktabs} 
\usepackage{array}

\usepackage{hyperref}

\usepackage[accepted]{icml2026}
\makeatletter
\renewcommand{\ICML@appearing}{}
\makeatother

\usepackage{amsmath}
\usepackage{amssymb}
\usepackage{mathtools}
\usepackage{amsthm}

\usepackage{microtype}
\usepackage{graphicx}
\usepackage{subcaption}
\usepackage{booktabs} 

\usepackage{hyperref}
\usepackage{url}
\usepackage{amsthm}

\usepackage{graphicx}
\usepackage[listings]{tcolorbox}
\usepackage{listings}
\usepackage{xcolor}
\usepackage{caption}

\usepackage[capitalize,noabbrev]{cleveref}

\theoremstyle{plain}
\newtheorem{theorem}{Theorem}[section]

\newtheorem{lemma}[theorem]{Lemma}

\theoremstyle{definition}

\theoremstyle{remark}

\usepackage[textsize=tiny]{todonotes}

\icmltitlerunning{HLA}

\usepackage{amsmath,amsfonts,bm}

\def\eqref#1{equation~\ref{#1}}

\def\1{\bm{1}}

\def\vx{{\bm{x}}}

\def\mX{{\bm{X}}}

\DeclareMathAlphabet{\mathsfit}{\encodingdefault}{\sfdefault}{m}{sl}
\SetMathAlphabet{\mathsfit}{bold}{\encodingdefault}{\sfdefault}{bx}{n}

\newcommand{\R}{\mathbb{R}}

\newcommand{\softmax}{\mathrm{softmax}}

\usepackage{algorithmic}
\newcommand{\If}[1]{\IF{#1}}

\newcommand{\EndIf}{\ENDIF}
\newcommand{\BlankLine}{\vspace{0.5\baselineskip}}
\newcommand{\KwIn}[1]{\STATE \textbf{Input:} #1}
\newcommand{\KwOut}[1]{\STATE \textbf{Output:} #1}
\newcommand{\Return}[1]{\STATE \textbf{Return} #1}

\newcommand{\setR}{\mathbb{R}}

\newcommand{\calA}{\mathcal{A}}
\newcommand{\calB}{\mathcal{B}}
\newcommand{\calC}{\mathcal{C}}
\newcommand{\calO}{\mathcal{O}}
\newcommand{\calT}{\mathcal{T}}

\definecolor{commandcolor}{rgb}{0.0, 0.5, 0.0}
\definecolor{keywordcolor}{rgb}{0.0, 0.0, 0.6}
\definecolor{stringcolor}{rgb}{0.6, 0.0, 0.0}
\definecolor{boxbg}{rgb}{0.97, 0.97, 0.97}

\lstdefinestyle{pytorch}{
    backgroundcolor=\color{boxbg},
    frame=single,
    language=Python,
    basicstyle=\ttfamily\small,
    keywordstyle=\color{keywordcolor}\bfseries,
    stringstyle=\color{stringcolor},
    commentstyle=\color{gray}\itshape,
    emph={[1]torch,nn,optim,DataLoader,Tensor},
    emphstyle={[1]\color{commandcolor}\bfseries},
    emph={[2]einsum,view},
    emphstyle={[2]\color{keywordcolor}\bfseries},
    commentstyle=\color{orange}\itshape,  
    showstringspaces=false,
    breaklines=true,
    tabsize=4,
    numbers=left,                
    numberstyle=\tiny\color{gray}, 
    numbersep=5pt               
}

\newtcblisting{pytorchbox}[1][]{
   colback=boxbg,
   colframe=black,
   listing only,
   listing options={style=pytorch},
   #1
}

\begin{document}

\twocolumn[
  \icmltitle{HLA: Hadamard Linear Attention}



  \icmlsetsymbol{equal}{*}

  \begin{icmlauthorlist}
    \icmlauthor{Hanno Ackermann}{}
    \icmlauthor{Hong Cai}{}
    \icmlauthor{Mohsen Ghafoorian}{}
    \icmlauthor{Amirhossein Habibian}{} \\
    \vspace{0.2cm}
    \icmlauthor{Qualcomm AI Research}{*}
  \end{icmlauthorlist}

  \icmlaffiliation{*}{Qualcomm AI Research is an initiative of Qualcomm Technologies, Inc.}

  \icmlcorrespondingauthor{Hanno Ackermann}{hackerman@qualcomm.com}

  \icmlkeywords{Machine Learning, ICML}

  \vskip 0.3in
]



\printAffiliationsAndNotice{}  









\begin{abstract}
The attention mechanism is an important reason for the success of transformers. It relies on computing pairwise relations between tokens. To reduce the high computational cost of standard quadratic attention, linear attention has been proposed as an efficient approximation. It employs kernel functions that are applied independently to the inputs before the pairwise similarities are calculated. That allows for an efficient computational procedure which, however, amounts to a low-degree rational function approximating softmax.

We propose Hadamard Linear Attention (HLA). Unlike previous works on linear attention, the nonlinearity in HLA is not applied separately to queries and keys, but, analogously to standard softmax attention, after the pairwise similarities have been computed. It will be shown that the proposed nonlinearity amounts to a higher-degree rational function to approximate softmax. 
An efficient computation scheme for the proposed method is derived that is similar to that of standard linear attention. In contrast to other approaches, no time-consuming tensor reshaping is necessary to apply the proposed algorithm. The effectiveness of the approach is demonstrated by applying it to a large diffusion transformer model for video generation, an application that involves very large amounts of tokens.
\end{abstract}



\section{Introduction}
The transformer architecture has demonstrated remarkable success across a wide range of domains, for instance, question answering~\cite{sima2025drivelmdrivinggraphvisual}, reasoning~\cite{wei2022cot}, and even tasks such as 3D reconstruction~\cite{wang2024dust3rgeometric3dvision}. An important reason for this success is the attention mechanism which facilitates information exchange between the elements of an input sequence or set. An important limitation of the standard attention mechanism is its quadratic complexity with respect to the length of the input sequence. Although recent advances~\cite{dao2022flashattention, dao2023flashattention2, shah2024flashattention3fastaccurateattention} have mitigated the quadratic memory complexity, the computational complexity remains quadratic. Linear attention mechanisms have been proposed as scalable alternatives. However, despite their improved efficiency, linear attention-based transformers~\cite{pmlr-v119-katharopoulos20a, schlag2021lineartransformerssecretlyfast} exhibit reduced performance compared to those that use standard attention.

\begin{figure*}[ht]
    \centering
    \includegraphics[width=17.4cm,height=3cm]{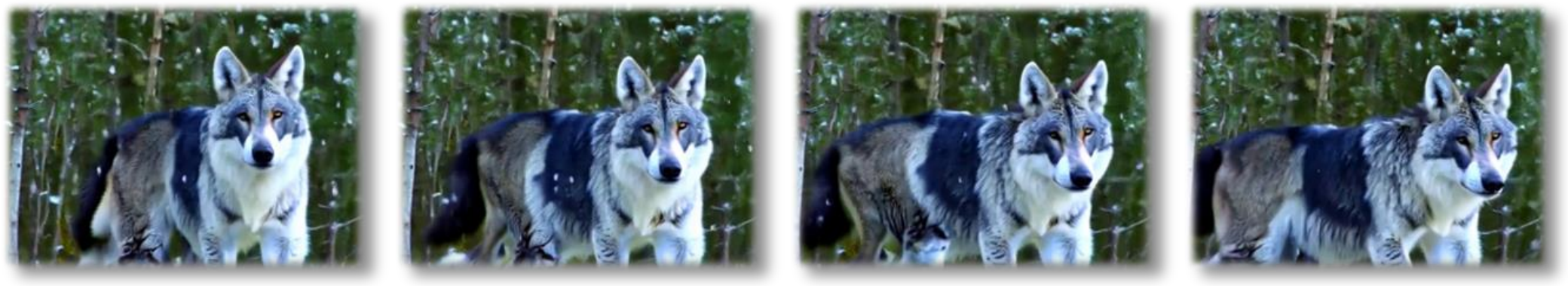}
    \includegraphics[width=17.4cm,height=3cm]{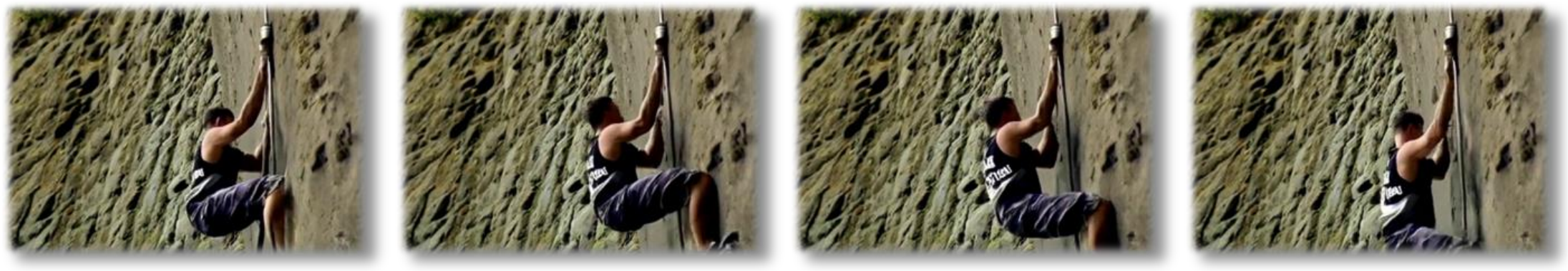}
    \includegraphics[width=17.4cm,height=3cm]{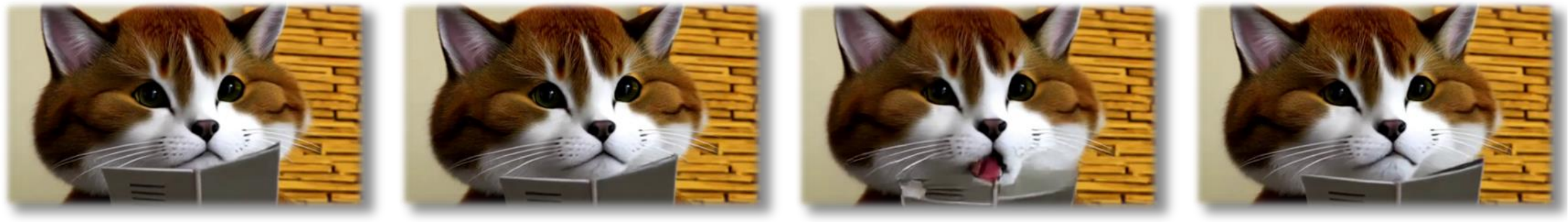}
    \captionof{figure}{Results generated using the proposed video diffusion transformer (\textbf{HLA-3F-R1-10}, see sec.~\ref{sec:implementation.details} for more details) using Hadamard Linear Attention.}
    \label{fig:teaser_grid}
\end{figure*}

Attention scores due to softmax attention typically have very few large values, whereas the majority of its outputs are tiny. 
It has been confirmed that these matrices  have very high ranks~\cite{zhang2025slasparsitydiffusiontransformers}. 
Most works on linear attention focus on approximating softmax by separable kernels. This causes a low-rank constraint which prevents linear attention to precisely approximate the output of softmax attention~\cite{fan2025breakinglowrankdilemmalinear}. As pointed out by~\cite{letourneau2024padreunifyingpolynomialattention}, this constraint amounts to a low-degree rational function. 
It has been proposed to augment the linear attention model with extra terms to 
mitigate the low-rank constraint~\cite{fan2025breakinglowrankdilemmalinear}, yet this reduces the efficiency of linear attention. A different approach for enhancing the performance of linear attention is gated delta networks~\cite{schlag2021lineartransformerssecretlyfast, yang2025gateddeltanetworksimproving}. This algorithm attempts to suppress less relevant token interactions by decay functions, thereby prioritizing more informative  dependencies within the input sequence. 
Recently, hybrid attention mechanisms have been proposed to address this issue by combining both softmax-based quadratic attention and linear attention~\cite{zhang2025slasparsitydiffusiontransformers}. Although this is a promising approach, it in fact still has quadratic complexity. 

In standard linear attention models, nonlinear transformations, ie~the separable kernels, are applied \emph{prior} to computing pairwise interactions, resulting in a low-rank approximation of the full attention matrix. This contrasts with softmax-based attention, where the nonlinearity is applied \emph{after} the pairwise interactions have been computed. 
In this work, we introduce 
a novel nonlinearity for linear attention, ie, Hadamard Linear Attention (HLA). 
Unlike existing approaches, the proposed nonlinearity can also be applied \emph{after} the pairwise interactions have been computed, thereby aligning more closely with the behavior of standard softmax attention. We derive an efficient computation scheme for the proposed attention mechanism, preserving the scalability benefits of linear attention. 

An important advantage of the proposed algorithm is that it can be directly applied to the sequence, unlike other algorithms~\cite{zhang2025slasparsitydiffusiontransformers, letourneau2024padreunifyingpolynomialattention} which first require the reshape the tensor. While this reshaping operation is not measured by metrics such as the number of floating point operations (FLOPs), it may cause a higher latency especially on memory-constrained platforms\footnote{When programming, we may stumble over this problem when we experience the error message \textit{cannot backpropagate through non-contiguous tensor}. Which operations require a contiguously arranged tensor depends on the capabilities of the specific hardware.}. Furthermore, Hadamard Linear Attention considers the entire sequence through its attention mechanism whereas \cite{letourneau2024padreunifyingpolynomialattention} only has a limited view via its receptive field.

Lastly, we propose a modification of gated attention: This method has been proposed to re-weight the outputs of the attention by a gating function that uses a sigmoid-normalized function of the inputs~\cite{qiu2025gatedattentionlargelanguage}. The authors report that this enhances performance and mitigates attention sinks. We could not confirm performance improvements in our applications, but propose a simplified variant of this technique that yields performance improvements.

This summarizes our contributions:
\begin{itemize}
    \item We propose a novel algorithm for linear attention, Hadamard Linear Attention (HLA), which applies a nonlinear transform \emph{after} pair-wise similarities have been computed.
    \item The higher polynomial degree of HLA compared to linear attention makes the model more expressive and allows for better results.
    \item The procedure is based on standard, hardware-friendly operations like \texttt{einsum}. It neither requires expensive operations like softmax nor dynamic indexing.
    \item On the challenging task of video generation which involves extremely long sequence lengths, our method shows on-par performance when comparing to state-of-the-art video diffusion models that use standard attention, while using up to 90\% less compute. 
    \item The results demonstrate that the novel attention mechanism can generate high-quality videos with significant dynamics without techniques such as windowing or the delta rule. 
    \item Moreover, our model only requires 8 H100s to train. %
\end{itemize}


\section{Related Works}

Linear attention has been proposed to solve the quadratic memory requirement and computation complexity that standard softmax-based attention has~\cite{shen2024efficientattentionattentionlinear, pmlr-v119-katharopoulos20a}. It has been noted that algorithms relying on linear attention often perform poorly compared to algorithms using standard attention. The authors of~\cite{schlag2021lineartransformerssecretlyfast} point out that linear attention is equivalent to fast weight programmers~\cite{schmidhuber92fastweight}. The same authors also propose to use a decay factor to hinder retention of irrelevant information over a sequence via the Delta rule. Since the latency of this algorithm increases with the sequence length, a chunked version of this algorithm has been proposed in~\cite{qiu2025gatedattentionlargelanguage}. For LLM training, a similar idea has been proposed in \cite{sun2023retentivenetworksuccessortransformer}.

\cite{peng2021randomfeatureattention, choromanski2022rethinkingattentionperformers, qin2022cosformerrethinkingsoftmaxattention, meng2025polaformer} propose how to better approximate the exponential function in the context of linear attention. The low-rank constraint inherent to linear attention is a likely cause of the poor performance of linear attention. A solution has been proposed in~\cite{fan2025breakinglowrankdilemmalinear}. Other approaches have been proposed as well, for instance based on maximal coding rate reduction~\cite{wu2024tokenstatisticstransformerlineartime} or locality sensitive hashing~\cite{han2023hyperattentionlongcontextattentionnearlinear}.

A different approach to reducing the quadratic complexity of standard attention is State Space Methods (SSMs). They model the temporal change by a set of first-order differential equations which are recurrently applied to the input sequence. They can perform well on long-range tasks~\cite{gu2022efficientlymodelinglongsequences} and can match or even transformers in some tasks~\cite{gu2024mambalineartimesequencemodeling}. 
Connections between linear attention and SSMs were revealed in  \cite{dao2024transformersssmsgeneralizedmodels}. Using the delta rule as in~\cite{schlag2021lineartransformerssecretlyfast} can also benefit SSMs~\cite{yang2025gateddeltanetworksimproving}. Drawing on ideas from SSMs, the authors of~\cite{egorov2025myosotisstructuredcomputationattention} improve a standard attention attention mechanism without sacrificing performance.

Linear attention has been used to improve diffusion networks, for instance in~\cite{xie2024sanaefficienthighresolutionimage, zhu2024digscalableefficientdiffusion} for images. Linear attention improved by the delta rule has also been used in diffusion algorithms for video generation~\cite{chen2025sanavideoefficientvideogeneration, kimiteam2025kimilinearexpressiveefficient}. The authors of~\cite{ye2025differentialtransformer} improve standard attention by performing it twice and subtracting the two results. 

Recently, \cite{zhang2025slasparsitydiffusiontransformers} and \cite{ghafoorian2025attentionsurgeryefficientrecipe} combined softmax attention with linear attention for video diffusion. This idea is to model large values in the attention matrix by softmax, yet smaller ones by linear attention. 
While these hybrid methods offer good performance, they require dynamical indexing\footnote{Directly selecting one or multiple individual values in a tensor is not supported by some low-power compute units.} and are not softmax-free


\section{Attention}
\label{sec:hla}

\subsection{Softmax Attention}
\label{subsec:quad.att}
Let there be $N$ tokens $\mX = \left[ \vx_0^\top, \ldots, \vx_N^\top \right] \; \in \R^{N\times d} $ where the scalar $d$ indicates the number of elements each of the vectors $\vx_i$ has. These input tokens are mapped to queries $Q$, keys $K$ and values $V$ by right-multiplying with learnable matrices $W_q$, $W_k$, $W_v$. The attention scores are the product $QK^T$ normalized by \emph{softmax}. The attention mechanism used in \cite{vaswani2017attention} uses them to compute a convex combination of the values
\begin{equation}
\softmax \left( \frac{1}{\sqrt{d}} QK^\top \right) V.
\label{eq:sm.att}
\end{equation}

This explanation defines self-attention that compares the tokens $\vx_i$ and $\vx_j$ from the same sequence. The so-called cross-attention uses two different input sequences of tokens.

\subsection{Linear Attention}
\label{subsec:lin.att}
The standard attention mechanism has been used with great success in many works, yet its compute and memory complexity are quadratic with respect to the calculation of the attention scores. Although a solution to quadratic memory complexity has been proposed by \cite{dao2022flashattention, dao2023flashattention2}, these algorithms still require $\mathcal{O}(N^2)$ compute operations.

Linear attention has been proposed~\cite{schlag2021lineartransformerssecretlyfast, pmlr-v119-katharopoulos20a, schlag2021lineartransformerssecretlyfast} as a solution to both problems. Its idea is to approximate the $\exp$ function using a separable kernel
\begin{equation}
\exp \langle \vx_i, \vx_j \rangle \approx K(\vx_i, \vx_j) = \langle \phi(\vx_i), \phi(\vx_j) \rangle.
\label{eq:exp.sep.kernel}
\end{equation}
Due to the separability of the kernels, the order of matrix multiplication is changed for linear attention
\begin{equation}
\phi(Q) \left( \phi(K)^\top \cdot V \right) = \phi(Q) \cdot C
\label{eq:lin.att}
\end{equation}
which avoids explicitly computing the attention scores whereby the complexity is reduced to $\mathcal{O}(N)$ since $\phi(K)^\top \cdot V$ forms the $d \times d$ \emph{context} matrix. Matrix $C$ in \eqref{eq:lin.att} is often called the context. \cite{fan2025lowrankdilemma}~highlights that the performance of linear attention mechanisms may degrade due to mapping queries in a $d$-dimensional space by \eqref{eq:lin.att} instead of the more expressive mapping in an $N$-dimensional space by \eqref{eq:sm.att} due to 
\begin{equation}
    \mbox{rank} \left( \phi(Q) \cdot C \right)
    \leq
    \mbox{min}
    \left(
    \mbox{rank} \left( \phi(Q) \right), \;
    \mbox{rank} \left( C \right)
    \right)
    \label{eq:lin.att.lr}
\end{equation}

Random Fourier Features have been proposed by \cite{peng2021rfa} to model~\eqref{eq:exp.sep.kernel}. Orthogonal Random Fourier Features as well as trigonometric and hyperbolic approximations of $\softmax$ were used by \cite{choromanski2020rethinking}. Some authors follow the idea proposed by \cite{peng2021rfa} to project the input features to a number of dimensions larger than the number of input dimensions. Thereby, the low-rank constraint that other algorithms potentially suffer from can be mitigated or even eliminated.


\section{Hadamard Linear Attention}

\subsection{Definition}

Separable kernel functions are the basis of linear attention and enable compute and memory-efficient attention procedures. 
Since the nonlinearities are applied before the pairwise interactions are computed, linear attention models a low-rank approximation of pairwise interactions~\cite{zhang2025slasparsitydiffusiontransformers}. This is unlike \emph{softmax}-based attention that applies the nonlinearity to the pairwise interactions.

The contribution made in this paper is to propose both a novel type of linear attention mechanism and a nonlinearity that is novel in linear attention. In contrast to existing works, this novel nonlinearity can be applied \emph{after} the pairwise interactions have been computed. The proposed attention mechanism is therefore more similar to standard \emph{softmax} attention. Although it would appear that such a model prohibits an efficient computational procedure to calculate attention, we will prove that our proposed attention mechanism allows for that, similar to linear attention.

Let $\phi_{q}: \; d \rightarrow d_\phi$ and $\phi_{k_f}: \; d \rightarrow d_\phi$ be nonlinear transformations with $f=1,\ldots,F$. We define our attention operator as follows
\begin{equation}
    A = 
    \left( \phi_{q}\left( Q \right) \cdot \phi_{k_1}\left( K \right)^\top \right)
    \odot
    \left( \phi_{q}\left( Q \right) \cdot \phi_{k_2}\left( K \right)^\top \right)
    \odot \cdots
    \label{eq:hla}
\end{equation}
where the symbol $\odot$ indicates the element-wise product, also called Hadamard product. Instead of \emph{softmax} as nonlinearity, \eqref{eq:hla} uses the element-wise products $\prod_{f=1}^F \phi_q \left( q_i \right)^\top \cdot \phi_{k_f}\left( k_j \right)$. The attention update can be computed by right-multiplying $A$ with the value matrix $V$. 
Please see the supplementary material for the definitions of networks $\phi_q$ and $\phi_{k_f}$.

A major advantage of Hadamard Linear Attention over standard attention is that the low-rank constraint becomes less restrictive
\begin{align}
    \mbox{rank} &
    \left( \left(
    \phi_q \left(Q\right) \cdot \phi_{k_1}\left(K\right)^\top\right)
    \odot
    \cdots
    \odot
    \left(\phi_q \left(Q\right) \cdot \phi_{k_F}\left(K\right)^\top\right)
    \right) \nonumber \\
    &
    \leq
    \prod_{f=1}^F \mbox{rank} 
    \left( 
    \phi_q \left(Q\right) \cdot \phi_{k_f}\left(K\right)^\top
    \right).
\end{align}
Since the bottleneck imposed by the low-rank constraint is relaxed, a network based on HLA instead of standard linear attention achieves less information loss at each attention layer. This can lead to better performance.

\subsection{Efficient Attention}
Na\"ively evaluating $A \cdot V$ by \eqref{eq:hla} has quadratic computational complexity. 
We now derive an efficient strategy for computing the attention, which will reveal a procedure analogous to that of standard linear attention. In the following, indicate by a summation symbol without any indices $\sum$ a summation over all elements of the argument.

\begin{lemma}
Given $F=2$ factors for the Hadamard product in \eqref{eq:hla}, we may express the product involving the $d$-dimensional vectors $q$, $r^1$ and $r^2$ as 
\begin{equation}
    \left( q^\top \cdot r^1 \right) \cdot  \left( q^\top \cdot r^2 \right) 
    =
    \sum  
    \underbrace{
    \left( q \cdot q^\top \right)
    }_{d \times d}  
    \odot  
    \underbrace{
    \left( r^1 \cdot {r^2}^\top \right)
    }_{d \times d} 
\end{equation}
where the sum is over all the $\mbox{len}(q)^2$ elements. 

For $F>2$, the relation involving the $d$-dimensional vectors $q$ and $r^f$ generalizes to 
\begin{equation}
    \prod_{f=1}^F \left( q^\top \cdot r^f \right)
    =
    \sum 
    \underbrace{
    \left( q \otimes \cdots \otimes q \right) 
    }_{d \times \cdots \times d}
    \odot 
    \underbrace{
    \left( r^1 \otimes \cdots \otimes r^F \right)
    }_{d \times \cdots \times d}
    \label{label:lemma4.1}
\end{equation}
where the symbol $\otimes$ denotes the outer tensorial product and the summation is over all the elements.
\end{lemma}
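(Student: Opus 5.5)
The plan is to expand every inner product into coordinates and then regroup the terms as an entrywise (Frobenius-type) inner product of two tensors. The case $F=2$ comes first, and the general case follows by the same index bookkeeping, either directly or by induction on $F$.

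For $F=2$, write $q^\top r^1 = \sum_{a=1}^d q_a r^1_a$ and $q^\top r^2 = \sum_{b=1}^d q_b r^2_b$. Since both are finite sums, the product distributes into a double sum,
\begin{equation*}
\left( q^\top r^1 \right)\left( q^\top r^2 \right) = \sum_{a=1}^d \sum_{b=1}^d q_a q_b \, r^1_a r^2_b .
\end{equation*}
Now $q_a q_b$ is the $(a,b)$ entry of $q q^\top$, and $r^1_a r^2_b$ is the $(a,b)$ entry of $r^1 {r^2}^\top$. Hence each summand is the $(a,b)$ entry of $(q q^\top)\odot(r^1 {r^2}^\top)$. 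Summing over all $d^2 = \mbox{len}(q)^2$ index pairs is exactly the unindexed summation $\sum$ defined before the lemma, which gives the first identity.

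For general $F$, the same computation gives
\begin{equation*}
\prod_{f=1}^F \left( q^\top r^f \right) = \prod_{f=1}^F \sum_{a_f=1}^d q_{a_f} r^f_{a_f} = \sum_{a_1,\ldots,a_F} \Bigl(\prod_{f=1}^F q_{a_f}\Bigr)\Bigl(\prod_{f=1}^F r^f_{a_f}\Bigr).
\end{equation*}
Here I use that a product of finite sums equals the sum over all choices of one index per factor. By definition of the outer tensor product, $\prod_f q_{a_f}$ is the $(a_1,\ldots,a_F)$ entry of $q\otimes\cdots\otimes q$, and $\prod_f r^f_{a_f}$ is the corresponding entry of $r^1\otimes\cdots\otimes r^F$. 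Their product is the entry of the Hadamard product of these two order-$F$ tensors, and summing over all $d^F$ multi-indices gives \eqref{label:lemma4.1}.

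As an alternative, one could induct on $F$ using the identity $\sum (A\otimes a)\odot(B\otimes b) = \bigl(\sum A\odot B\bigr)(a^\top b)$. However, the direct multi-index expansion is cleaner.

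There is no real obstacle here. The only step needing care is the interchange of the product of $F$ sums into a single sum over multi-indices $(a_1,\ldots,a_F)$, which is justified by finite distributivity. I would also state explicitly the convention that $\odot$ on order-$F$ tensors is entrywise, and that the unindexed $\sum$ runs over all $d^F$ entries. Note that for $F=2$, $q\otimes q$ coincides with $qq^\top$, so the first statement is the special case $F=2$ of the second.
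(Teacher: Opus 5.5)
Your proof is correct and is essentially the paper's argument. Both write each entry of the Hadamard product of the two rank-one tensors as $\prod_{f} q_{a_f} r^f_{a_f}$ and use finite distributivity to identify the sum over all multi-indices with $\prod_{f}(q^\top r^f)$; the paper just runs the computation from the tensor side to the product side, the reverse of your direction.
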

\begin{proof}
    Given vectors $a$, $b_1,\ldots,b_n$ in $\setR^d$, let $A=a \otimes a \otimes \cdots \otimes a$ and $B=b_1 \otimes \cdots \otimes b_n$. Let $\left[ \calT \right]_{i_1,\ldots,i_n}$ be the operator that selects the entry at position $i_1,\ldots,i_n$ in tensor $\calT$. Then, 
    that element of tensor $\calA \odot \calB$ equals
    \begin{align}
        \left[ \calA \odot \calB \right]_{i_1,\ldots,i_n}
        &=
        \calA_{i_1,\ldots,i_n} \cdot \calB_{i_1,\ldots,i_n}
        \nonumber \\
        &=
        \left( a_{i_1} \cdots a_{i_n} \right) 
            \cdot 
        \left( {b_1}^{i_1} \cdots {b_n}^{i_n} \right).
    \end{align}
    Summing over all the entries of the tensor corresponding to the summation on the right hand side of~\eqref{label:lemma4.1}, we may write
    \begin{align}
        \sum_{i_1,\ldots,i_n} 
            &
            a_{i_1}\cdots a_{i_n} 
            \cdot 
            b_1^{i_1}\cdots b_n^{i_n}
        \nonumber \\
        &=
        \left( \sum_{i_1} a_{i_1} b_1^{i_1}\right)
        \cdots
        \left( \sum_{i_n} a_{i_n} b_n^{i_n}\right)
        \nonumber \\
        &=
        \left(a^\top b_1\right)
        \cdots
        \left(a^\top b_n \right)
    \end{align}
    which corresponds to the left hand side of~\eqref{label:lemma4.1}
\end{proof}

Let $\calT_q$ be the $N \times d_\phi \times \cdots d_\phi$ dimensional tensor that contains the $N$ $F$-fold tensorial outer products of the vectors $\phi_q \left(q_i \right)$ with themselves, ie the $i$th slice, $i=1,\ldots,N$, is equivalent to $\left[ \calT_q \right]_{i} = \phi_q \left(q_i \right) \otimes \cdots \otimes \phi_q \left(q_i \right)$. Analogously, define by $\calT_k$ the $N \times d_\phi \times \cdots d_\phi$ tensor that contains the $N$ $F$-fold tensorial outer products of the vectors $\phi_{k_f} \left( k_j \right)$. 
\begin{theorem}
    The attention defined by \eqref{eq:hla} can be equivalently expressed by the product between $\calT_q$ and a $d_\phi \times \cdots \times d_\phi \times d$ dimensional context tensor $\calC$.
\end{theorem}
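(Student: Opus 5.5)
We start from the entrywise form of \eqref{eq:hla}. Write $p_i=\phi_q(q_i)$ and $r^f_j=\phi_{k_f}(k_j)$. Then
\[
A_{ij}=\prod_{f=1}^F p_i^\top r^f_j .
\]
The $i$th row of the attention output $A\cdot V$ is
\[
\sum_{j=1}^N A_{ij}\, v_j^\top ,
\]
where $v_j$ is the $j$th value vector. We first apply the preceding Lemma to each scalar $A_{ij}$, which gives
\[
A_{ij}=\sum_{i_1,\ldots,i_F}\left[\calT_q\right]_{i,i_1,\ldots,i_F}\left[\calT_k\right]_{j,i_1,\ldots,i_F},
\]
with $\left[\calT_k\right]_j=r^1_j\otimes\cdots\otimes r^F_j$. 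This is the only nontrivial ingredient, and it is already established. It shows that $A$ is exactly the contraction of $\calT_q$ and $\calT_k$ over their $F$ trailing $d_\phi$-modes.

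Next we substitute this expression into the output row and exchange the finite sums over $j$ and over the multi-index $(i_1,\ldots,i_F)$. For each output channel $c=1,\ldots,d$ this yields
\[
[AV]_{ic}=\sum_{i_1,\ldots,i_F}\left[\calT_q\right]_{i,i_1,\ldots,i_F}\left(\sum_{j=1}^N\left[\calT_k\right]_{j,i_1,\ldots,i_F}V_{jc}\right).
\]
We then define the context tensor by the inner bracket:
\[
\calC_{i_1,\ldots,i_F,c}=\sum_{j}\left[\calT_k\right]_{j,i_1,\ldots,i_F}V_{jc}.
\]
This tensor has size $d_\phi\times\cdots\times d_\phi\times d$ and does not depend on $i$. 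Hence $AV$ equals the contraction of $\calT_q$ with $\calC$ over the $F$ shared $d_\phi$-modes, which is the claimed product. In einsum notation this is $\calC=\texttt{einsum}(\calT_k,V)$ over the token index, followed by $\texttt{einsum}(\calT_q,\calC)$. This is the direct analogue of \eqref{eq:lin.att} with $F=1$.

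The main point needing care is not algebraic, since interchanging the sums is legitimate because all sums are finite. What needs care is stating precisely what ``product between $\calT_q$ and $\calC$'' means, namely contraction over all $F$ feature modes. A convenient way to make this precise is to flatten $\calT_q$ to an $N\times d_\phi^F$ matrix and $\calC$ to a $d_\phi^F\times d$ matrix. The identity then reads
\[
AV=\mathrm{mat}(\calT_q)\,\mathrm{mat}(\calC),
\]
with $\mathrm{mat}(\calC)=\mathrm{mat}(\calT_k)^\top V$, which is formally identical to linear attention with the lifted feature maps.

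Finally, we remark on complexity. Forming $\calC$ costs $\mathcal{O}(N d_\phi^F d)$ operations, and so does applying $\calT_q$ to it. The whole computation is therefore linear in $N$. If a normalizer is used, it is obtained by the same argument with $V$ replaced by the all-ones vector $\1$.
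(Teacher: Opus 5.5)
Your proof is correct and takes essentially the paper's route: apply the Lemma entrywise so that $A_{ij}$ becomes a full contraction of $[\calT_q]_i$ with $[\calT_k]_j$, exchange the token sum with the feature-index sum, and collect the context as $\calC=\calT_k\otimes_1 V$ (contraction over the token mode). Your explicit indices and the flattened form $AV=\mathrm{mat}(\calT_q)\,\mathrm{mat}(\calT_k)^\top V$ spell out precisely which modes the paper's contraction ``$\calT_q\otimes\calC$'' sums over.
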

\begin{proof}
    Express the product between the $j$th column of the value matrix $v_j$ and the $i$th row of the matrix of attention scores $A_i$ by
    \begin{align}
        A_i \cdot v_j 
        &\overset{\text{L4.1}}{=}
        \begin{bmatrix} 
            \sum {\calT_q}_i \odot {\calT_k}_1 & \cdots &
            \sum {\calT_q}_i \odot {\calT_k}_N
        \end{bmatrix}
        \cdot v_j
        \nonumber \\
        &=
        \sum {\calT_q}_i 
        \odot 
        \left( \sum \limits_{i=1}^N v_{ij} 
        \underbrace{
        \left( \phi_{k_1}(k_i) \otimes \cdots \otimes \phi_{k_F}(k_i) \right) 
        }_{d_\phi \times \cdots \times d_\phi}
        \right) 
        \nonumber \\
        &=
        \sum {\calT_q}_i \odot \left( \calT_k \otimes_1 v_j \right).
        \label{eq:th1.single.row}
    \end{align}
    Recall that $\otimes$ denotes the outer tensorial product between the $f=1,\ldots,F$ vectors $\phi_{k_f}(k_i)$. The symbol  
    $\otimes_1$ means the contraction along the first dimension, ie multiplication and summation analogously to standard matrix-matrix multiplication. In other words, each of the $i=1,\ldots,N$ slices of size $d_\phi \times \cdots d_\phi$ of tensor $\calT_k$ are multiplied by scalar $v_{ij}$. This result is element-wisely multiplied with $\calT_{q_i}$ and summed up.
    
    For the complete matrix $V$, we compute the $d_\phi \times \cdots \times d_\phi \times d$ dimensional tensor by
    \begin{equation}
        \calC = \calT_k \otimes_1 V.
    \end{equation}

    Hence, the product $A\cdot V$ can be expressed by
    \begin{equation}
        A \cdot V  
        = \calT_q \otimes \calC.
        \label{eq:hla.la.form}
    \end{equation}
    The contraction is over all dimensions of tensor $\calT_q$ except the first (the sequence) and all dimensions of tensor $\calC$ except the last one which corresponds to the number of columns of matrix $V$.
\end{proof}

Please note that although \eqref{eq:hla.la.form} appears similar to the formulation of plain linear attention, it contains higher-order terms of the keys in $\calC$ via $\calT_k$. In other words, linear attention computes attention scores linear in the keys, whereas HLA computes attention scores $F$-fold in the keys.

Lastly, we need to ensure that each row of matrix $A$ sums up to $1$. We may perform this similarly to standard linear attention if we require that $\phi_{q,k_1,k_2,k_3}(\cdot) \geq 0$. The factors $\eta_i$ that normalize each row of the matrix in \eqref{eq:hla} to sum 1 are given by the contraction of the tensor $\calT_q \odot \calT_k$ over all dimensions except the first
\begin{equation}
    \eta = \sum_{l,m,\ldots} \left[ \calT_q \odot \calT_k \right]_{l,m,\ldots}. 
\end{equation}

A summary of the complete procedure is provided by algorithm~1 provided in the supplementary material. 
See Figs.~\ref{lst:hla-2f} and~\ref{lst:hla-3f} in the supplementary for PyTorch-like code of this algorithm.

\subsection{Polynomial Degree and Complexity}

\begin{lemma}
    The attention output $A\cdot V$, where the attention matrix $A$ is defined by \eqref{eq:hla}, is a rational function of the input variables. The numerator has degree $2^{F}+1$, while the denominator has $2^{F}$.
\end{lemma}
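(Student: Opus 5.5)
The plan is to reduce the statement to a degree count for a single entry of $A$, and then do that count by induction on $F$. Writing the normalized output row-wise, the $(i,c)$ entry of the normalized $A\cdot V$ is
\begin{equation}
\frac{\sum_{j=1}^N A_{ij}\, v_{jc}}{\eta_i}, \qquad \eta_i=\sum_{j=1}^N A_{ij},
\end{equation}
where $\eta_i$ is exactly the normalizer introduced after the Theorem. This is a quotient of two polynomials in the inputs, hence a rational function. If every entry $A_{ij}$ is homogeneous of degree $D$ in the input variables, then $\eta_i$ has degree $D$. Since $v_{jc}$ is linear in $\vx_j$ (via $W_v$), the numerator has degree $D+1$. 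So the lemma follows once I show $D=2^F$. For this degree bookkeeping I treat each feature map $\phi_q,\phi_{k_f}$ as contributing its leading linear term in its argument, i.e.\ as degree one, as in the separable-kernel view of \eqref{eq:exp.sep.kernel}.

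To determine $D$ I would use Lemma~4.1 and the Theorem rather than \eqref{eq:hla} directly. By the Theorem, $A_{ij}=\sum [\calT_q]_i\odot[\calT_k]_j$, so $D$ equals the degree of an entry of $[\calT_q]_i$ plus that of $[\calT_k]_j$. The induction runs on $F$ through the tensor order of $\calT_q$ and $\calT_k$. The base case $F=1$ is ordinary linear attention: the entry $\phi_q(q_i)^\top\phi_{k_1}(k_j)$ is bilinear in a query and a key, so $D=2=2^1$. For $F=2$, Lemma~4.1 writes the entry as $\sum (q q^\top)\odot(r^1{r^2}^\top)$. Each summand is quadratic in the query and quadratic in the keys, giving $D=4=2^2$.

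For the inductive step I would argue that passing from $F$ to $F+1$ combines the previous attention pattern with itself multiplicatively: the new tensor $\calT_q$ is formed by an outer product of the existing order-$F$ tensor structure, and the degree of each summand is the sum of the degrees of the two combined blocks. That would give $D_{F+1}=2D_F$, and therefore $D_F=2^F$, which yields the claimed degrees $2^F+1$ and $2^F$.

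The main obstacle is precisely this doubling step. If the Hadamard product in \eqref{eq:hla} is read literally as $F$ factors, each of degree $2$, the degree grows additively, to $2F$. The doubling $D_{F+1}=2D_F$ only holds if each new factor is itself a full copy of the previous Hadamard structure, i.e.\ under a recursive reading of \eqref{eq:hla} in which the Hadamard factors are built by repeated squaring. I would first check that the base cases $F=1,2$ agree with both readings, since $2F=2^F$ there. I would then fix the recursive interpretation explicitly before carrying out the induction, because under the literal $F$-factor reading the induction fails at $F=3$ and gives degree $2F$ instead of $2^F$.
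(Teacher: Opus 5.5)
You handle the rational-function part correctly, provided you state explicitly that $\phi_q,\phi_{k_f}$ are treated as degree-one maps. That is the convention under which linear attention has the degrees $3$ and $2$ the paper quotes. ``Leading linear term'' is not meaningful for the GELU/ReLU MLPs of the appendix, and with those $A\cdot V$ is not literally rational. The genuine gap is the doubling step $D_{F+1}=2D_F$. It is false for the operator defined in \eqref{eq:hla}, and nothing in the paper's definition supports it.

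Passing from $F$ to $F+1$ Hadamard-multiplies $A$ by one more degree-$2$ matrix $\phi_q(Q)\,\phi_{k_{F+1}}(K)^\top$. In tensor form, $[\calT_q]_i$ gains one more factor $\phi_q(q_i)$ and $[\calT_k]_j$ gains one more factor $\phi_{k_{F+1}}(k_j)$; neither gains a second copy of the whole order-$F$ tensor. Degrees of products add, so $D_{F+1}=D_F+2$ and $D_F=2F$. Your own remark that $D$ is the degree of an entry of $[\calT_q]_i$ plus that of $[\calT_k]_j$ already gives $F+F$. Hence the numerator has degree at most $2F+1$ and the denominator at most $2F$. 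This matches $2^F+1$ and $2^F$ only for $F\in\{1,2\}$. For $F=3$, the paper's HLA-3F variant (its listing computes $(QK_1^\top)\odot(QK_2^\top)\odot(QK_3^\top)$), the claimed degrees $9$ and $8$ are impossible: even the unreduced quotient has degrees at most $7$ and $6$.

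The paper's own argument is only that linear attention has degrees $3$ and $2$ and that ``by \eqref{eq:hla}, the claim follows''. Done carefully, that argument also gives the additive count; the $2^F$ appears to come from multiplying the per-factor degree $2$ instead of adding it. So your diagnosis of the obstacle is right, but your proposed remedy is not. A ``recursive, repeated-squaring'' reading of \eqref{eq:hla} proves a statement about a different operator, one built from $2^{F-1}$ degree-$2$ Hadamard factors. Its tensors would have order $2^{F-1}$, which also contradicts the paper's $\calO(d_\phi^F)$ complexity count. The correct conclusion is that the lemma as stated fails for $F\ge 3$ and should read: numerator degree $2F+1$, denominator degree $2F$, under the degree-one convention for $\phi$.
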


Standard linear attention is equivalent to a rational function of degree $3$ for the numerator and $2$ for the denominator~\cite{letourneau2024padreunifyingpolynomialattention}. By \eqref{eq:hla}, the claim follows.

This significantly larger degree enables the proposed algorithm to better perform the attention mechanism. The experimental results will show even a difference between $F=2$ and $F=3$ factors in the Hadamard product.

The computational complexity of the proposed algorithm is linear with respect to the sequence length. It grows polynomially with the number of factors in the Hadamard product in \eqref{eq:hla} $\calO \left( d_\phi^F \right)$.

\subsection{Causal Attention, Decay Factors and Sequential Updates}
Since we require that $\phi_{q,k_1,k_2,k_3}(\cdot) \geq 0$ for normalization, we may easily consider, for instance, causal attention by left-multiplying an $n \times n$ matrix $M$ to the keys $\phi_{k_f}(K)$. For causal attention, the entries of $M$ are chosen to be $\{0,1\}$. The mask can include decay factors as proposed in~\cite{schlag2021lineartransformerssecretlyfast}.

\paragraph{Sequential Updates}
The proposed algorithm can be easily modified for efficient sequential updates. 
At the first token, tensor $\calT_{k_1}^1 = \phi_{k_1}\left(k_1\right) \otimes \cdots \otimes \phi_{k_F}\left( k_1 \right)$ is multiplied $d$ times with each of the $d$ elements of $V=v_1^\top$ 
\begin{equation}
    \calC^1 = \calT_{k_1} \boxtimes v_1,
\end{equation}
where $\boxtimes$ indicates the Kronecker product and 
$\calC^1 \in \setR^{d_\phi \times \cdots d_\phi \times d}$. 
The query tensor at this point $\calT_q^1 = \phi_{q}\left(q_1\right) \otimes \cdots \otimes \phi_{q}\left( q_1 \right)$ is element-wisely multiplied with each of the $d$ slices of identical shape in $\calC^1$. Each of the $d$ tensors is summed up to yield the $d$-dimensional results of the attention 
\begin{equation}
    o^1 = \sum \limits_{d_\phi^F} \calT_q^1 \odot \calC^1.
\end{equation}

For any sequentially arriving new data at time $i>1$, we only need to update the context tensor by adding
\begin{equation}
    \calC^i = \calC^{i-1} + \calT_{k_i} \boxtimes v_i,
    \label{eq:seq.updates}
\end{equation}
and computing the product between $\calT_q^i$ and $\calC^i$. 
In other words, the context tensor takes on the role of the state of the sequential attention. 

Tensor $\calT_{k_i}$ has size $d_\phi^F$, thus computing the product $\calT_{k_i} \boxtimes v_i$ requires $d_\phi^F \cdot d$ floating point multiplications. Adding the result to $\calC^{i-1}$ requires the same amount of floating point additions. If the hardware supports fused \texttt{multiply-add}, the total number of operations does not increase. Multiplying the context tensor to all the available queries in $\calT_q^i$ at time $N=i$ requires $N\cdot d_\phi^F \cdot d$ operations. Using a decay mechanism~\cite{schlag2021lineartransformerssecretlyfast, yang2025gateddeltanetworksimproving}, this can be reduced to a total of only $3 d_\phi^F d + N d d_\phi^F$ operations. 

\subsection{Value Modulation}
Although Hadamard Linear Attention substantially mitigates the low-rank constraint that standard linear attention suffer from, we still observe  degradation of high-frequency signal components. 

To address this issue, we experimented with both a skip connection and a sigmoid-gated modulation. However, neither approach yielded satisfactory results. 

It is important to note that gating by element-wise multiplication is equivalent to (circular) convolution in frequency domain. 
Since the sigmoid function maps inputs to the interval $[0,1]$, it does not remove any frequency band, but it significantly attenuates them. Consequently, gating with a signal normalized by the sigmoid-function amounts to convolving with a kernel with tiny components, limiting the ability to recover high-frequency information.

To avoid the attenuation caused by sigmoid, we remove this activation and apply
\begin{equation}
    T = T + \phi_{v_1}\left( T \right) \odot \phi_{v_2} \left( V \right)
\end{equation}
where $T=A \cdot V$ indicates the result of the attention before applying the output mapping. Since the kernel corresponding to $\phi_{v_2} \left( V \right)$ has large support, the convolution can restore lost high-frequency information. See the supplementary for definitions of networks $\phi_{v_1}$ and $\phi_{v_2}$.

\section{Experiments}

\subsection{Implementation Details}
\label{sec:implementation.details}

\paragraph{Data} We fine-tune our model variants using a 350K subset of the data used by OpenSoraPlan~\cite{lin2024opensoraplanopensourcelarge}. We also use about 100K synthetic videos generated by Wan2.1 14B.

\vspace{-10pt}
\paragraph{Model}
\label{sec:par:model}
The proposed attention mechanism is integrated  into the Wan2.1 1.3B model for video diffusion~\cite{wan2025}. Two resolutions are used: first, the smaller of two resolutions used by the original model (81x480x832); second, a lower solution more suitable for fast video generation (81x320x480). The sequence lengths are 32760 and 12600 tokens, respectively. We indicate the two resolutions by \textbf{R1} and \textbf{R2}.

We evaluate with two variants of Hadamard Linear Attention. The first one employs three factors in \eqref{eq:hla}, whereas the second only uses two factors. For the first, we use small MLPs $\phi_q$, $\phi_{k_j}, j=\{1,2,3\}$ and $\phi_{v_l}, l=\{1,2\}$. For the second, we use slightly larger MLPs. 

The two HLA variants are included into four variants of Wan: \textbf{HLA-2F-R1-21} uses two factors in \eqref{eq:hla}, a resolution of 81x480x832, and applies Hadmard Linear Attention to 21 out of 30 transformer blocks. \textbf{HLA-3F-R1-21} is identical to \textbf{HLA-2F-R1-21} except that it uses three factors in \eqref{eq:hla}. \textbf{HLA-3F-R1-10} is identical to 
\textbf{HLA-3F-R1-21} except that it applies HLA to 10 transformer blocks. Lastly, \textbf{HLA-3F-R2-15} uses three factors for the Hadamard product in 15 out of 30 transformer blocks and resolution 81x320x480. Please see the supplementary material for more information about the model definitions. 

\vspace{-10pt}
\paragraph{Training Details}
We employ two training schemes. First, we only train the modified attention layers but keep all other layers fixed. This is followed by a second stage at which we train the entire transformer. We use the subset of \cite{lin2024opensoraplanopensourcelarge} at these two stages. \textbf{HLA-2F-R1-21}, \textbf{HLA-3F-R1-10} and \textbf{HLA-3F-R2-15} are trained by this procedure. For \textbf{HLA-3F-R1-21}, we immediately train the entire transformer. 
At a third stage, we continue training the model but now include the synthetic data. 

We use a learning rate of $1 \times 10^{-4}$ for all models and all stages and use 8 H100 GPUs. The batch size is taken to be $16$ for the last stage and $32$ otherwise. For all but the first stages a warmup of 1000 steps is used. For all except the last stage, we use 100k training steps. At the last stage, we trained \textbf{HLA-2F-R1-21} for 50k steps, \textbf{HLA-3F-R1-10} for 90k steps, \textbf{HLA-3F-R2-15} for 110k steps and \textbf{HLA-3F-R1-21} for 140k steps.

\vspace{-10pt}
\paragraph{Evaluation} \emph{VBench}~\cite{huang2023vbench} scores are reported to assess the visual quality and report the floating-point operations (FLOPs) of each model variant to measure the computational complexity. We evaluate against two checkpoints of Wan2.1~\cite{wan2025}, one at 480p provided by the original authors, the second at 320p trained by us. We do not compare with recent sophisticated variants of linear attention that use windowing, delta rule or hybrid methods that combine standard linear attention with softmax-attention. These techniques can be integrated into HLA, or HLA can replace standard linear attention in hybrid methods.

\begin{table}[t!]
\vspace{0pt}
\centering
\begin{tabular}{lccc}
\toprule
 & \textbf{Total↑} & \textbf{Quality↑} & \textbf{Semantic↑} \\
\midrule
\multicolumn{4}{l}{\textbf{Models with at most 2B parameters}} \\
\cite{peng2025opensora20trainingcommerciallevel} & 79.76 & 81.35 & 73.39 \\
\cite{wu2025snapgenvgeneratingfivesecondvideo} & 81.14 & 83.47 & 71.84 \\
\cite{hacohen2024ltxvideorealtimevideolatent} & 80.00 & 82.30 & 70.79 \\
\cite{yang2025cogvideoxtexttovideodiffusionmodels} & 81.55 & 82.48 & 77.81 \\
\cite{isobe2025amdhummingbirdefficienttexttovideomodel} & 81.35 & 83.73 & 71.84 \\
\cite{huang2025m4vmultimodalmambatexttovideo} & 81.91 & 83.36 & 76.10 \\
\cite{jin2025pyramidalflowmatchingefficient} & 81.72 & 84.74 & 69.62 \\
\cite{wan2025wanopenadvancedlargescale} & 83.31 & 85.23 & 75.65 \\
\bottomrule
Wan2.1-1.3B-HLA & \multicolumn{3}{l}{} \\
\multicolumn{2}{l}{$\sim23\%$ less total compute} &&  \\
{\textbf{HLA-3F-R1-10}} & 81.42 & 83.22 & 74.20 \\
\bottomrule
\end{tabular}
\caption{Comparison on VBench with other models for video diffusion. We report the numbers published in the respective papers. While the proposed method has slightly lower scores, it requires less than 80\% of the TFLOPs used by Wan2.1-1.3B.}
\label{table:main.results}
\vspace{-0.3cm}
\end{table}

\subsection{Main Results}
We compare the performance of \textbf{HLA-3F-R1-10} with several state-of-the-art models of comparable number of parameters in table~\ref{table:main.results}. For comparison, we used the \emph{VBench} tool~\cite{huang2023vbench}. High number indicate better results. As can be seen, the model using the proposed attention mechanism is slightly inferior. Compared with Wan2.1-1.3B, the model it is based on, it has about 23\% lower computational complexity (cf.~table~\ref{table:ablation}).

\subsection{Ablations}
Table~\ref{table:ablation} shows a comparison of different model variants. They differ in whether they use two or three factors in the Hadamard product in \eqref{eq:hla} (indicated by \textbf{2F} or \textbf{3F}), the video resolution they were trained with (\textbf{R1} or \textbf{R2}) and the number of Hadamard Linear Attention Layers they use (\textbf{10}, \textbf{15} or \textbf{21}). It can be seen that the VBench scores slightly reduce while the number of HLA layers increases. Conversely, the computational complexity reduces. The time necessary to generate a single video is perceivably lower for \textbf{HLA-3F-R1-21} compared to the baseline. \textbf{HLA-3F-R1-21} requires almost half the number of floating-point operations (FLOPs) as the baseline. 
We also see that 3-factor HLA provides better performance than 2-factor, despite that the 2-factor version has similar FLOPs (by using larger MLPs). The reason that the copmutation time does not reduce as much as the number of FLOPs lies in the number of compute units that allow massive parallelization. Less capable accelerators, for instance those widely used in mobile applications, can be expected to benefit even more from the proposed algorithm. This shows that using higher-degree HLA is a more effective way to scale up model capacity.

\subsection{Complexity}
Table~\ref{table:flops} compares the computational complexities of the four different HLA variants compared with standard quadratic attention. As in Sec.~\ref{sec:implementation.details}, we use slightly larger MLPs for HLA with 2 factors. 
Depending on the resolution and the algorithmic variant, HLA requires between 20\% and 90\% less floating-point operations (FLOPs) than quadratic attention. Fig.~\ref{fig:flops} 
shows a comparison of the necessary FLOPs of 2-factor and 3-factor HLA with standard attention across a range of sequence lengths.

\begin{table}[h]
\centering
\begin{tabular}{ccccc}
\toprule
\textbf{Method} & \textbf{Tokens} & \textbf{Score} & \textbf{TFLOPs} & \textbf{Time} \\
\toprule
Wan2.1 (quad.) & 32760 & 83.3 & 283.03 & 1:36 \\
Wan2.1 (quad.) & 12600 & 81.0 &  62.13 & 0:45 \\
\midrule
\textbf{HLA-3F-R1-10} & 32760 & 81.42 & 218.59 & 1:26 \\
\textbf{HLA-3F-R2-15} & 12600 & 79.78 &  48.37 & 0:26 \\
\textbf{HLA-2F-R1-21} & 32760 & 79.02 & 147.16 & 1:13 \\
\textbf{HLA-3F-R1-21} & 32760 & 80.54 & 147.71 & 1:16 \\
\toprule
\bottomrule
\end{tabular}
\caption{VBench scores and times to generate a single video of several model variants. TFLOPs are for a single forward pass through the model. All measurement were done on an H100 GPU.}
\label{table:ablation}
\vspace{-0.4cm}
\end{table}

\begin{table}[h]
\centering
\begin{tabular}{cccc}
\toprule
\textbf{Attn Type} & \textbf{Resolution} & \textbf{Tokens} & \textbf{TFLOPs} \\
\toprule
Wan2.1 (quad.) & 320x480 & 12600 & 1.21 \\
Wan2.1 (quad.) & 480x832 & 32760 & 7.21 \\
\midrule
HLA 2 factors & 320x480 & 12600 & 0.97 \\
HLA 2 factors & 480x832 & 32760 & 2.52 \\
HLA 3 factors & 320x480 & 12600 & 0.30 \\
HLA 3 factors & 480x832 & 32760 & 0.77 \\
\toprule
\bottomrule
\end{tabular}
\caption{Computational complexity of models with 2 or 3 factors in the Hadamard product in \eqref{eq:hla} for 12600 or 32760 tokens.}
\label{table:flops}
\vspace{-0.4cm}
\end{table}

\subsection{Qualitative Examples}
Qualitative examples of videos generated with the proposed attention mechanism are shown in Figs~\ref{fig:teaser_grid} and \ref{fig:qualitative}. These results demonstrate the higher nonlinearity of HLA indeed suffices to bridge the performance gap that separates standard linear attention from quadratic softmax attention. See the supplementary for more examples and videos.

\begin{figure*}[htbp]
    \centering
    
    \begin{subfigure}[b]{\textwidth}
        \centering
        \includegraphics[width=8.5cm,height=1.23cm]{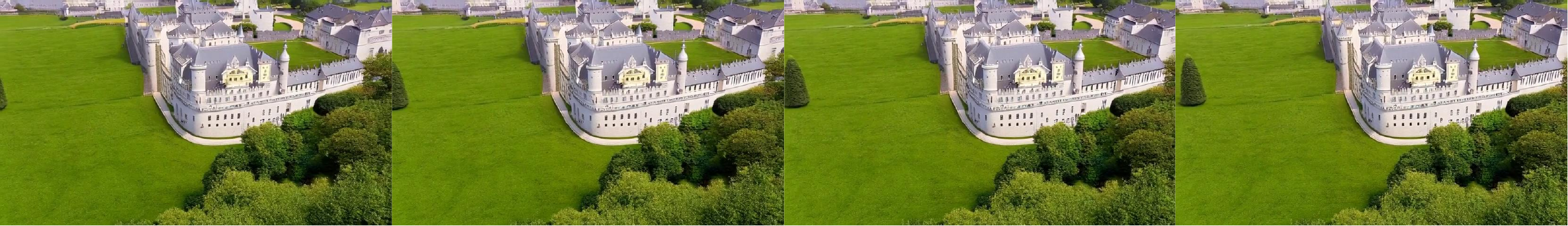} 
        \includegraphics[width=8.5cm,height=1.23cm]{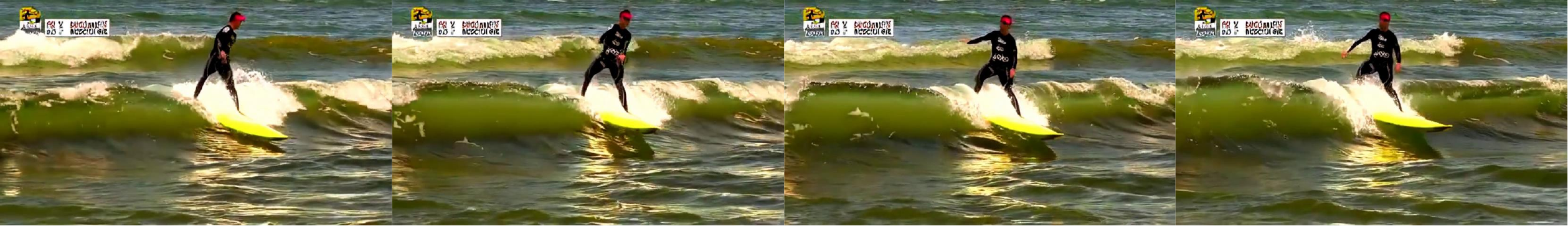} 
        \includegraphics[width=8.5cm,height=1.23cm]{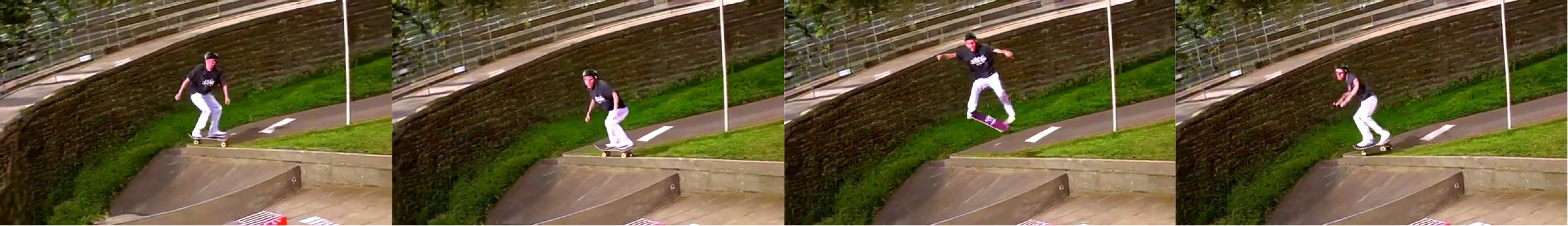} 
        \includegraphics[width=8.5cm,height=1.23cm]{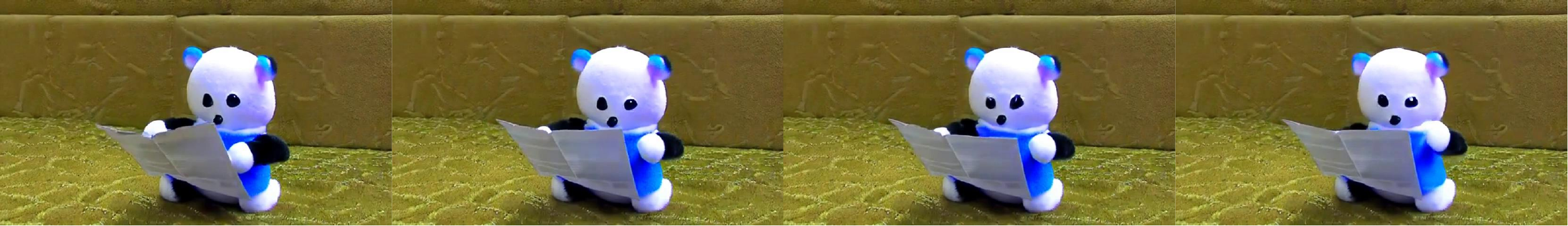} 
        \includegraphics[width=8.5cm,height=1.23cm]{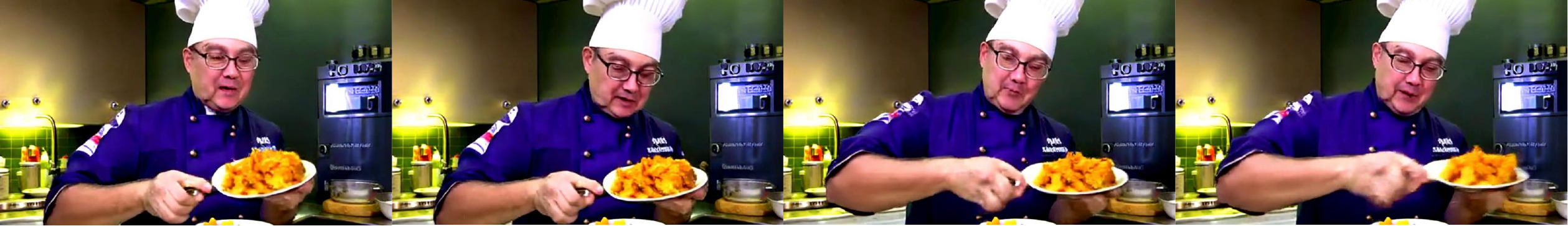} 
        \includegraphics[width=8.5cm,height=1.23cm]{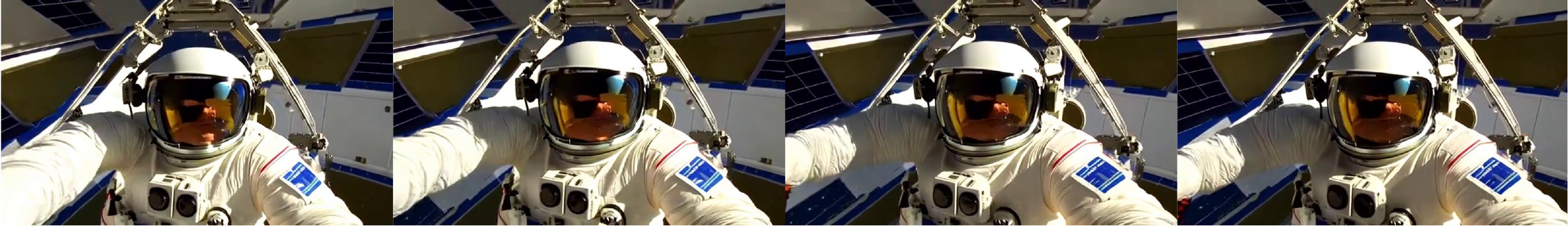} 
        \includegraphics[width=8.5cm,height=1.23cm]{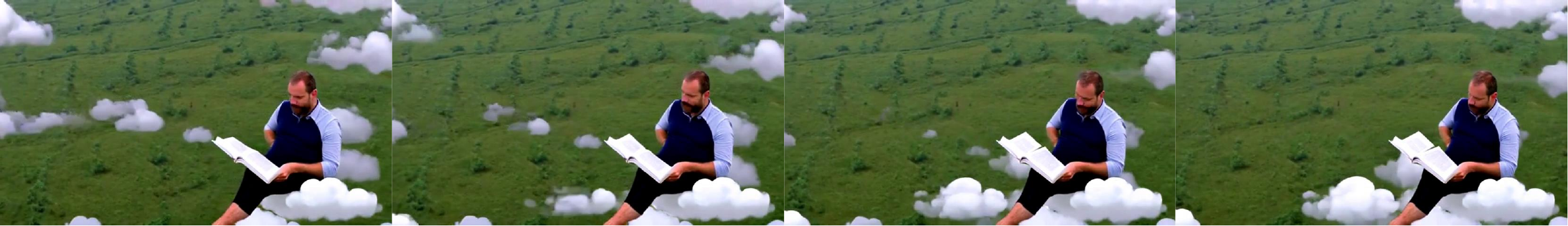} 
        \includegraphics[width=8.5cm,height=1.23cm]{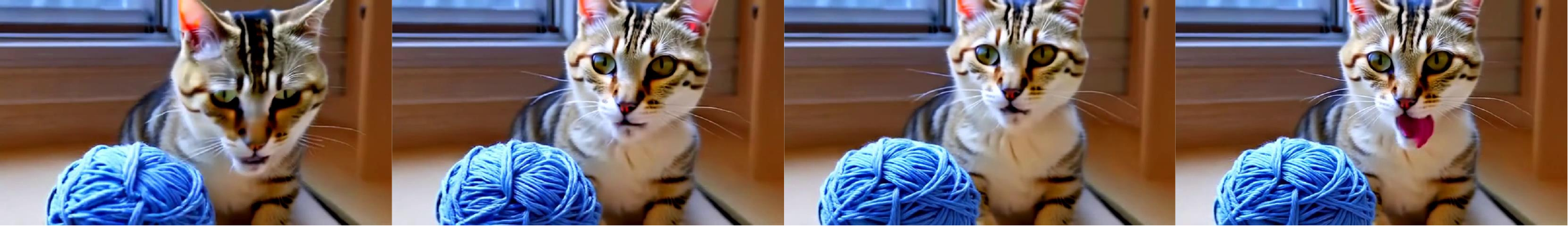} 
        \caption{Results \textbf{HLA-3F-R1-10}}
    \end{subfigure}

    \begin{subfigure}[b]{\textwidth}
        \centering
        \includegraphics[width=8.5cm,height=1.23cm]{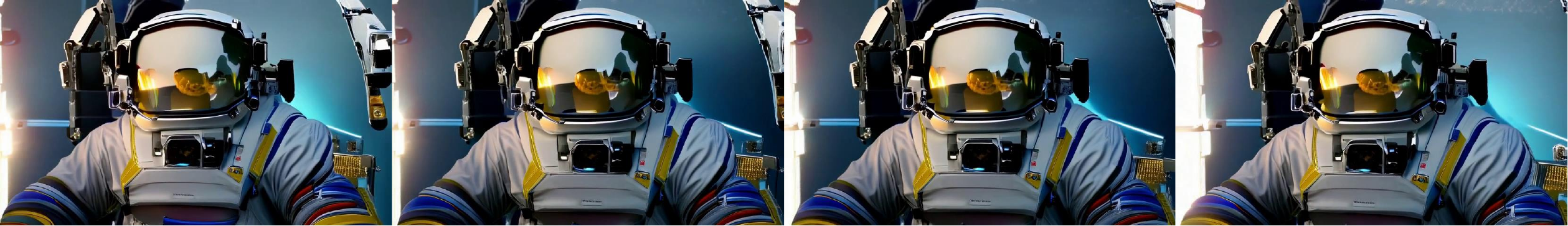}
        \includegraphics[width=8.5cm,height=1.23cm]{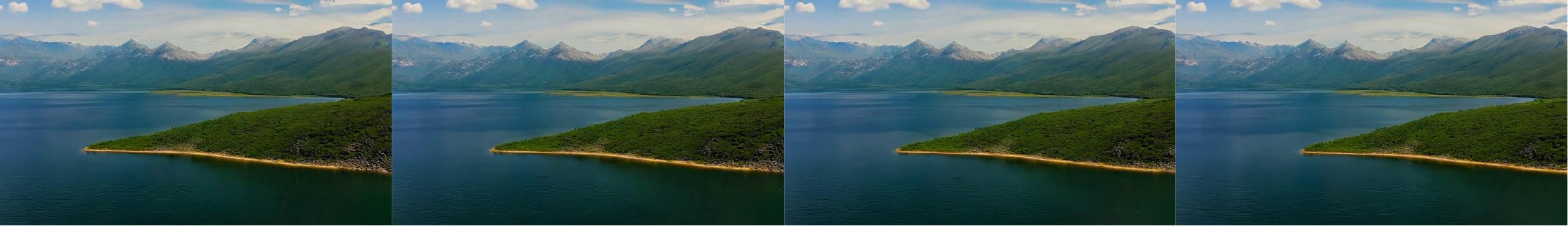}
        \includegraphics[width=8.5cm,height=1.23cm]{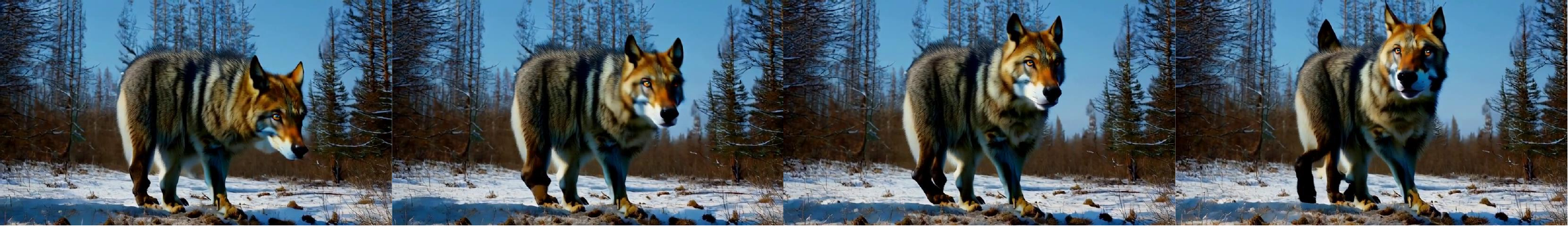}
        \includegraphics[width=8.5cm,height=1.23cm]{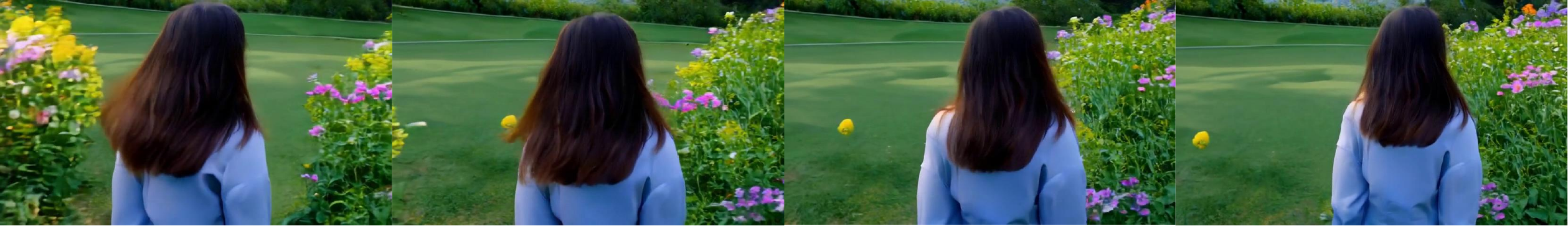}
        \includegraphics[width=8.5cm,height=1.23cm]{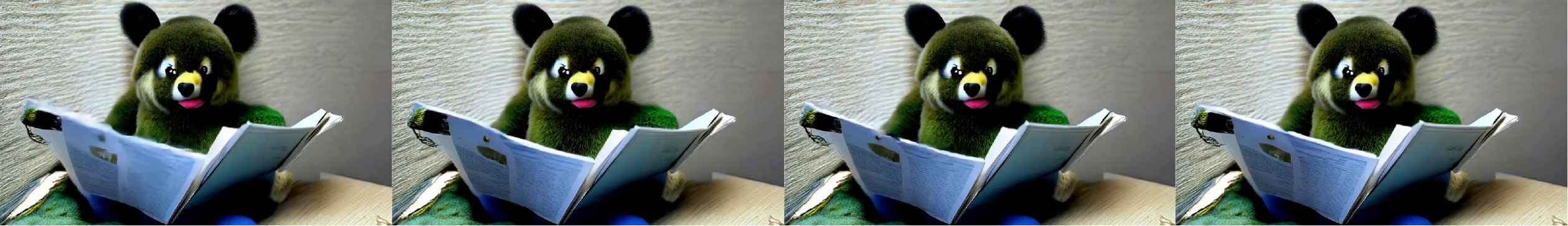}
        \includegraphics[width=8.5cm,height=1.23cm]{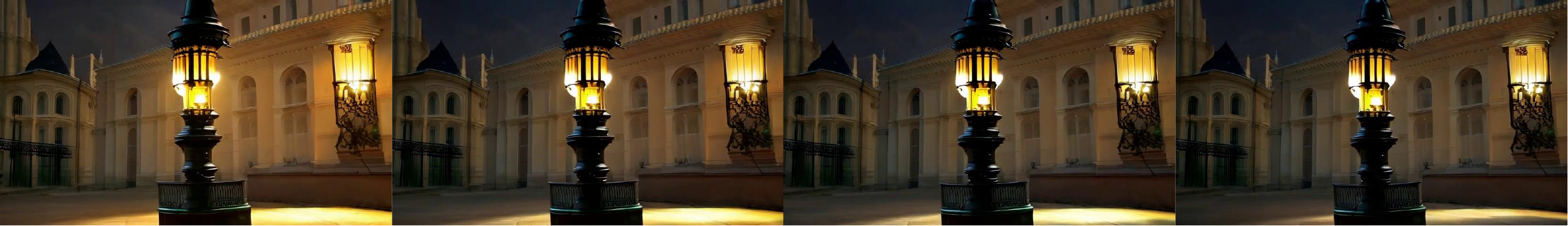}
        \includegraphics[width=8.5cm,height=1.23cm]{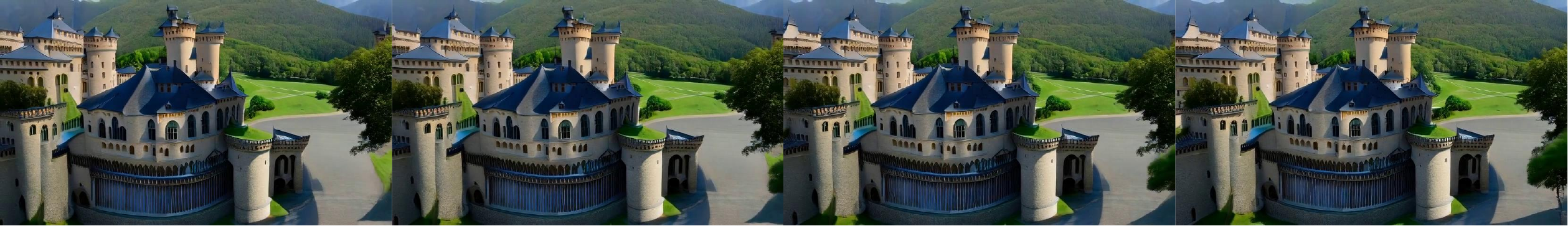}
        \includegraphics[width=8.5cm,height=1.23cm]{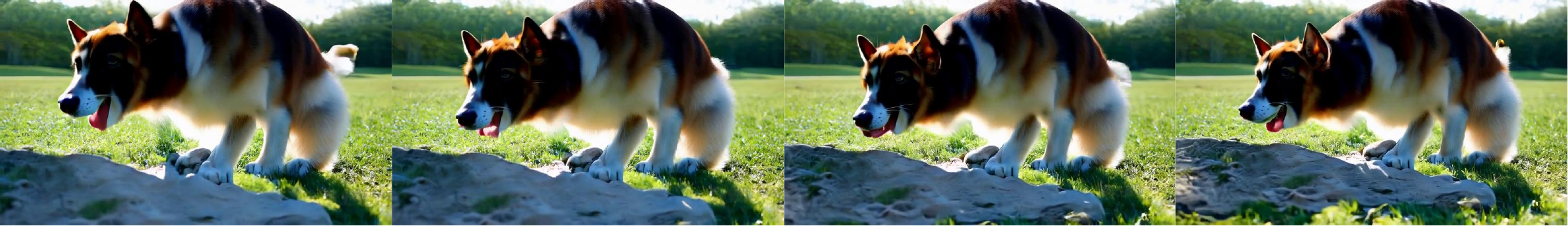}
        \caption{Results \textbf{HLA-3F-R1-21}}
    \end{subfigure}

    \begin{subfigure}[b]{\textwidth}
        \centering
        \includegraphics[width=8.5cm,height=1.42cm]{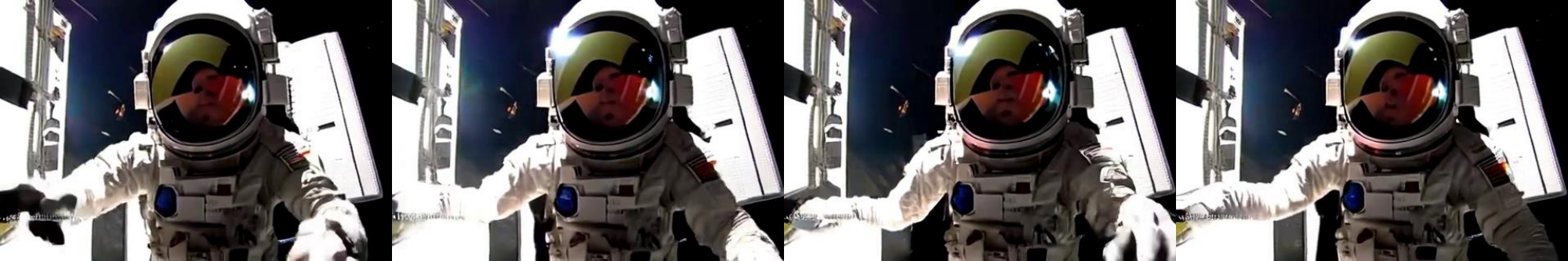}
        \includegraphics[width=8.5cm,height=1.42cm]{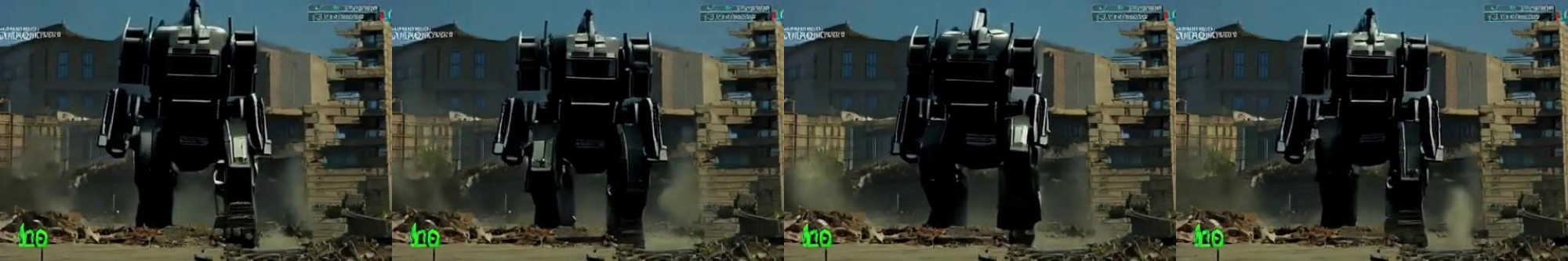}
        \includegraphics[width=8.5cm,height=1.42cm]{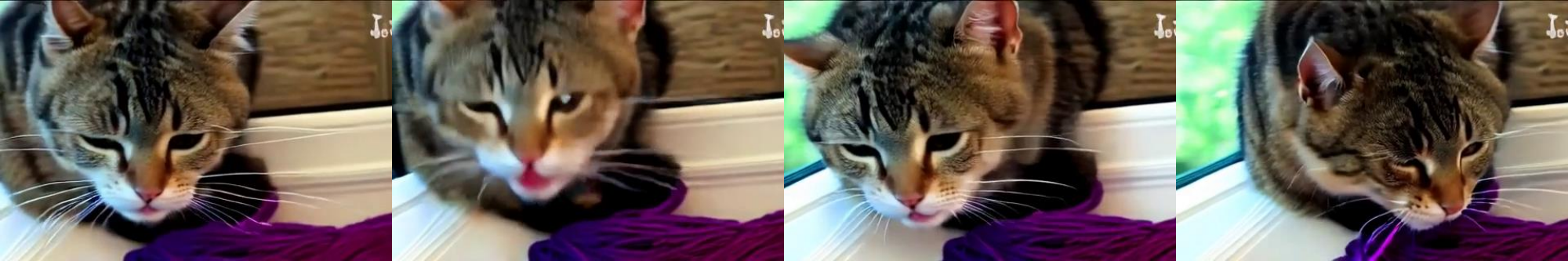}
        \includegraphics[width=8.5cm,height=1.42cm]{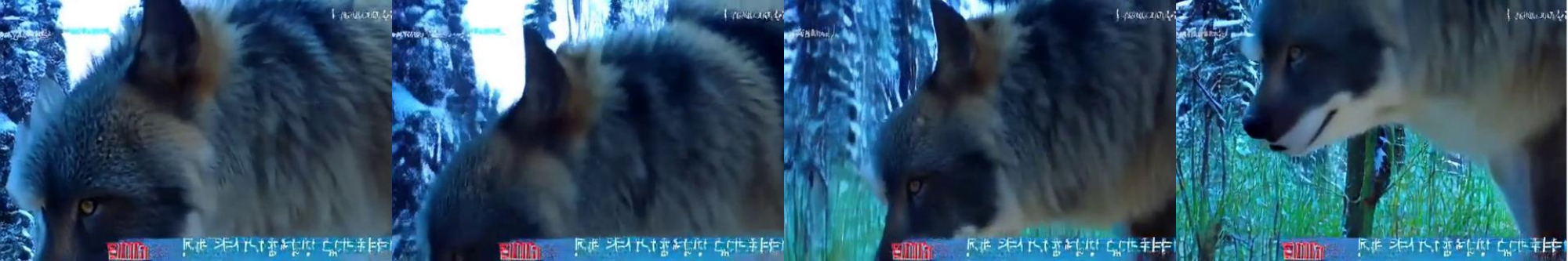}
        \includegraphics[width=8.5cm,height=1.42cm]{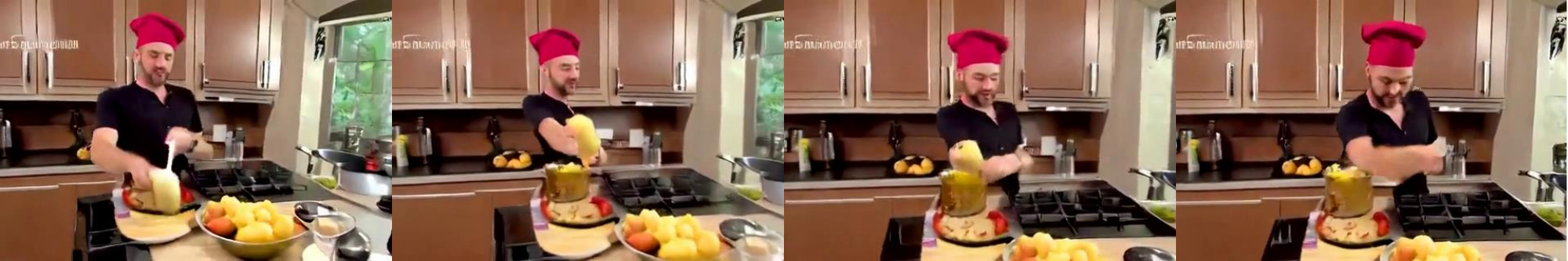}
        \includegraphics[width=8.5cm,height=1.42cm]{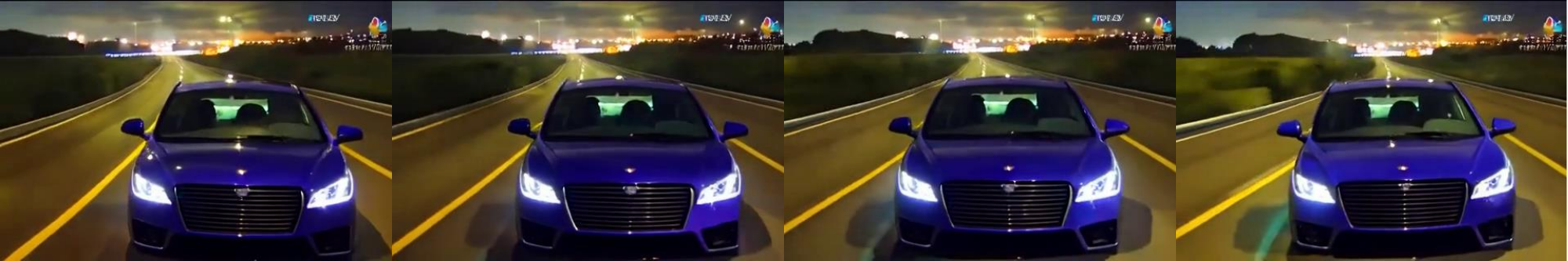}
        \includegraphics[width=8.5cm,height=1.42cm]{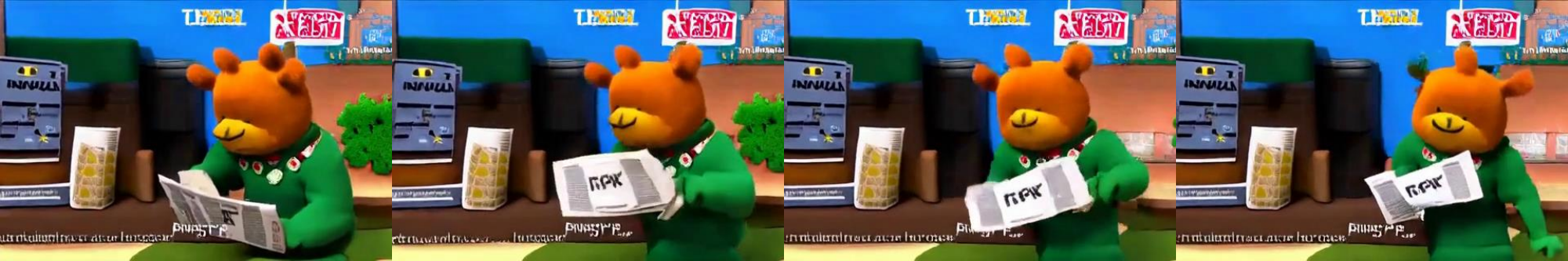}
        \includegraphics[width=8.5cm,height=1.42cm]{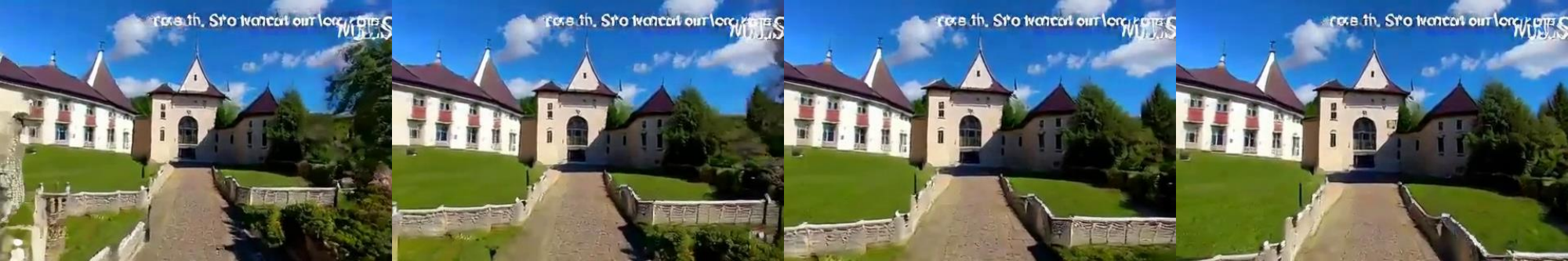}
        \caption{Results \textbf{HLA-3F-R2-15}}
    \end{subfigure}

    \caption{Examples of generated videos by \textbf{HLA-3F-R1-10} (upper videos) and \textbf{HLA-3F-R1-21} (lower videos). It can be seen that the methods using the proposed attention mechanism indeed generate high-quality videos. Please see the supplementary material for more examples.}
    \label{fig:qualitative}
\end{figure*}


\section{Conclusions}
We proposed Hadamard Linear Attention (HLA), a novel algorithm to compute attention. Unlike traditional linear attention algorithms, HLA applies a nonlinearity after the pairwise similarities have been computed. 
We derived an efficient formulation that reduces the complexity to  $\calO(N)$. HLA relies on standard tensor operations and can directly operate on the input sequence without requiring any time-consuming tensor reshaping. We demonstrated good results on the challenging application of video diffusion which involves extremely long sequences. For typical sequence lengths, our proposed HLA requires only about $1/10$ of the FLOPs that baselines need. HLA can 
replace standard linear attention in hybrid algorithms like  \cite{zhang2025slasparsitydiffusiontransformers} and \cite{ghafoorian2025attentionsurgeryefficientrecipe} to yield improved performance. Likewise, HLA can be extended with windowing or the delta-rule to improve the algorithm.


\bibliography{main}
\bibliographystyle{icml2026}


\appendix

\section{Model Definitions}

\subsection{Defining $\phi$}

We use models with few parameters for the networks $\phi_q$, $\phi_{k_{1,2,3}}$ and $\phi_{v_{1,2}}$ (see Fig.~\ref{fig:mlp.definition}). The consist of two linear layers with a \texttt{GELU} activation in between and an optional \texttt{RELU} activation for $\phi_q$ and $\phi_{k_{1,2,3}}$. We experienced instabilities during training if $\phi_{v_{1,2}}$ did not use a LayerNorm.

For the first linear layer, we used the head dimension ($1536/12=128$) as input and output. For $\phi_{v_{1,2}}$, we used the same number of inputs and outputs for the second linear layer. Otherwise, we use 6 output channels if 3-factor HLA, or 12 channels if 2-factor HLA.

For $\phi_{v_{1}}$, we first tried to use a \texttt{sigmoid} activation, yet that had no effect.

\subsection{Model Variants}
We computed maximal attention scores for each self-attention block of the teacher model averaged over several batches. For \textbf{HLA-3F-R1-21}, we selected the 21 blocks (1,2,3,4,5,6,7,8, 11,12,13,14,15,16,17,18, 22,23,24,25,26) with the smallest scores. For \textbf{HLA-3F-R2-15}, we did not use HLA in the last 6 blocks that (1,2,3,4,5,6,7,8, 11,12,13,14,15,16,17). For \textbf{HLA-3F-R1-10}, we only use HLA in the odd layers (1,3,5,7, 11,13,15,17, 23,25).

\begin{figure}[t]
  \includegraphics[width=\linewidth]{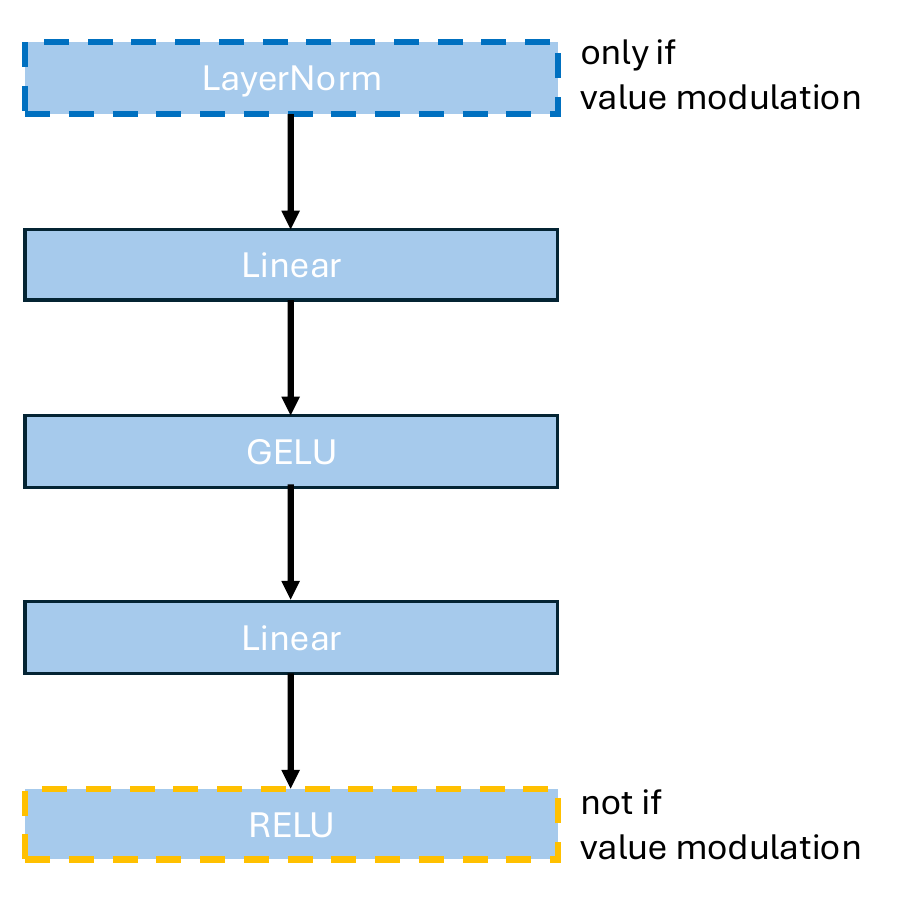}
  \caption{Definition of networks $\phi_{q,k_{1,2,3},v}$}
  \label{fig:mlp.definition}
\end{figure}

\subsection{Pseudocode}
Algorithm~1 provides pseudocode of the entire attention procedure. PyTorch-like code for the implementation of the subroutines to compute two or three factor HLA can be found in figures \ref{lst:hla-2f} and \ref{lst:hla-3f}.

\begin{algorithm}[t]
\caption{Hadamard Linear Attention}
\label{algo:hla}
\begin{algorithmic}

\KwIn{Input tokens $X$}
\KwOut{Attention output}

\BlankLine
\STATE \textbf{Step 1: Generate queries, keys, values}
\STATE $q, k, v \gets \texttt{to\_q}(X), \texttt{to\_k}(X), \texttt{to\_v}(X)$
\STATE \# reshape tensors to allow for multi-head attention

\BlankLine
\STATE \textbf{Step 2: Apply rotational embedding}
\STATE $q, k \gets \texttt{apply\_rope}(q), \texttt{apply\_rope}(k)$

\BlankLine
\STATE \textbf{Step 3: Apply scale factor}
\STATE $q \gets q \cdot \texttt{scale\_factor}$

\BlankLine
\STATE \textbf{Step 4: HLA-specific transformations}
\STATE $q \gets \texttt{hla\_to\_q}(q)$
\STATE $k_1, k_2 \gets \texttt{hla\_to\_k1}(k), \texttt{hla\_to\_k2}(k)$
\If{\texttt{use\_hla\_3\_factors}}
\STATE $k_3 \gets \texttt{hla\_to\_k3}(k)$
\EndIf

\BlankLine
\STATE \textbf{Step 5: Compute HLA}
\If{\texttt{use\_hla\_2\_factors}}
\STATE \# by algorithm in Fig.~4 (supplementary)
\STATE \texttt{attn} $\gets \texttt{2\_factor\_hla}(q, k_1, k_2, v)$
\EndIf
\If{\texttt{use\_hla\_3\_factors}}
\STATE \# by algorithm in Fig.~5 (supplementary)
\STATE \texttt{attn} $\gets \texttt{3\_factor\_hla}(q, k_1, k_2, k_3, v)$
\EndIf

\BlankLine
\Return \texttt{to\_out(attn)}

\end{algorithmic}
\end{algorithm}

\begin{figure}[t]
  \includegraphics[width=\linewidth]{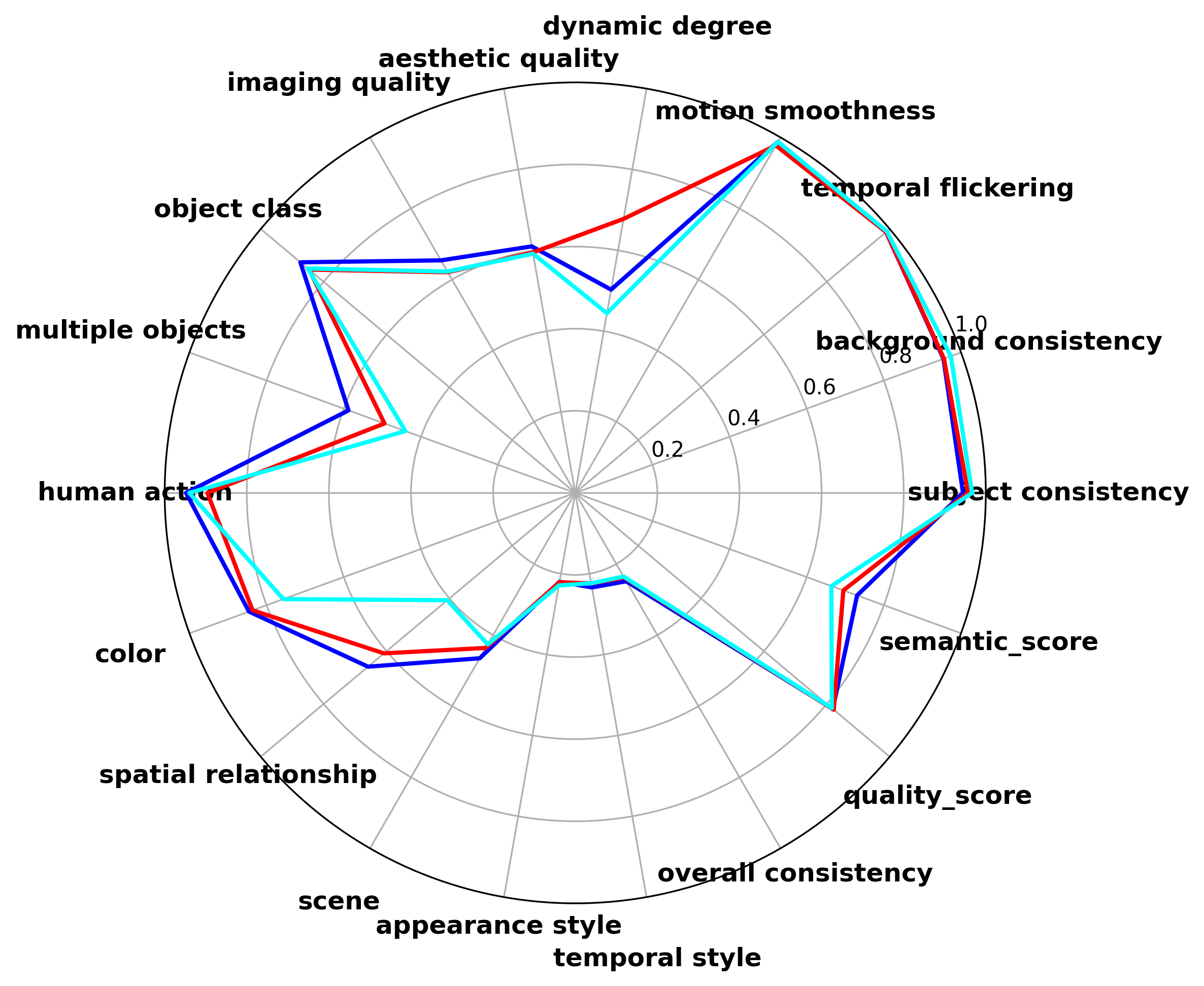}
  \caption{VBench scores by categories. Dark blue line corresponds to \textbf{HLA-3F-R1-10}, red to \textbf{HLA-3F-R1-21} and light blue to \textbf{HLA-3F-R2-15}.}
  \label{fig:vbench.category}
\end{figure}

\section{Value Modulation}

To demonstrate the positive effect that our proposed value modulation module has, we generated a video using \textbf{HLA-3F-R1-21} and 50 sampling steps. The first image of the generated sequence is shown in Fig.~\ref{fig:vm.example.seq}. 

Table~\ref{tbl:attn.updates} shows attention updates \emph{before} value modulation at different sampling steps (columns) and different transformer blocks (rows). We reshape the tensor to undo splitting into several heads. The 1536 channels are projected onto 3 channels by a random projection matrix whose entries are chosen from a uniform distribution $\in [0,1]$. Afterwards, we normalize the tensor, so that all entries are in $[0,1]$. We can see that the self attention in the middle transformer blocks 11 and 18 contribute much to the overall structure. 

Table~\ref{tbl:value.modulations} shows the contributions of the value modulation. We can see that this module adds to the structure at steps 30, 40 and 40 at blocks 2 and 3, already. Blocks 4 and 11 apparently add significant structure mainly at steps 10, 20, 30 and 40. Block 26 adds high-frequency information at all higher sampling steps.

\begin{figure}[t]
  \includegraphics[width=\linewidth]{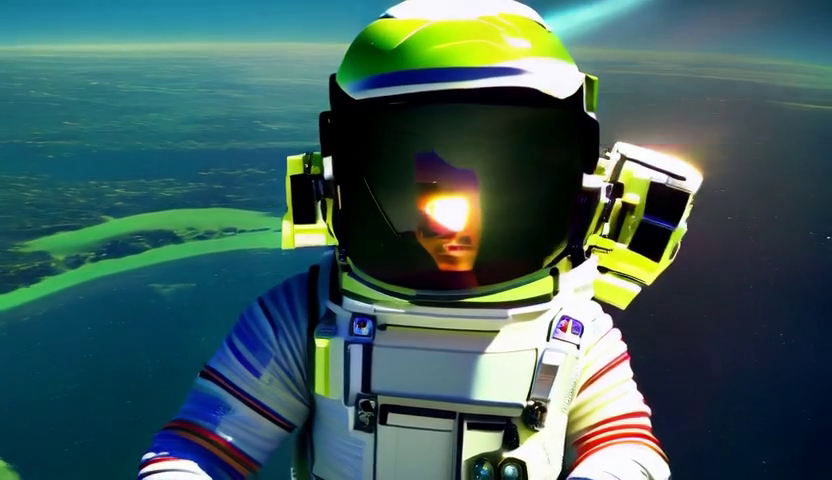}
  \caption{First image of a video generated by \textbf{HLA-3F-R1-21} in 50 steps.}
  \label{fig:vm.example.seq}
\end{figure}

\section{VBench Scores}
We provide VBench scores per category in Fig.~\ref{fig:vbench.category}. The dark blue line corresponds to \textbf{HLA-3F-R1-10}, the light blue line to \textbf{HLA-3F-R1-21} and the red line to \textbf{HLA-3F-R2-15}.

VBench scores at different training iterations of the last stage that includes the synthetic data can be seen in Fig.~\ref{fig:vbench.iterations}. The colors indicate the same variants as in Fig.~\ref{fig:vbench.category}. The increase observable for all three methods demonstrates the positive impact our synthetic data has.

\begin{figure}[t]
  \includegraphics[width=\linewidth]{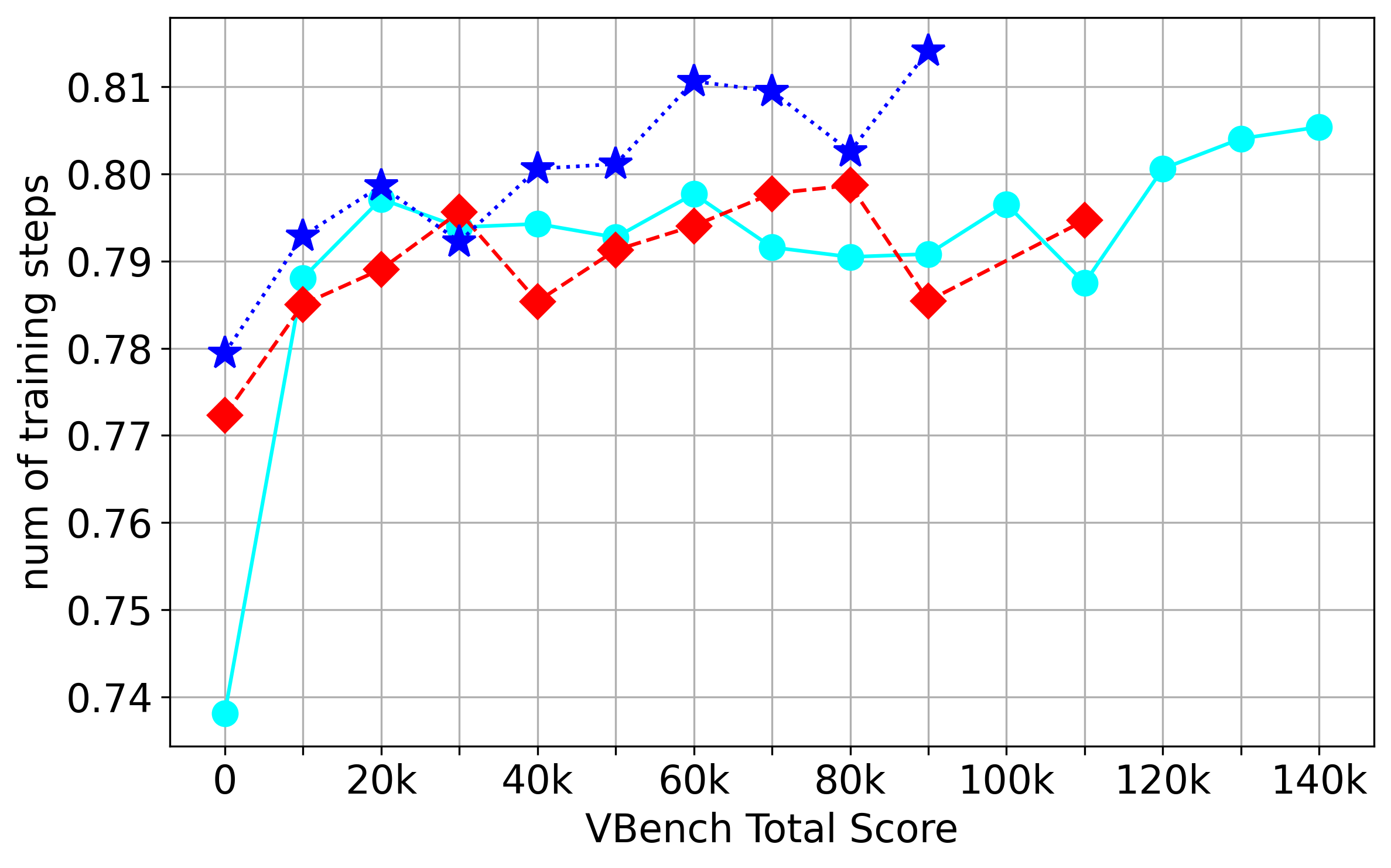}
  \caption{VBench scores over iterations. VBench scores are computed using 40\% of prompts The blue line indicates \textbf{HLA-3F-R1-10}, the red line \textbf{HLA-3F-R2-15} and the light blue line \textbf{HLA-3F-R1-21}.}
  \label{fig:vbench.iterations}
\end{figure}

\begin{figure}[t]
  \includegraphics[width=\linewidth, height=5cm]{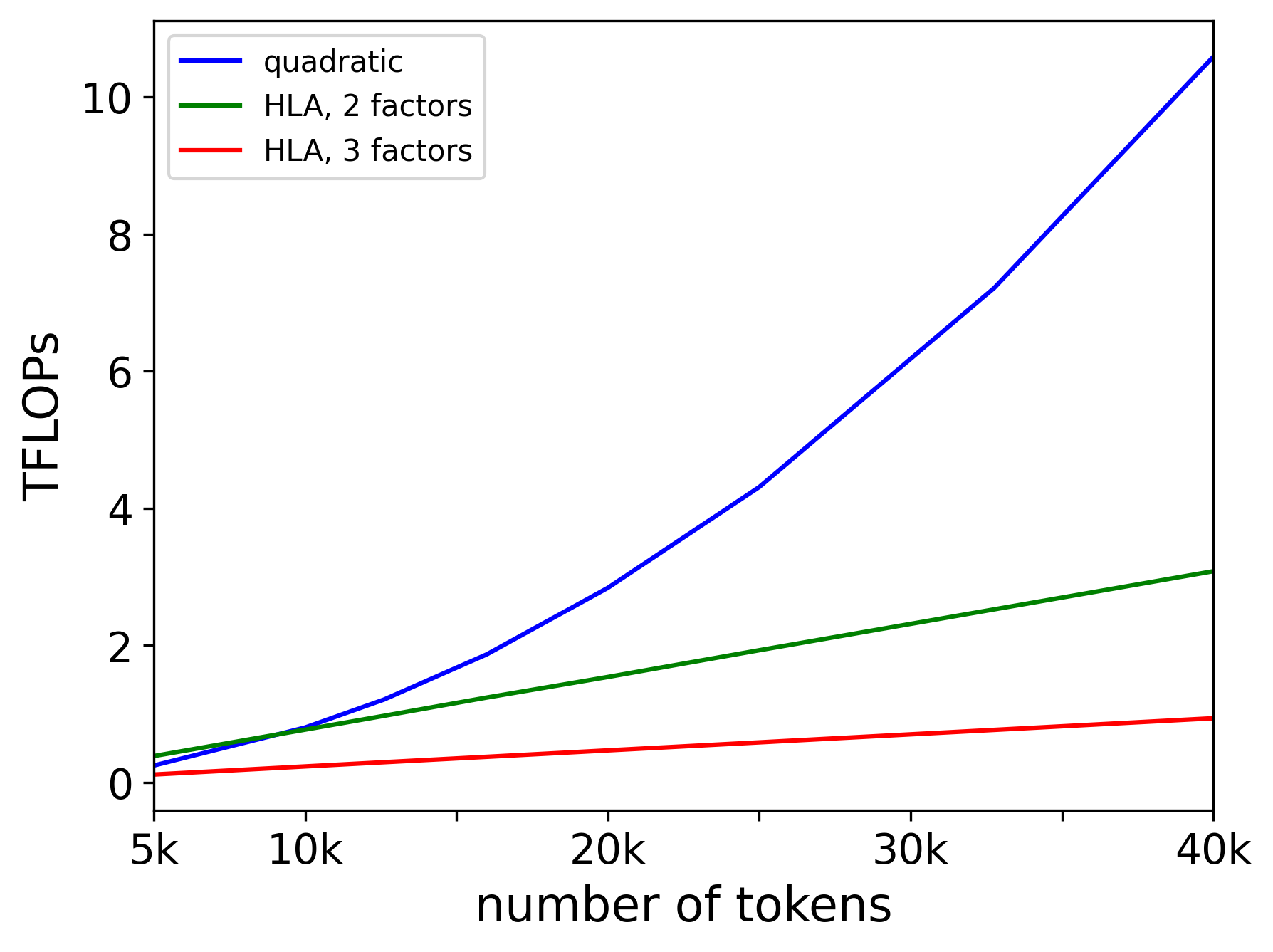}
  \caption{Comparison of computational complexities for different sequence lengths between standard quadratic attention and 2 or 3-factor HLA.}
    \vspace{-0.2cm}
  \label{fig:flops}
\vspace{-0.3cm}
\end{figure}


\begin{table*}[ht]
\centering
\caption{Attention output before value modulation. Columns correspond to different sampling steps in the generation process. Rows indicate the index of the transformer block.}
\label{tbl:attn.updates}
\begin{tabular}{|c|
>{\centering\arraybackslash}m{2.5cm}|
>{\centering\arraybackslash}m{2.5cm}|
>{\centering\arraybackslash}m{2.5cm}|
>{\centering\arraybackslash}m{2.5cm}|
>{\centering\arraybackslash}m{2.5cm}|
>{\centering\arraybackslash}m{2.5cm}|
}
\hline
                 & \textbf{Step 1} & \textbf{Step 10} & \textbf{Step 20} & \textbf{Step 30}  & \textbf{Step 40}  & \textbf{Step 50} \\ 
\hline
\textbf{1} & \includegraphics[width=2.5cm]{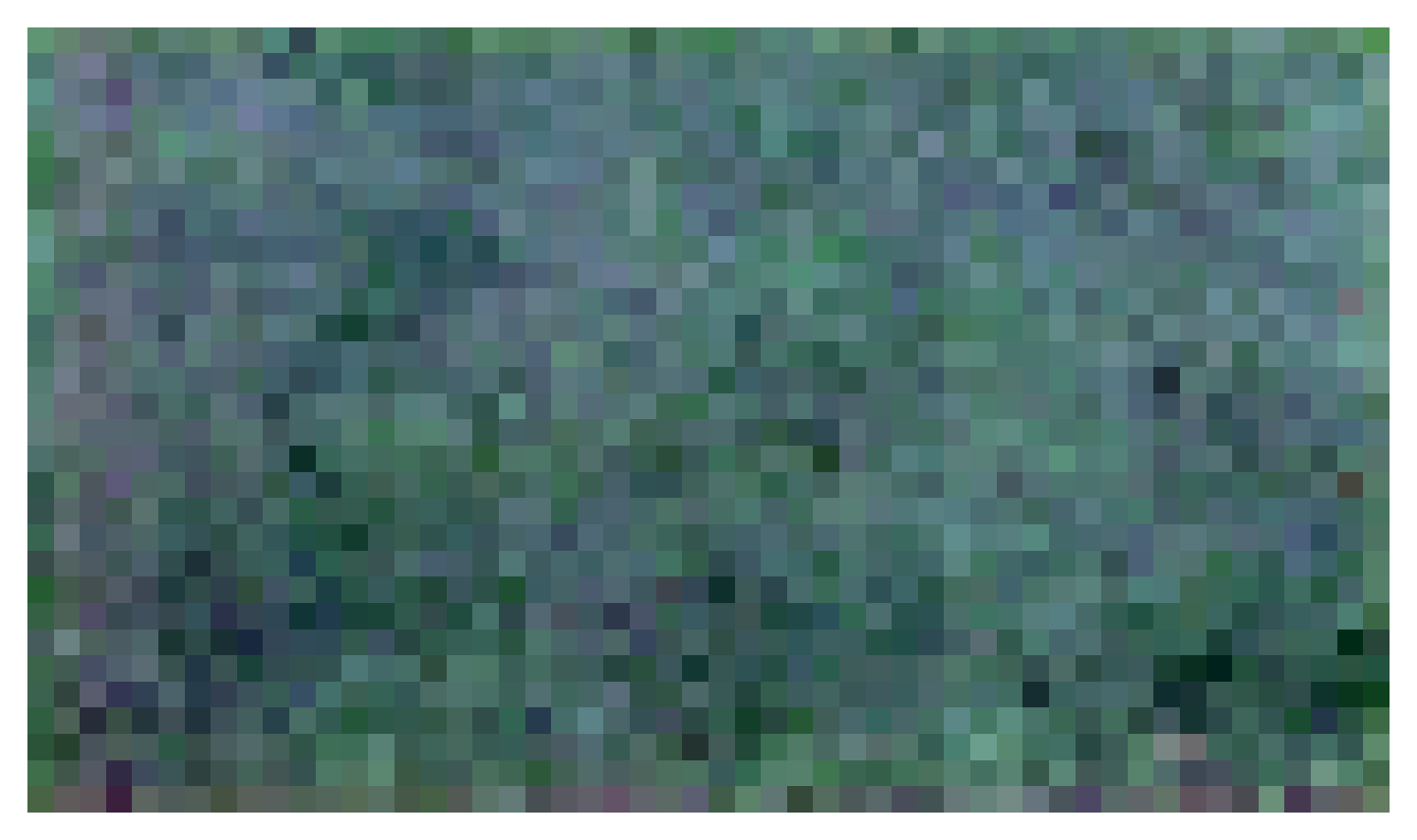} & \includegraphics[width=2.5cm]{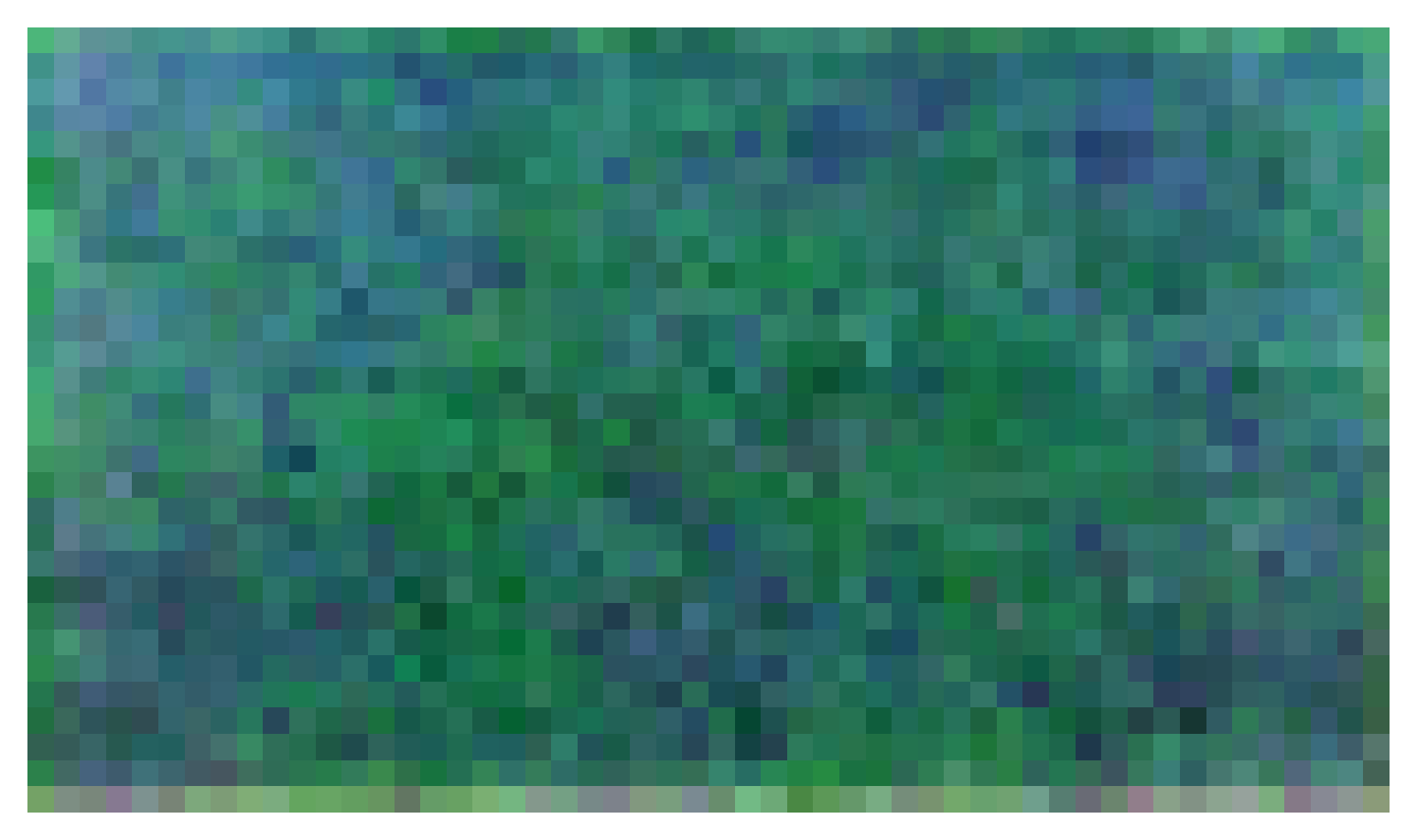} & \includegraphics[width=2.5cm]{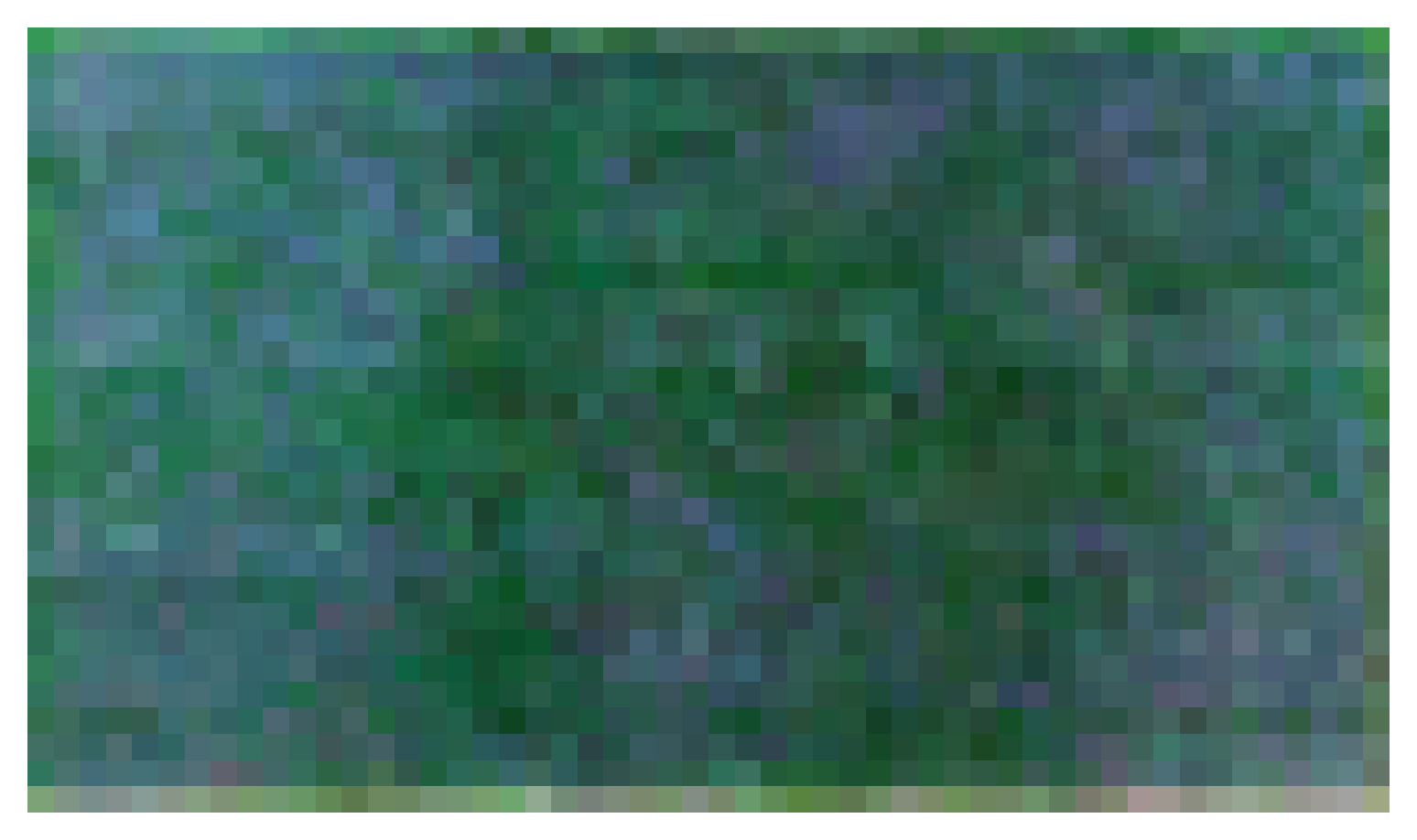} & \includegraphics[width=2.5cm]{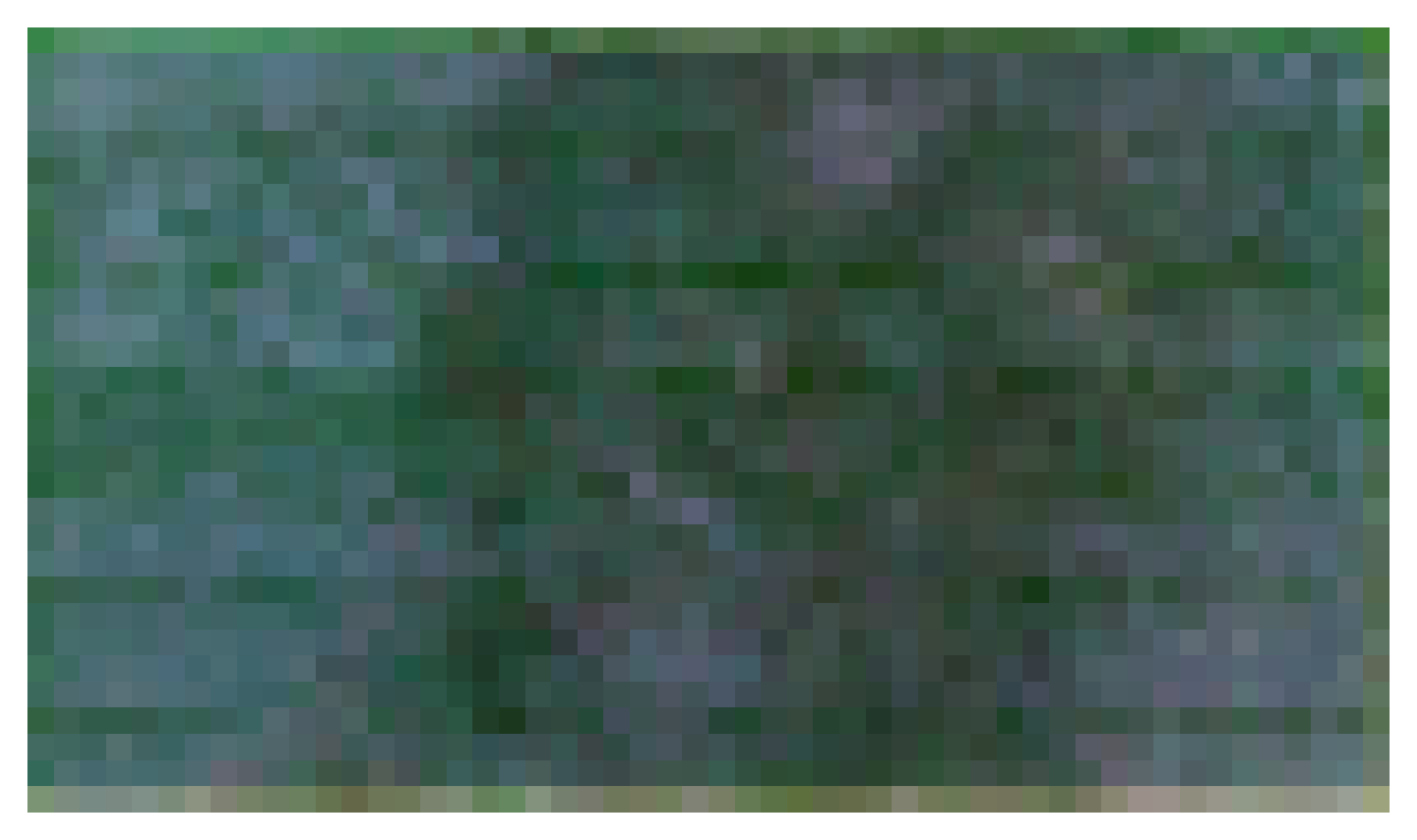}  & \includegraphics[width=2.5cm]{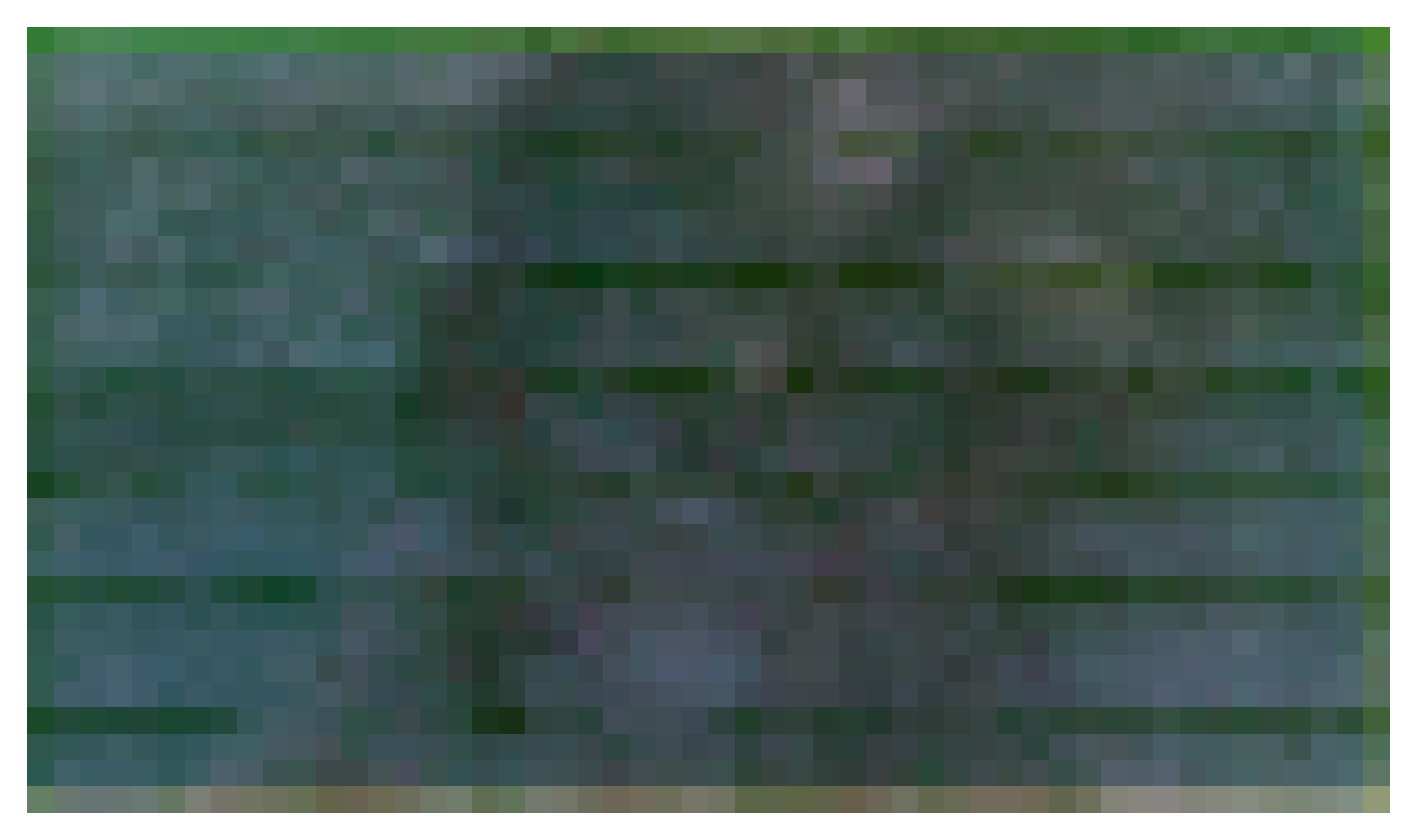}  & \includegraphics[width=2.5cm]{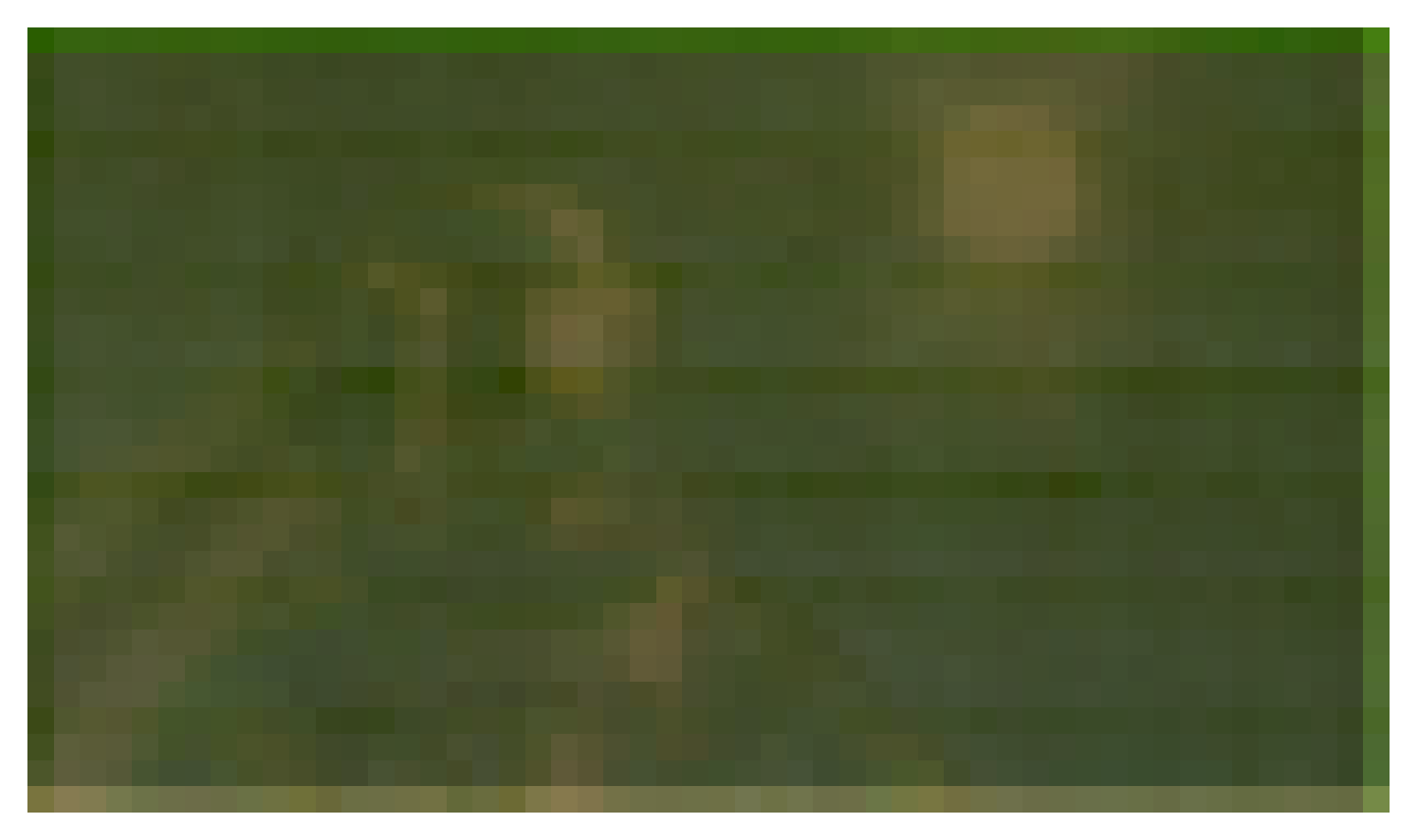} \\ 
\hline
\textbf{2} & \includegraphics[width=2.5cm]{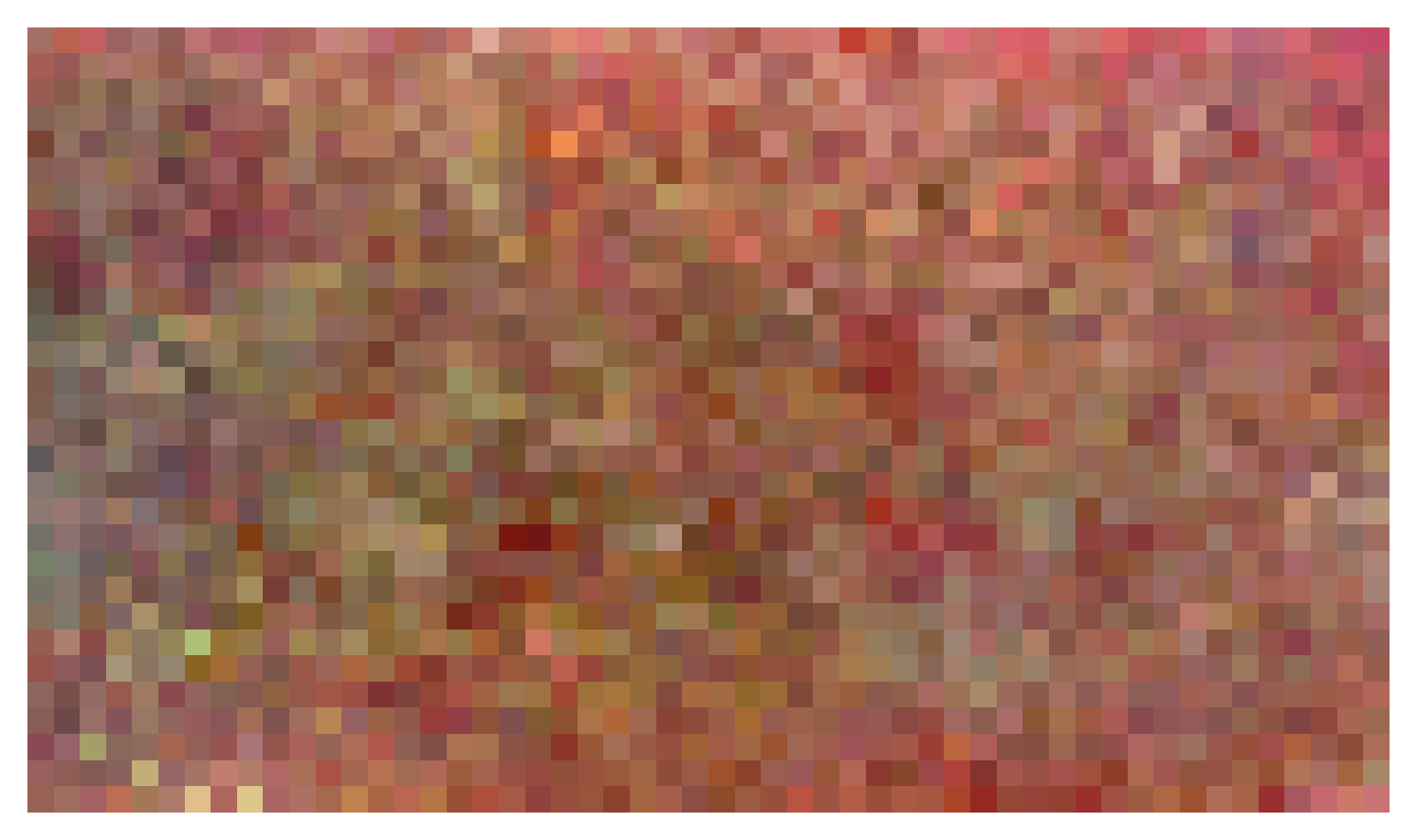} & \includegraphics[width=2.5cm]{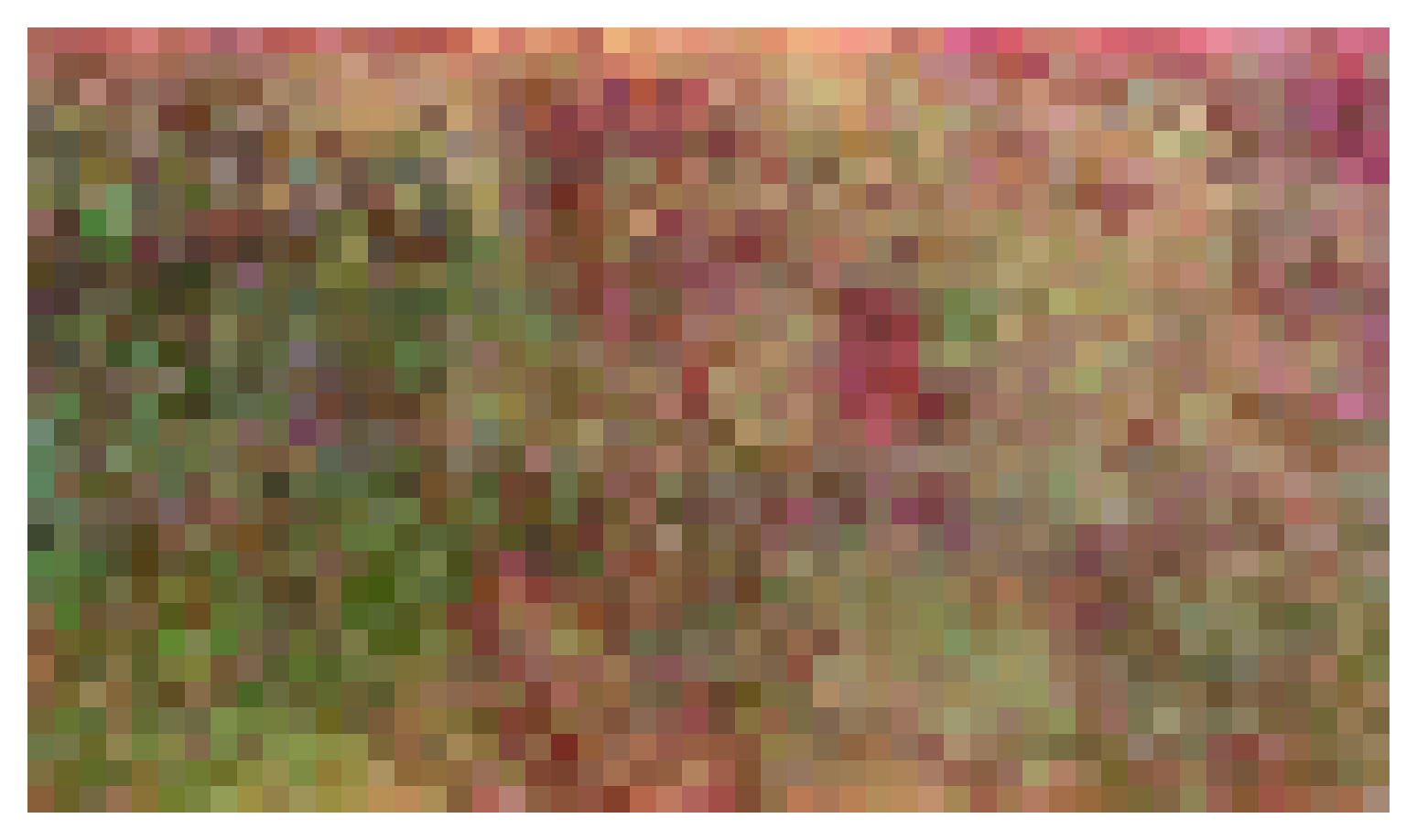} & \includegraphics[width=2.5cm]{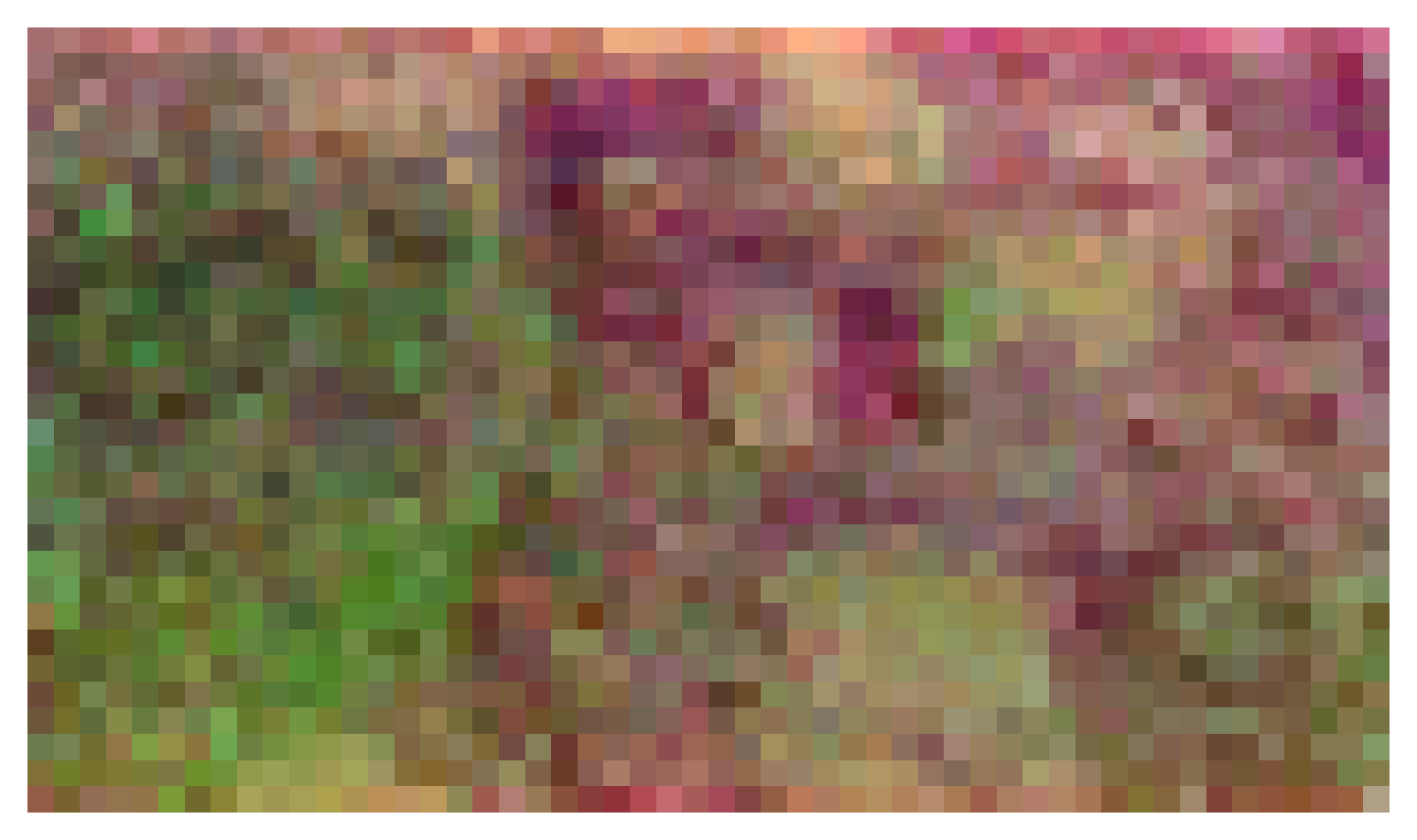} & \includegraphics[width=2.5cm]{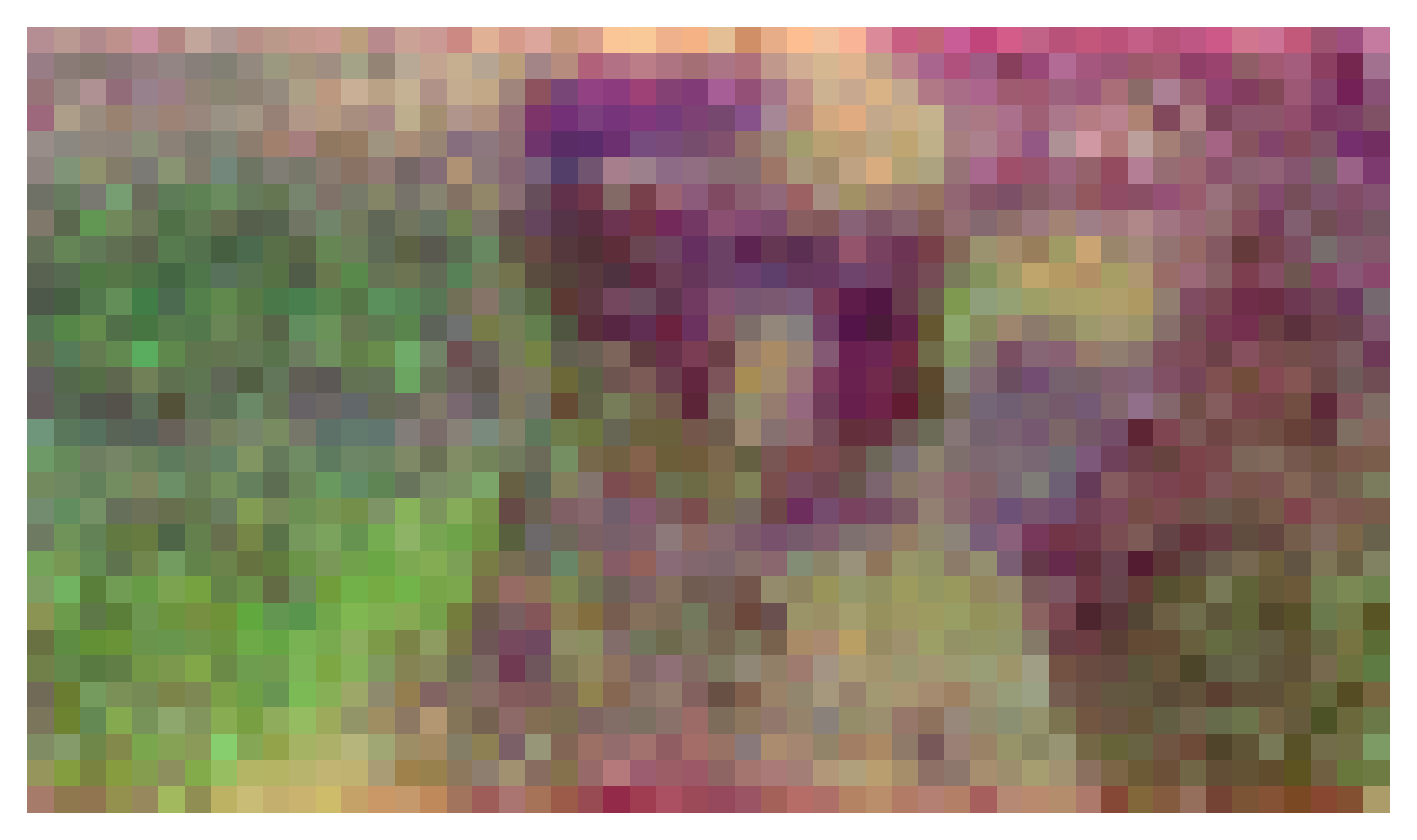}  & \includegraphics[width=2.5cm]{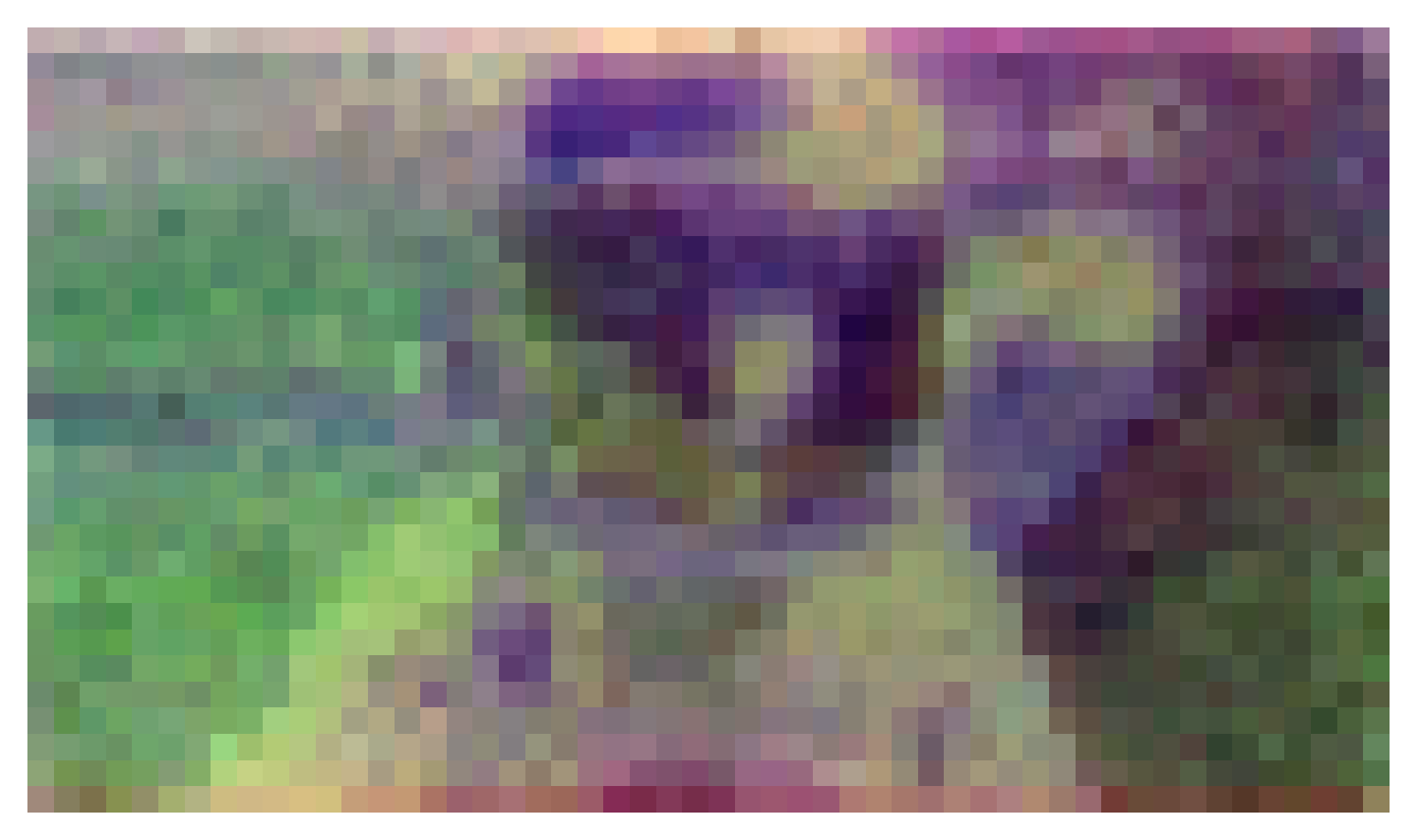}  & \includegraphics[width=2.5cm]{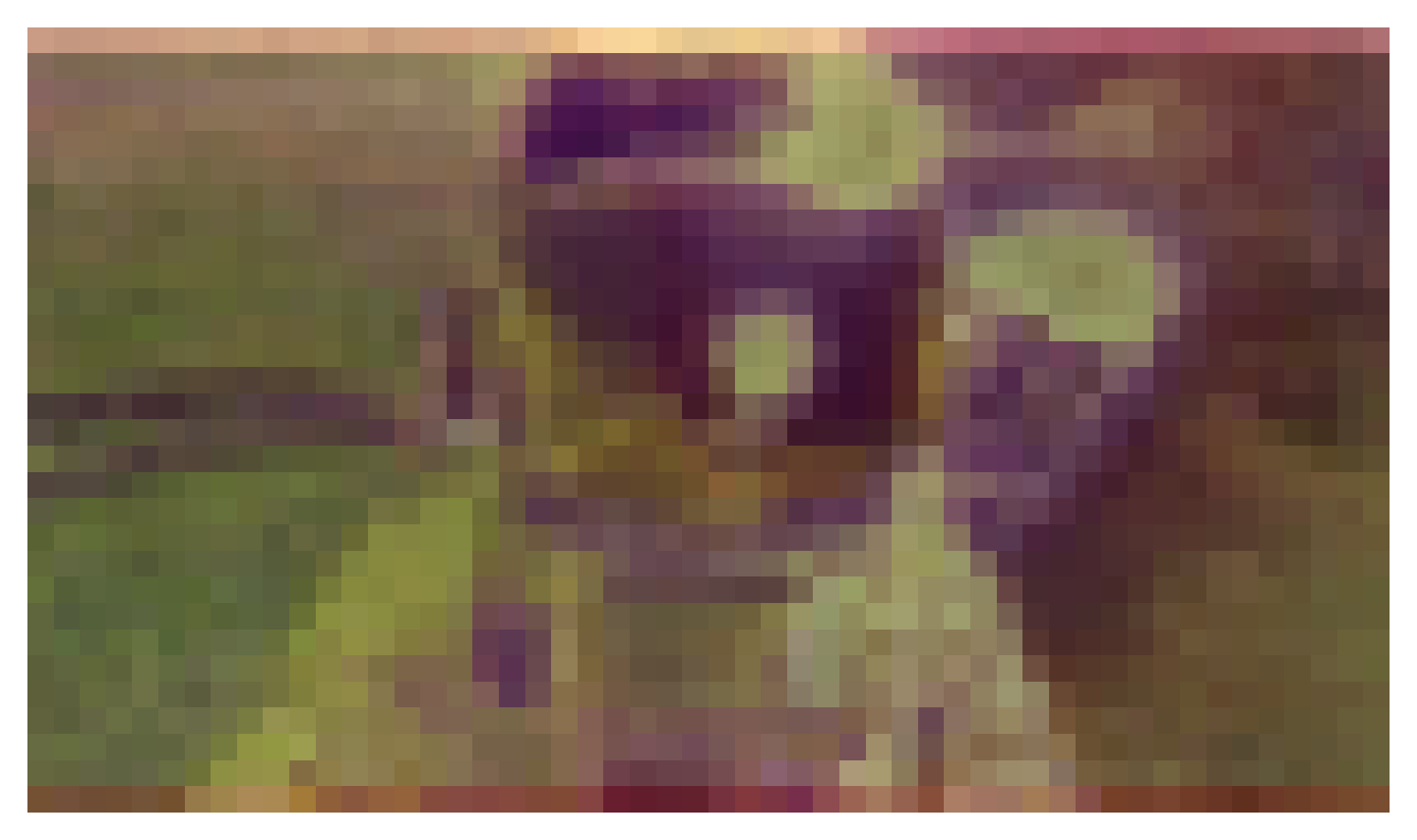} \\ 
\hline
\textbf{3} & \includegraphics[width=2.5cm]{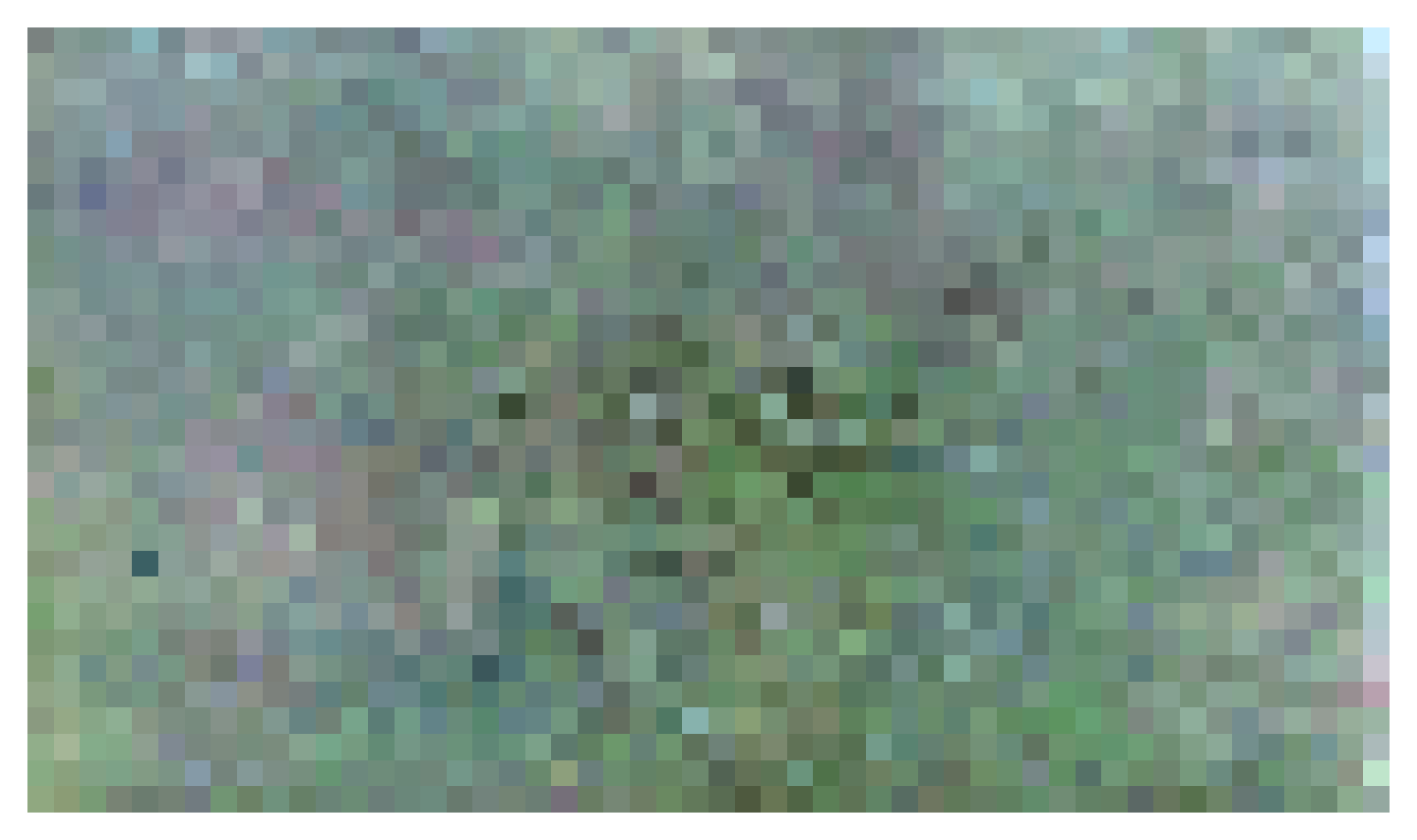} & \includegraphics[width=2.5cm]{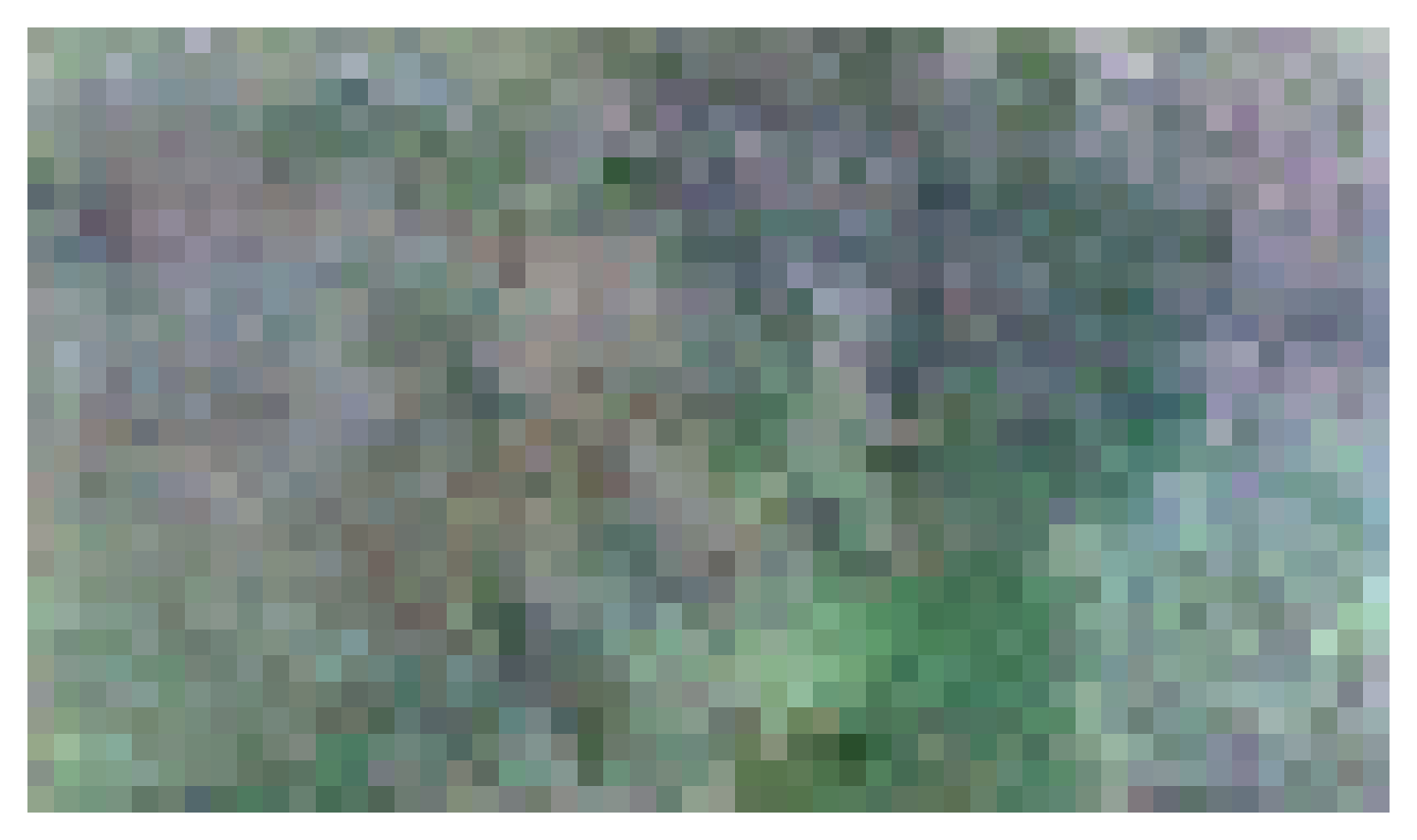} & \includegraphics[width=2.5cm]{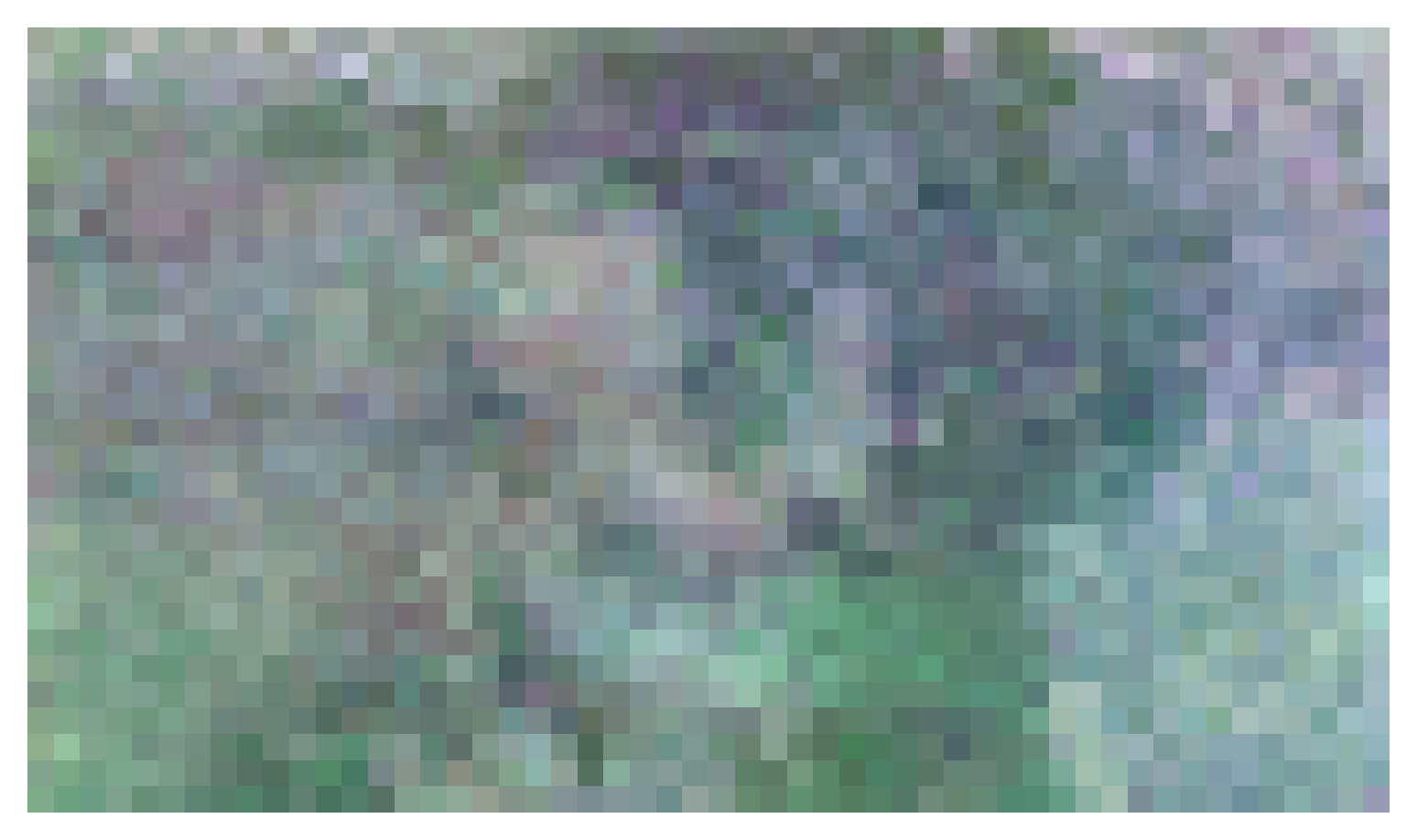} & \includegraphics[width=2.5cm]{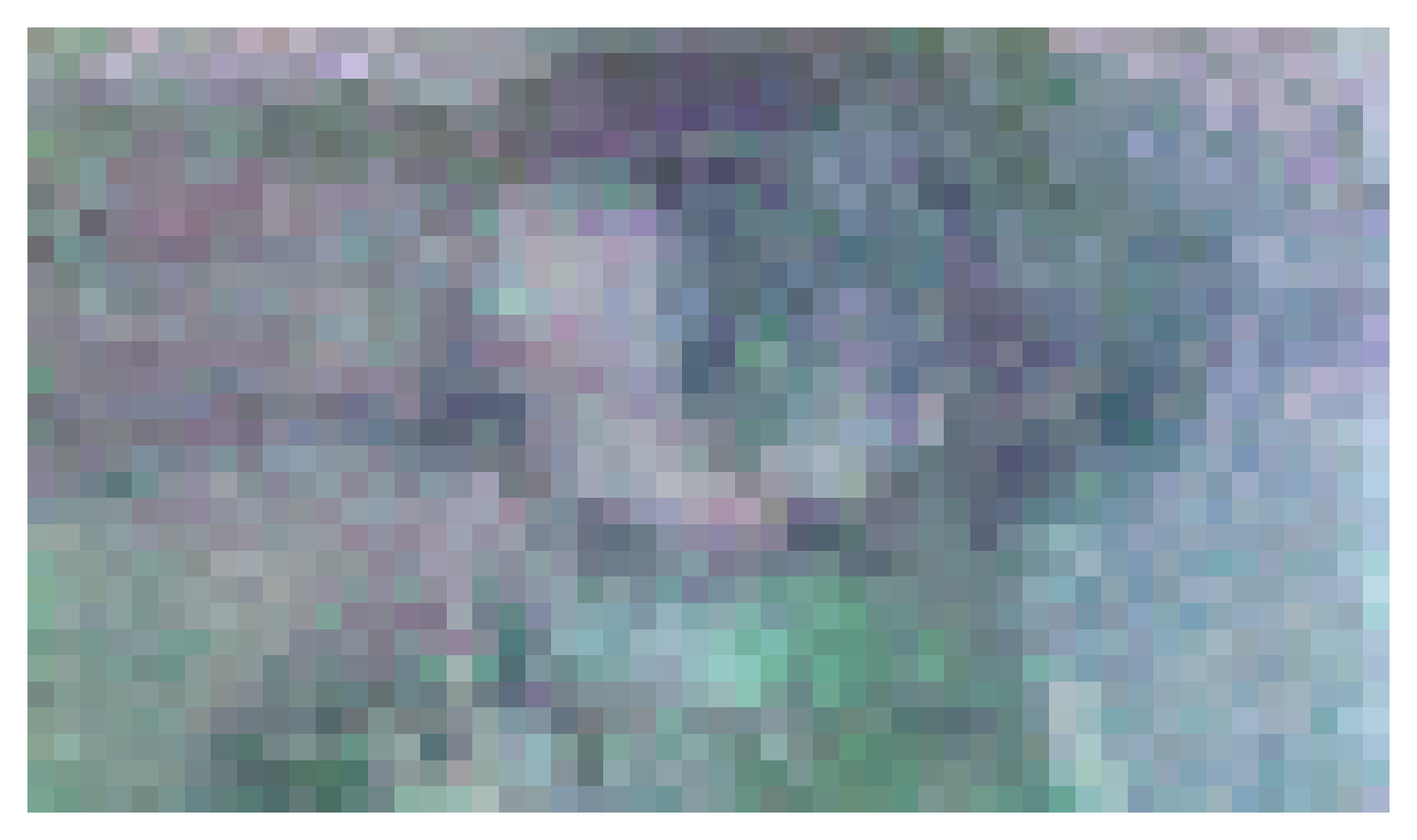}  & \includegraphics[width=2.5cm]{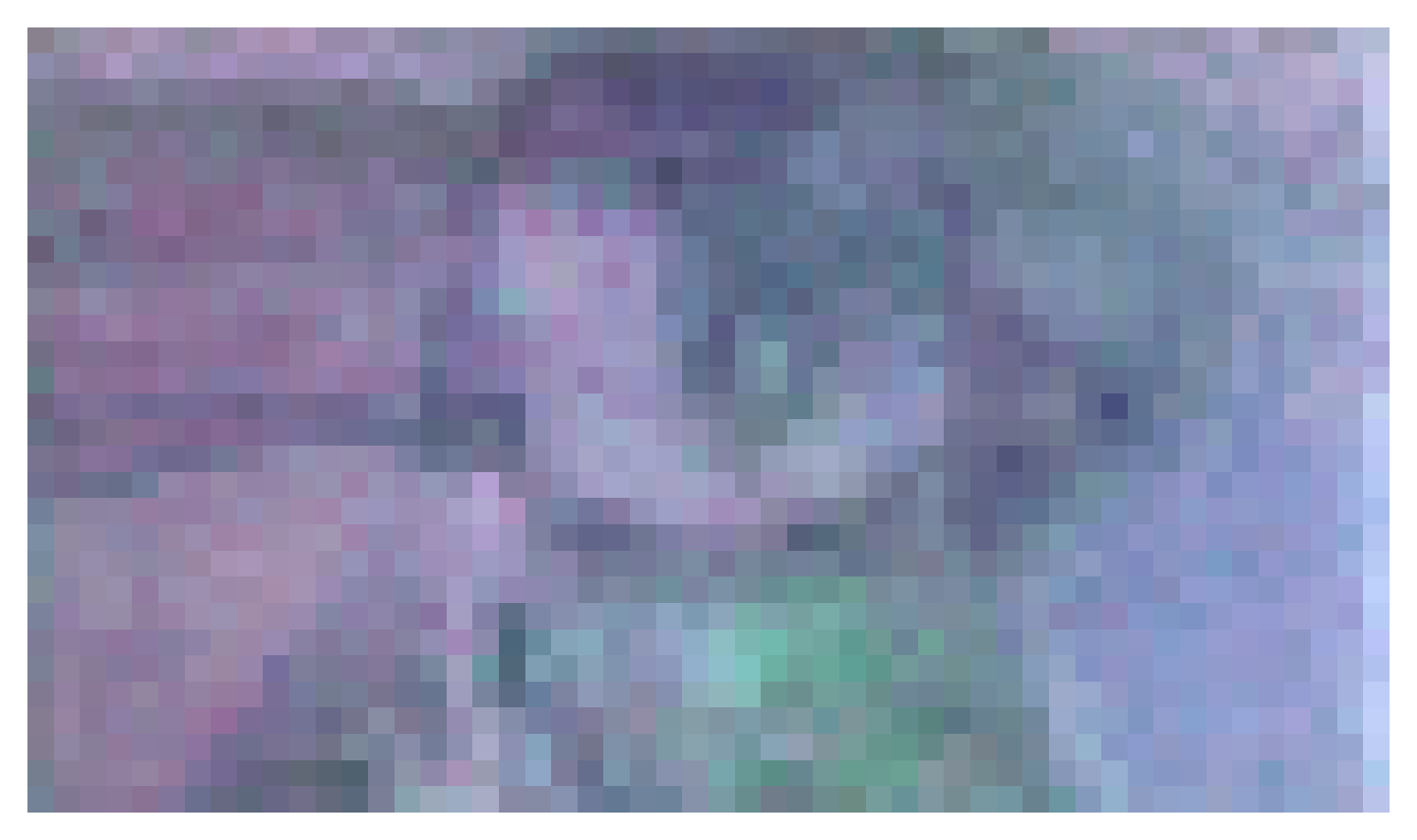}  & \includegraphics[width=2.5cm]{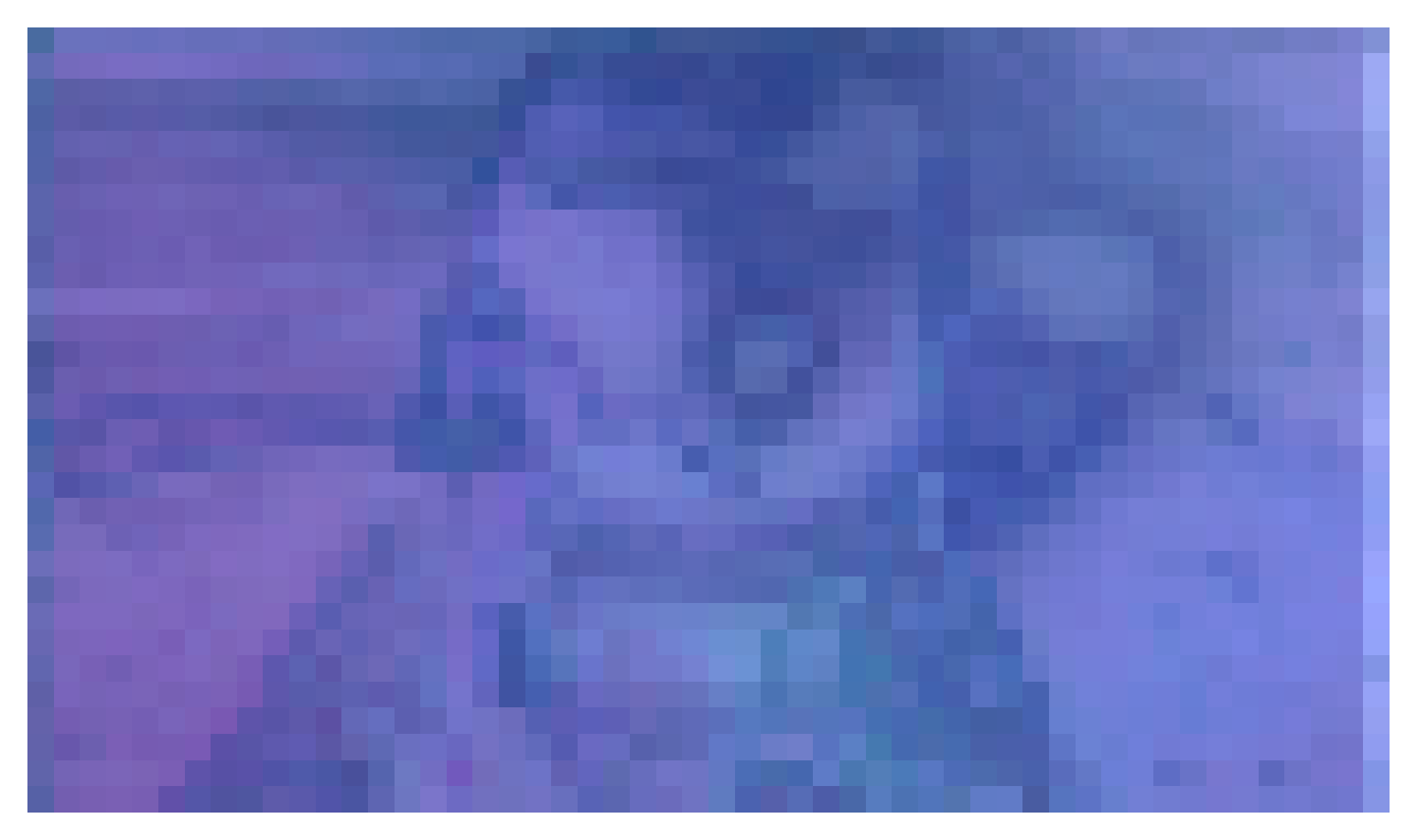} \\ 
\hline
\textbf{4} & \includegraphics[width=2.5cm]{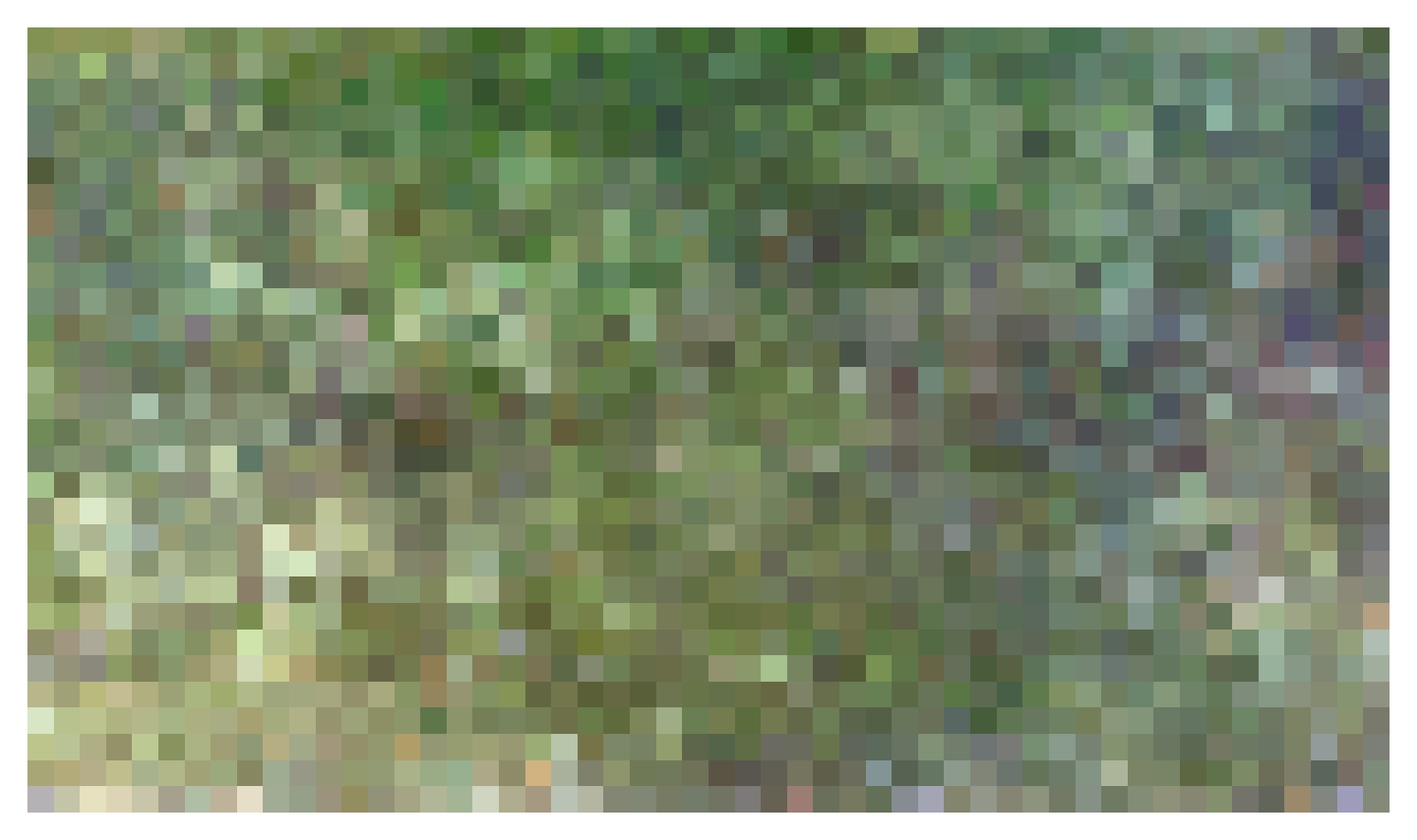} & \includegraphics[width=2.5cm]{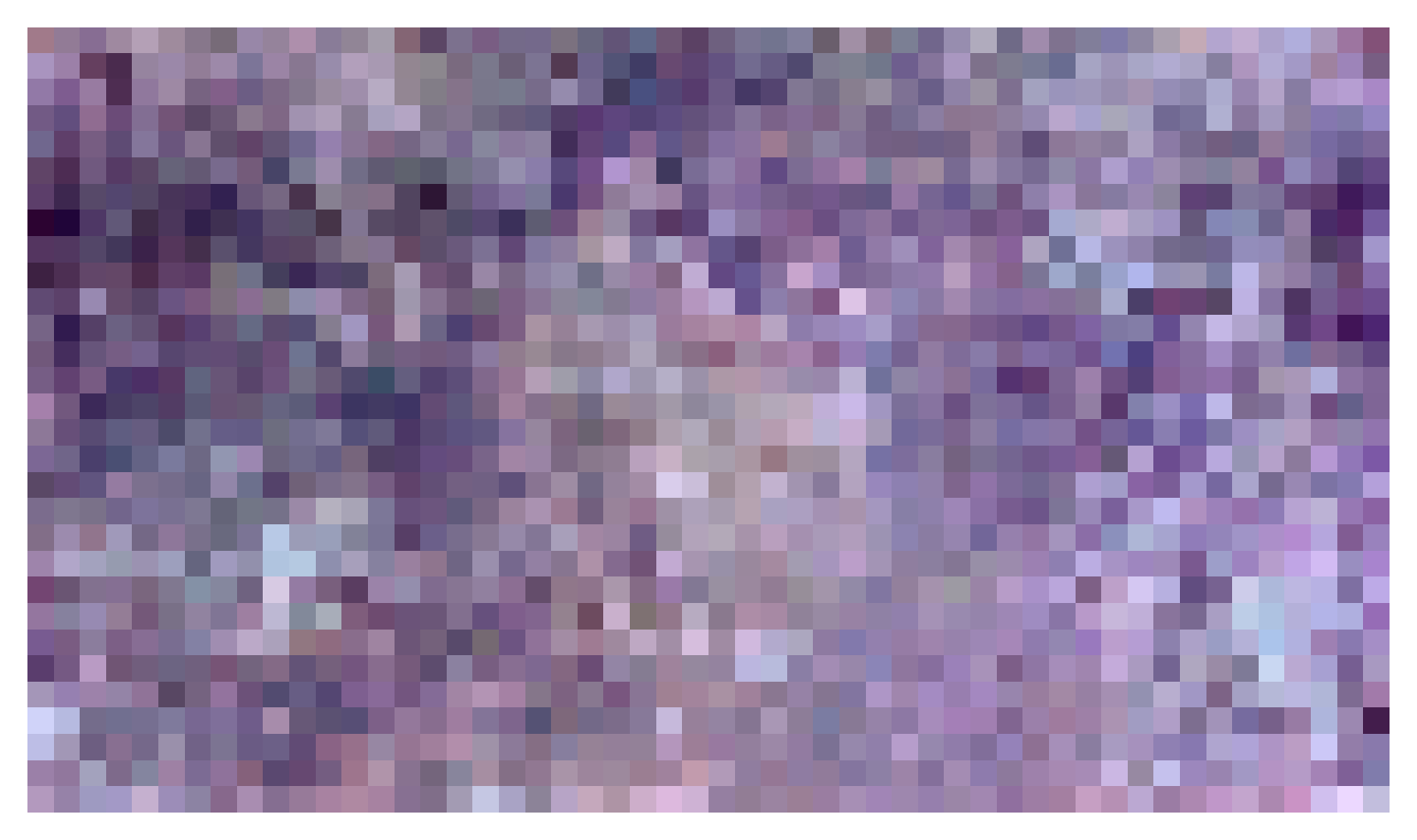} & \includegraphics[width=2.5cm]{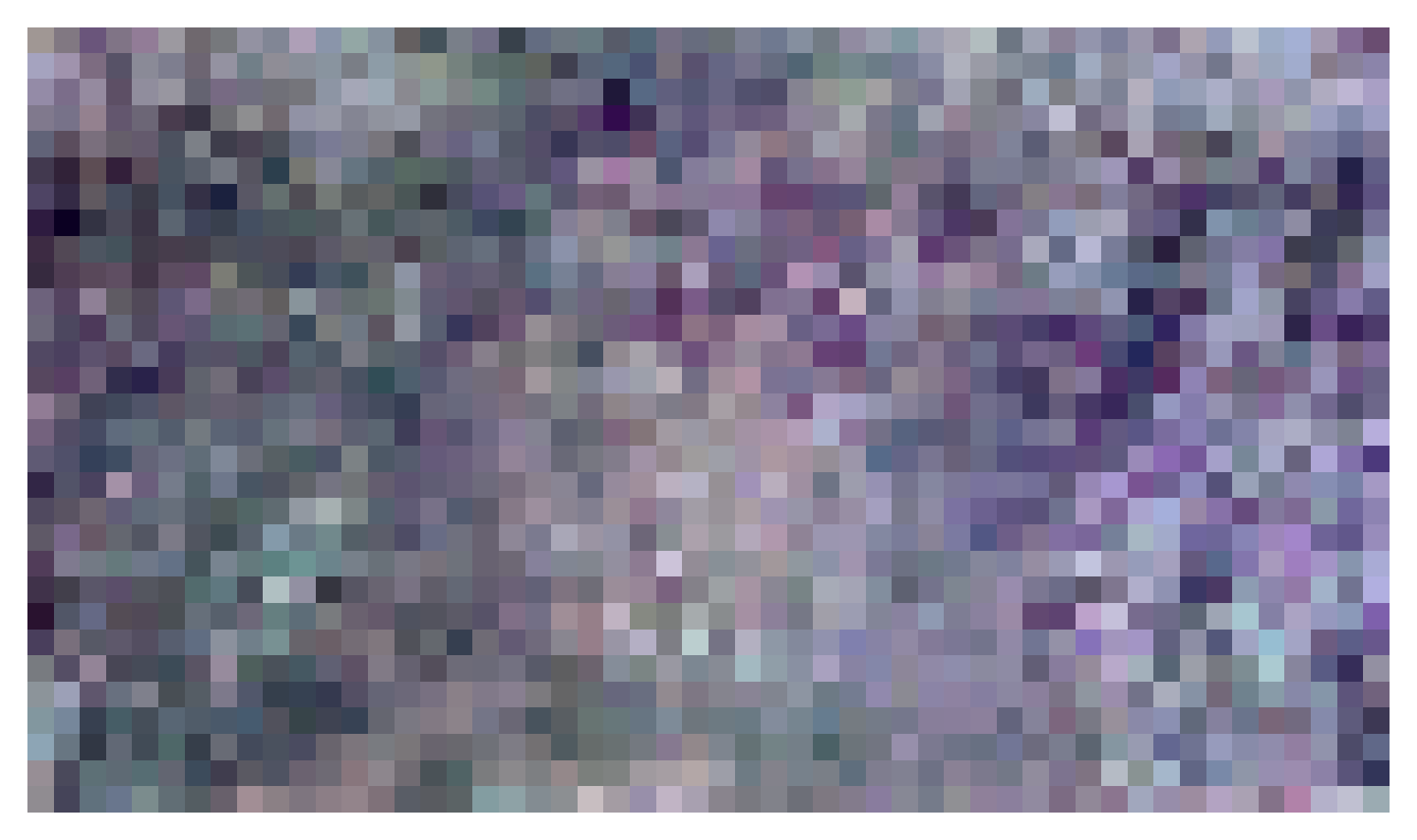} & \includegraphics[width=2.5cm]{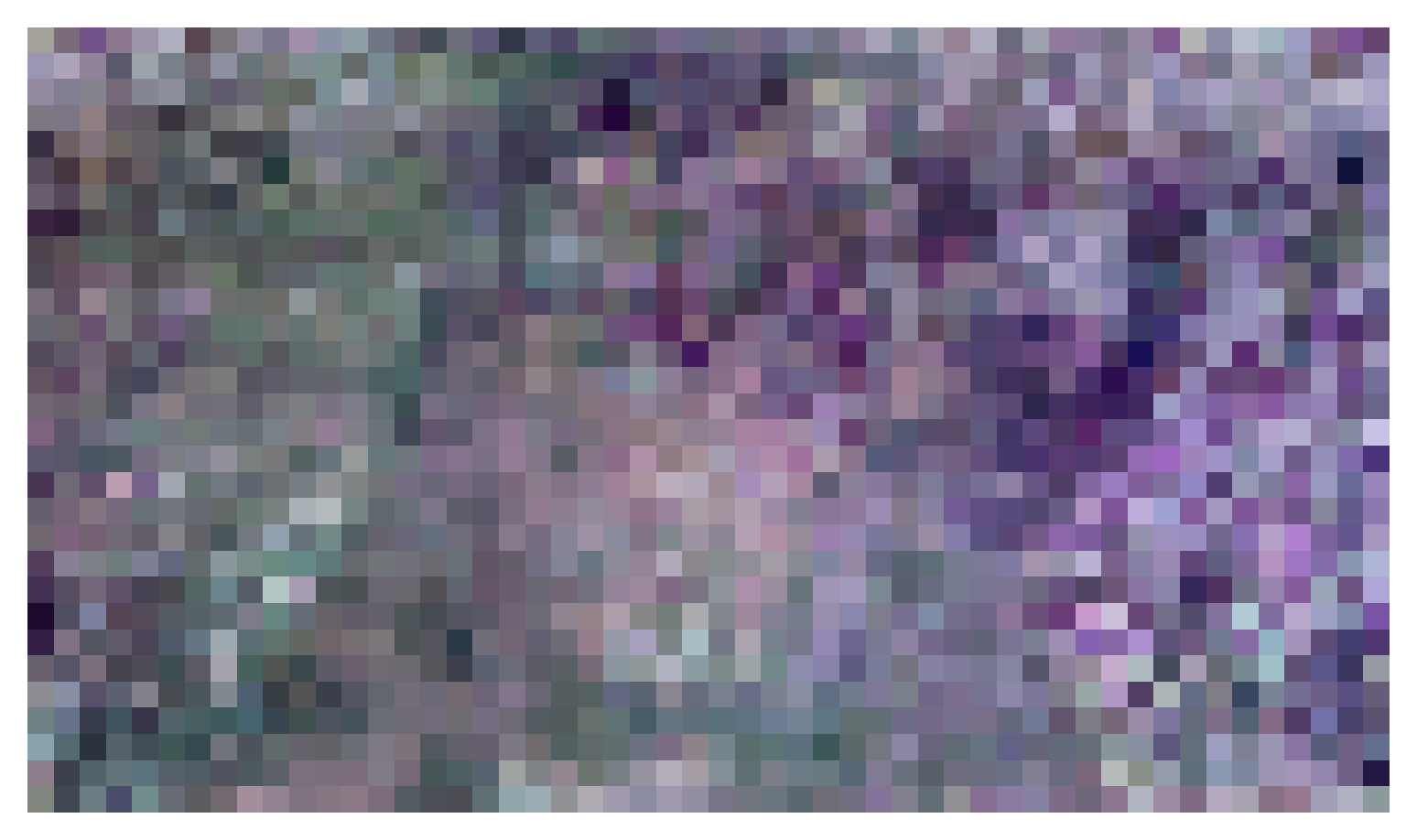}  & \includegraphics[width=2.5cm]{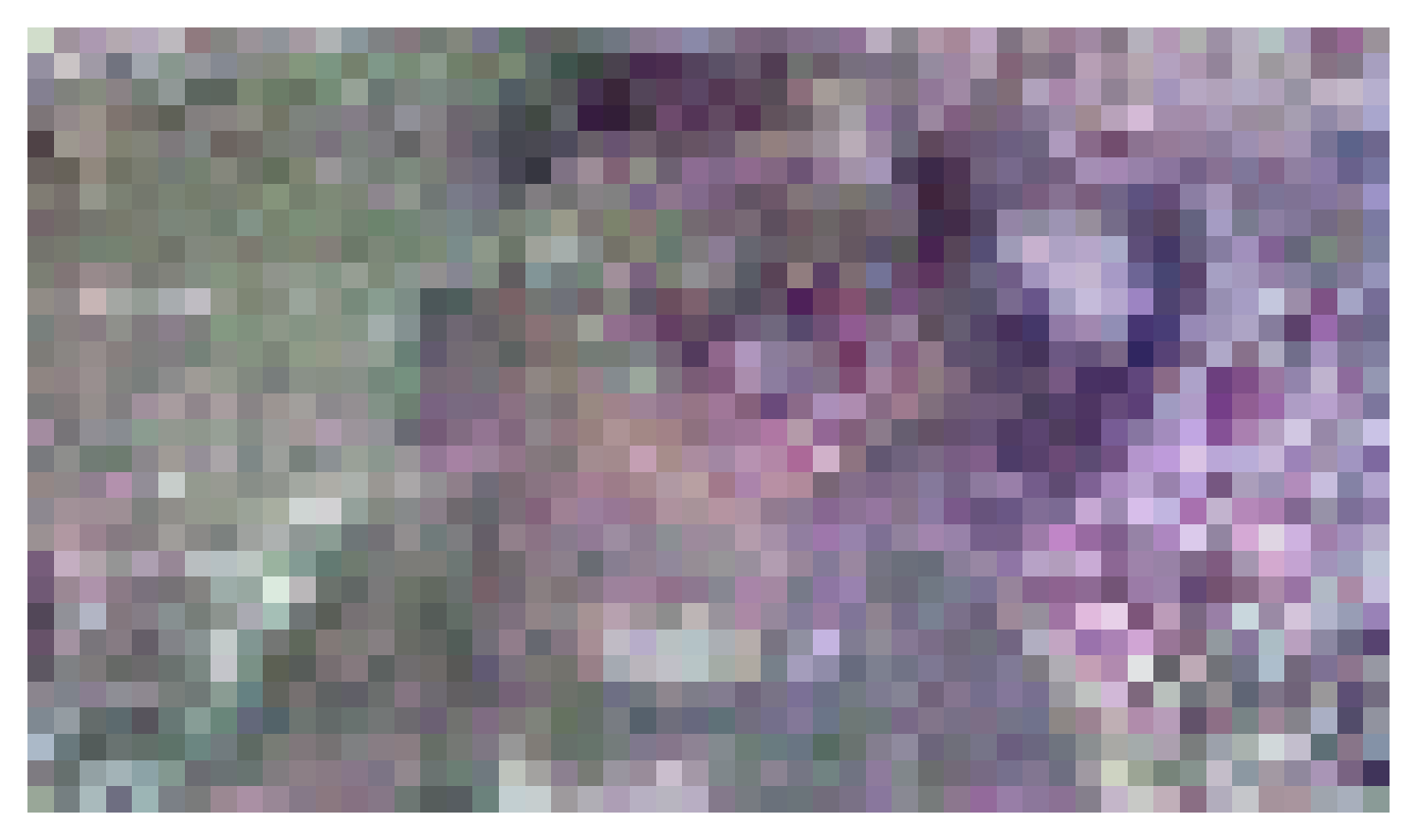}  & \includegraphics[width=2.5cm]{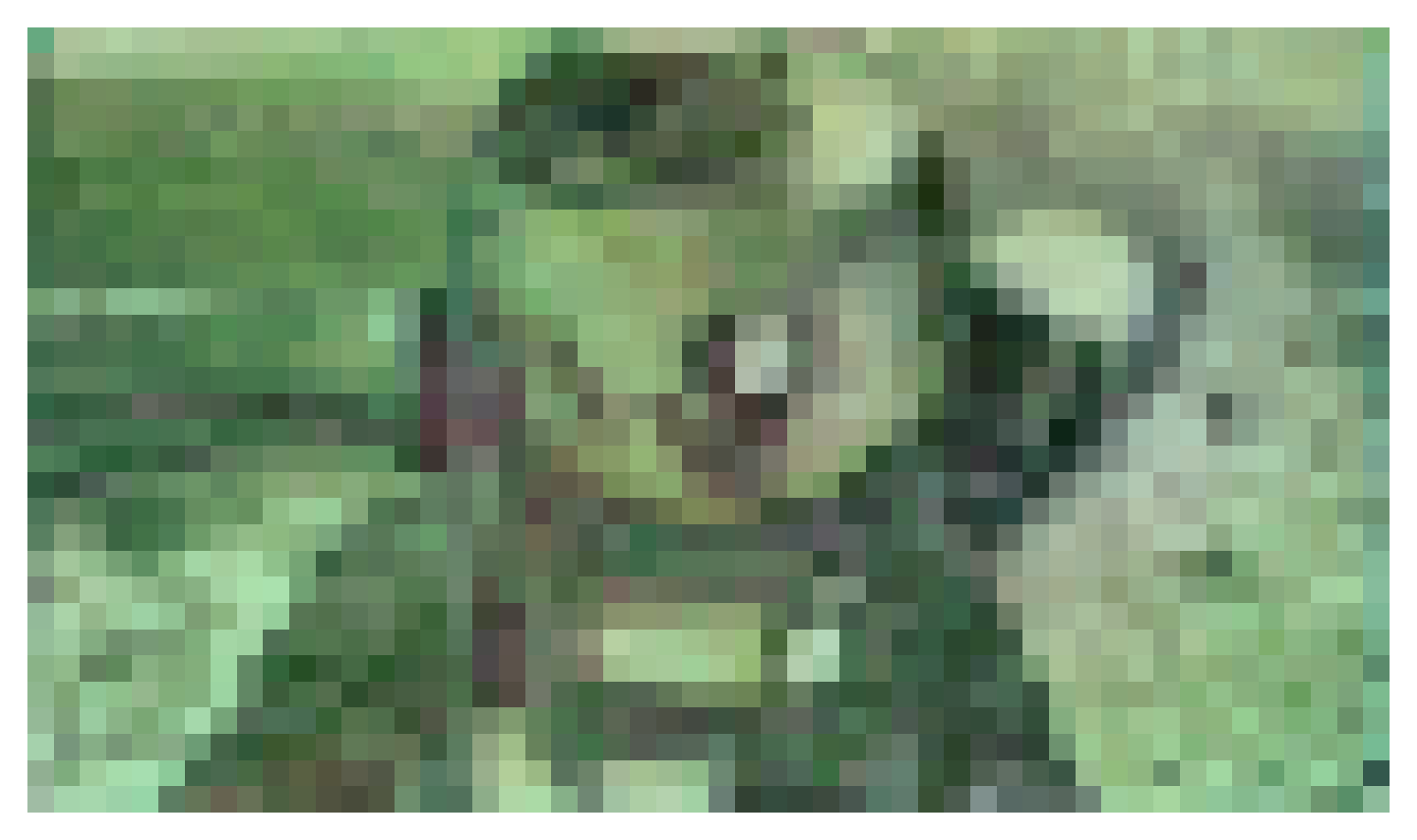} \\ 
\hline \hline
\textbf{11} & \includegraphics[width=2.5cm]{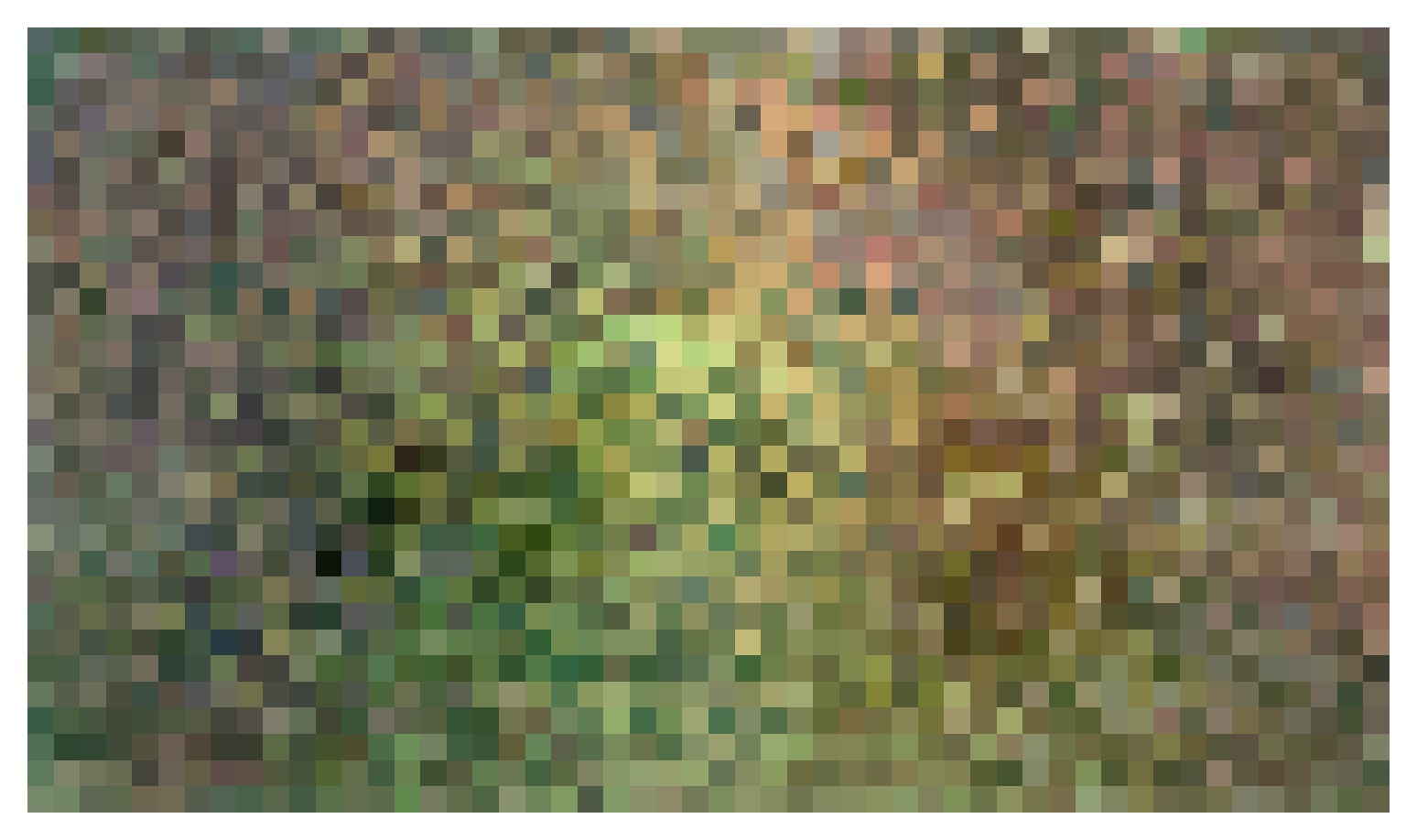} & \includegraphics[width=2.5cm]{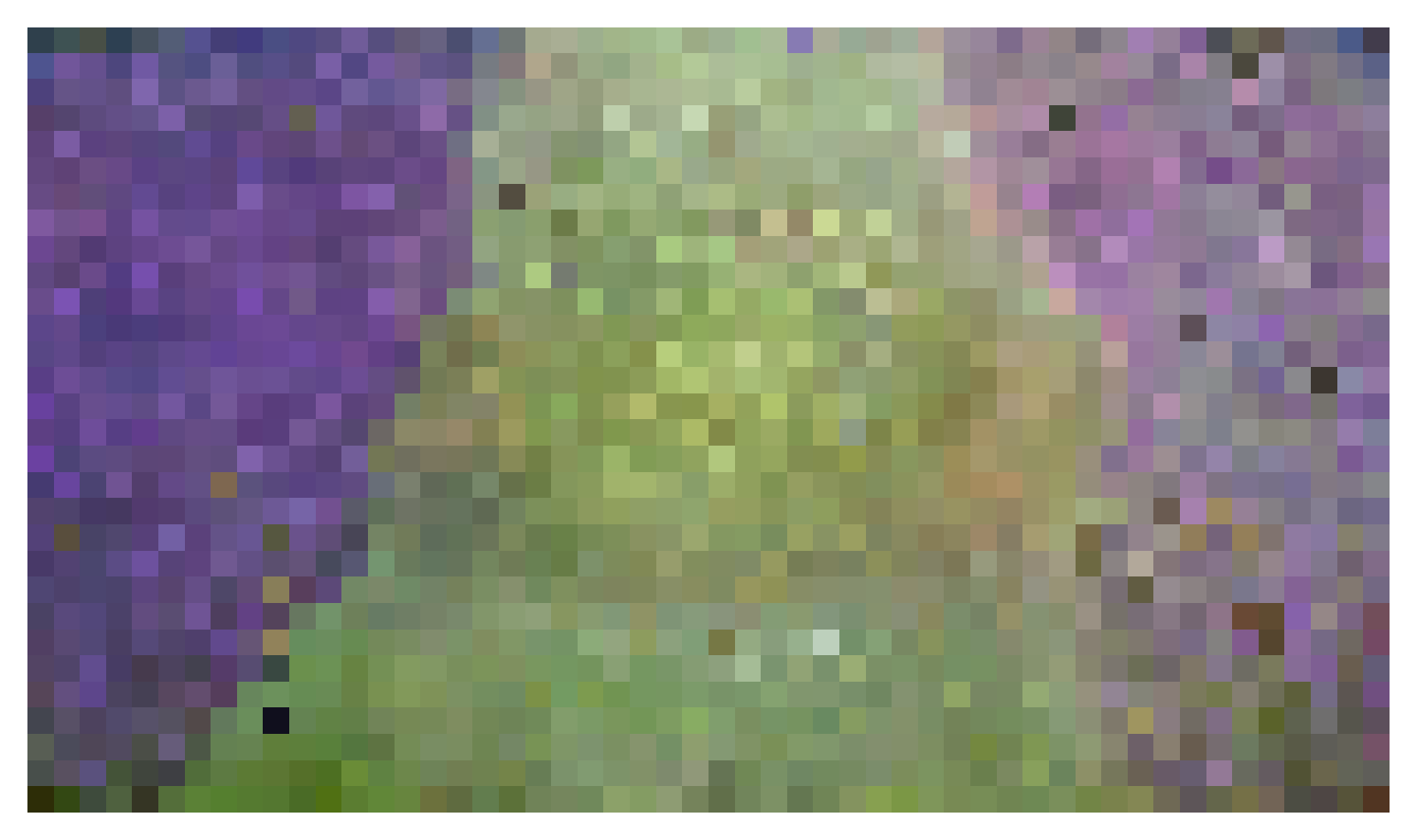} & \includegraphics[width=2.5cm]{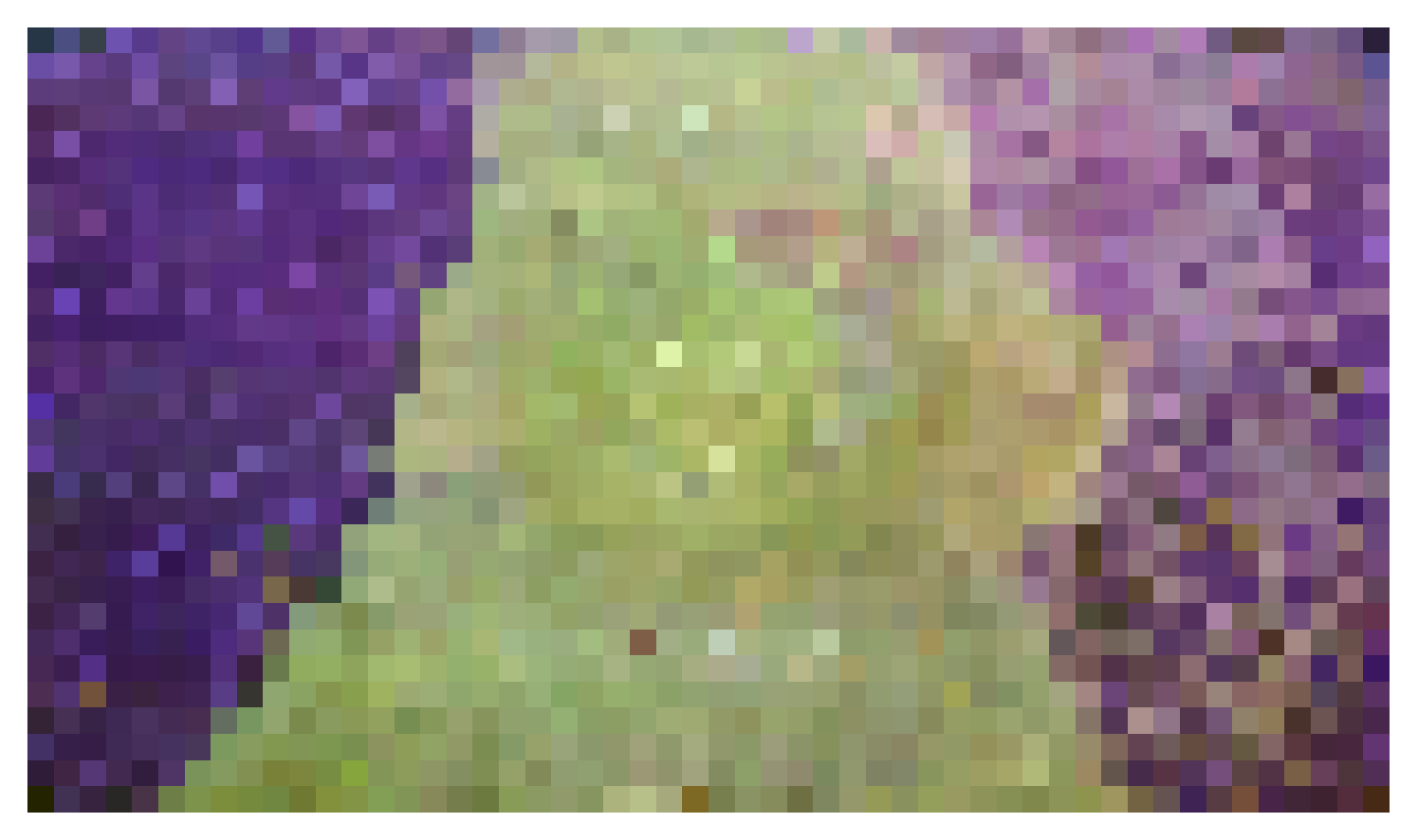} & \includegraphics[width=2.5cm]{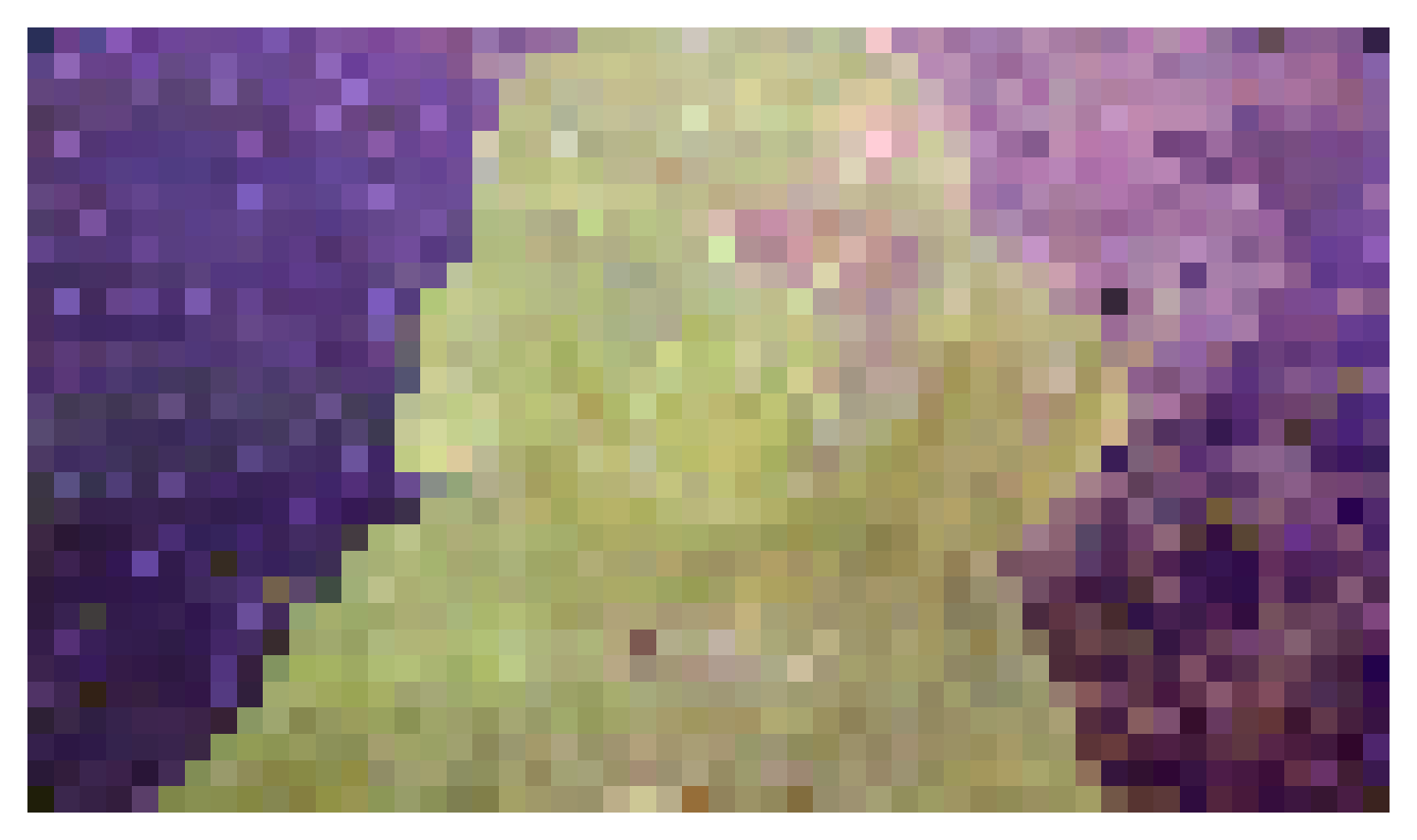}  & \includegraphics[width=2.5cm]{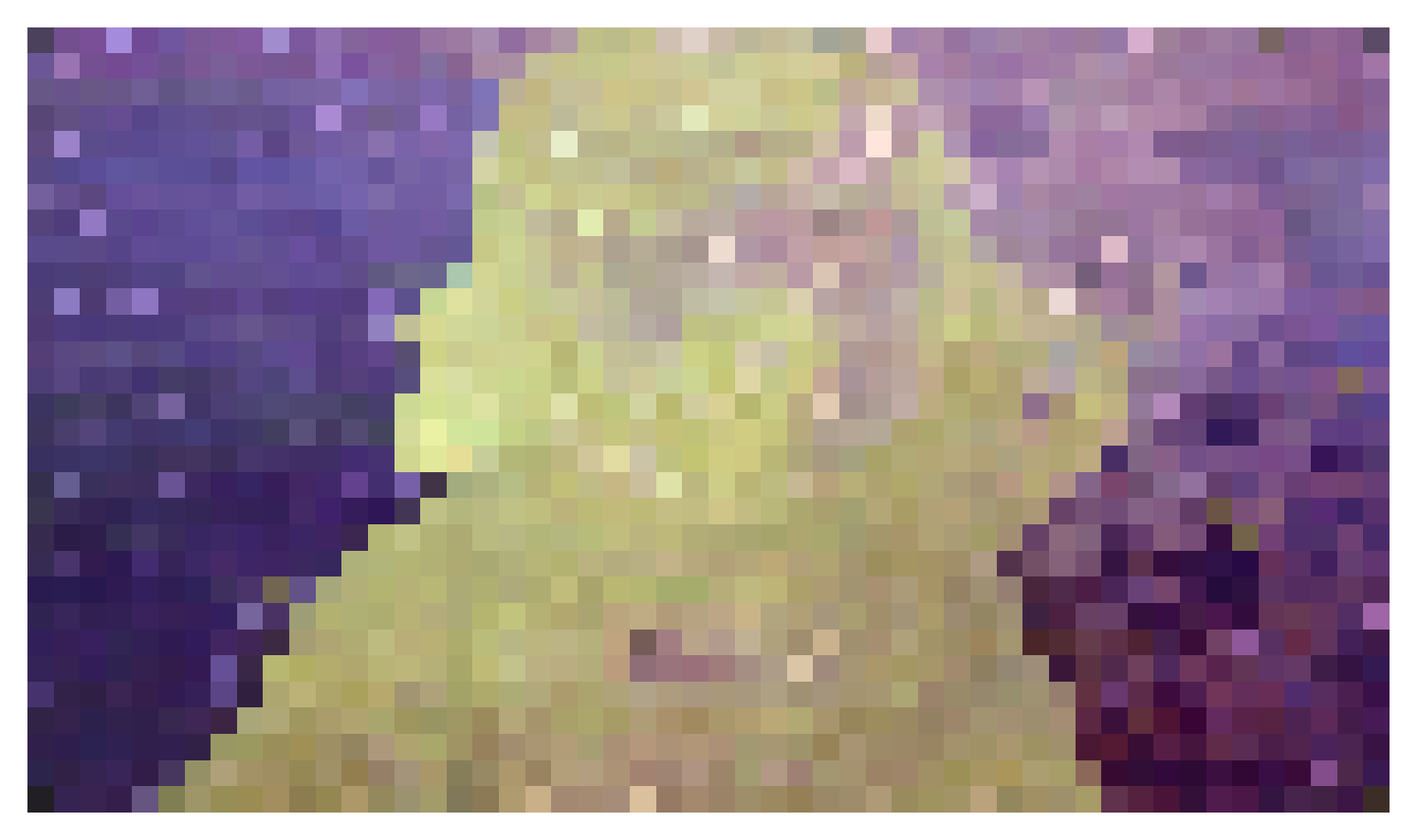}  & \includegraphics[width=2.5cm]{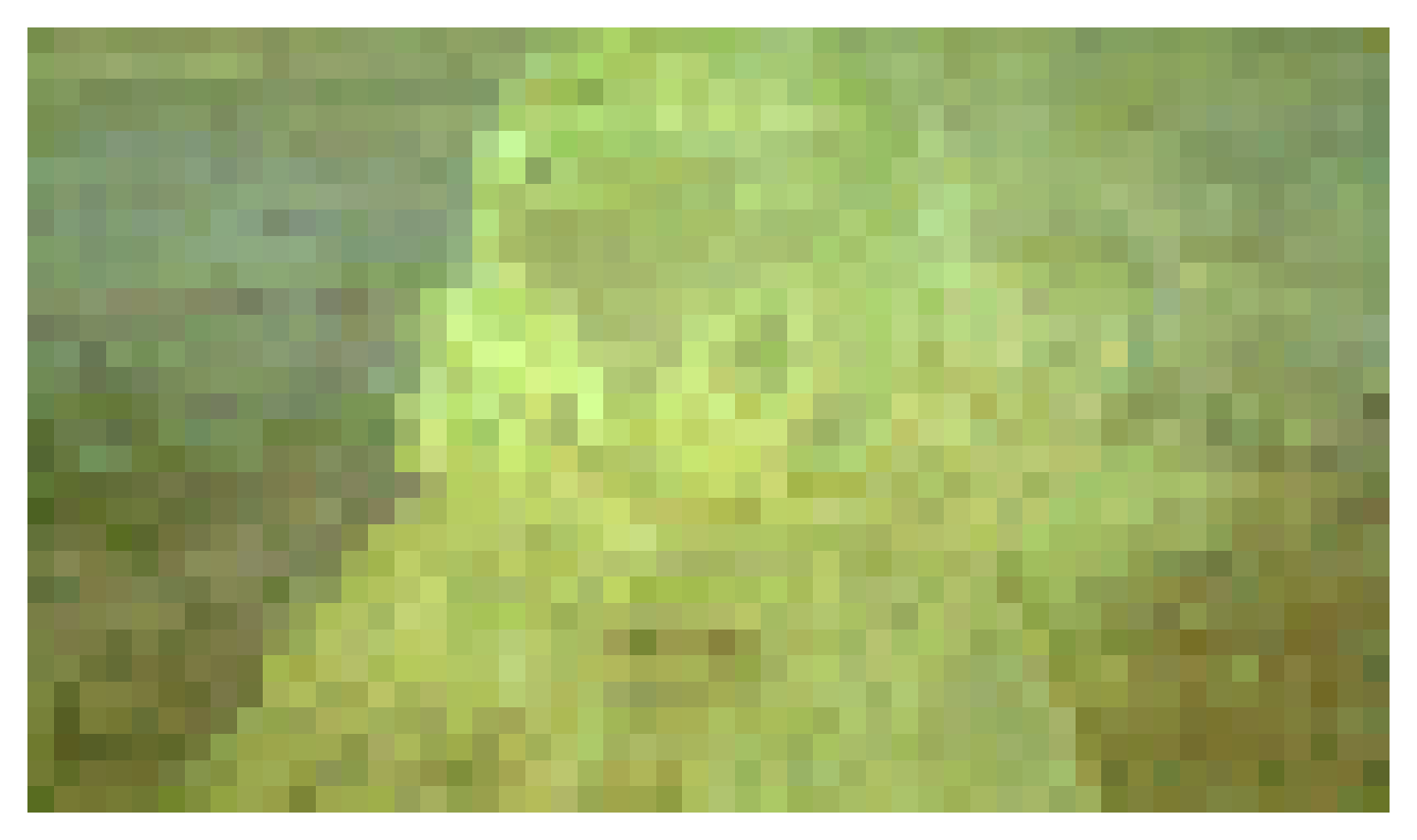} \\ 
\hline \hline
\textbf{18} & \includegraphics[width=2.5cm]{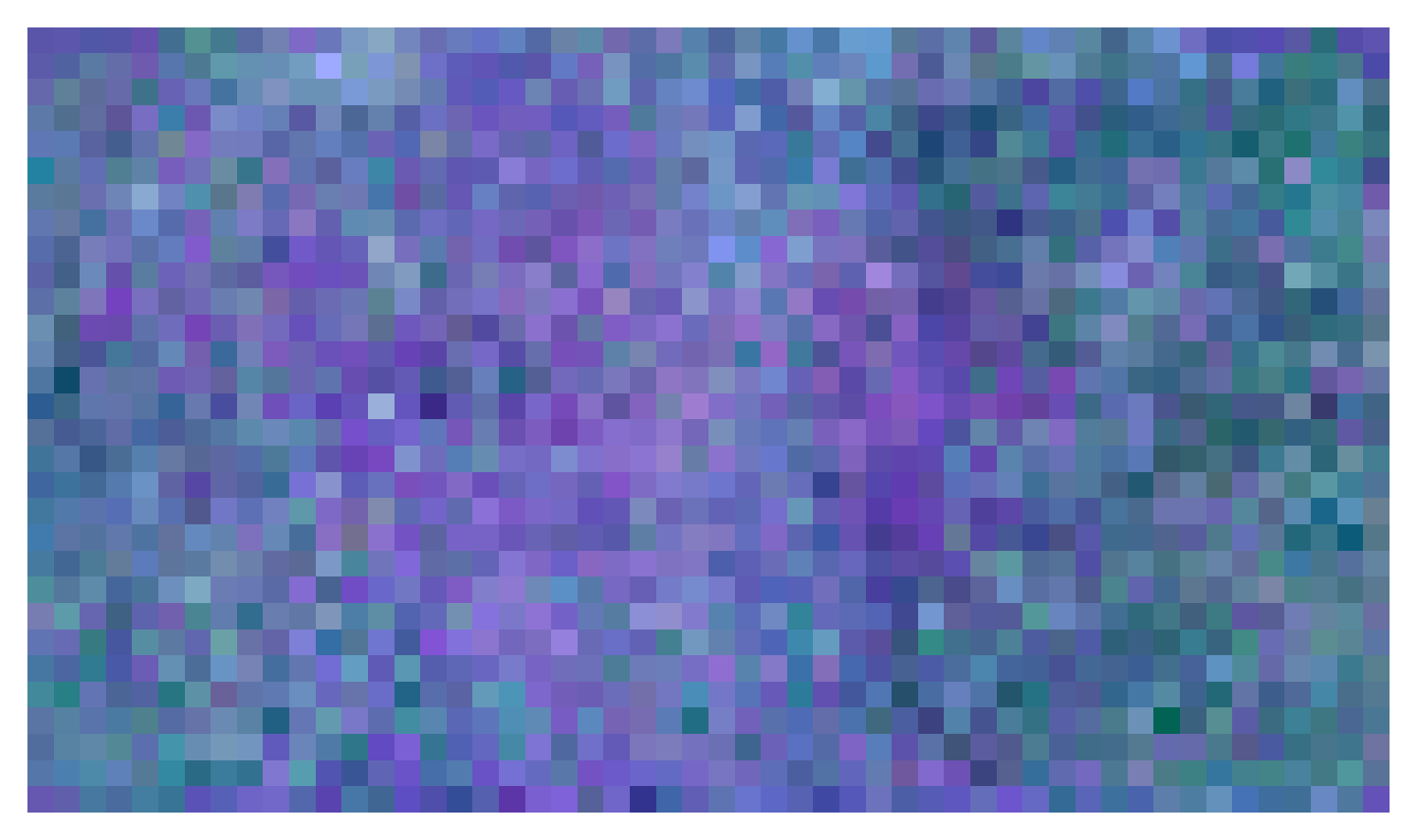} & \includegraphics[width=2.5cm]{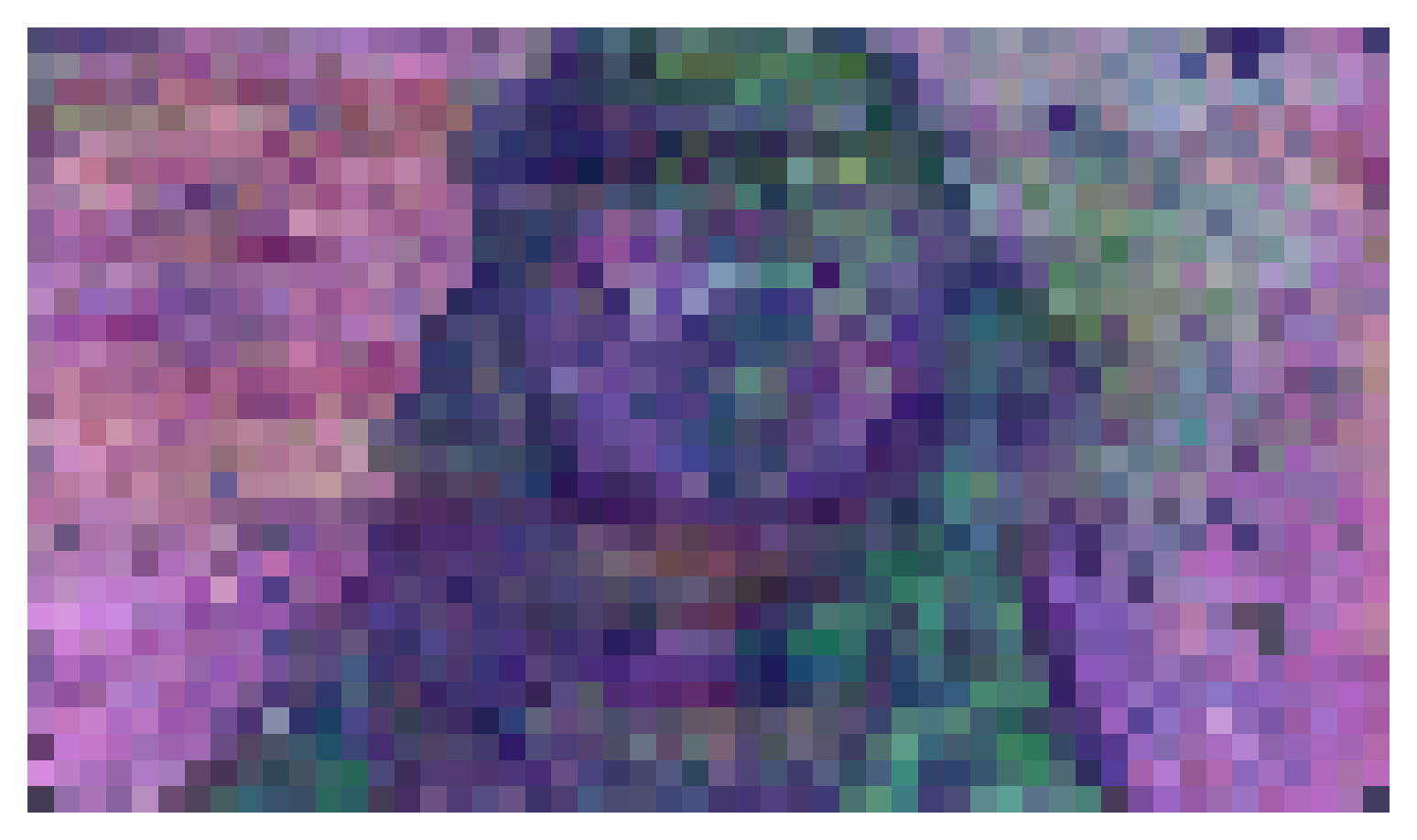} & \includegraphics[width=2.5cm]{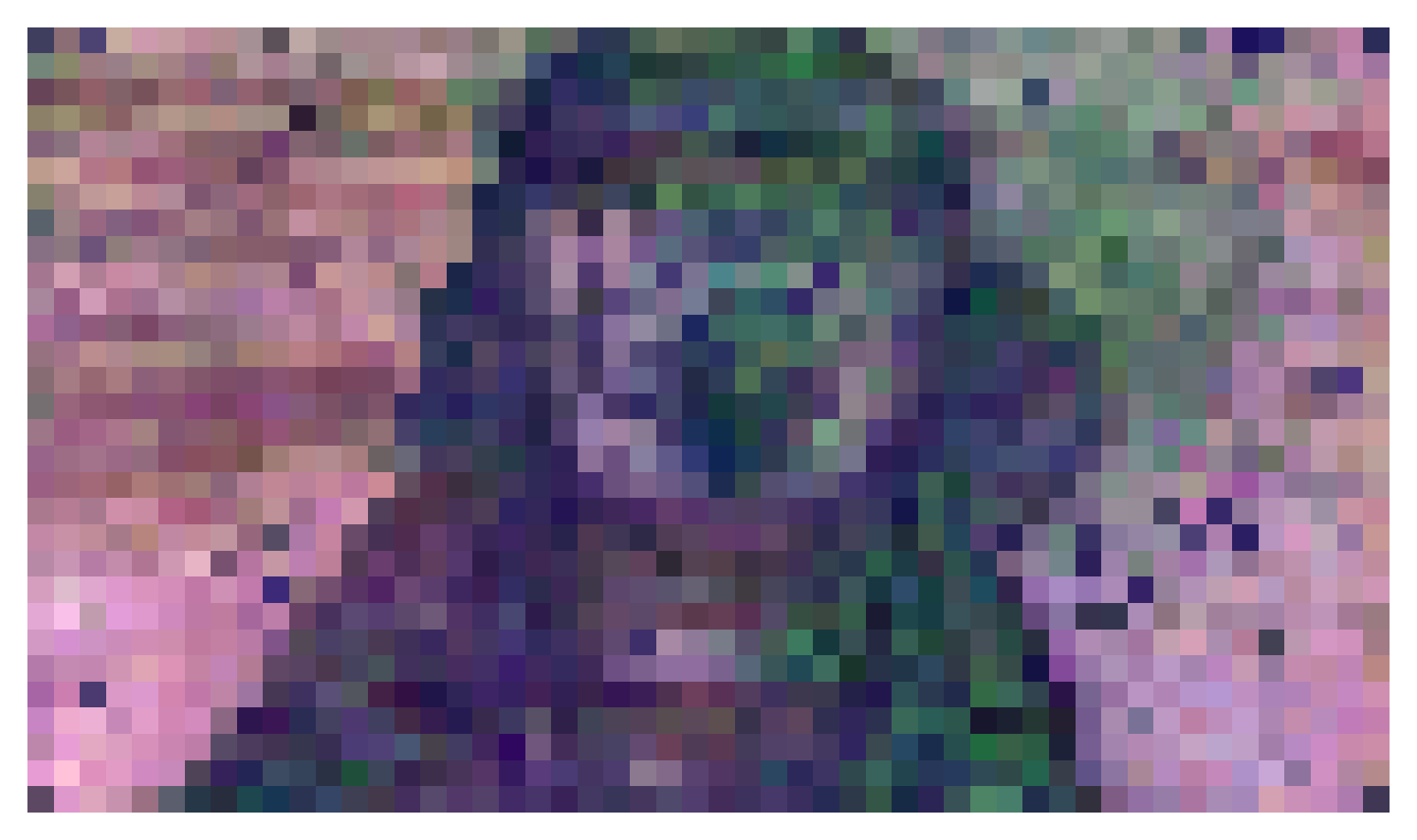} & \includegraphics[width=2.5cm]{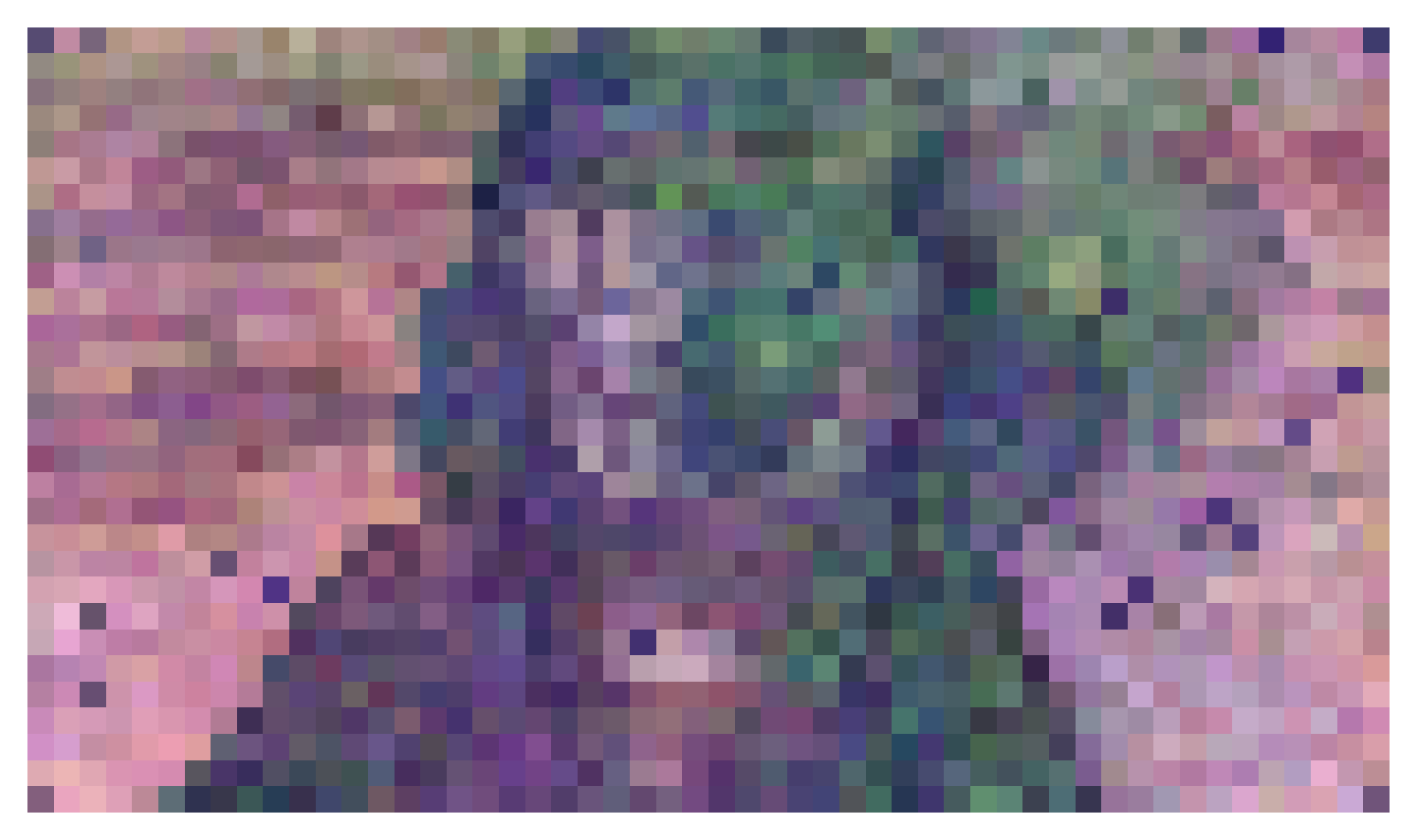}  & \includegraphics[width=2.5cm]{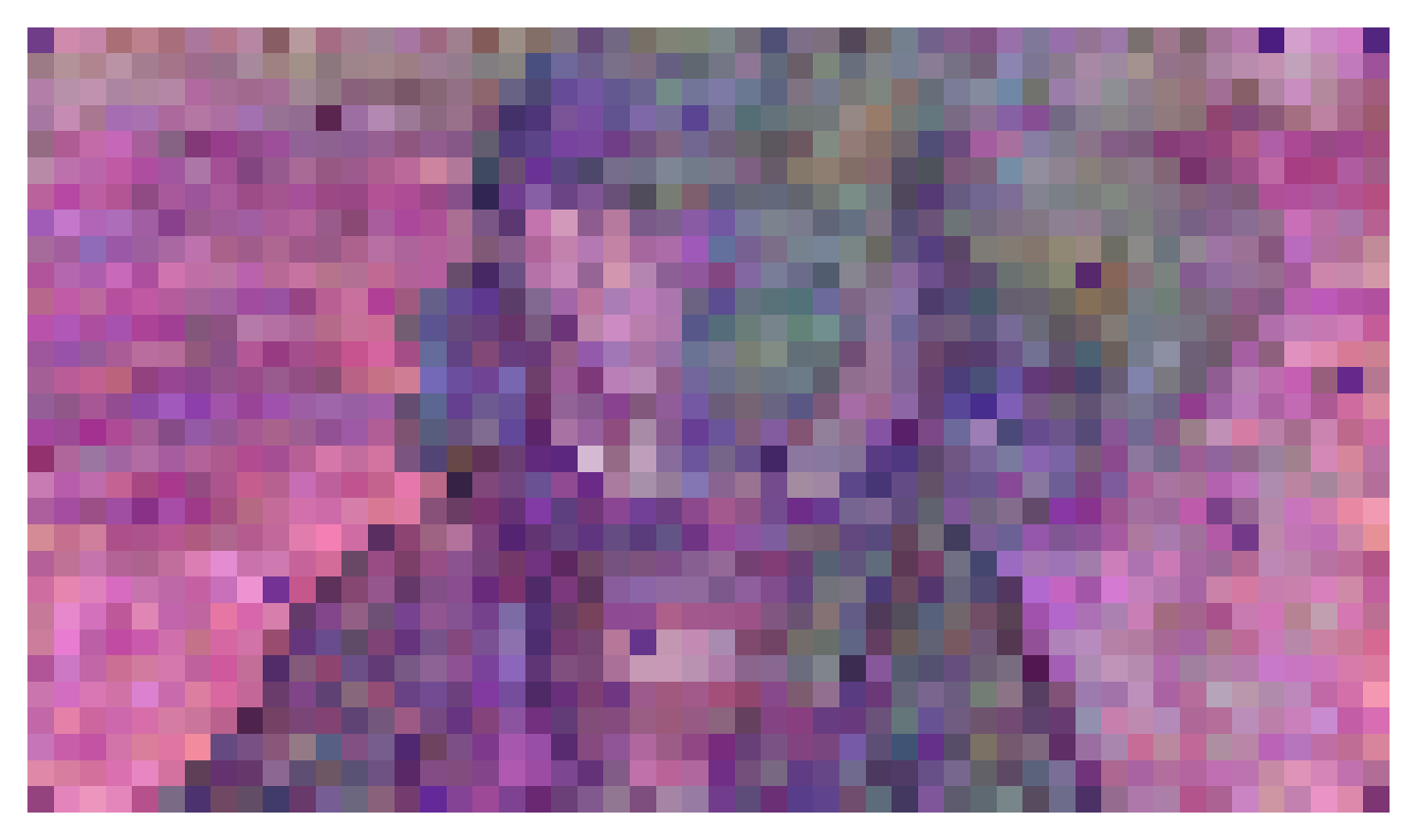}  & \includegraphics[width=2.5cm]{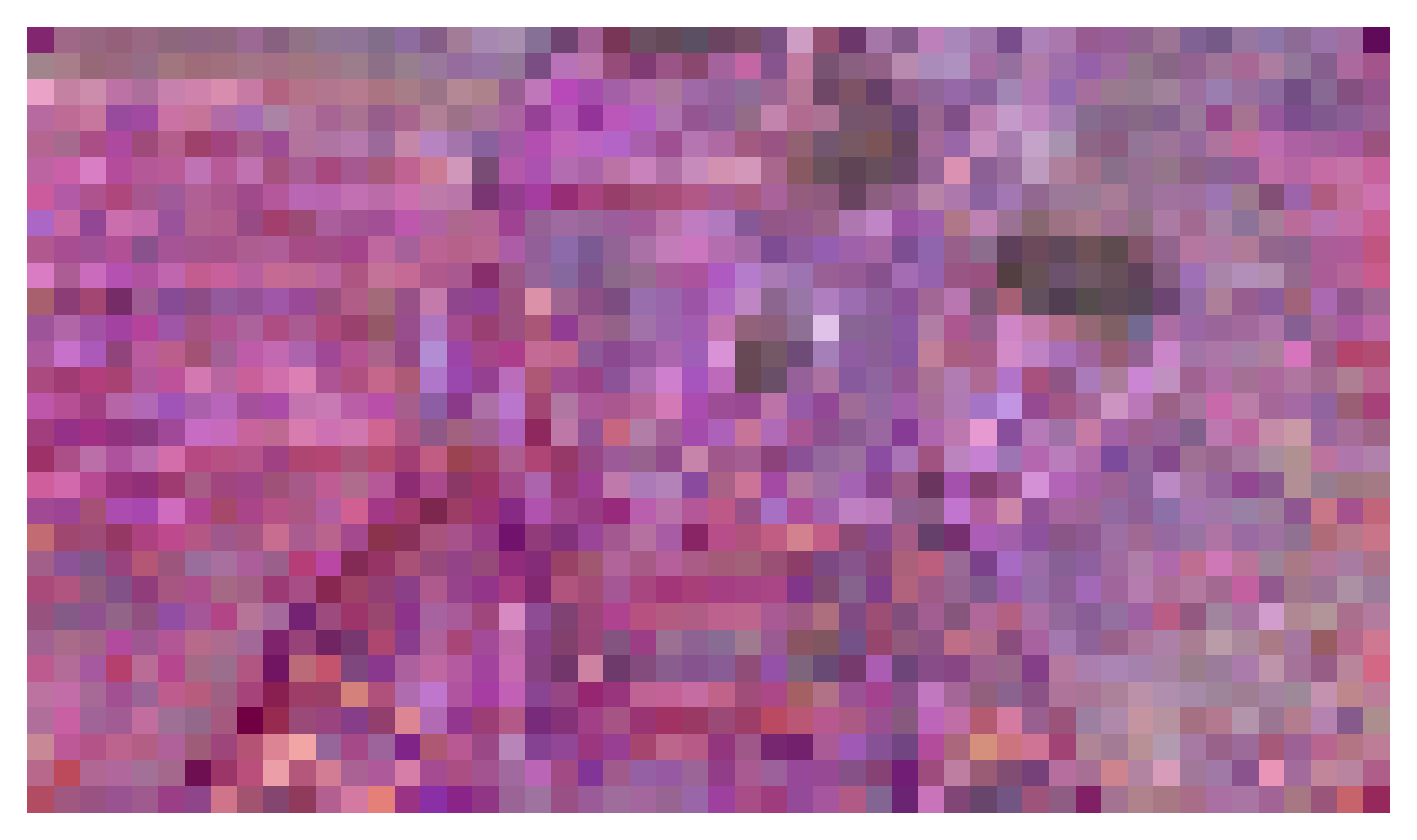} \\ 
\hline \hline
\textbf{26} & \includegraphics[width=2.5cm]{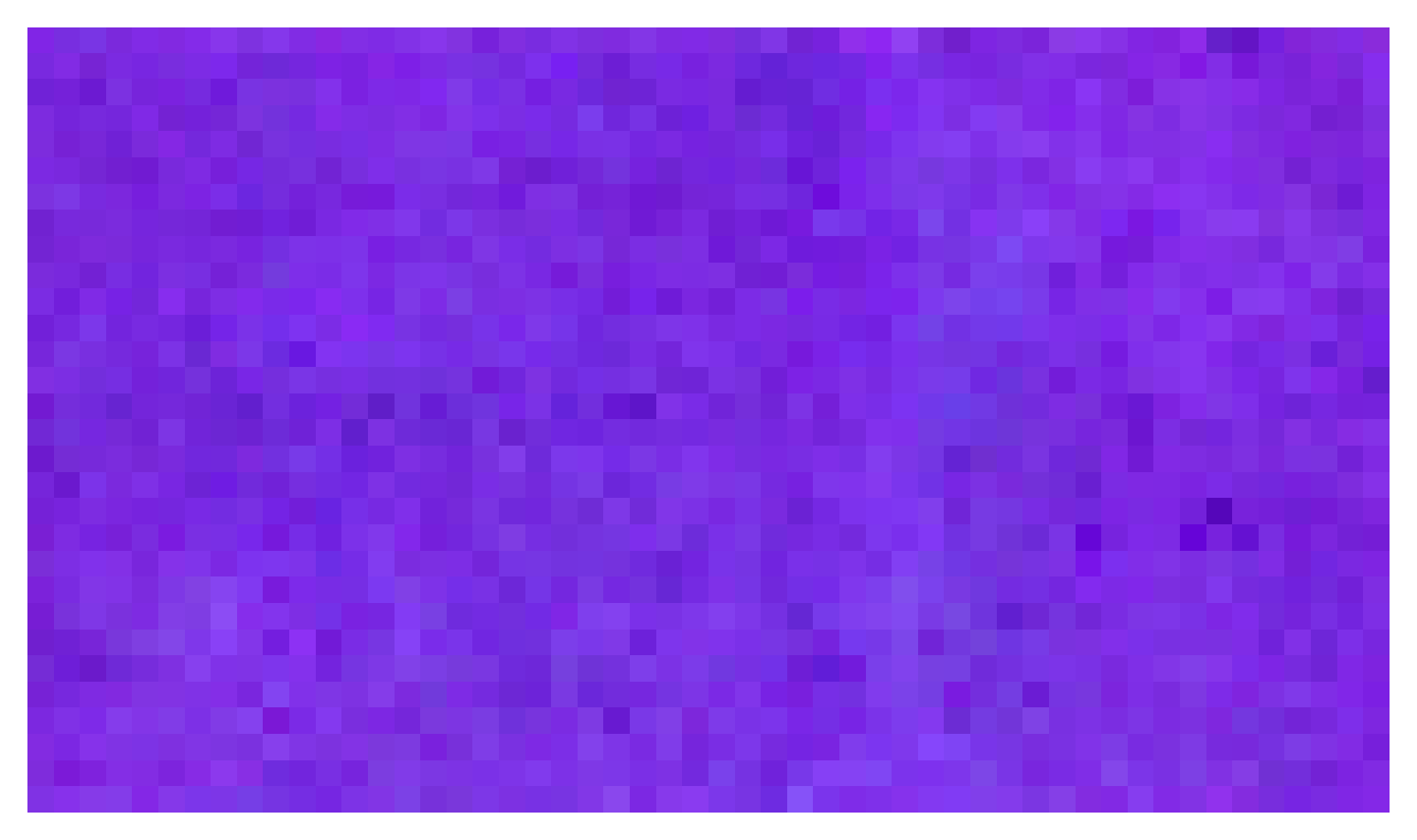} & \includegraphics[width=2.5cm]{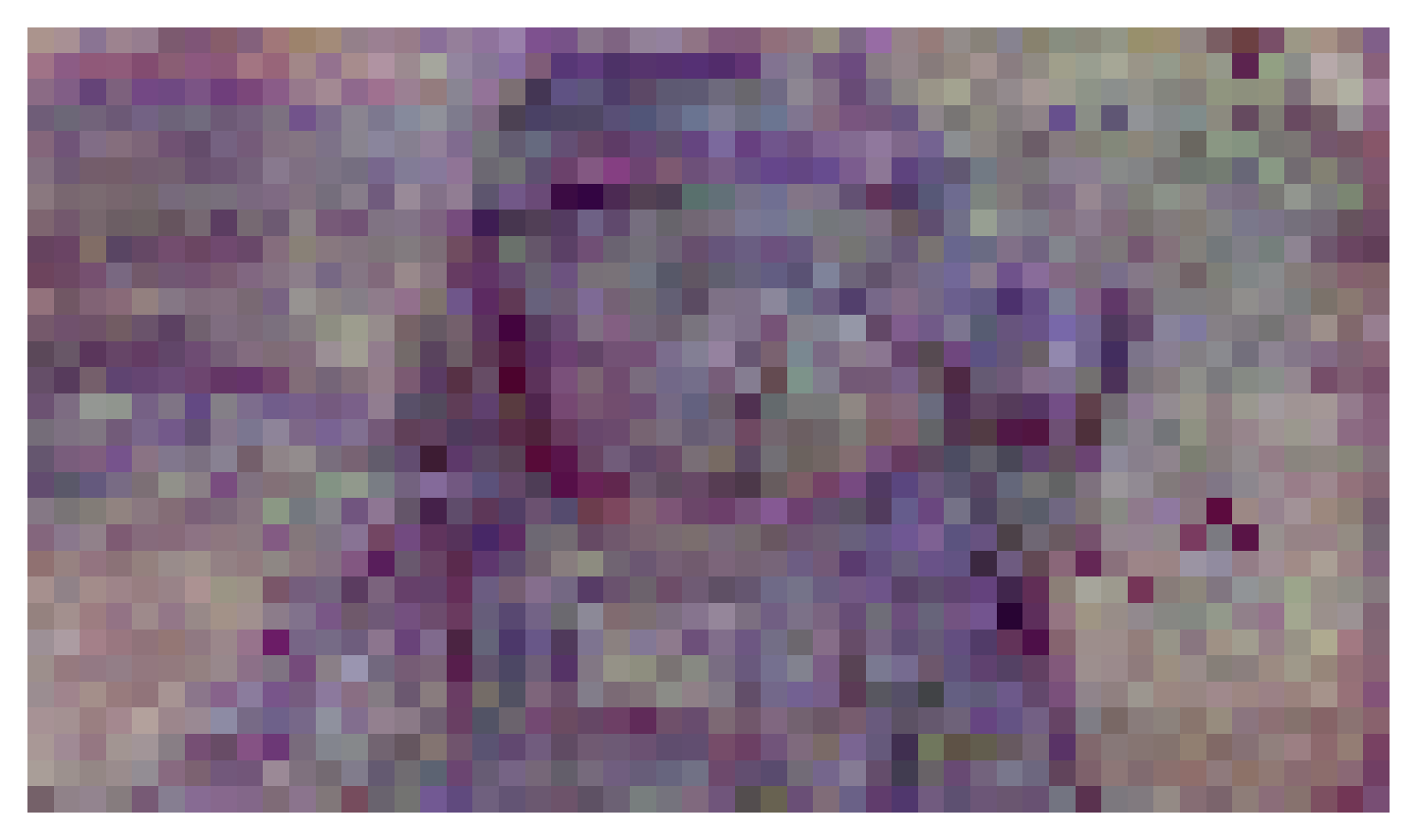} & \includegraphics[width=2.5cm]{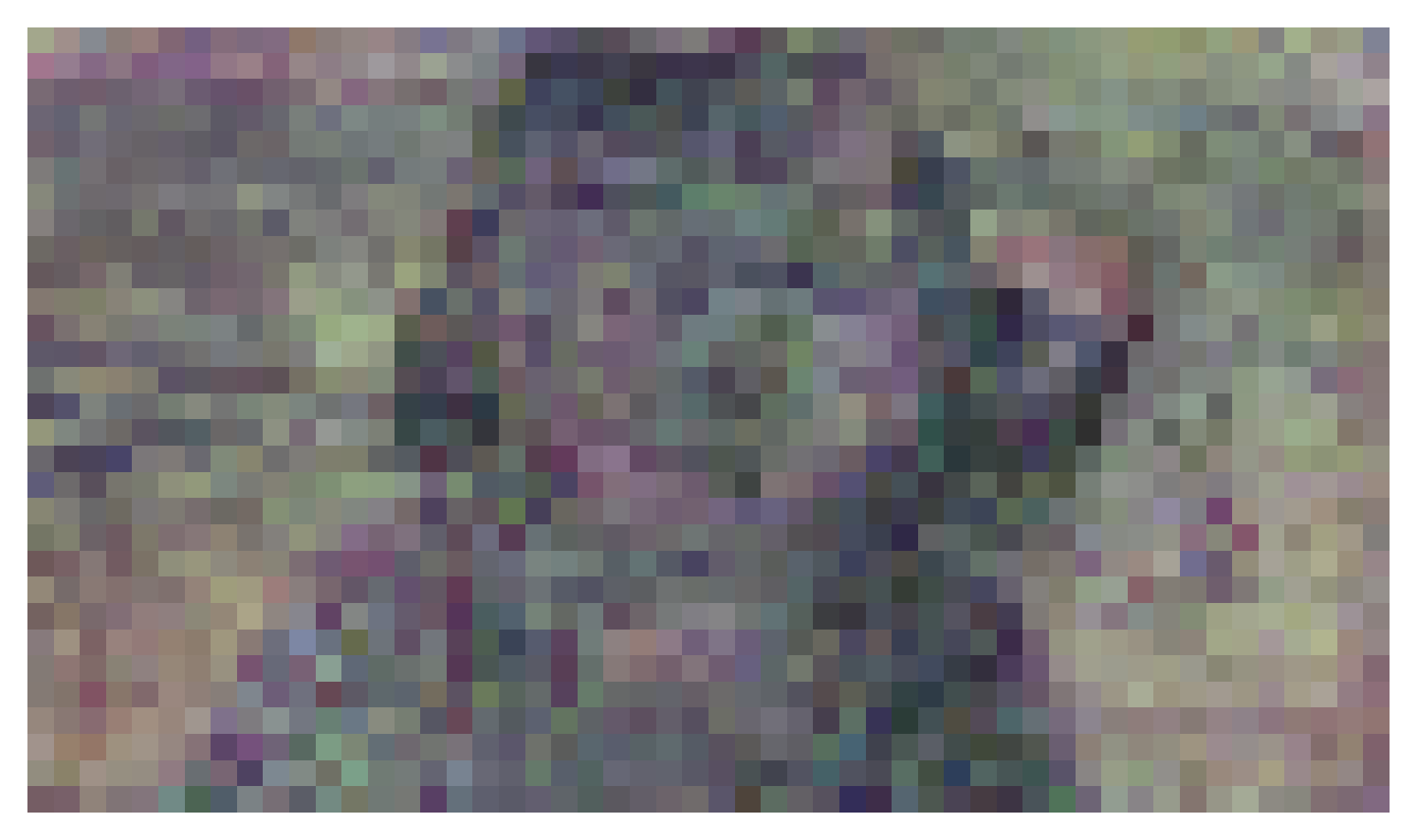} & \includegraphics[width=2.5cm]{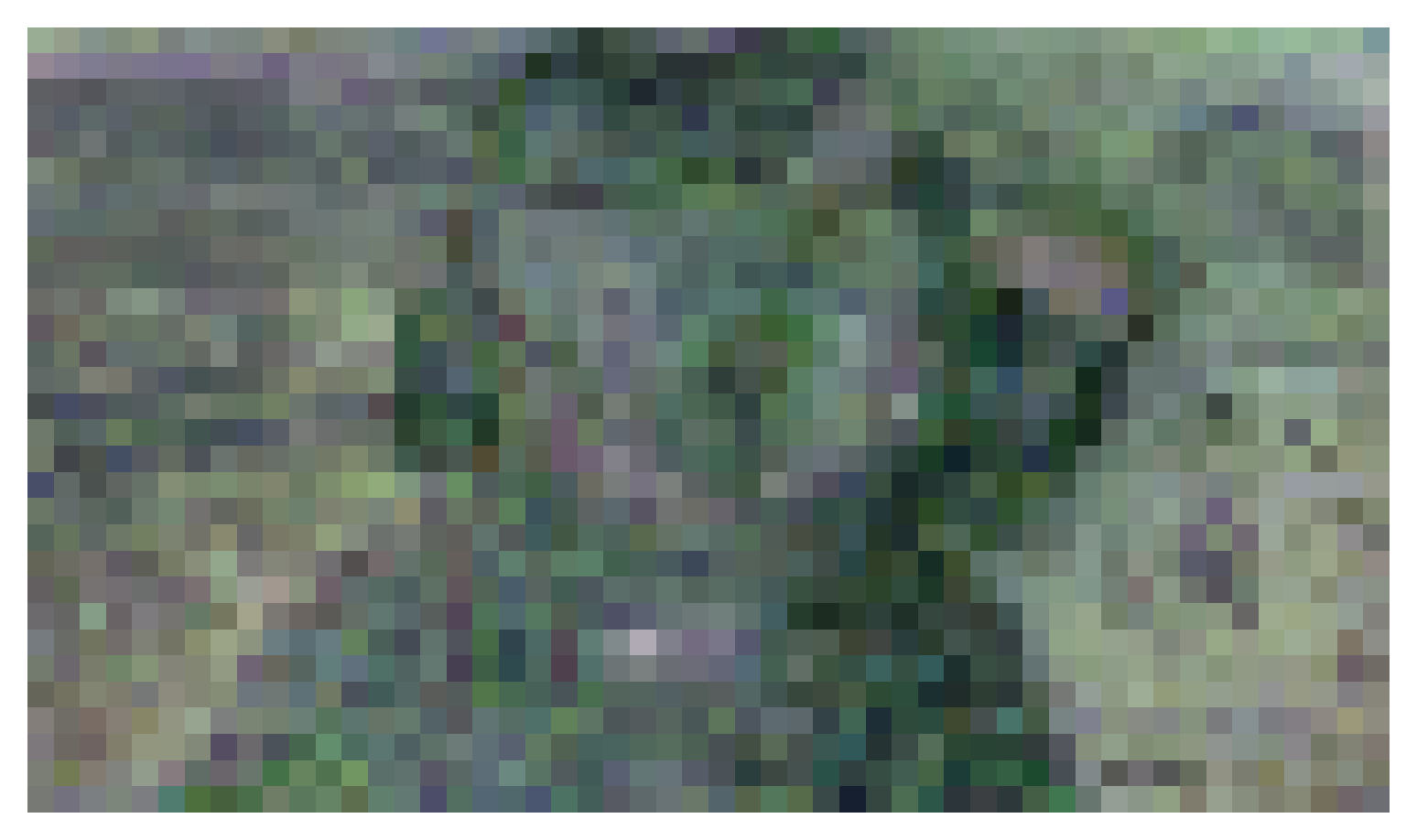}  & \includegraphics[width=2.5cm]{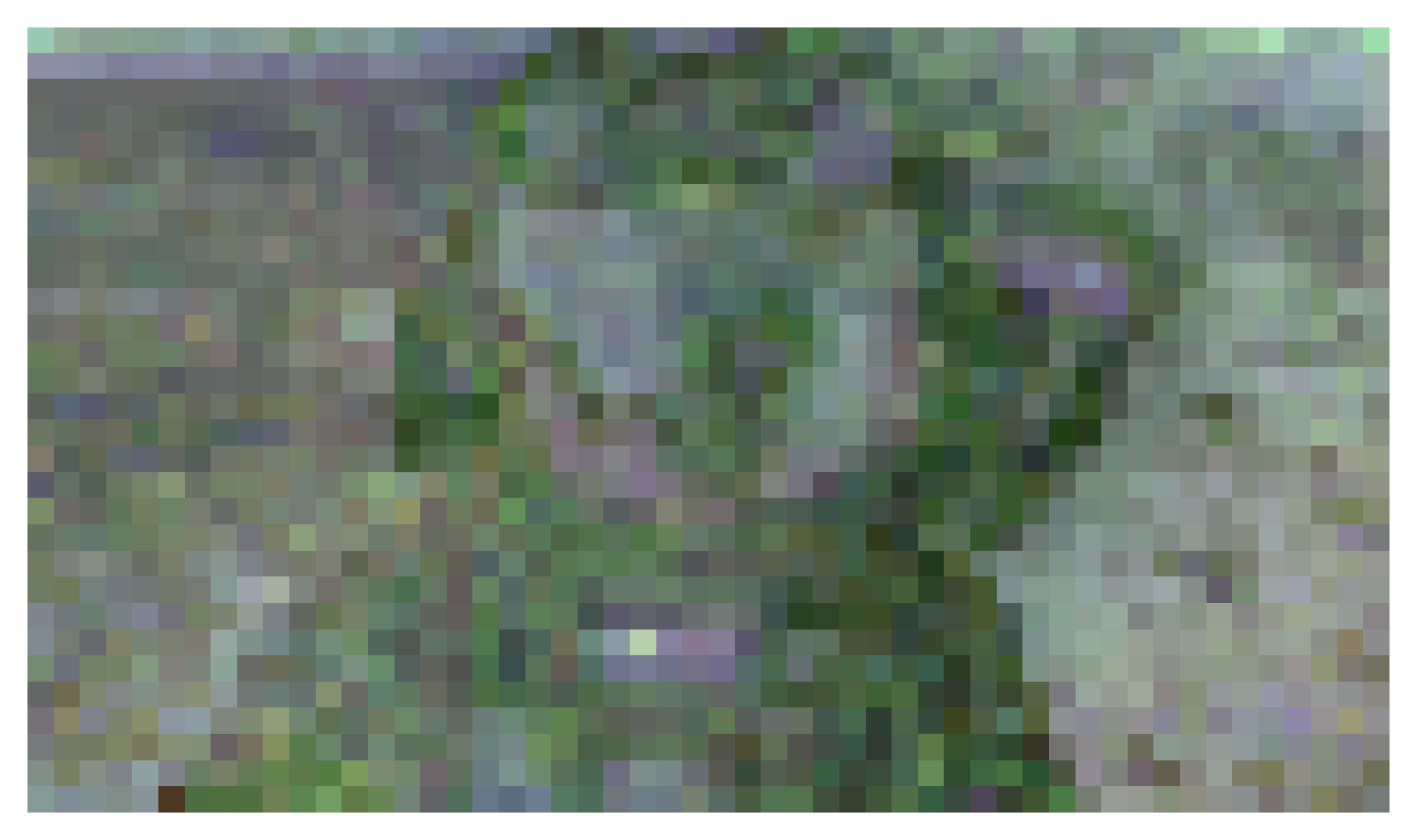}  & \includegraphics[width=2.5cm]{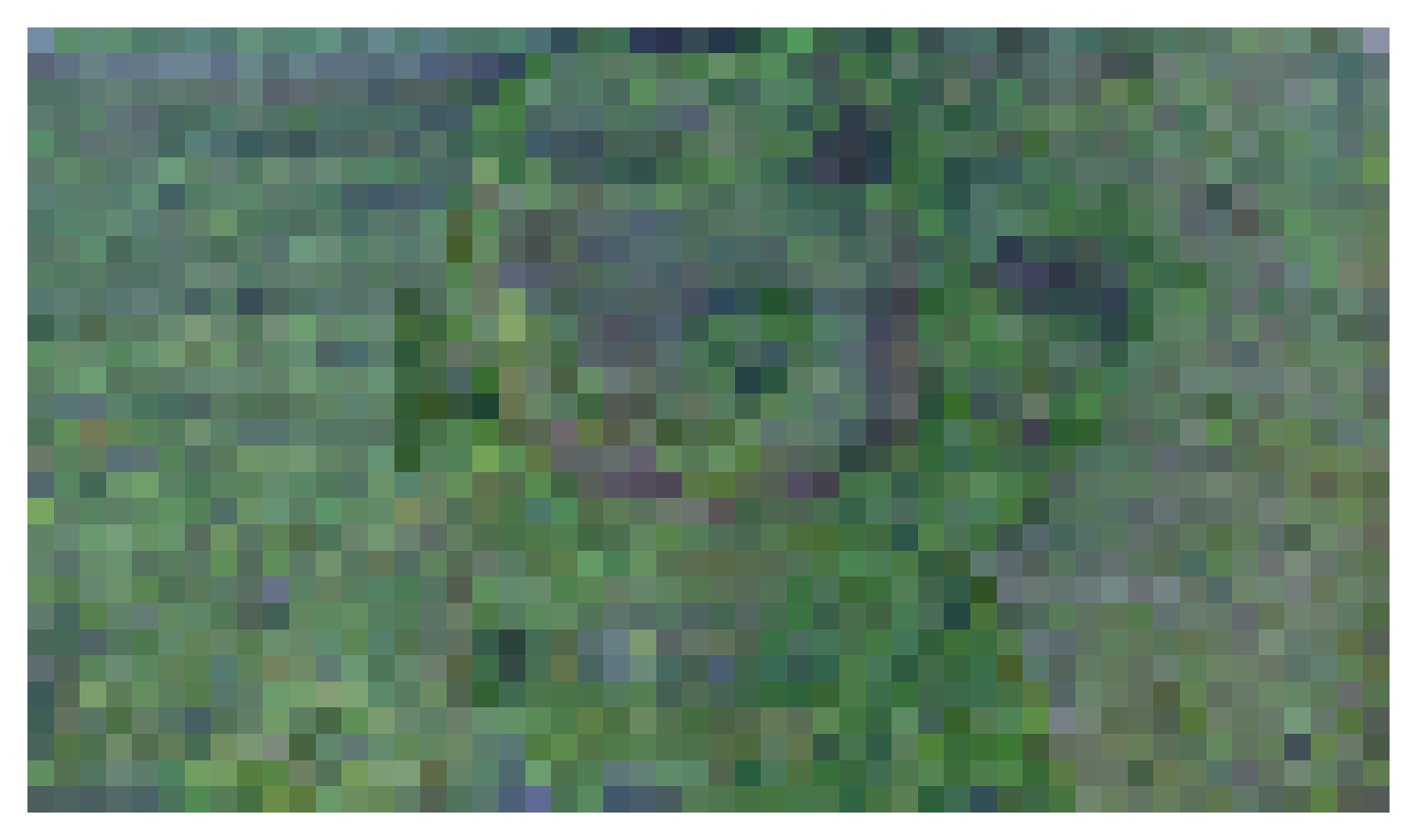} \\ 
\hline
\end{tabular}
\end{table*}

\begin{table*}[ht]
\centering
\caption{Contributions to attention update by value modulation. Columns correspond to different sampling steps in the generation process. Rows indicate the index of the transformer block.}
\label{tbl:value.modulations}
\begin{tabular}{|c|
>{\centering\arraybackslash}m{2.5cm}|
>{\centering\arraybackslash}m{2.5cm}|
>{\centering\arraybackslash}m{2.5cm}|
>{\centering\arraybackslash}m{2.5cm}|
>{\centering\arraybackslash}m{2.5cm}|
>{\centering\arraybackslash}m{2.5cm}|
}
\hline
                 & \textbf{Step 1} & \textbf{Step 10} & \textbf{Step 20} & \textbf{Step 30}  & \textbf{Step 40}  & \textbf{Step 50} \\ 
\hline
\textbf{1} & \includegraphics[width=2.5cm]{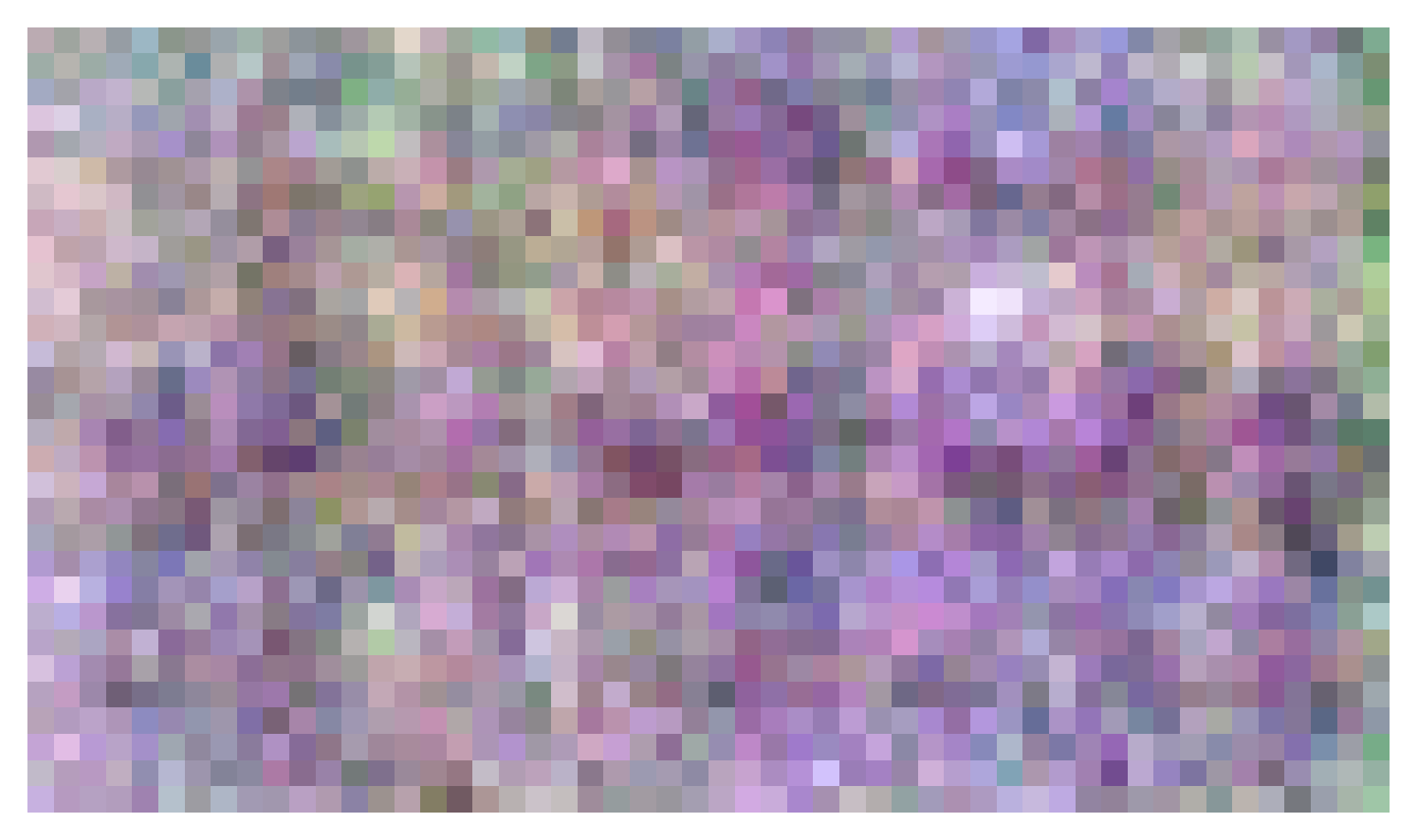} & \includegraphics[width=2.5cm]{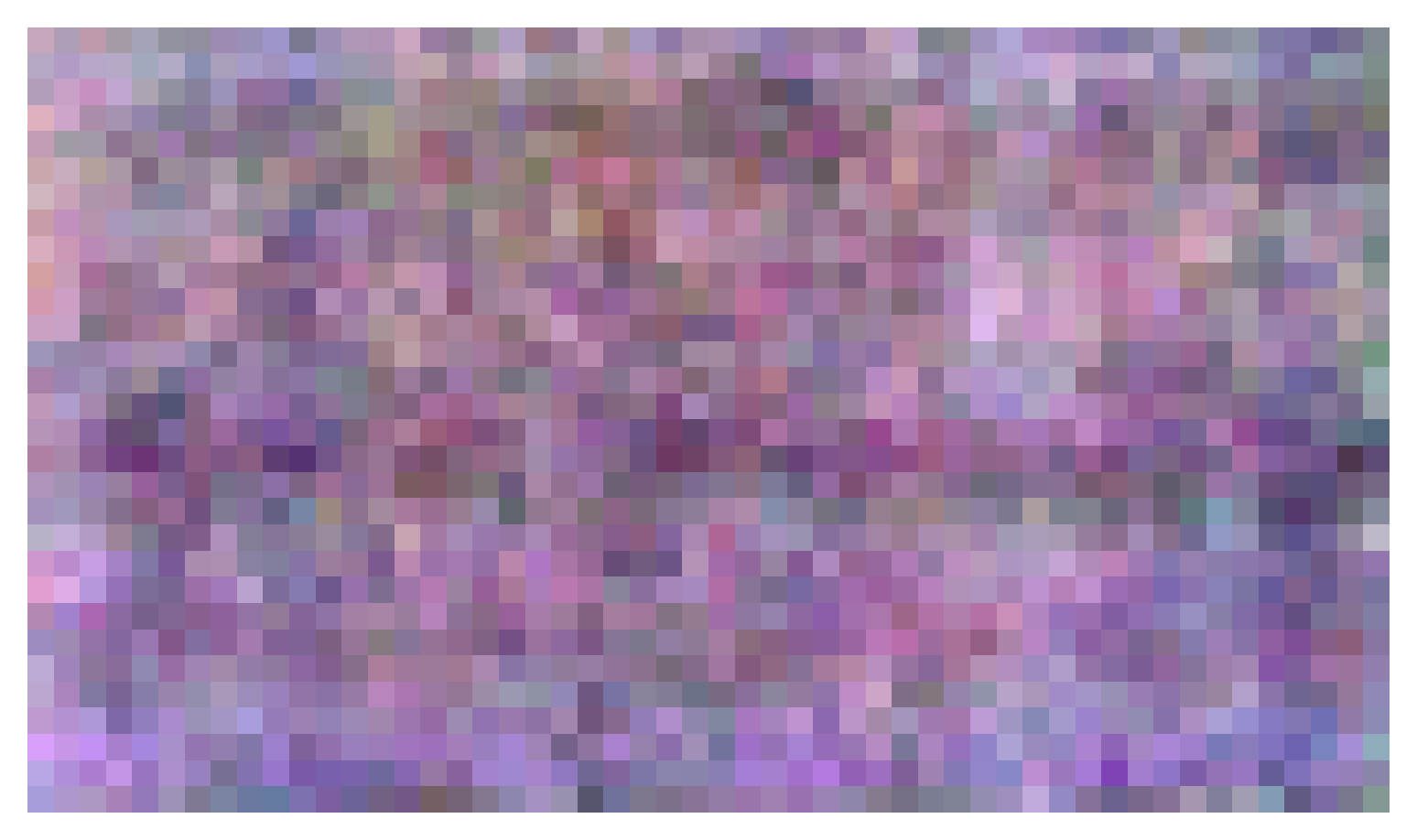} & \includegraphics[width=2.5cm]{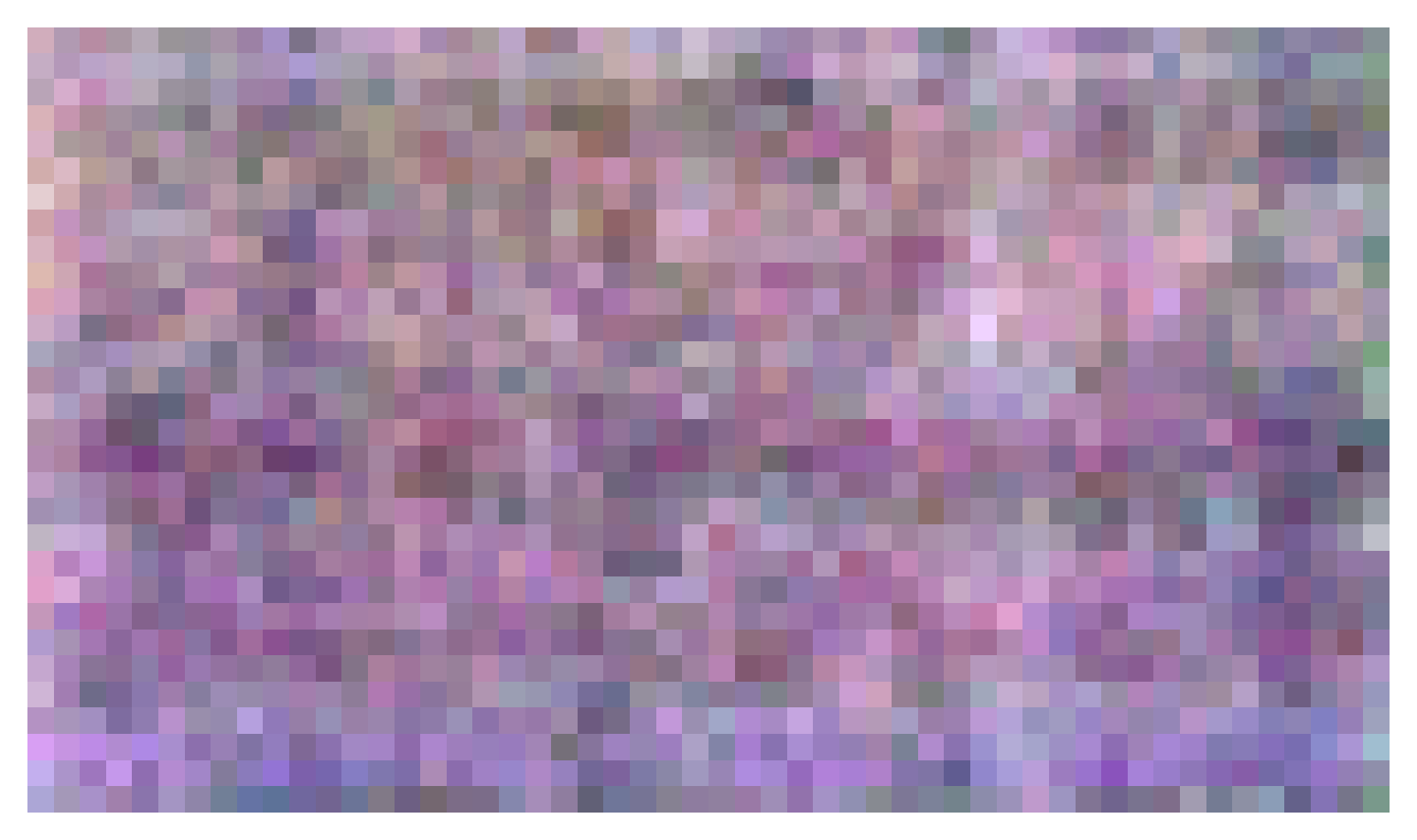} & \includegraphics[width=2.5cm]{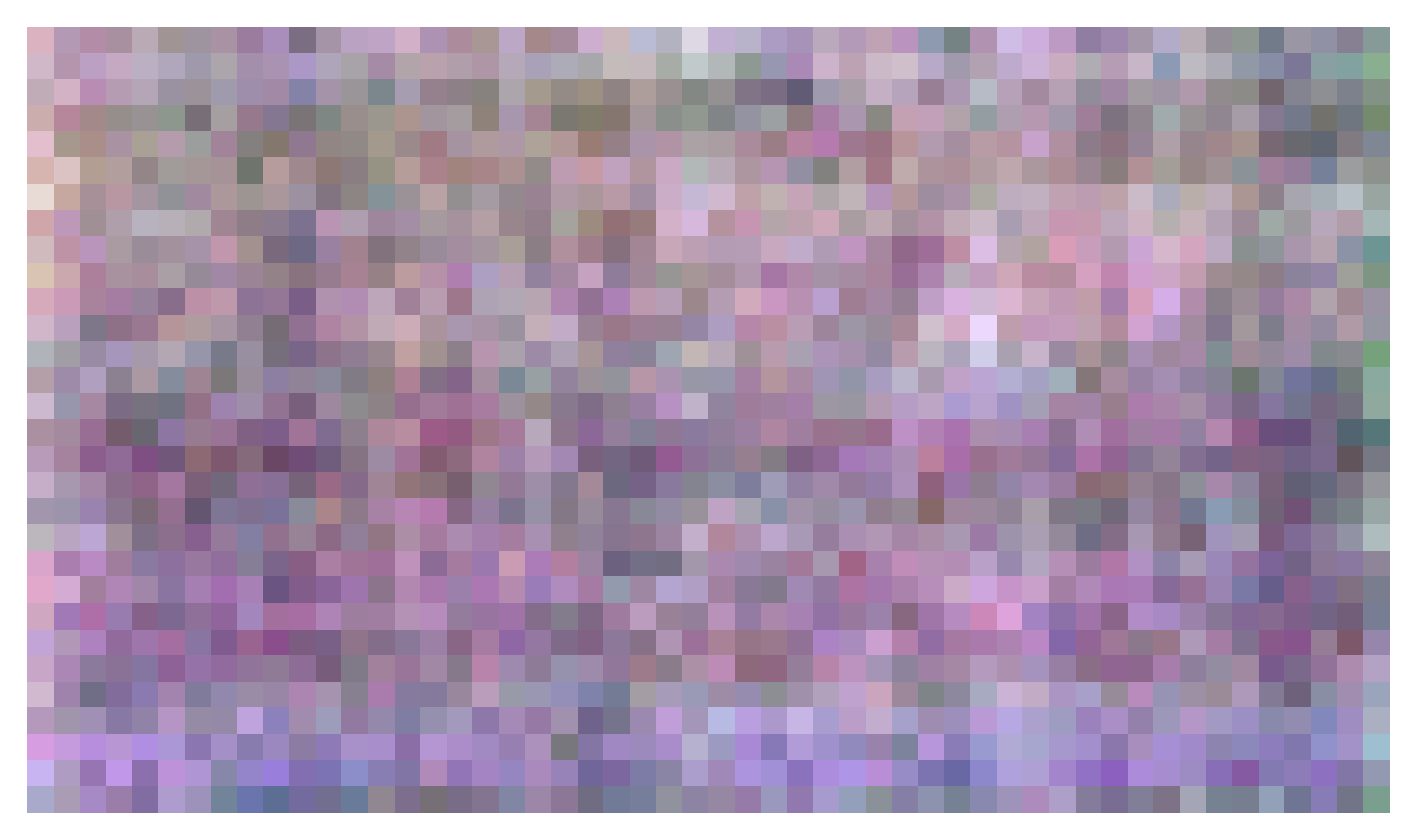}  & \includegraphics[width=2.5cm]{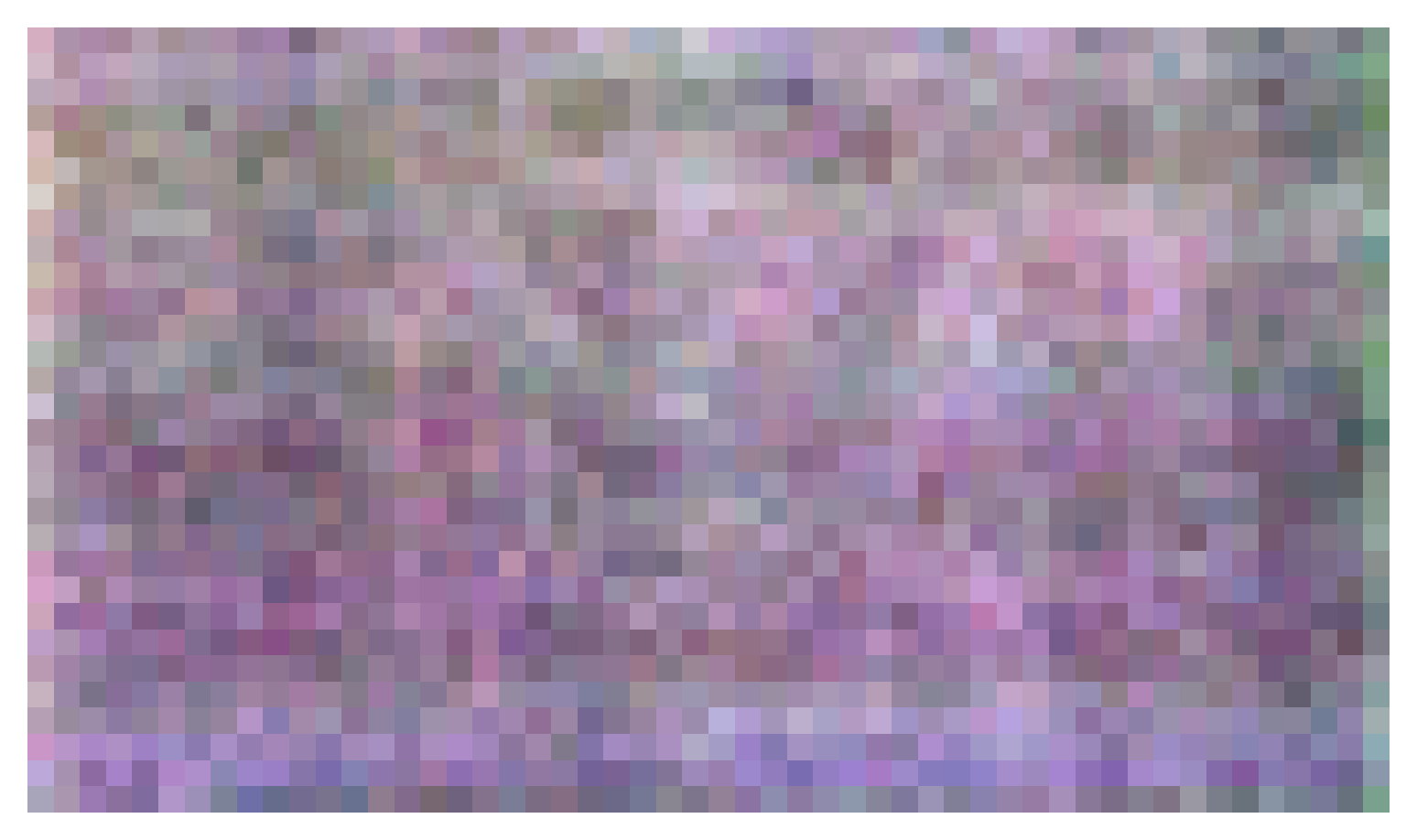}  & \includegraphics[width=2.5cm]{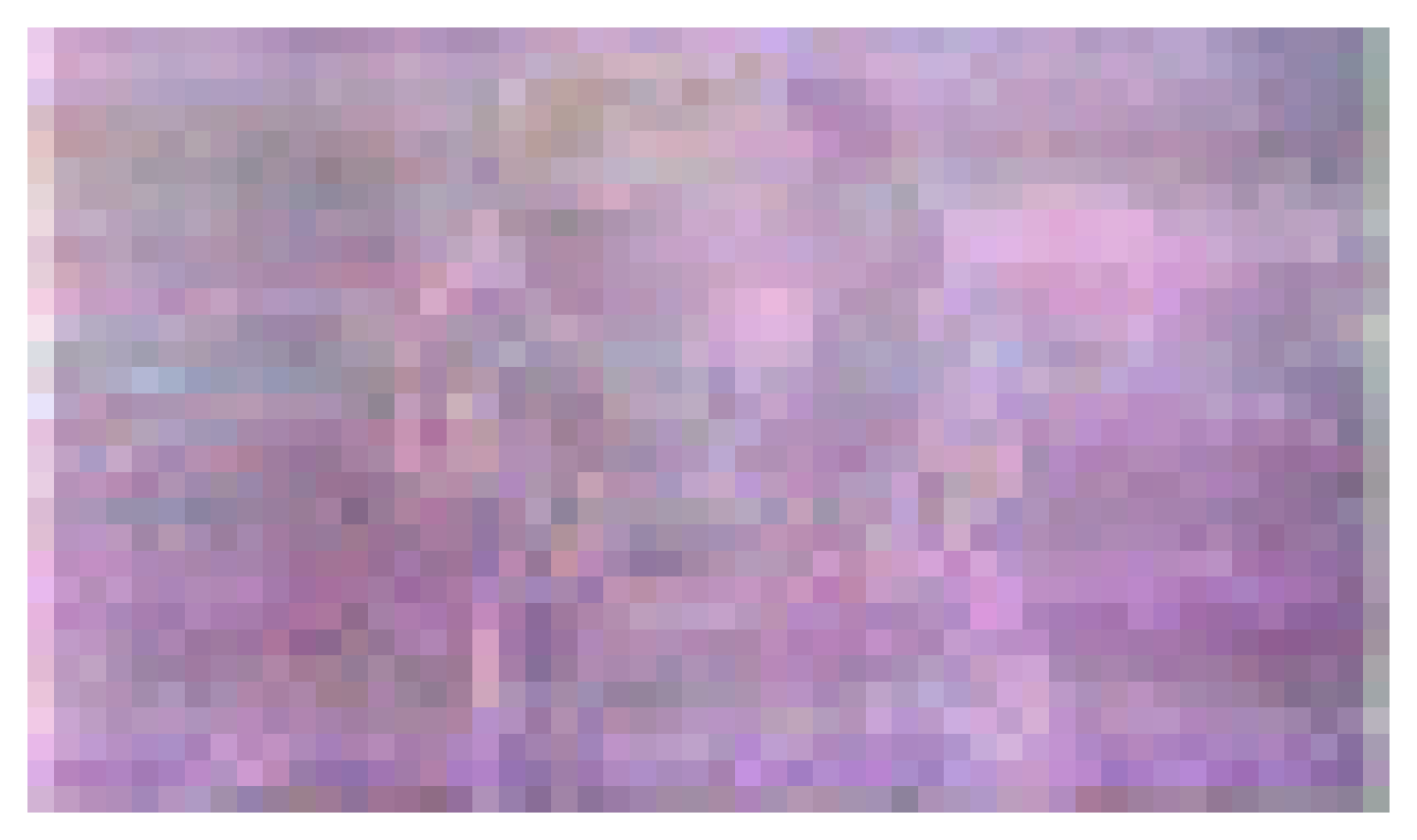} \\ 
\hline
\textbf{2} & \includegraphics[width=2.5cm]{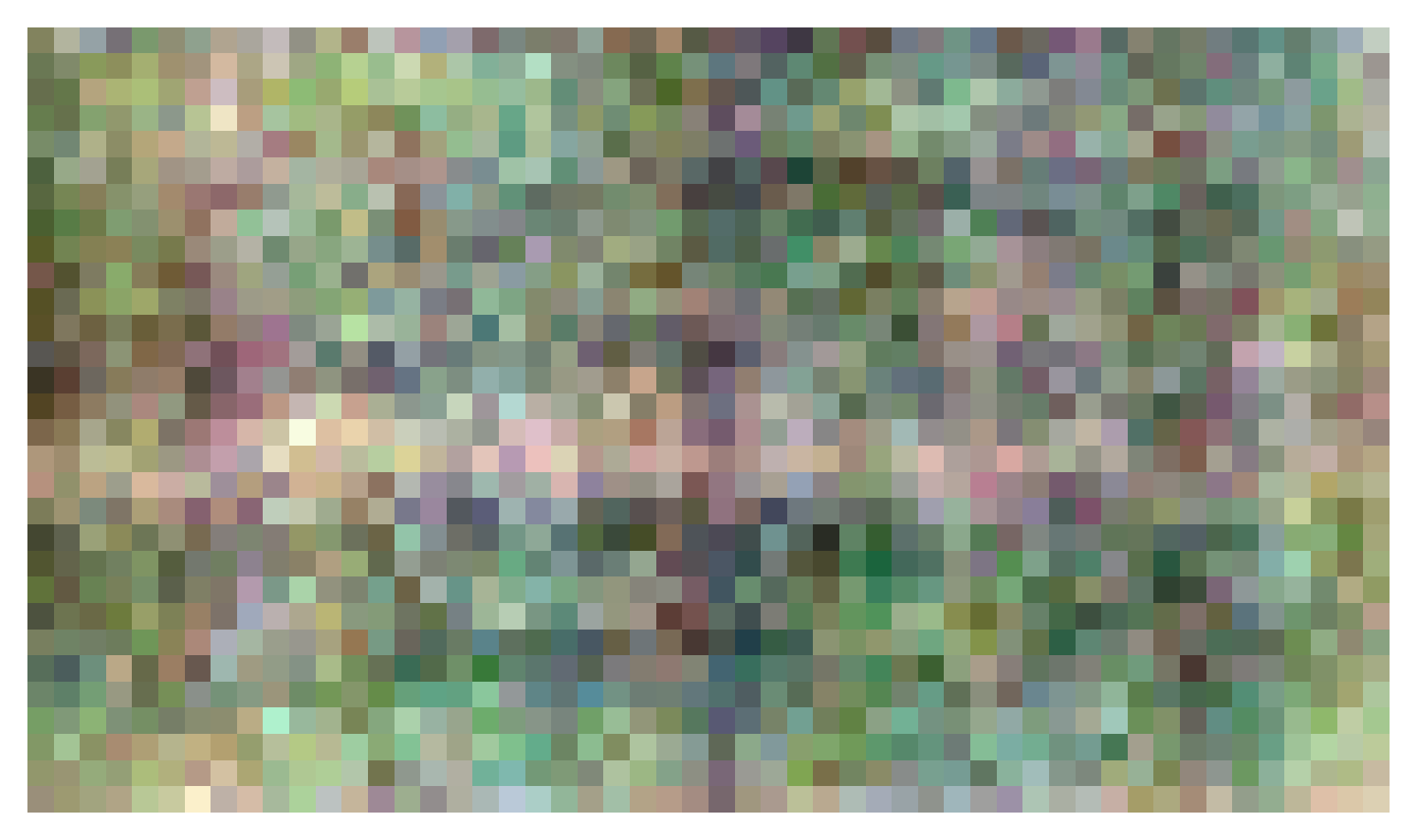} & \includegraphics[width=2.5cm]{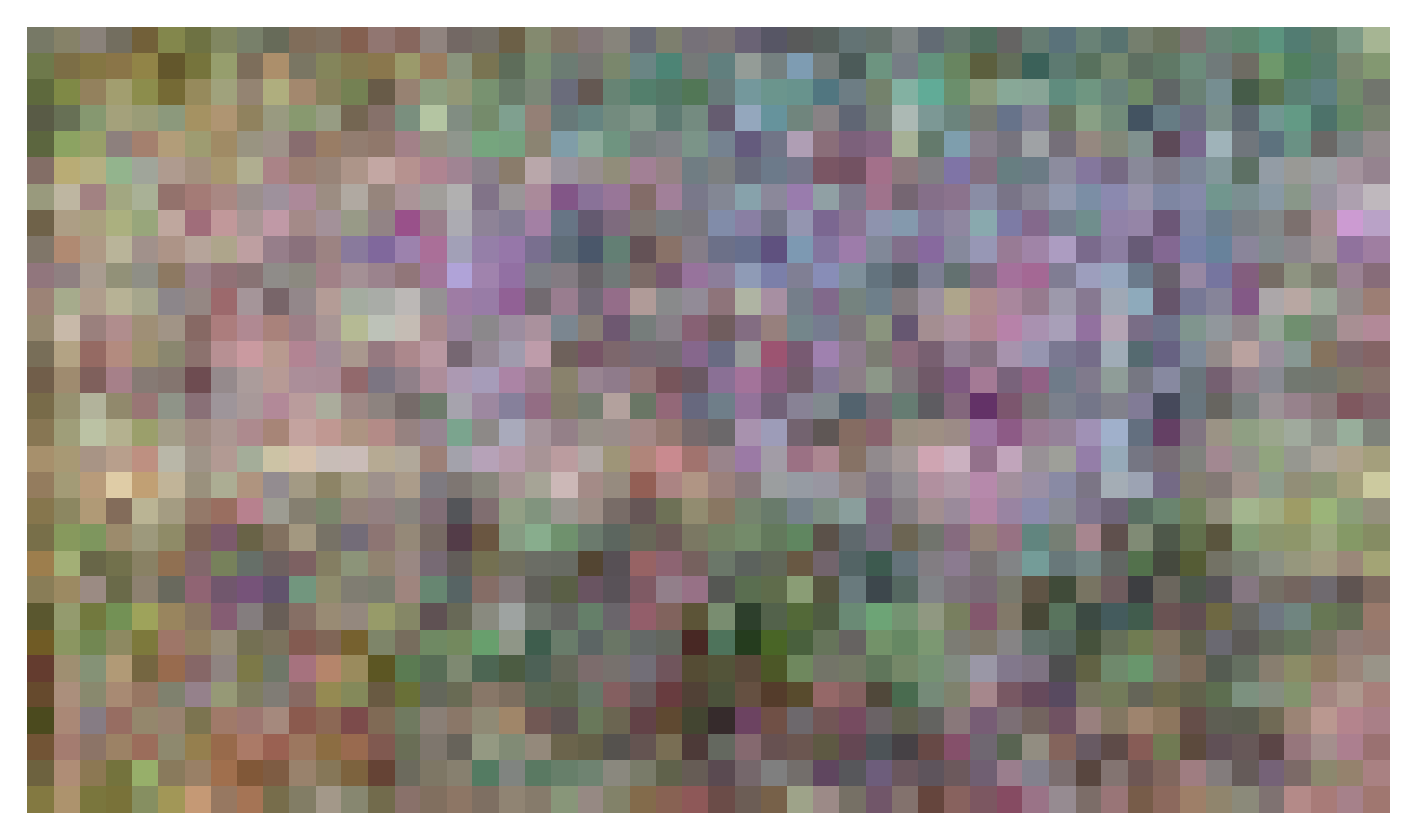} & \includegraphics[width=2.5cm]{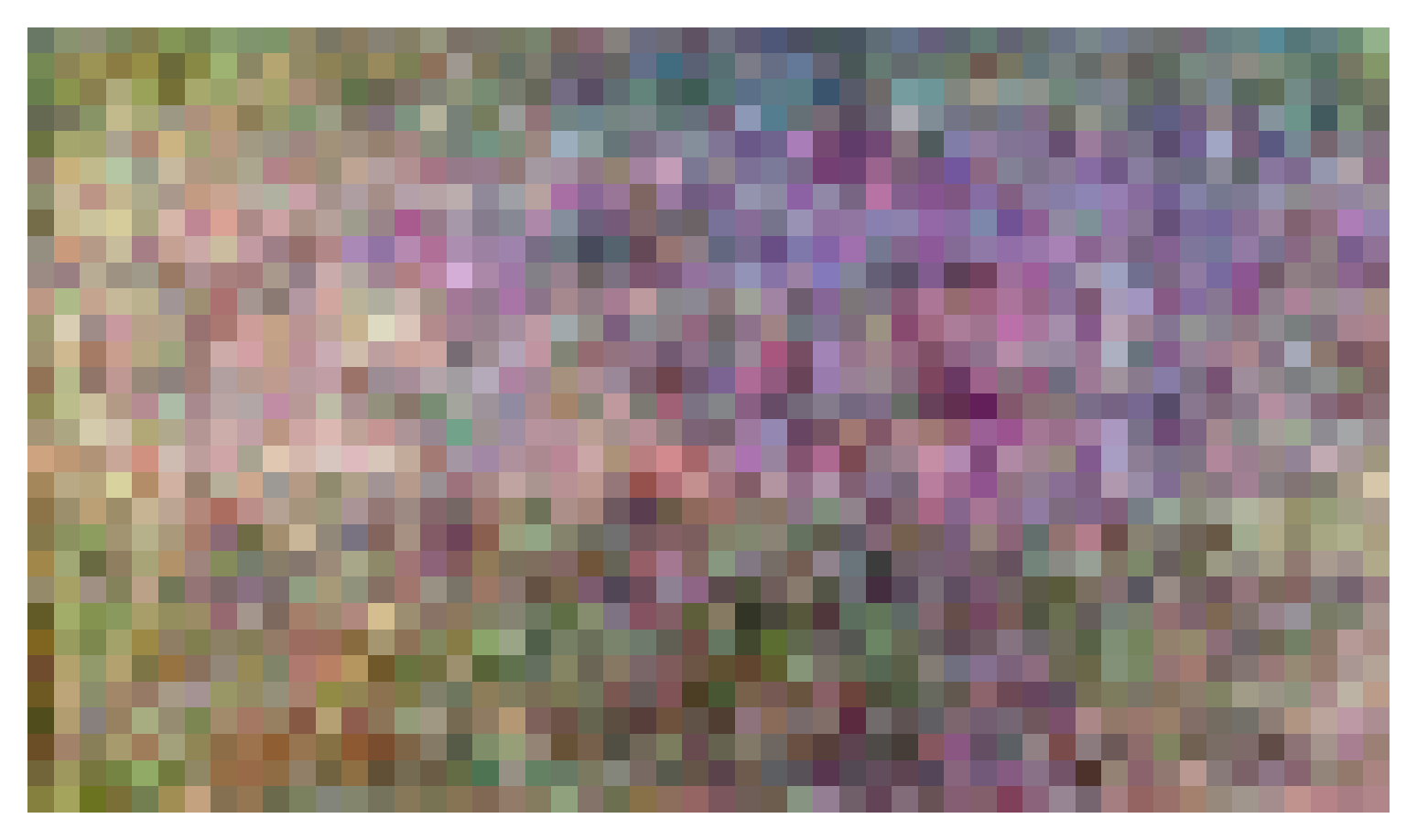} & \includegraphics[width=2.5cm]{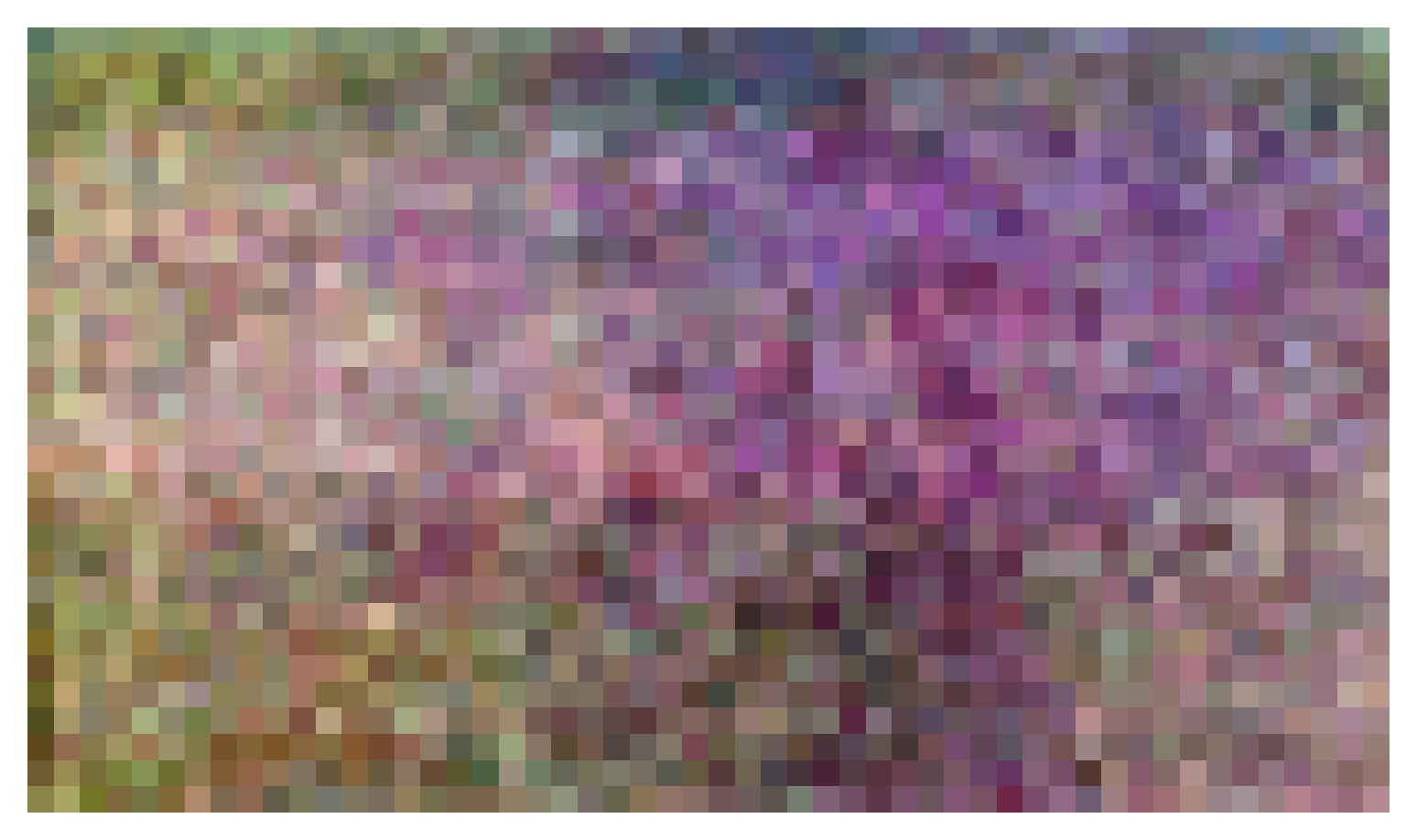}  & \includegraphics[width=2.5cm]{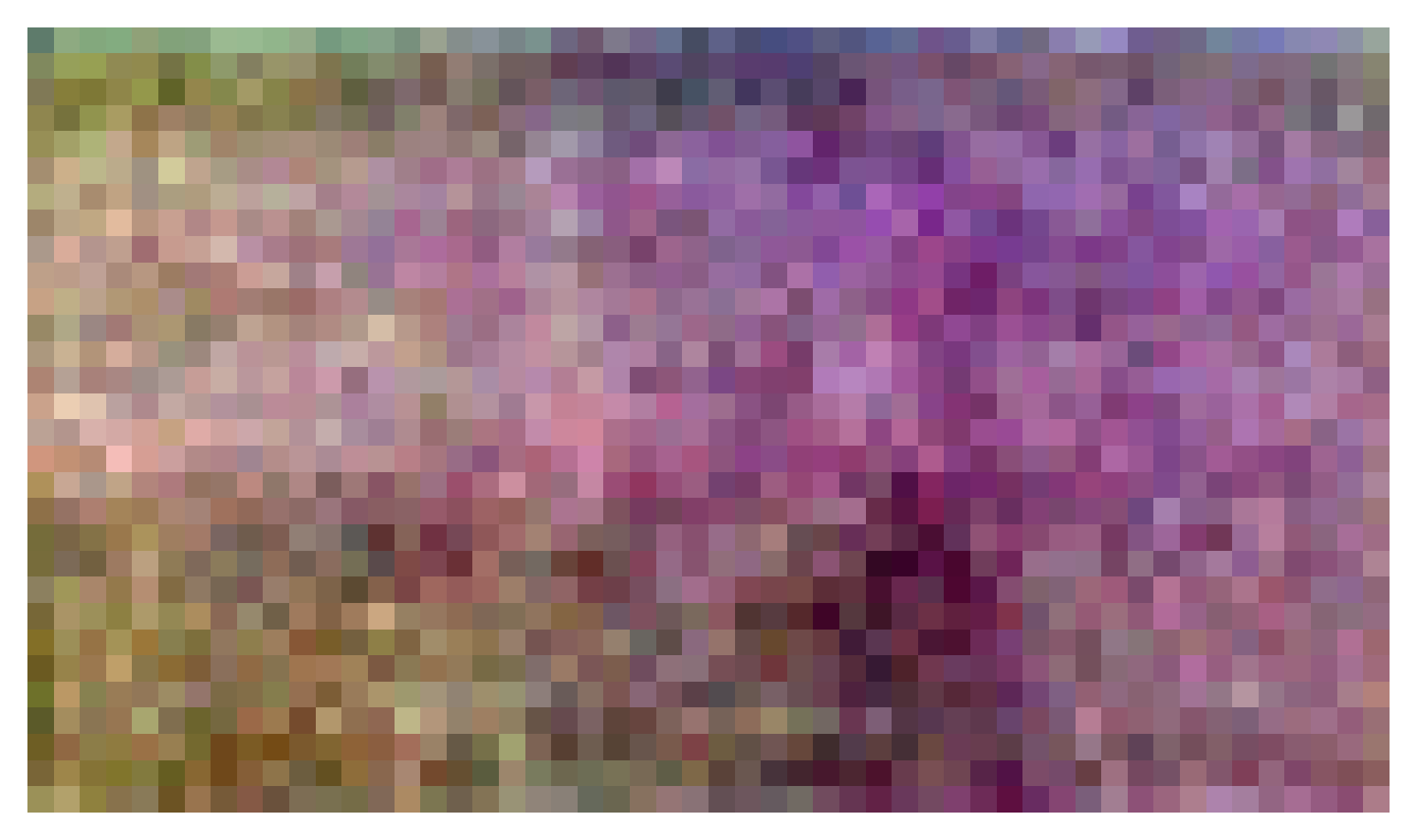}  & \includegraphics[width=2.5cm]{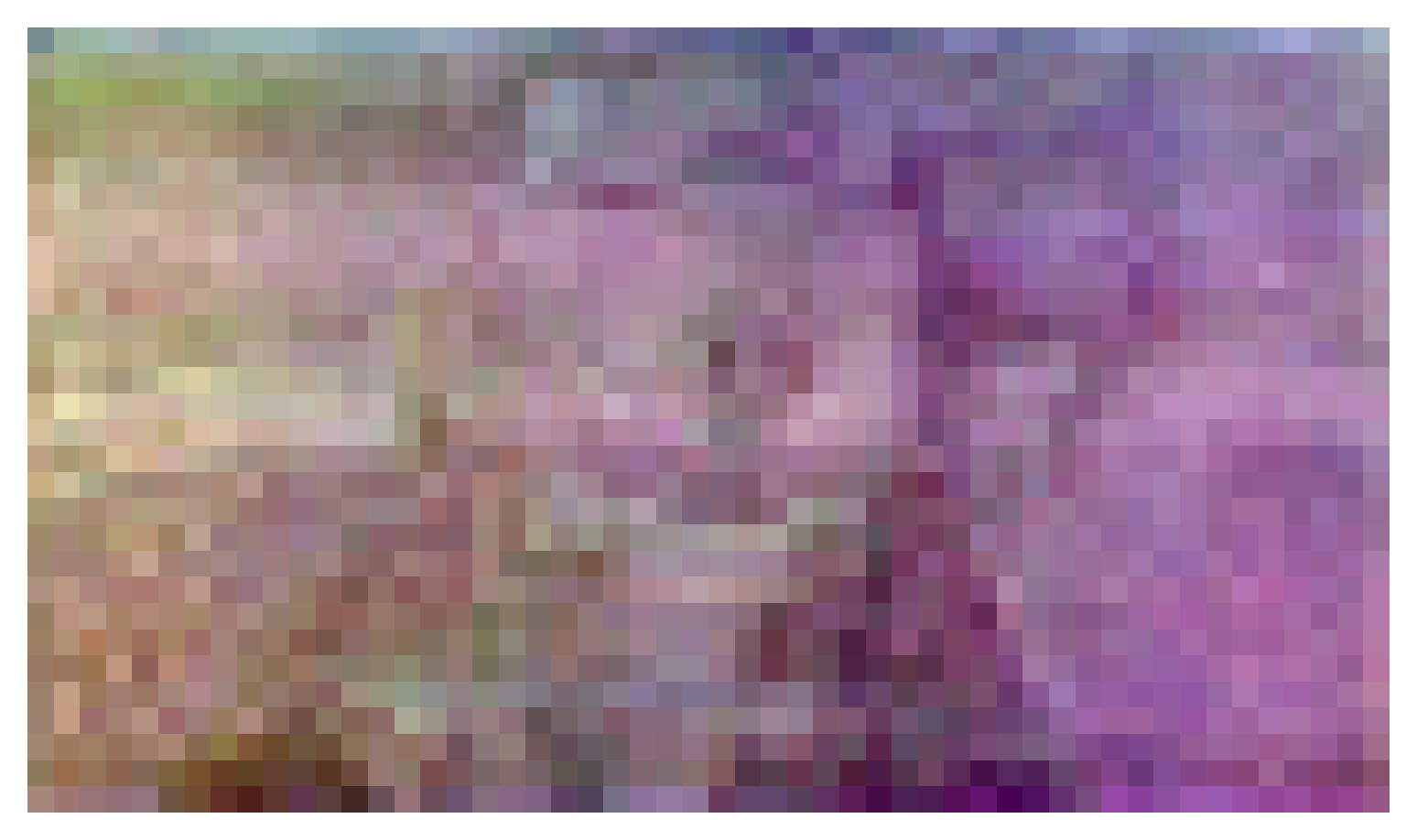} \\ 
\hline
\textbf{3} & \includegraphics[width=2.5cm]{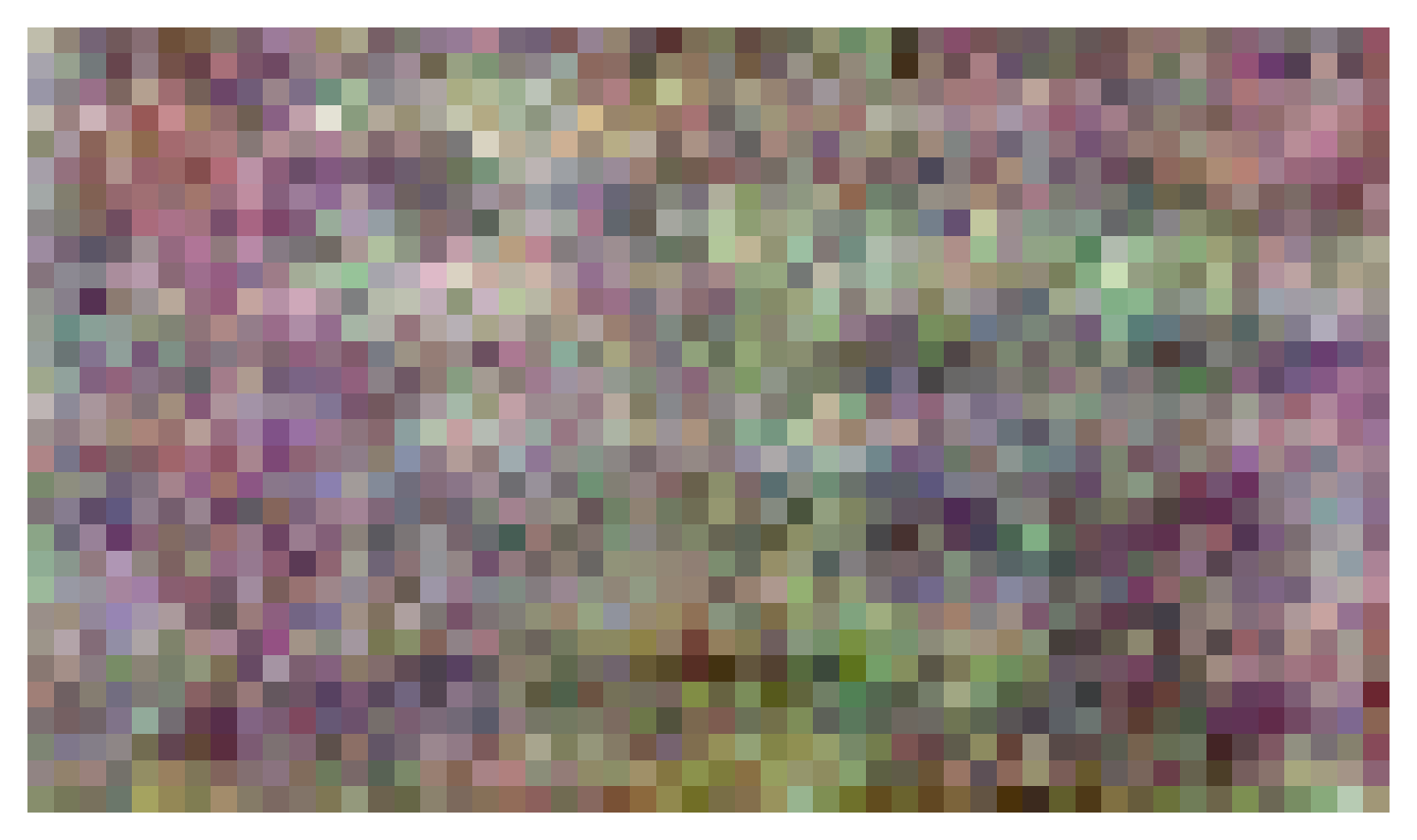} & \includegraphics[width=2.5cm]{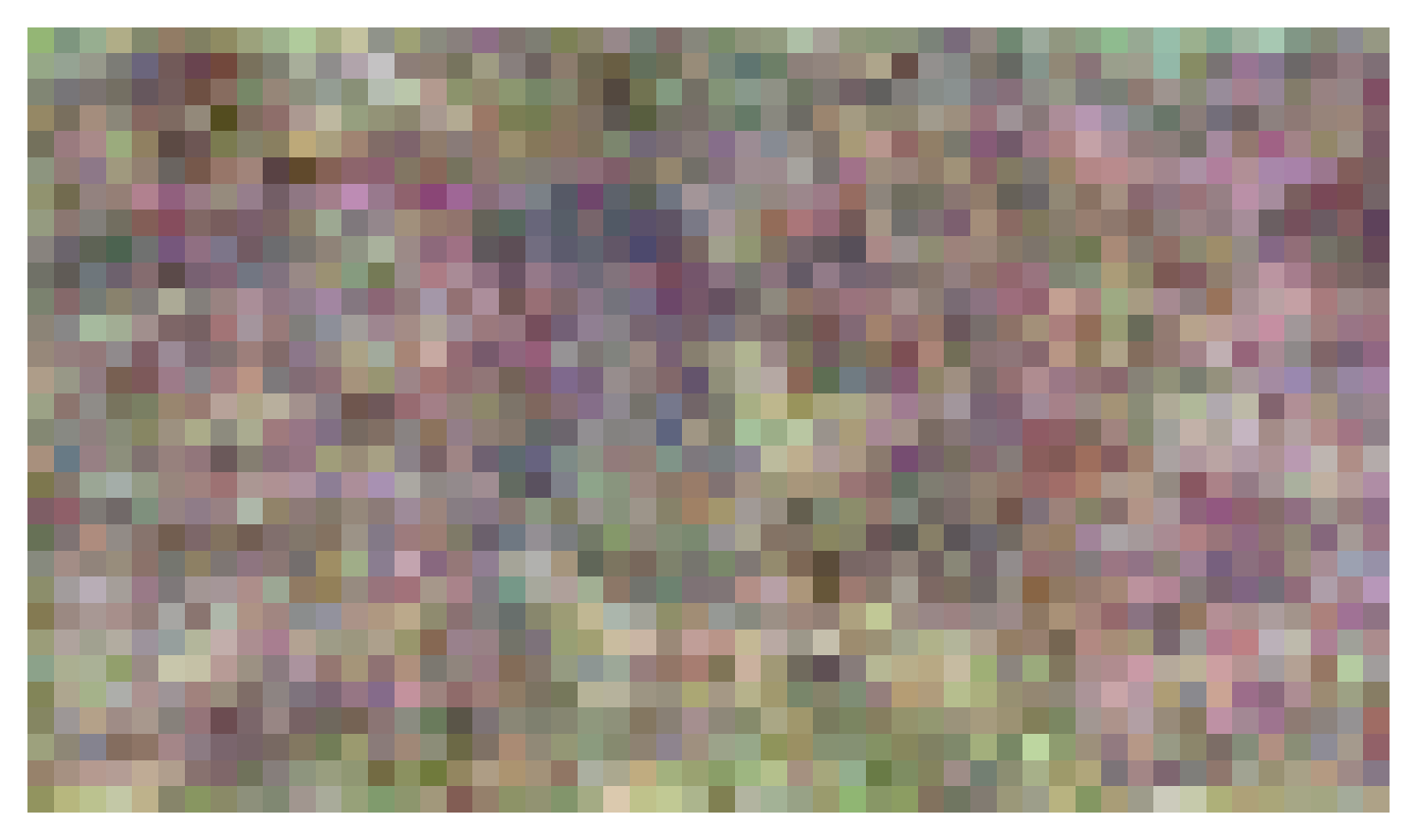} & \includegraphics[width=2.5cm]{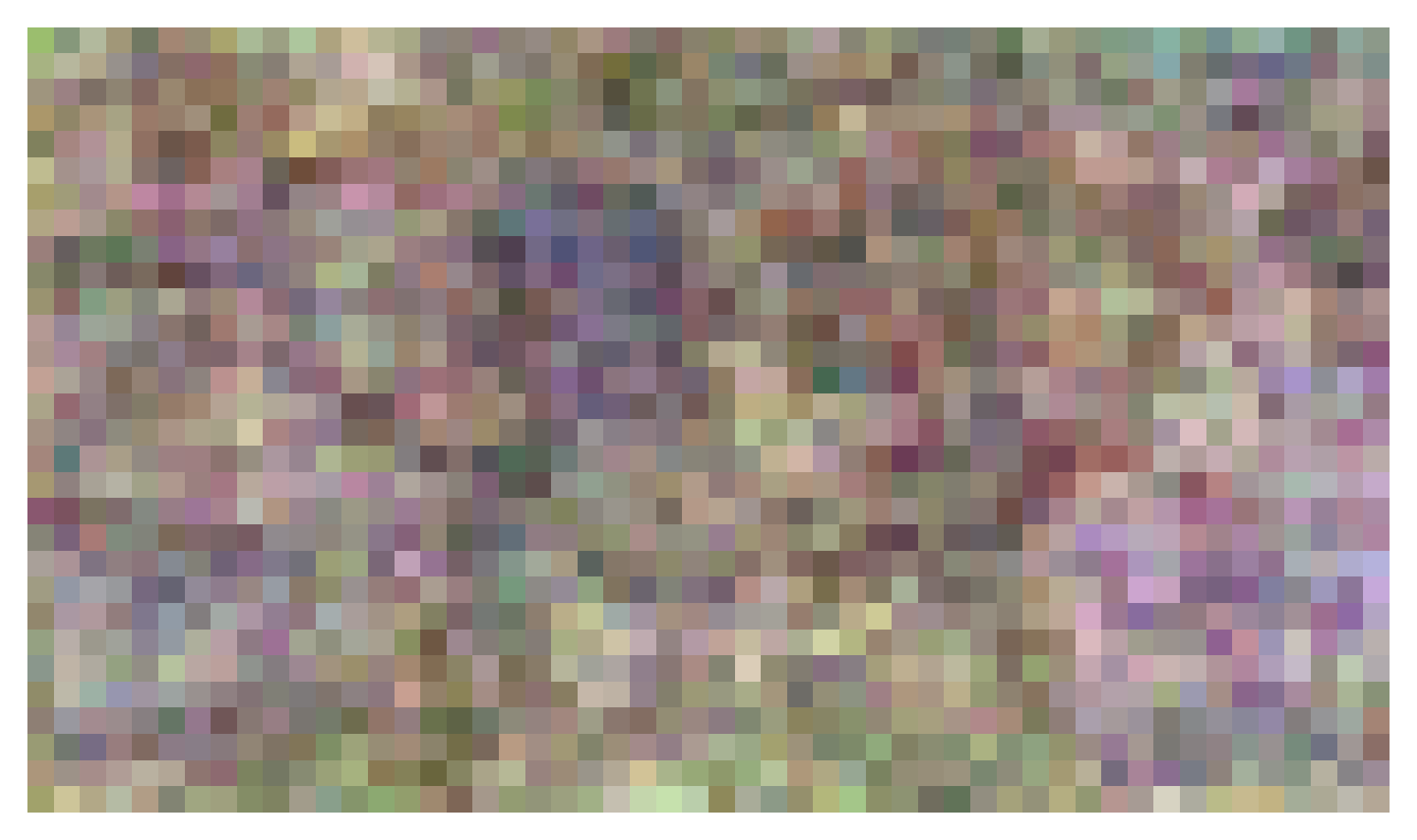} & \includegraphics[width=2.5cm]{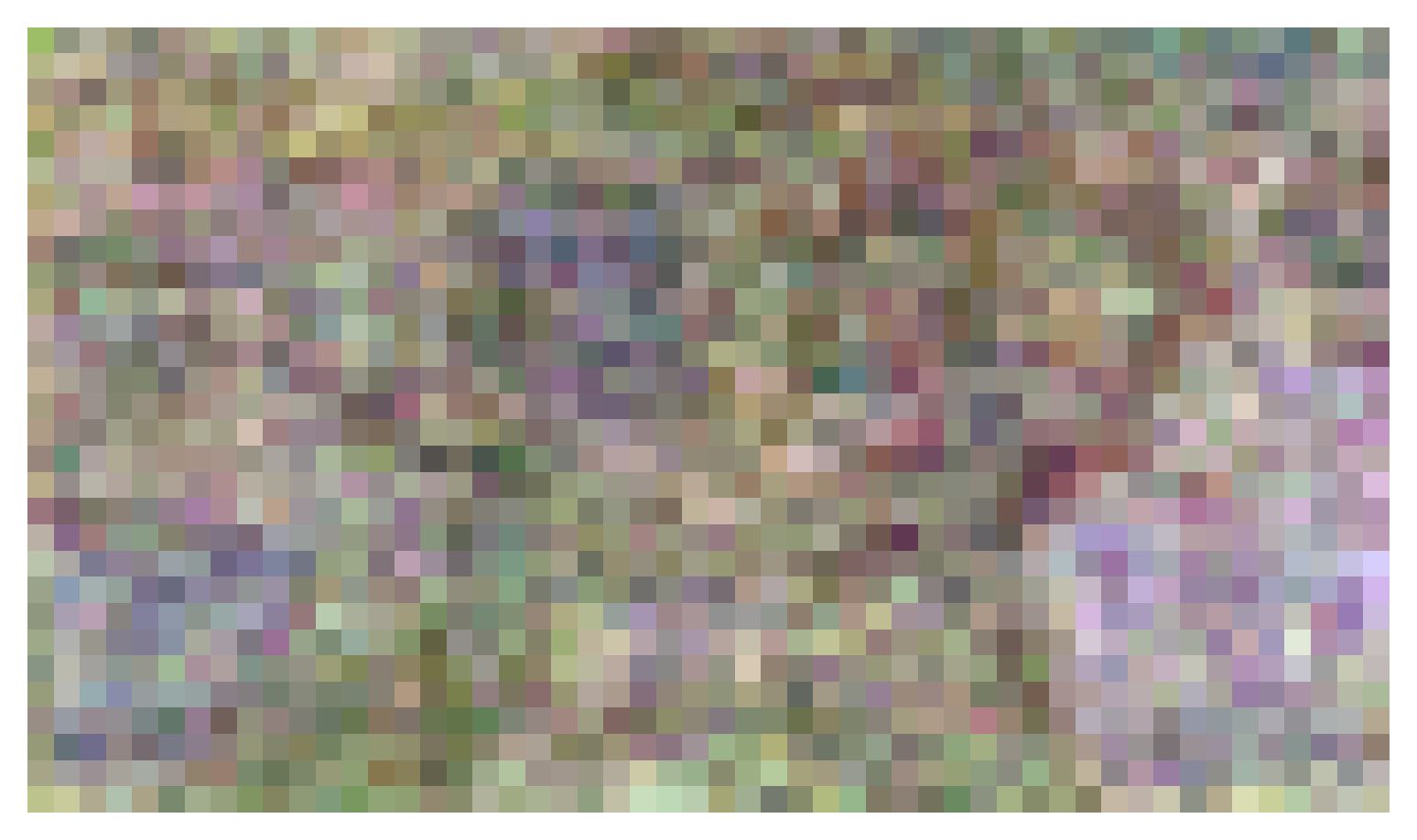}  & \includegraphics[width=2.5cm]{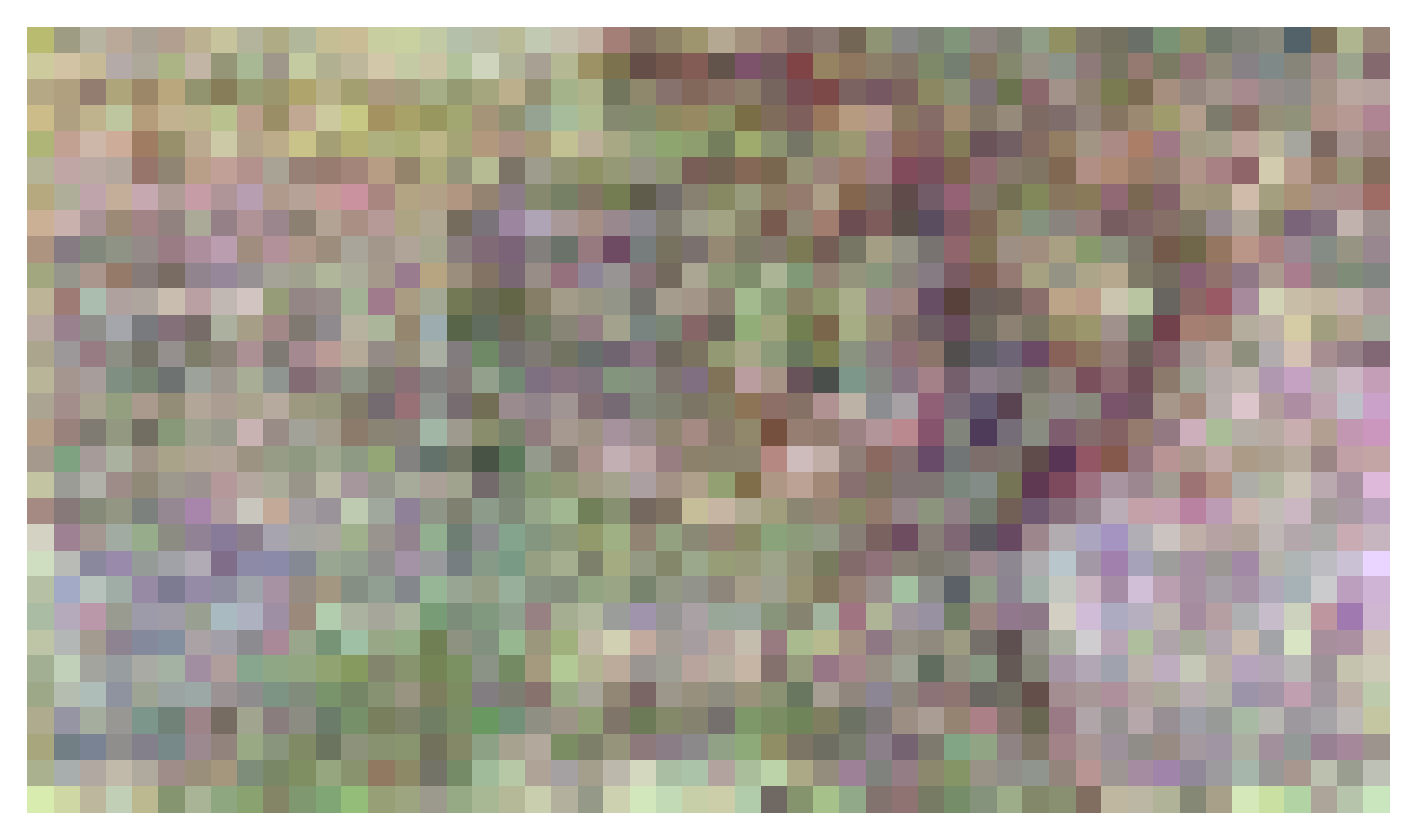}  & \includegraphics[width=2.5cm]{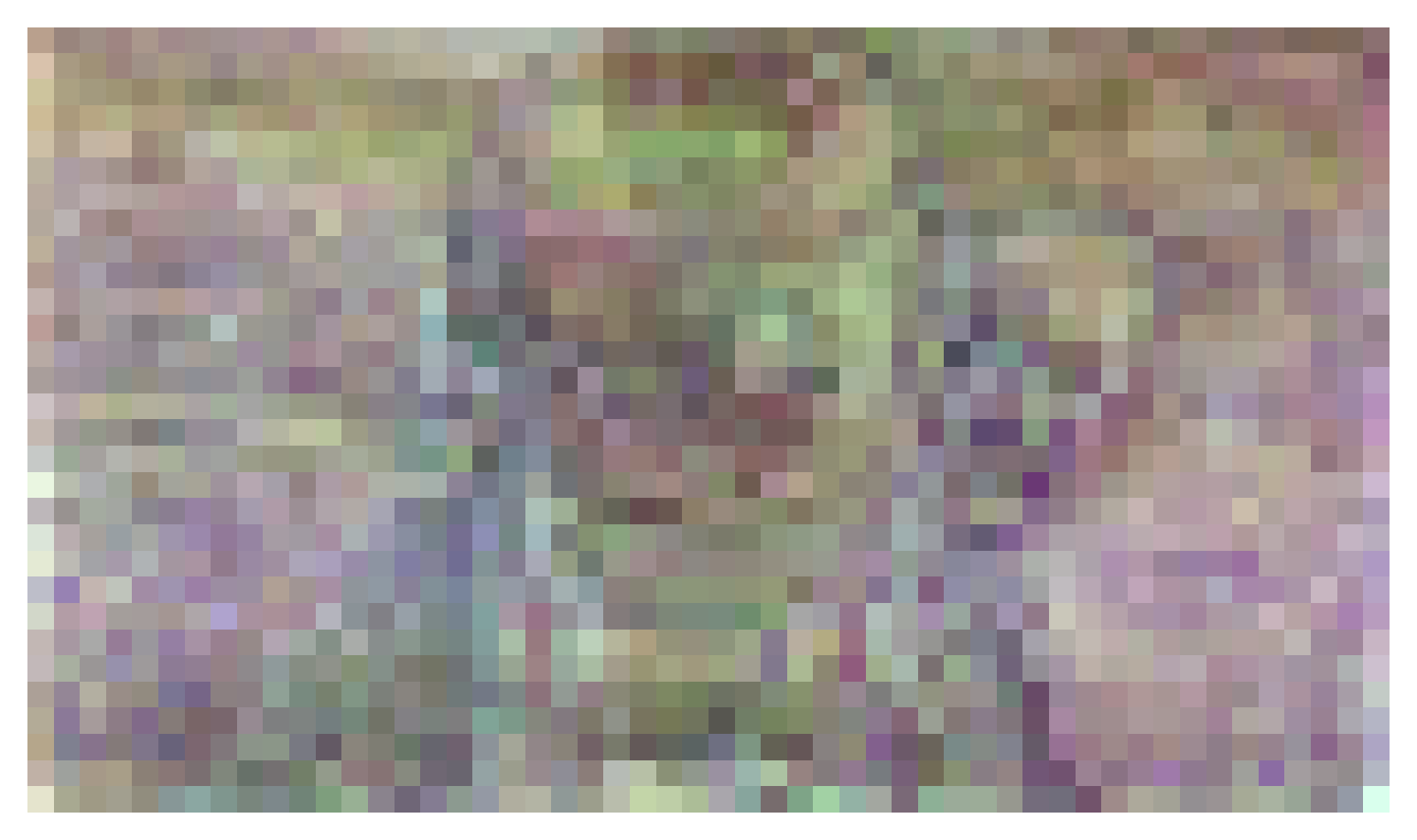} \\ 
\hline
\textbf{4} & \includegraphics[width=2.5cm]{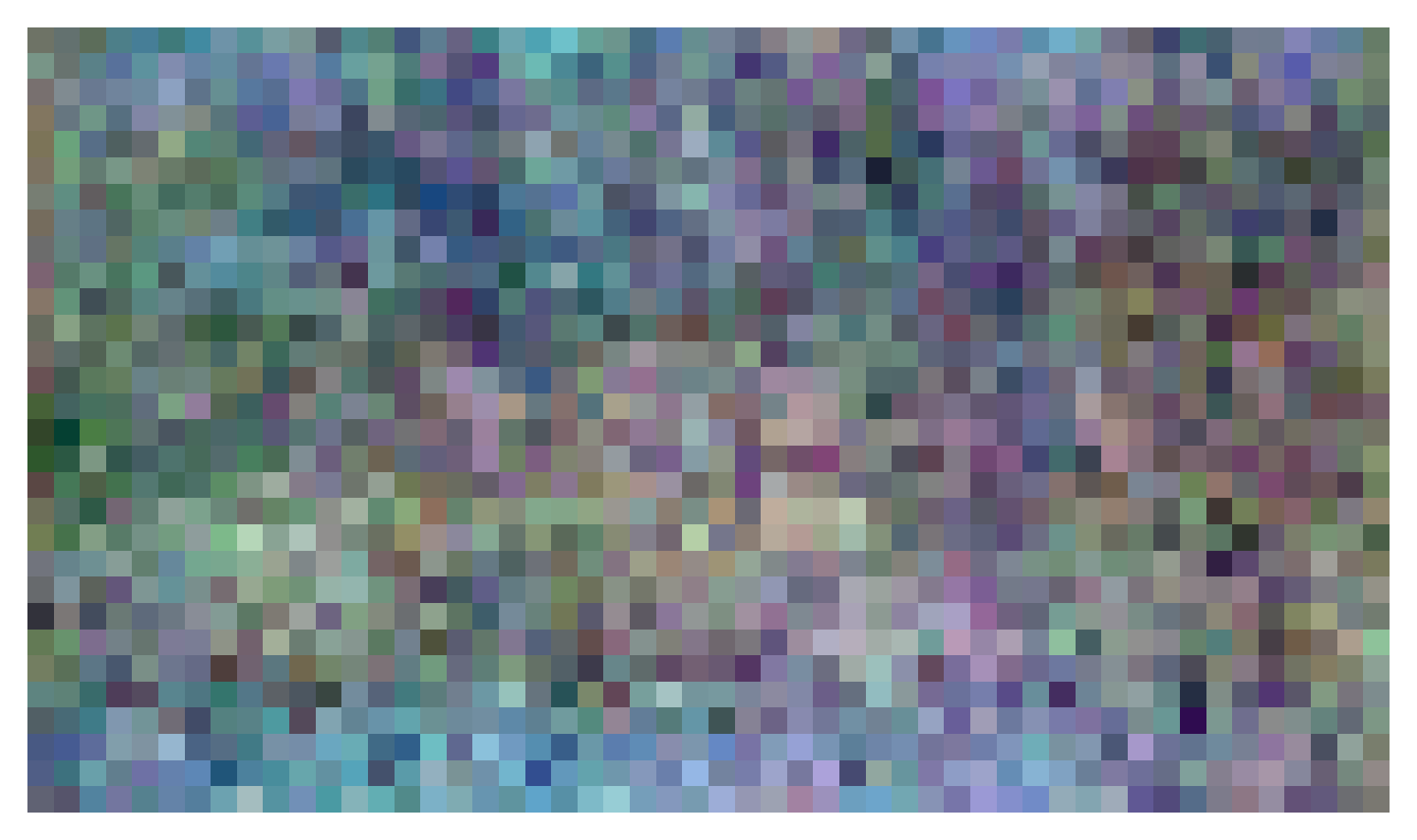} & \includegraphics[width=2.5cm]{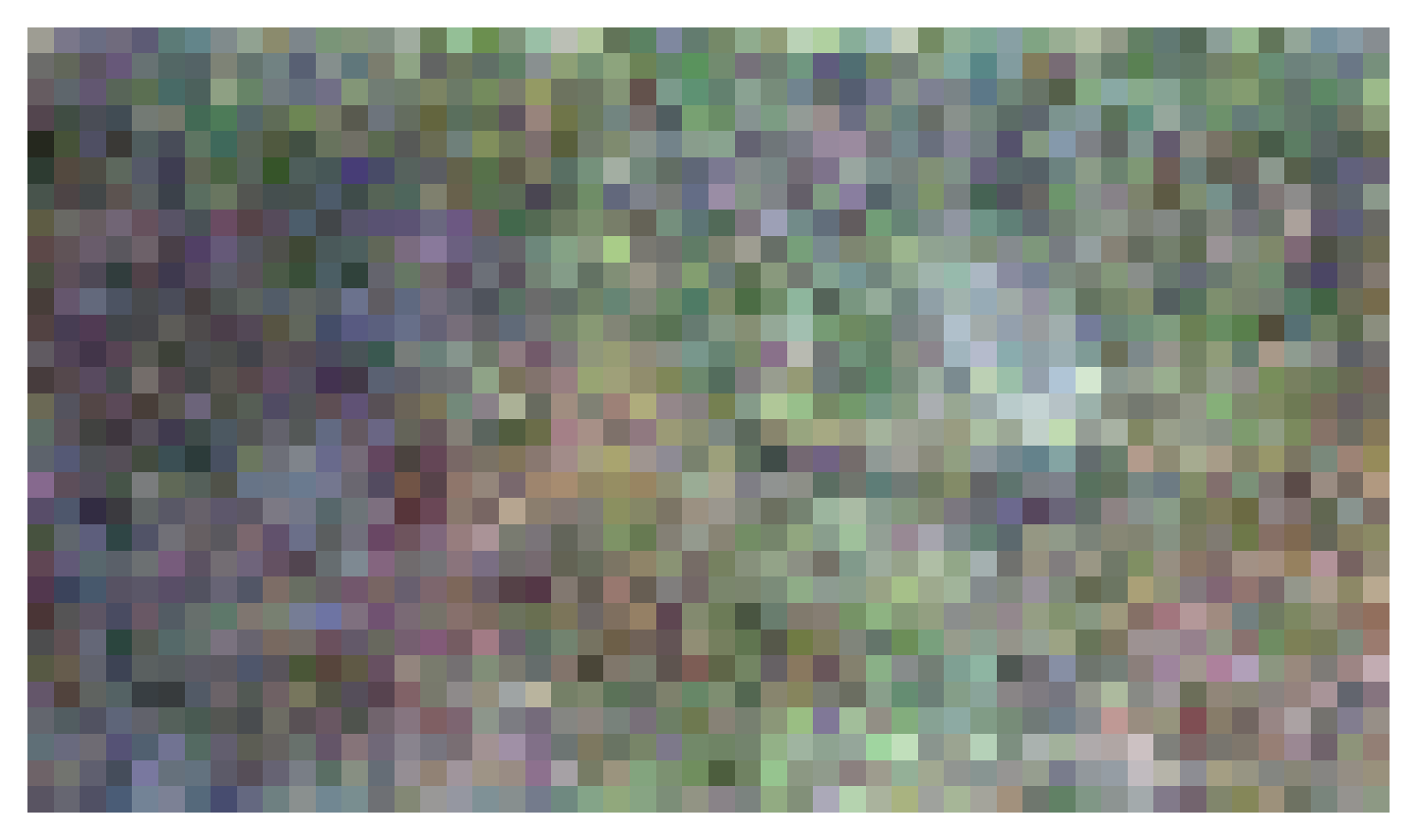} & \includegraphics[width=2.5cm]{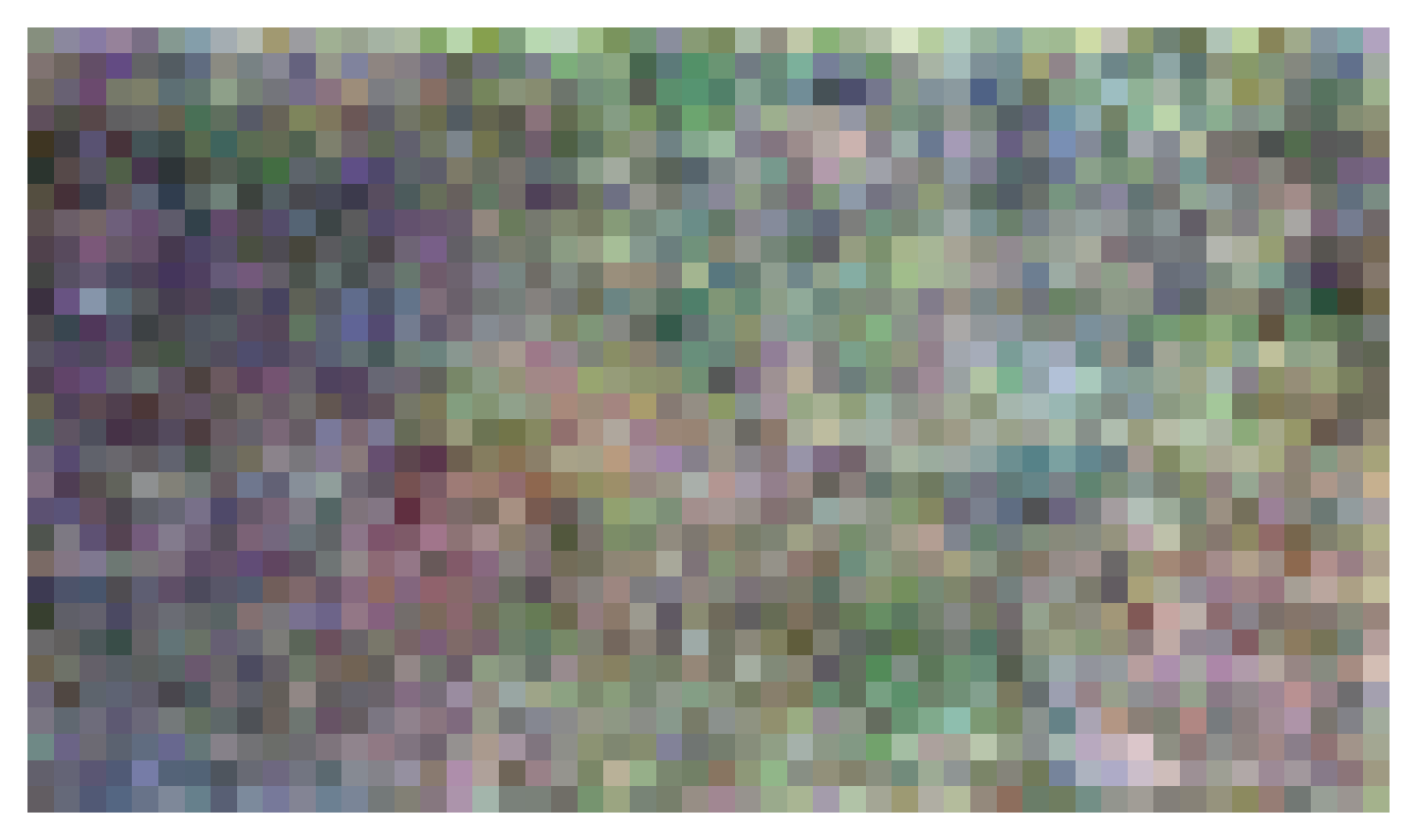} & \includegraphics[width=2.5cm]{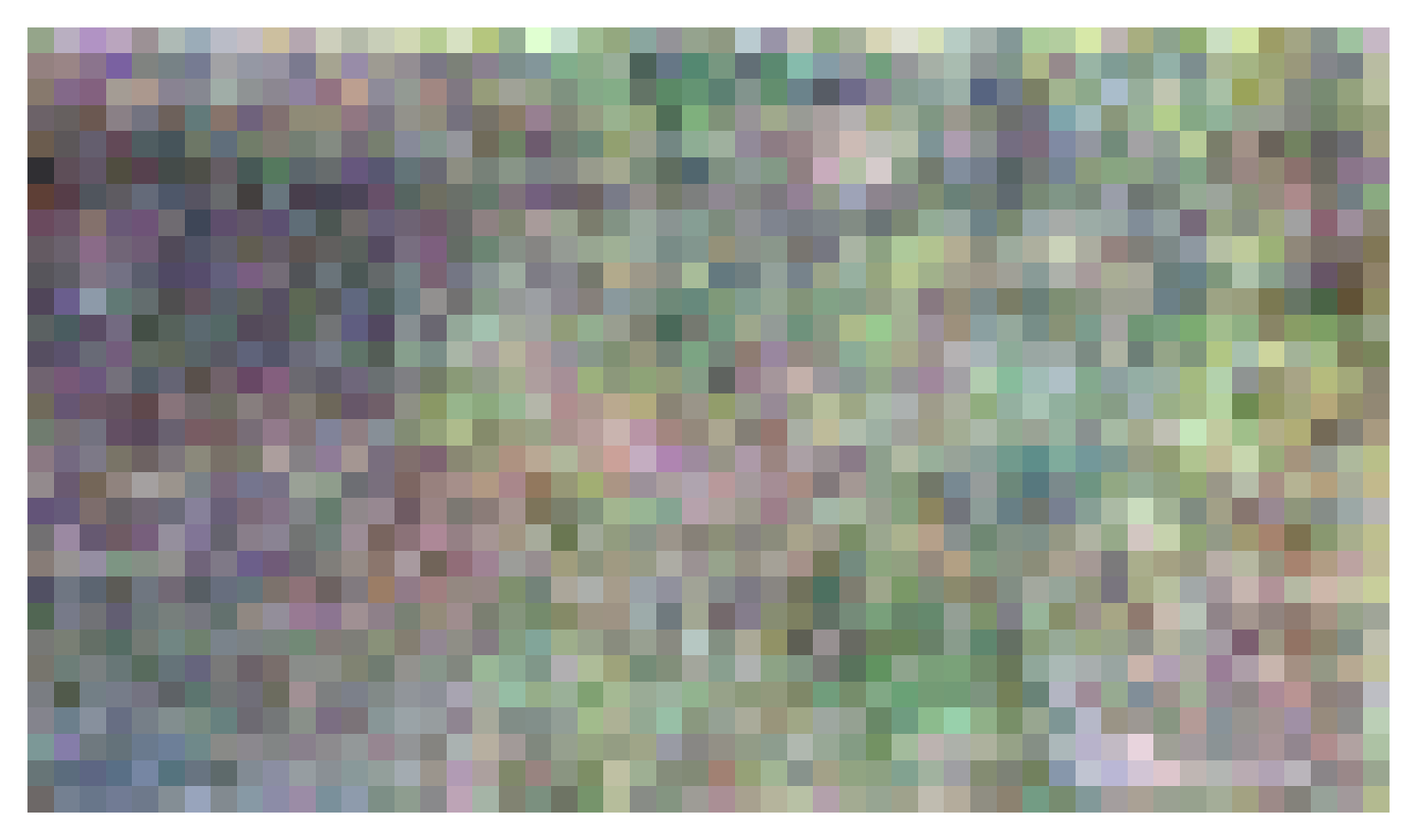}  & \includegraphics[width=2.5cm]{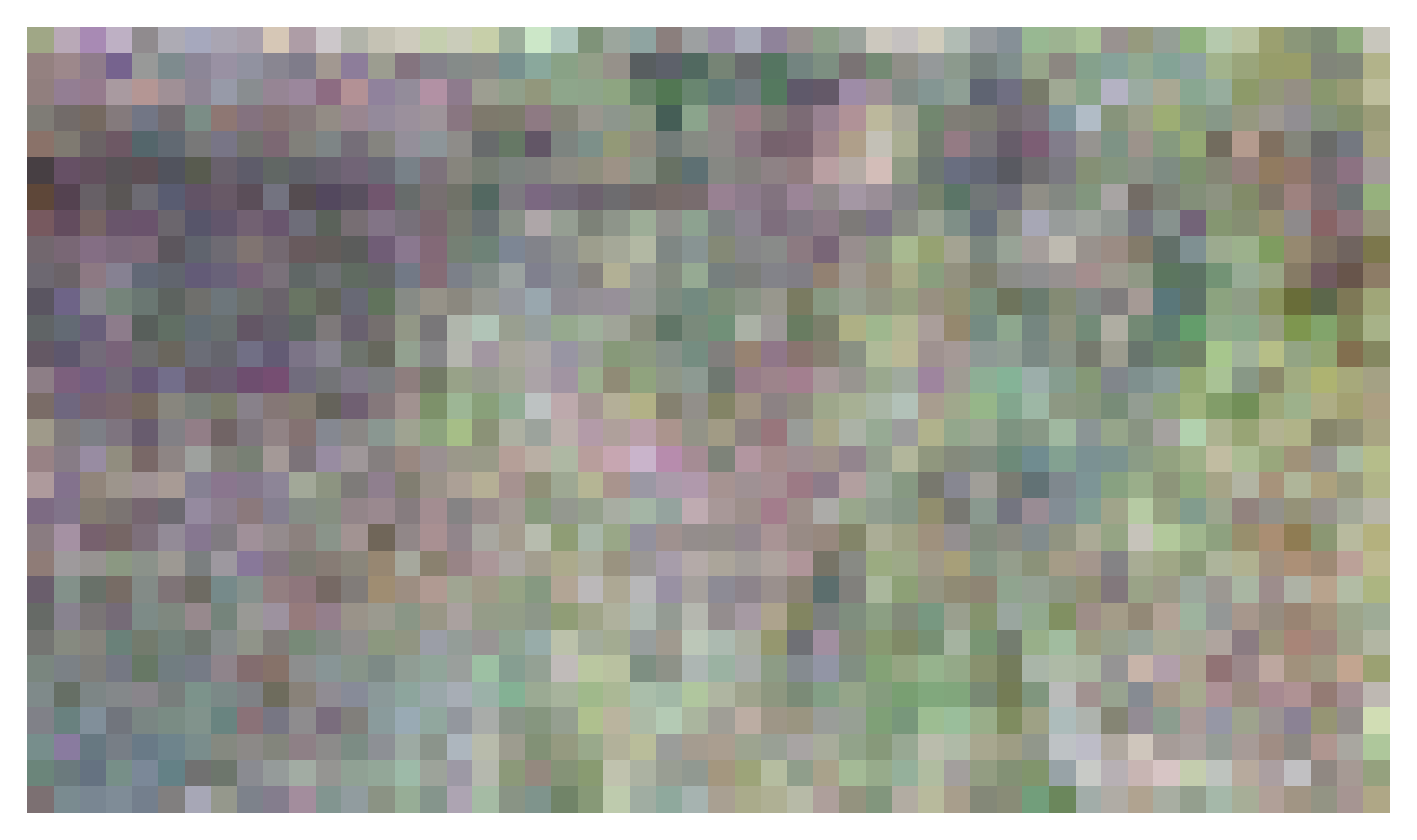}  & \includegraphics[width=2.5cm]{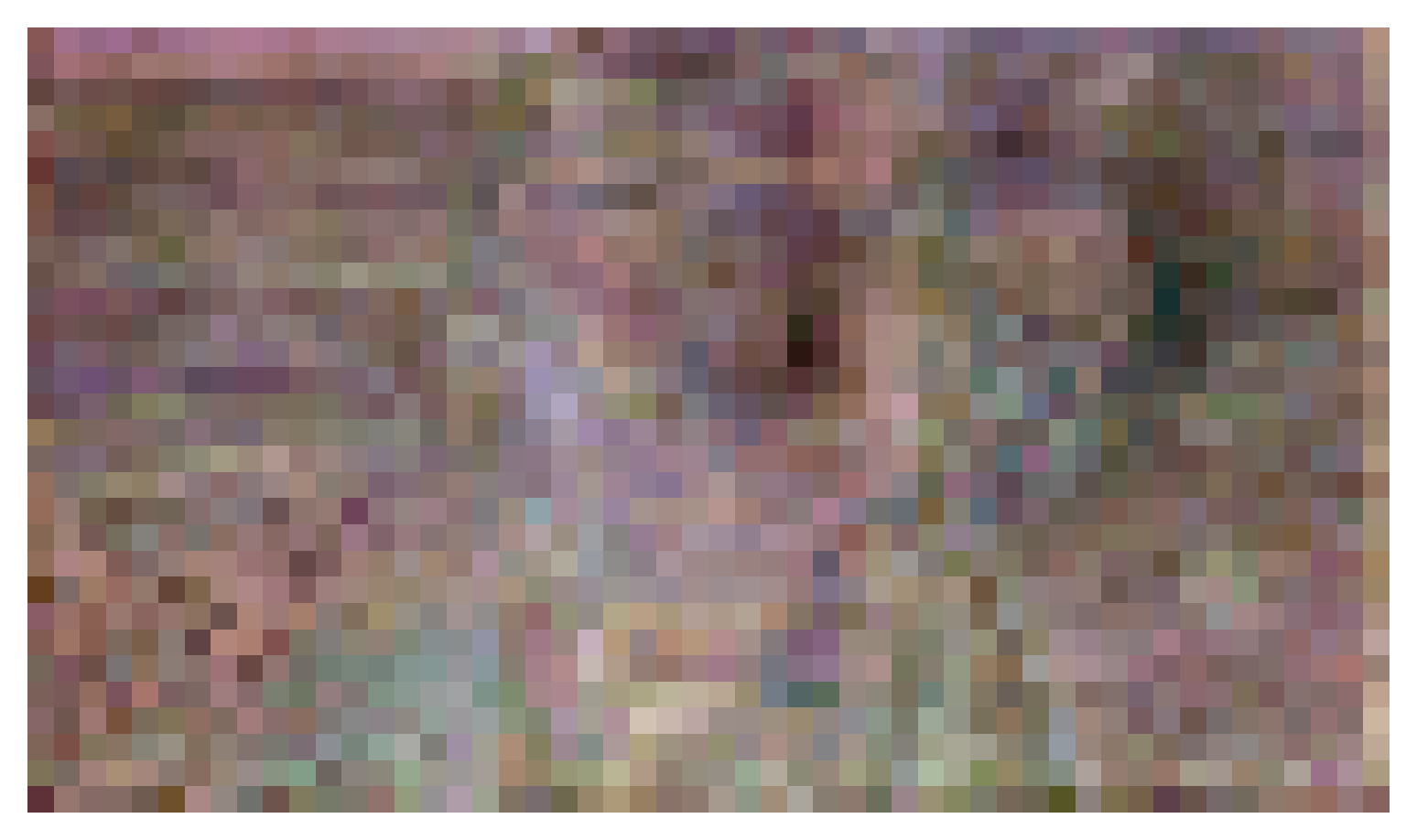} \\ 
\hline \hline
\textbf{11} & \includegraphics[width=2.5cm]{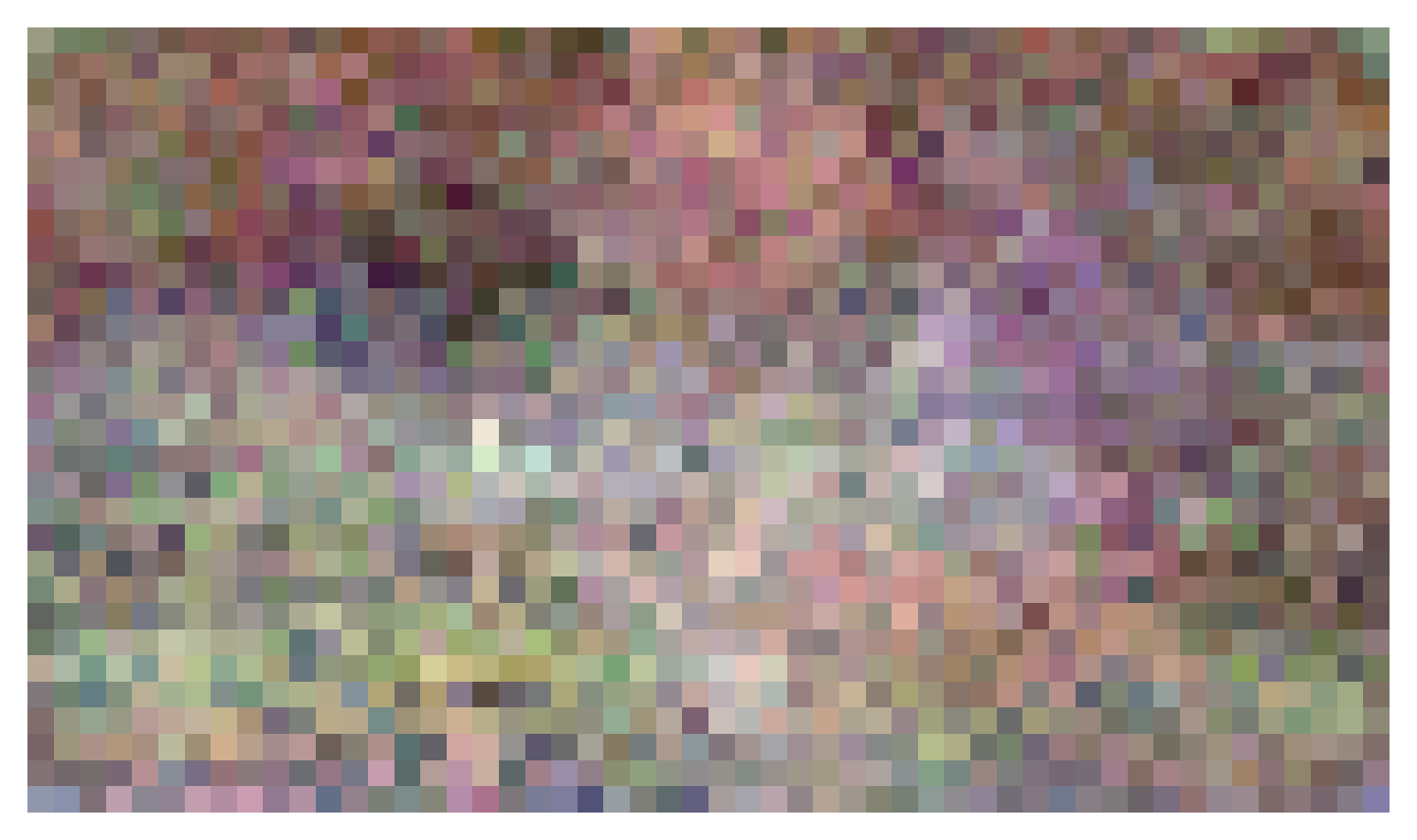} & \includegraphics[width=2.5cm]{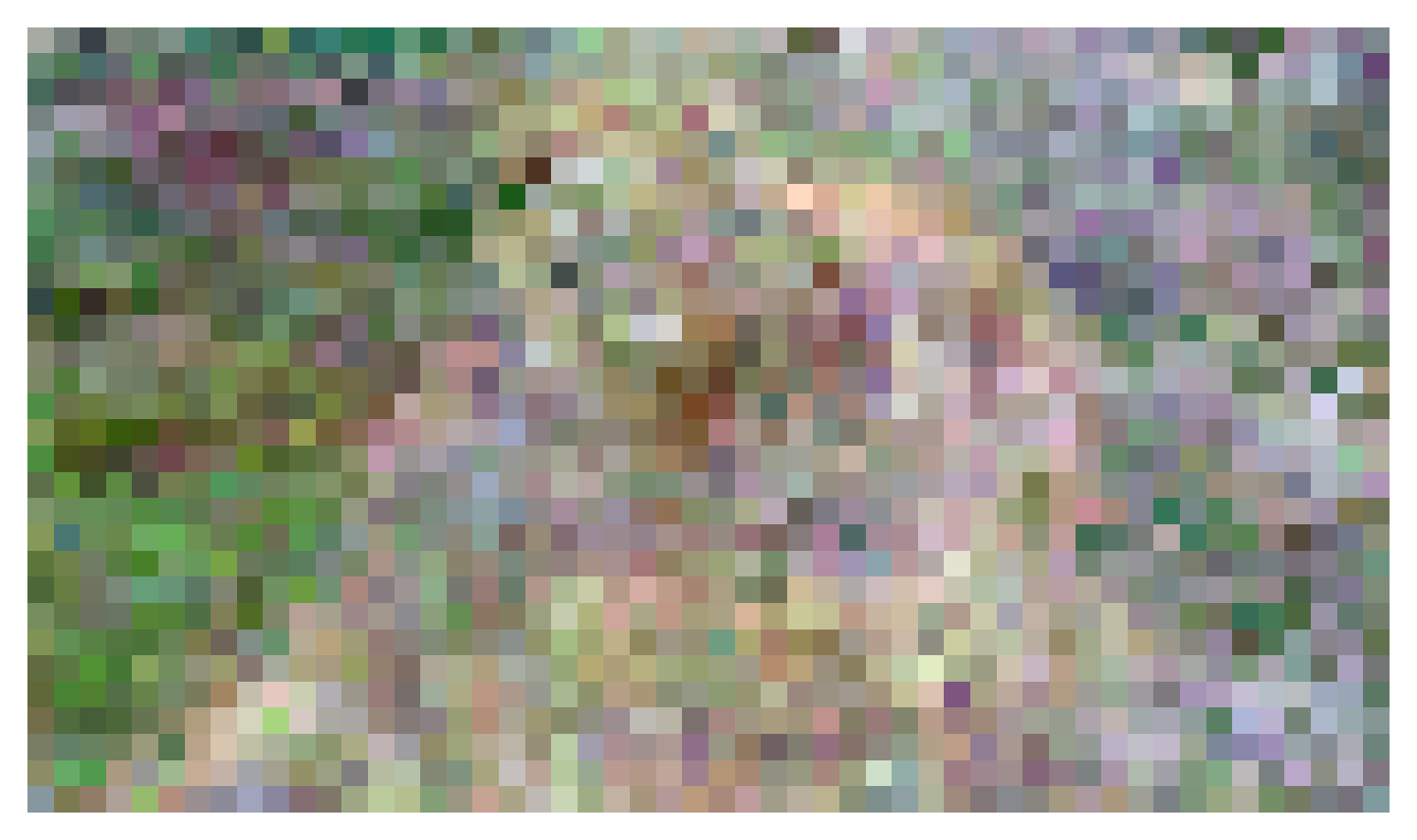} & \includegraphics[width=2.5cm]{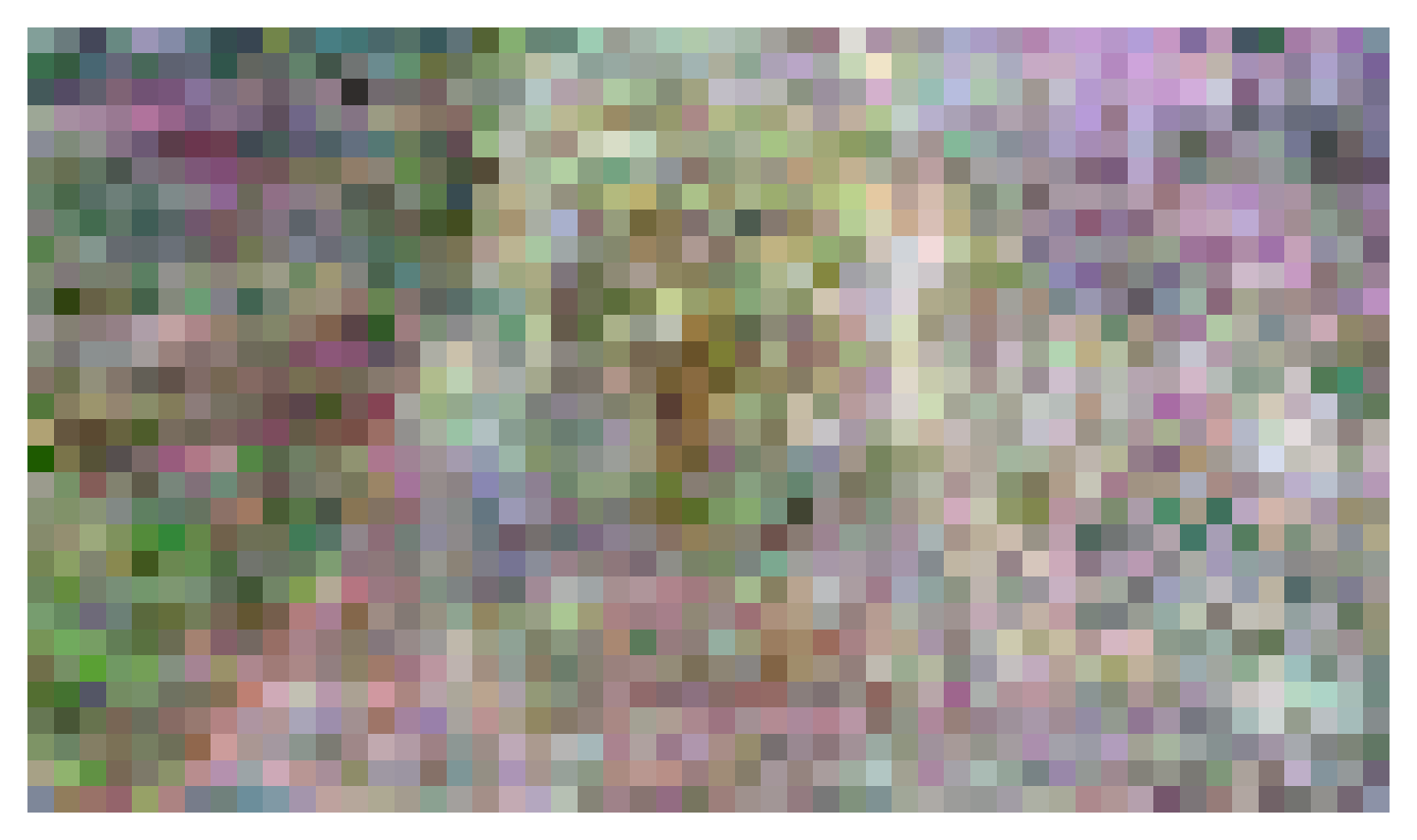} & \includegraphics[width=2.5cm]{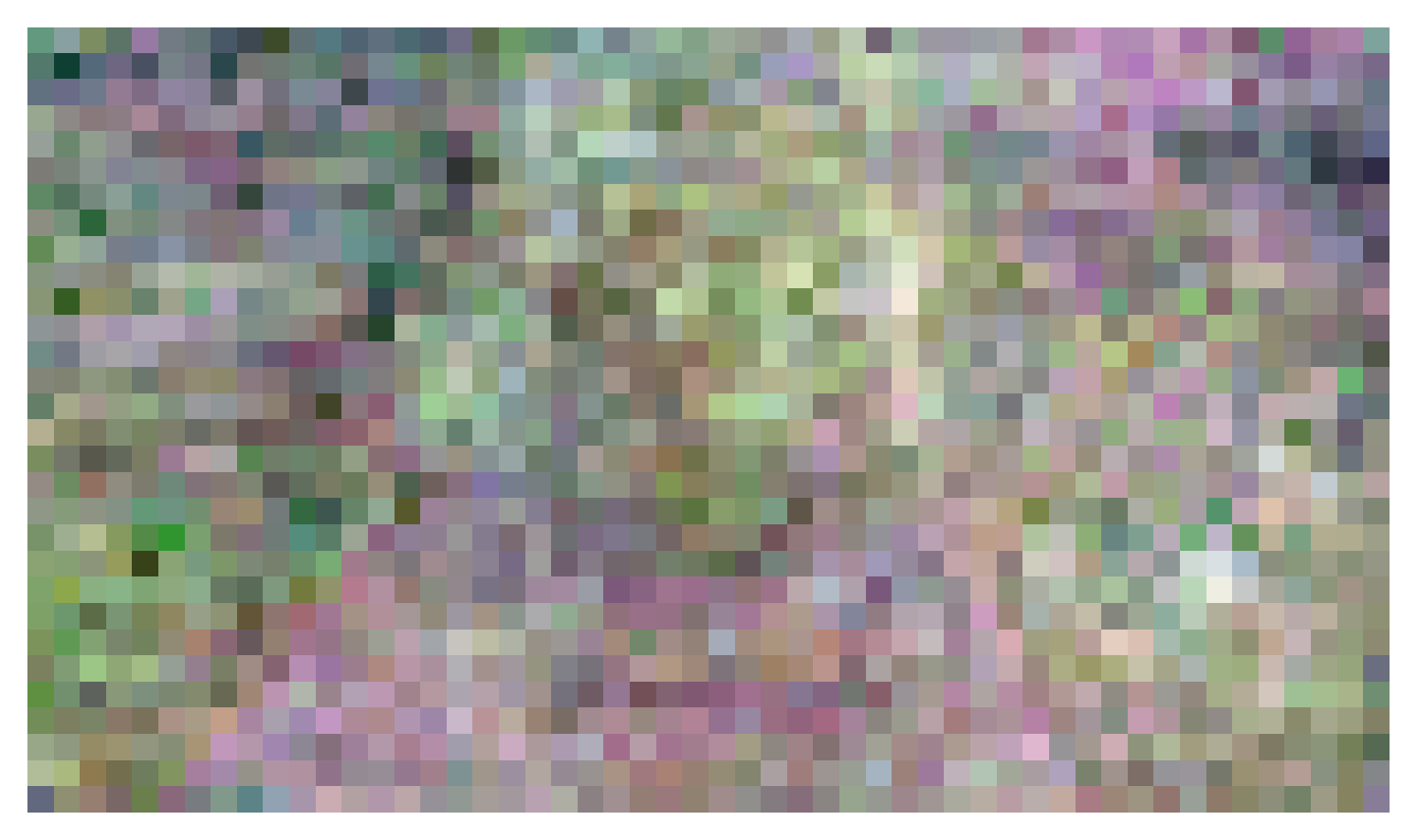}  & \includegraphics[width=2.5cm]{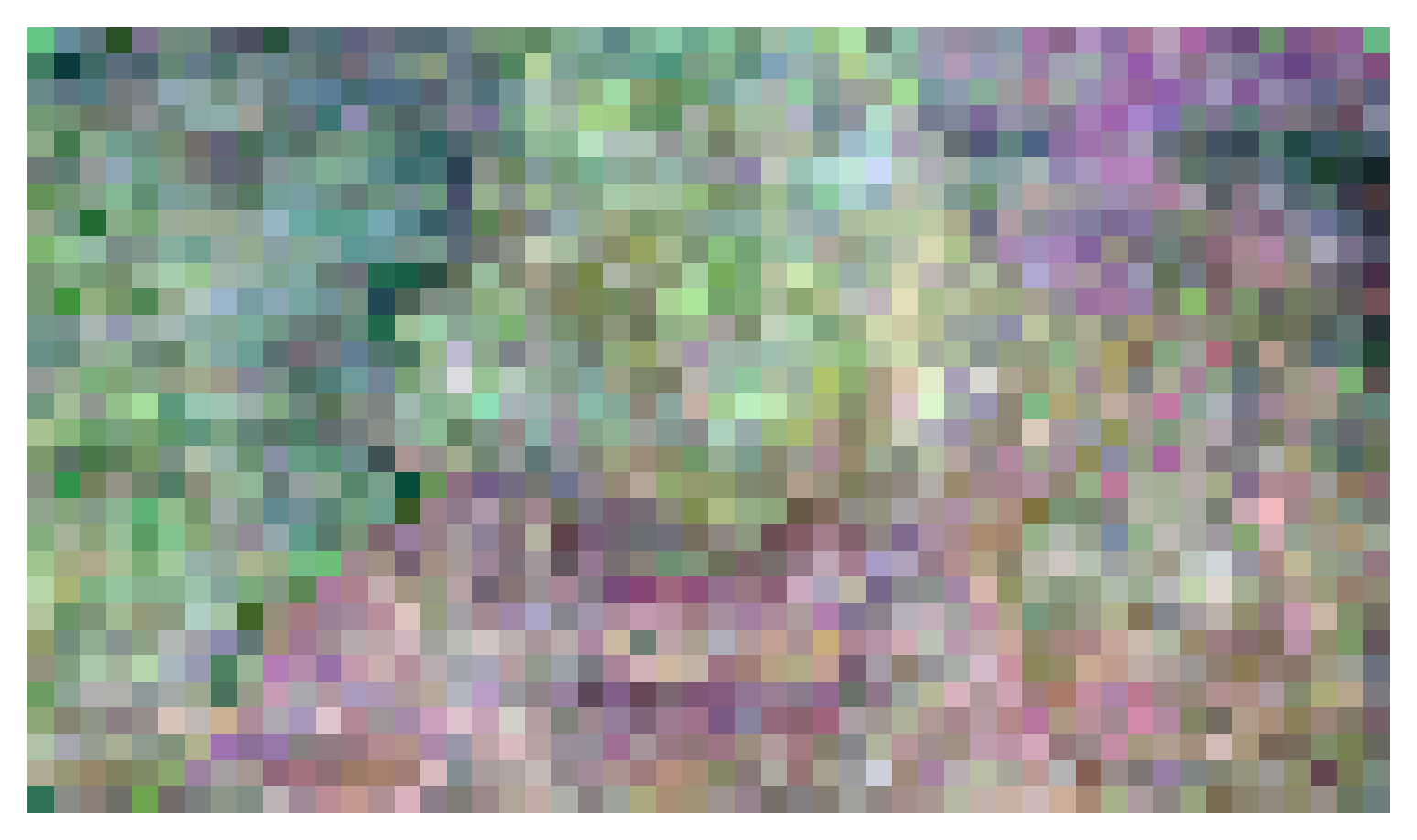}  & \includegraphics[width=2.5cm]{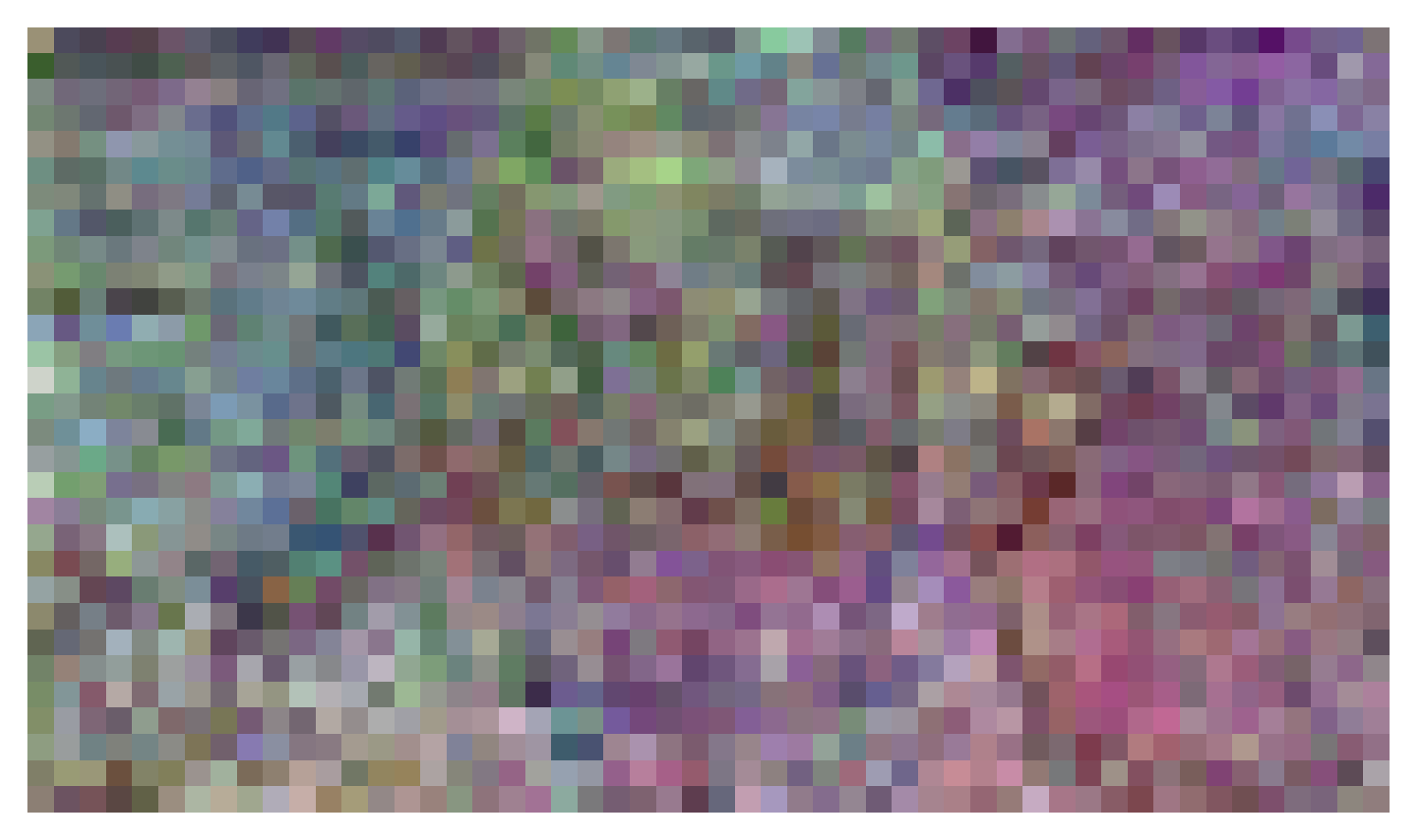} \\ 
\hline \hline
\textbf{18} & \includegraphics[width=2.5cm]{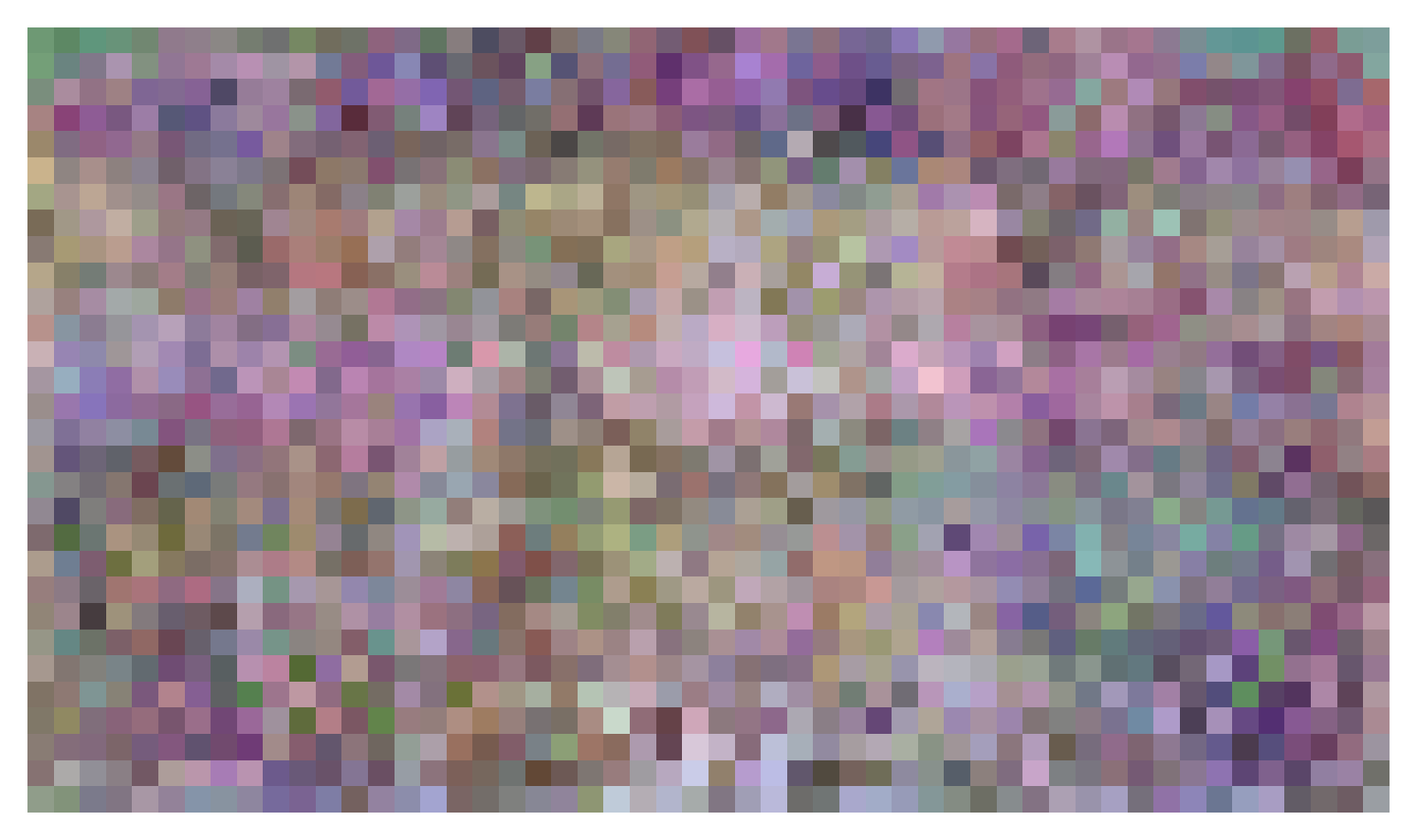} & \includegraphics[width=2.5cm]{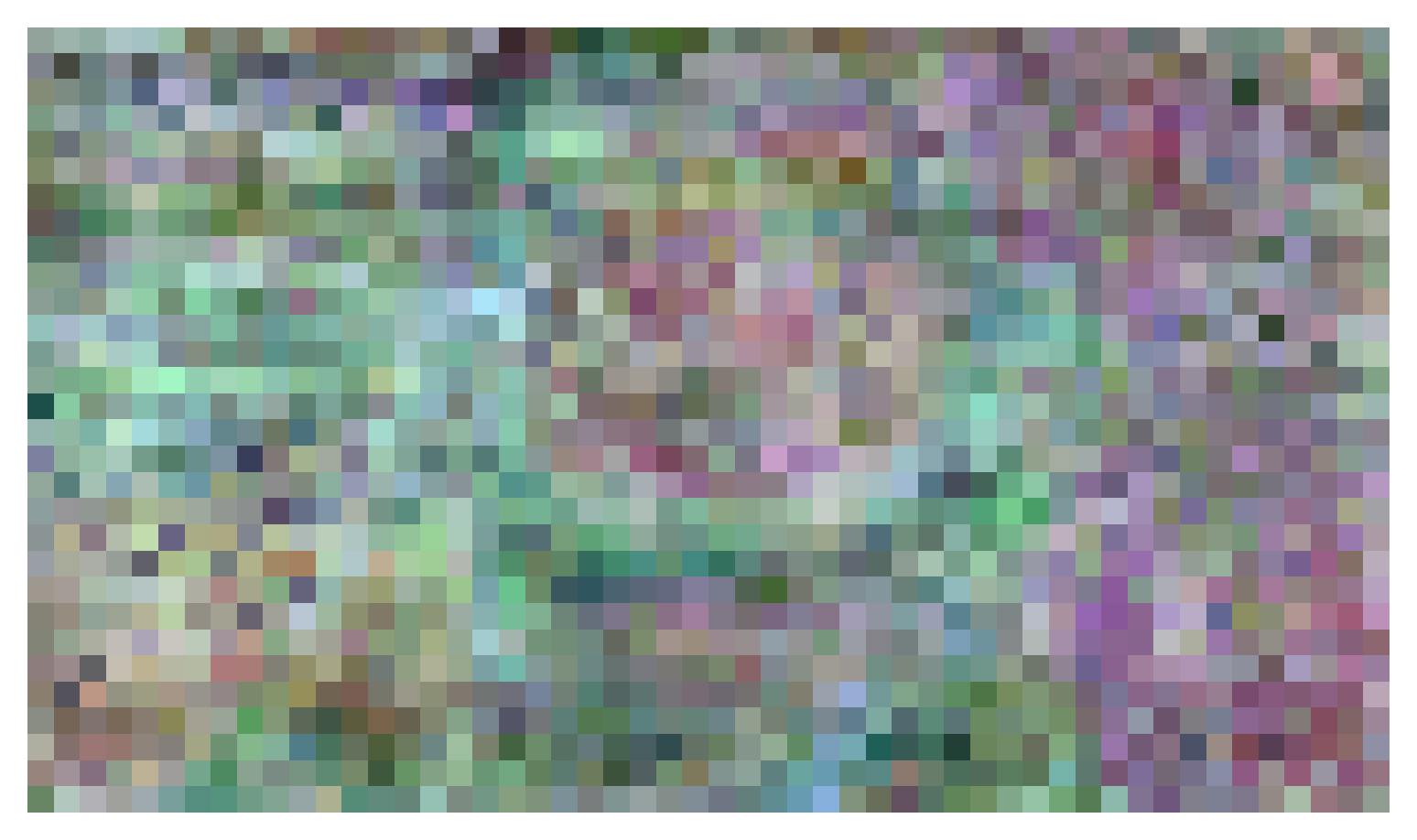} & \includegraphics[width=2.5cm]{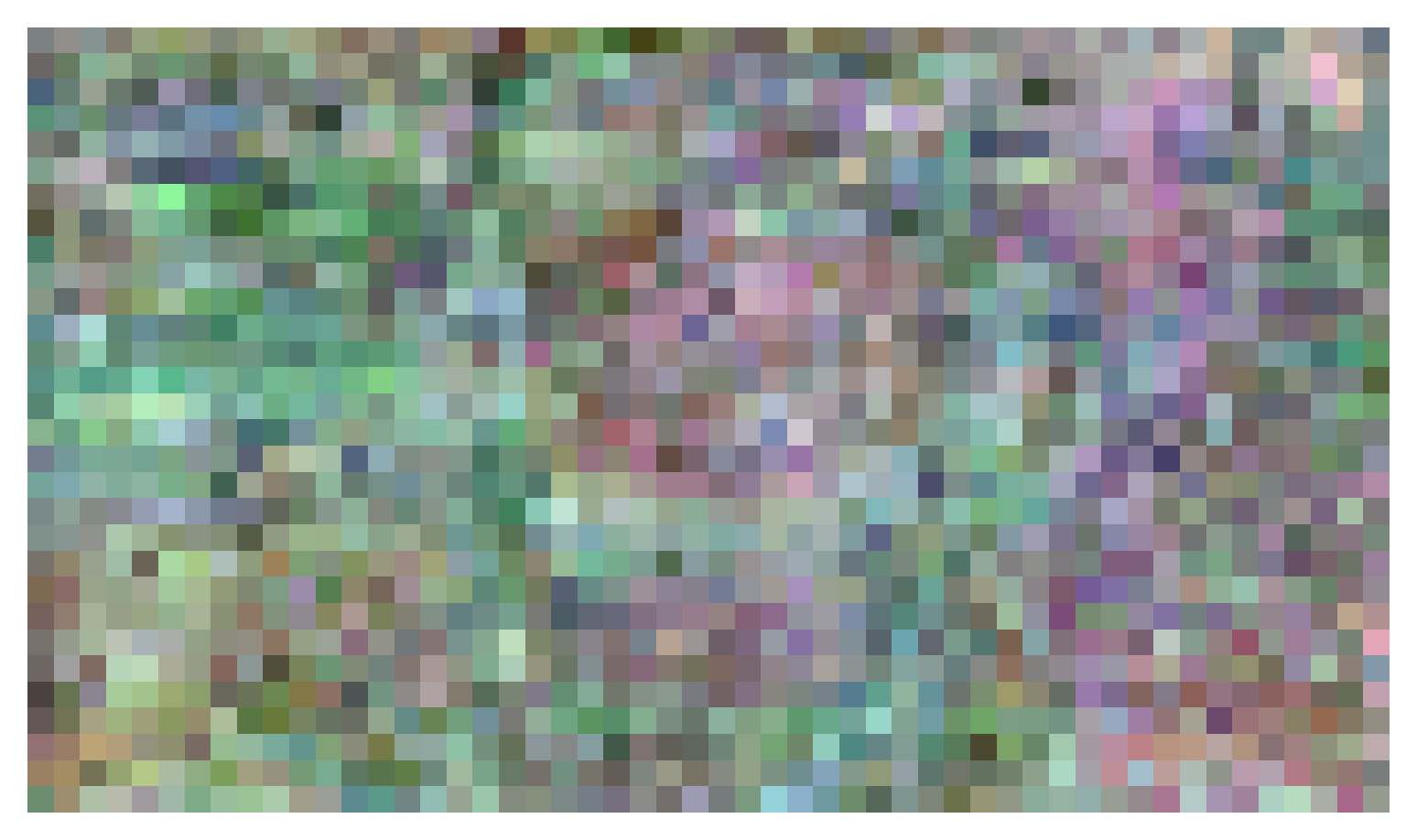} & \includegraphics[width=2.5cm]{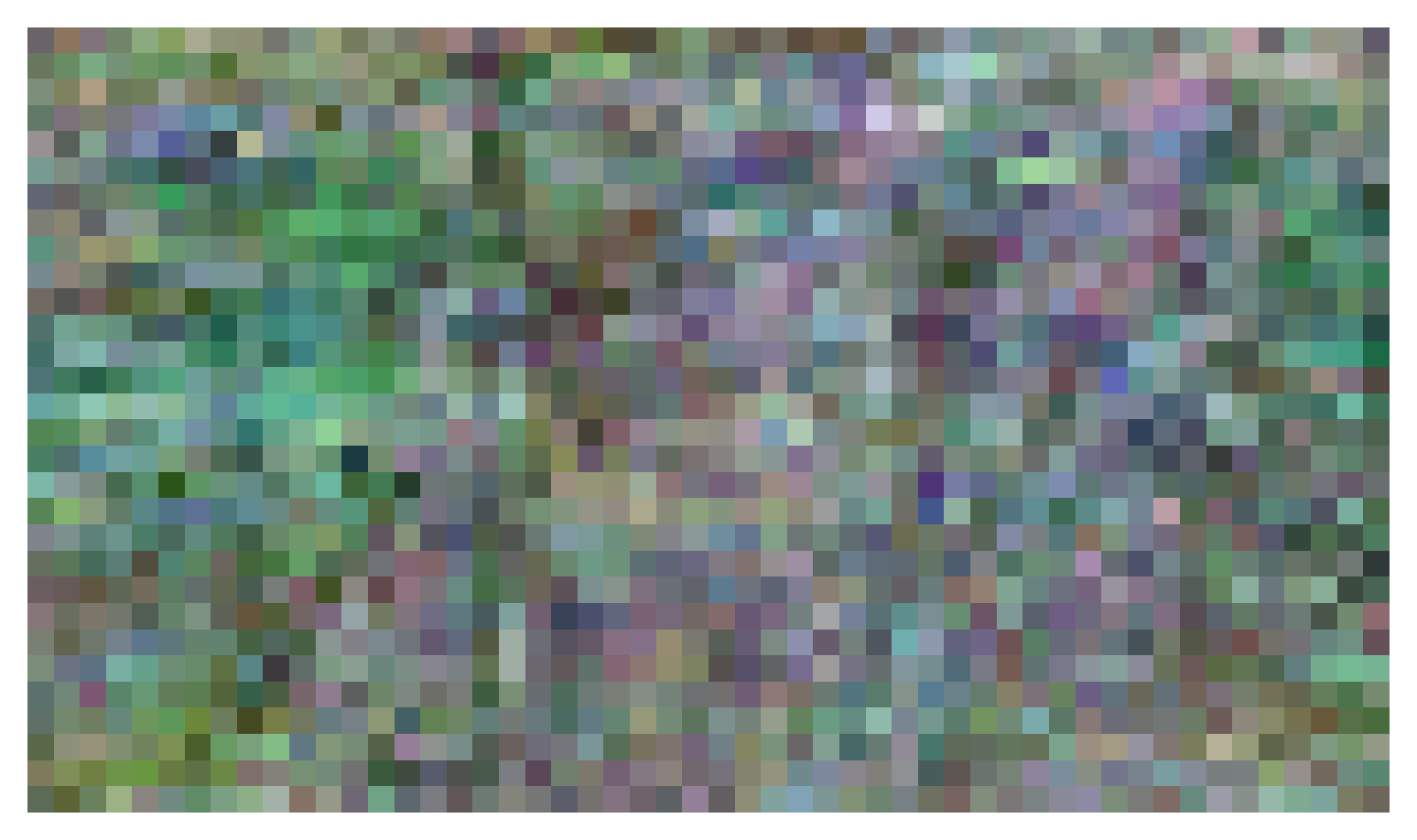}  & \includegraphics[width=2.5cm]{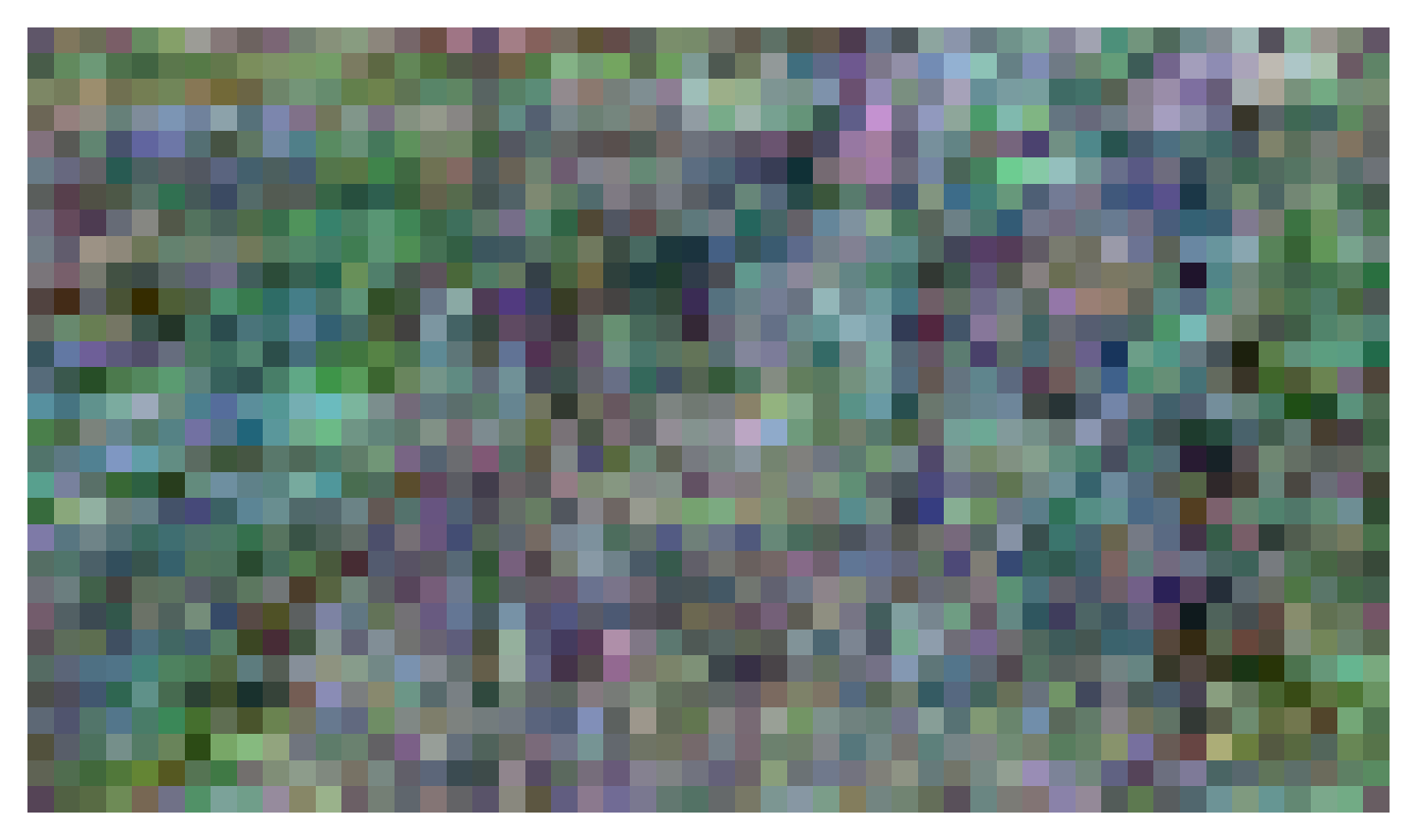}  & \includegraphics[width=2.5cm]{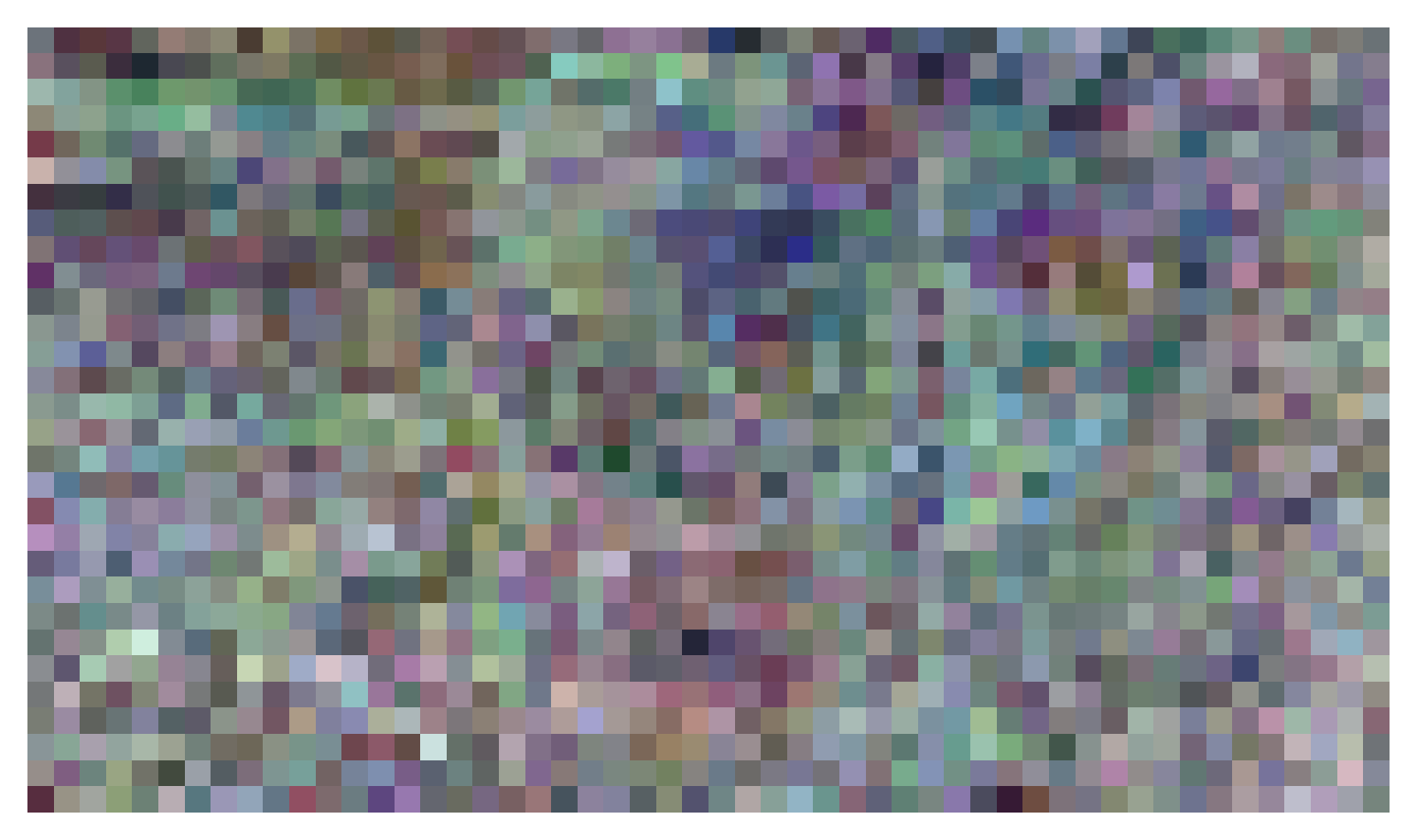} \\ 
\hline \hline
\textbf{26} & \includegraphics[width=2.5cm]{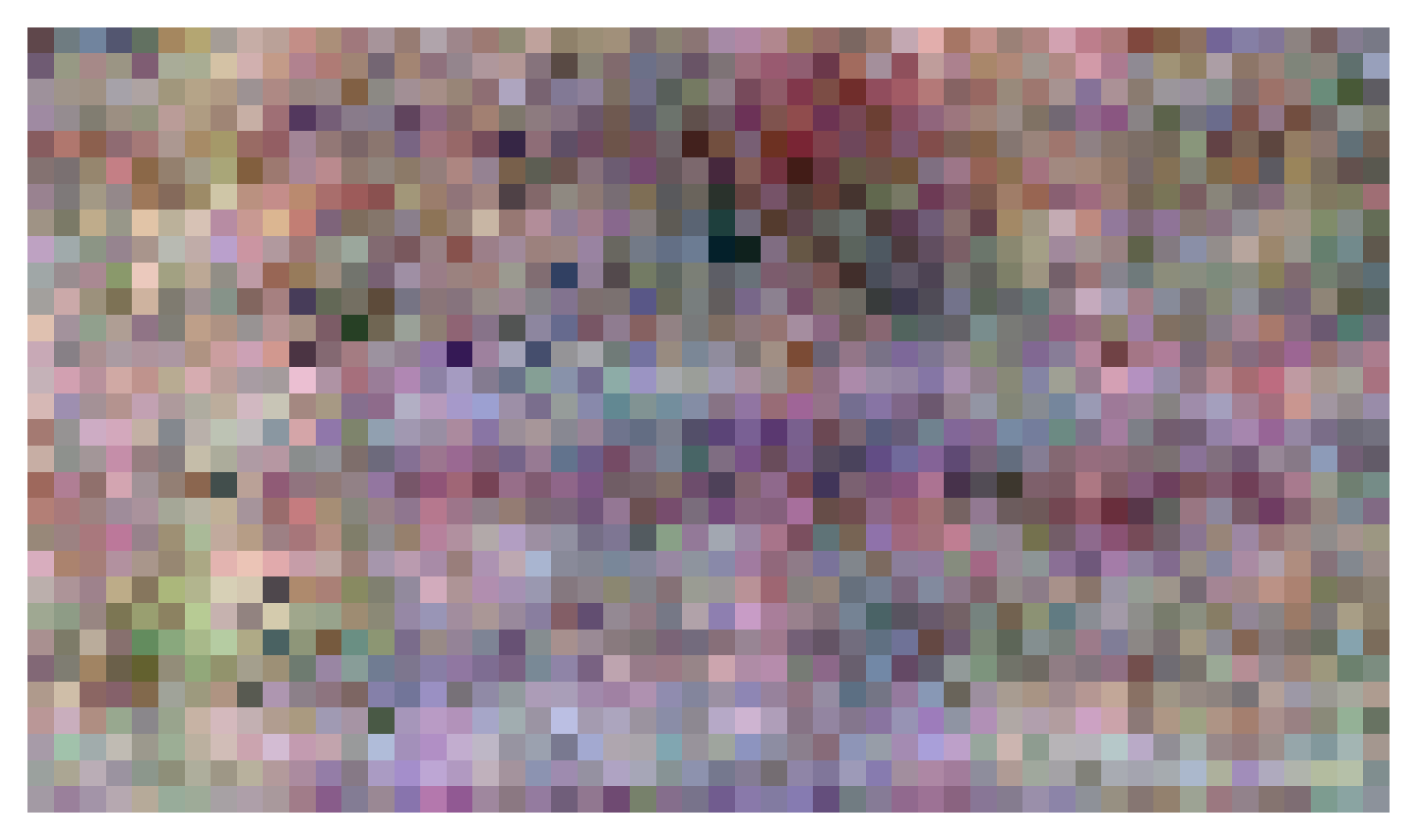} & \includegraphics[width=2.5cm]{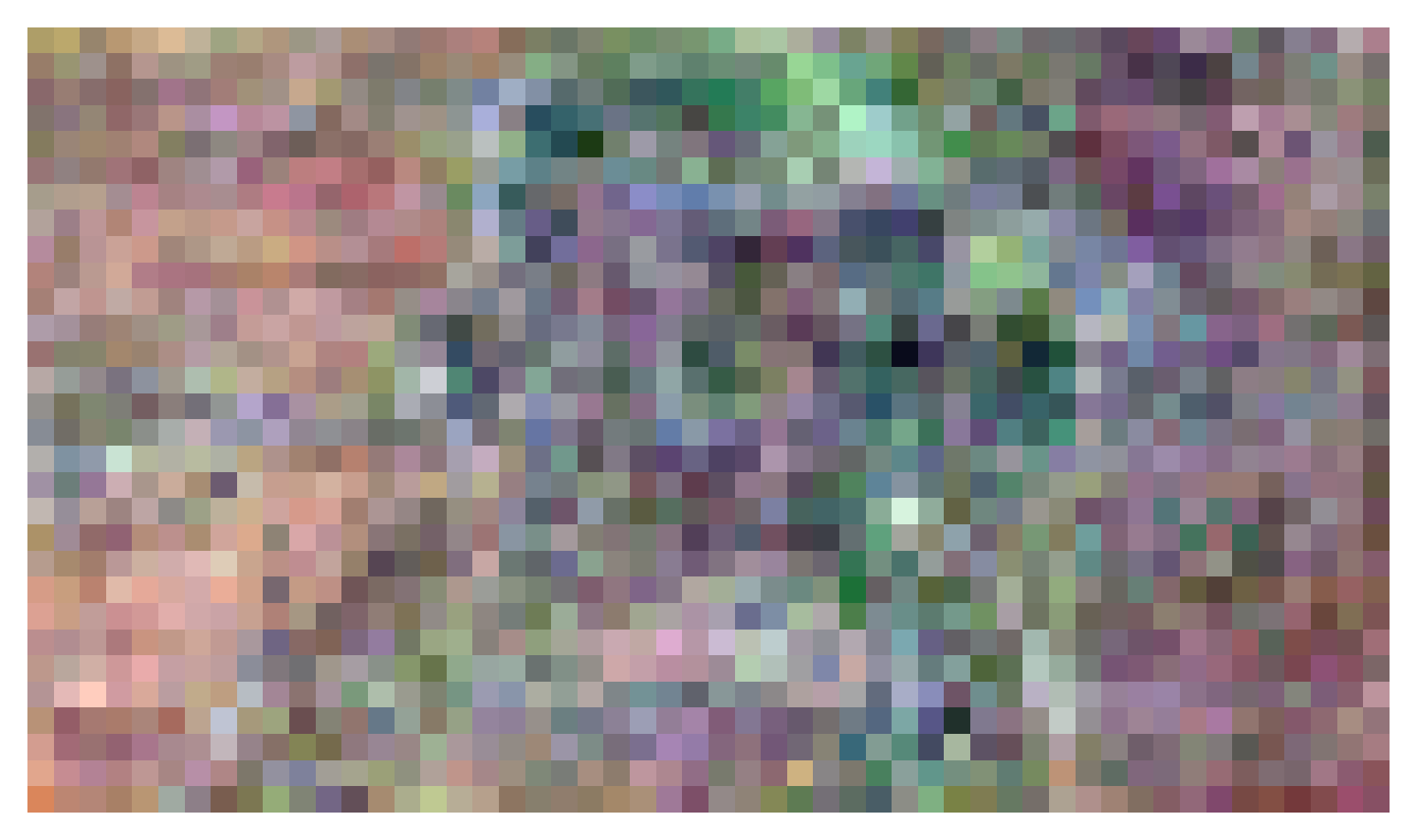} & \includegraphics[width=2.5cm]{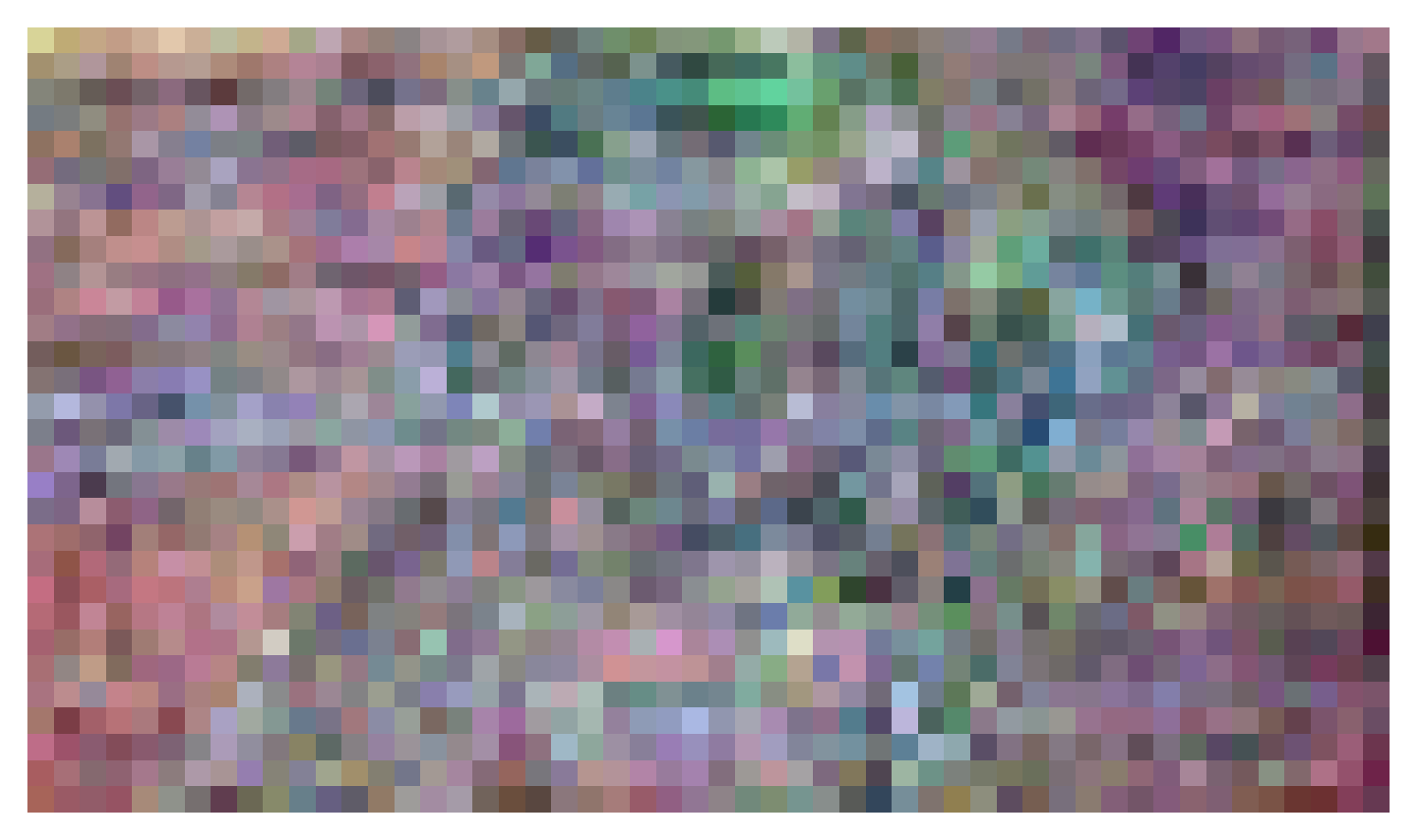} & \includegraphics[width=2.5cm]{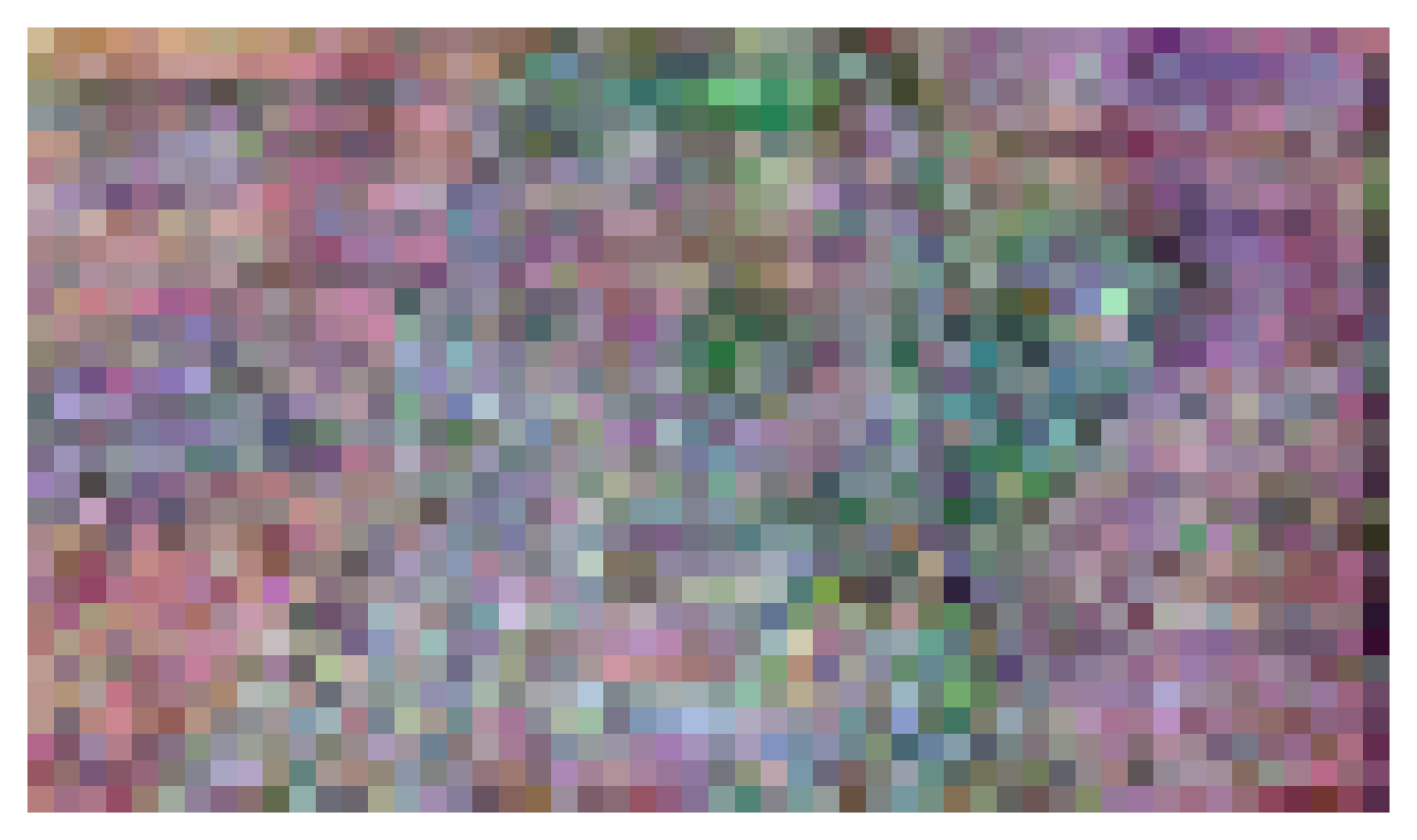}  & \includegraphics[width=2.5cm]{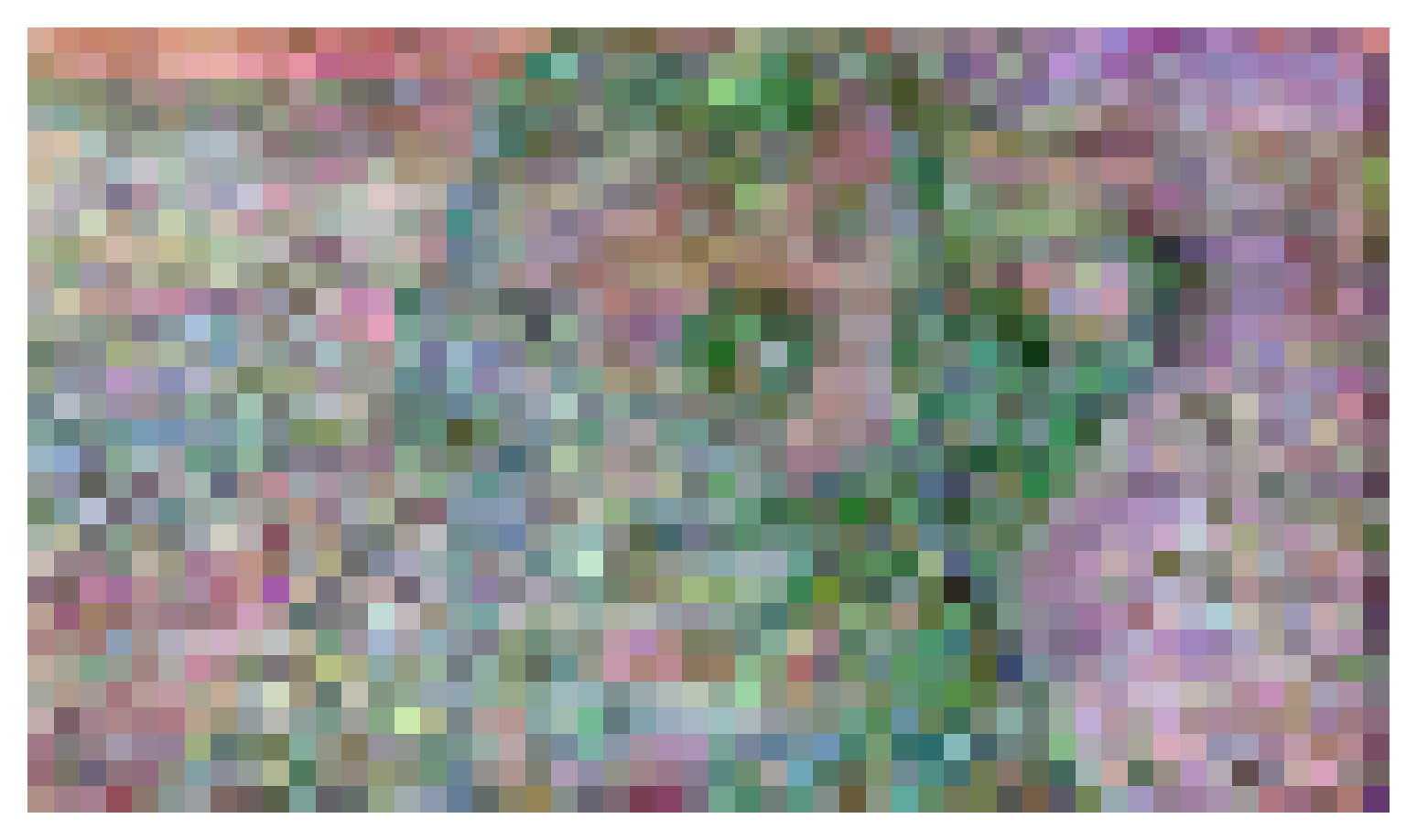}  & \includegraphics[width=2.5cm]{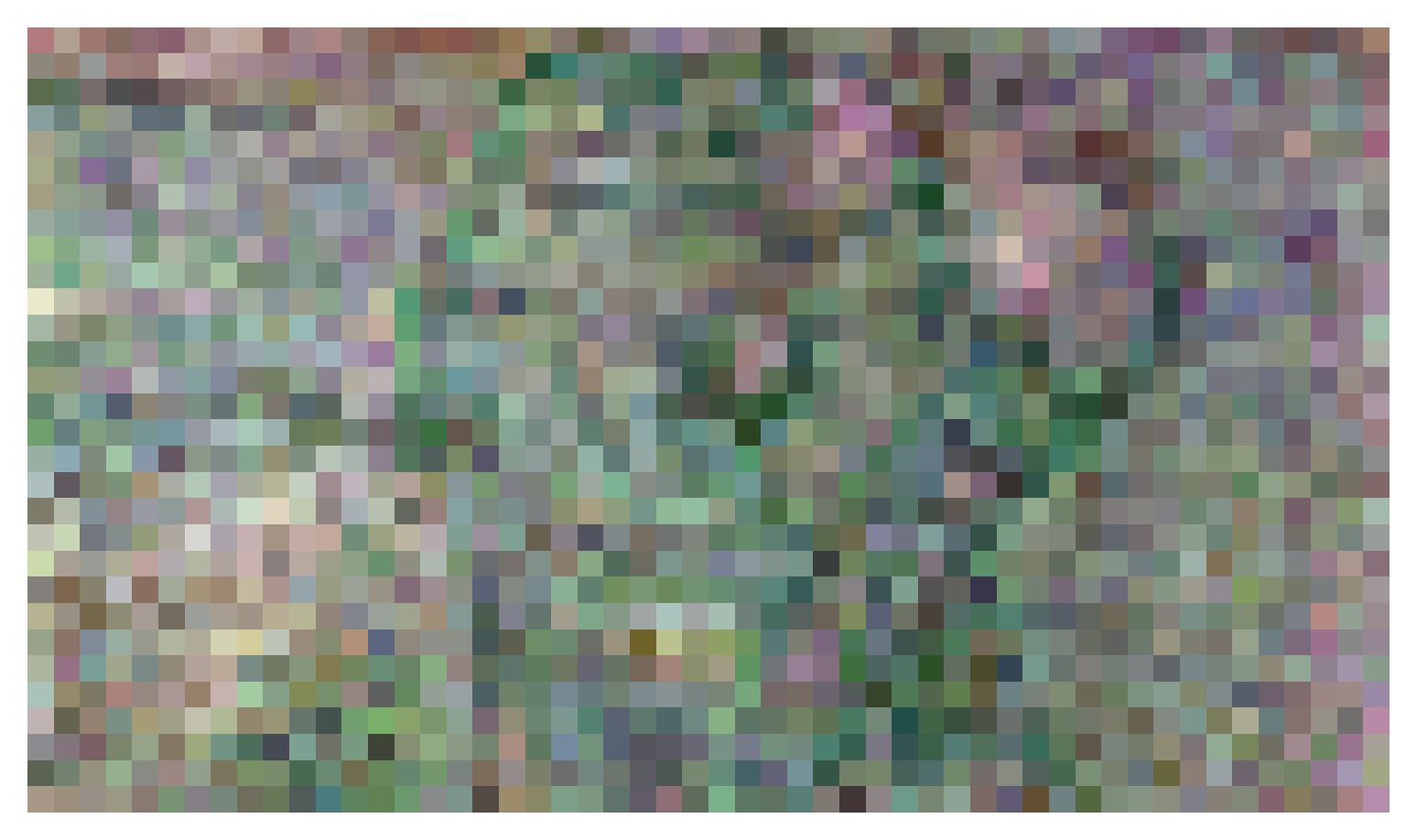} \\ 
\hline
\end{tabular}
\end{table*}

\clearpage 

\begin{figure*}[htbp]
    \centering
    
        \includegraphics[width=3.9cm,height=2.25cm]{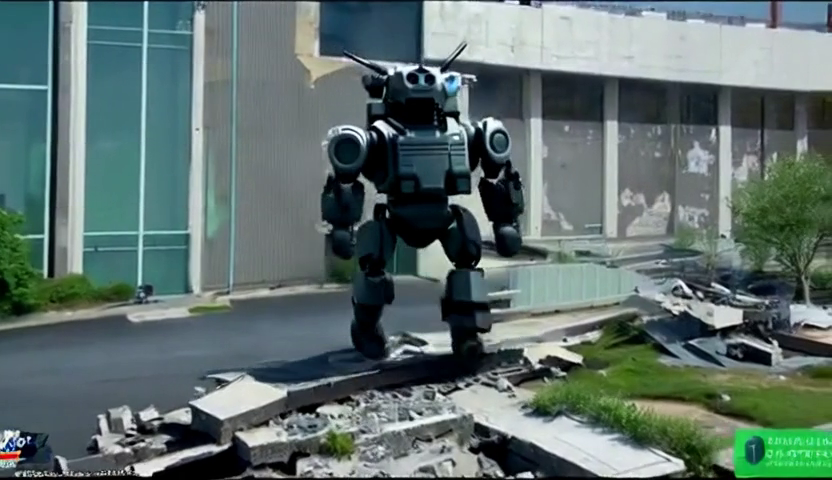} 
        \includegraphics[width=3.9cm,height=2.25cm]{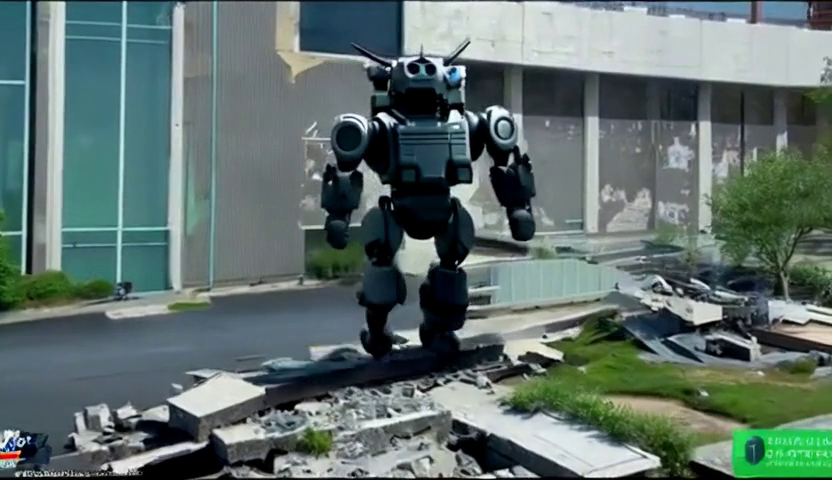} 
        \includegraphics[width=3.9cm,height=2.25cm]{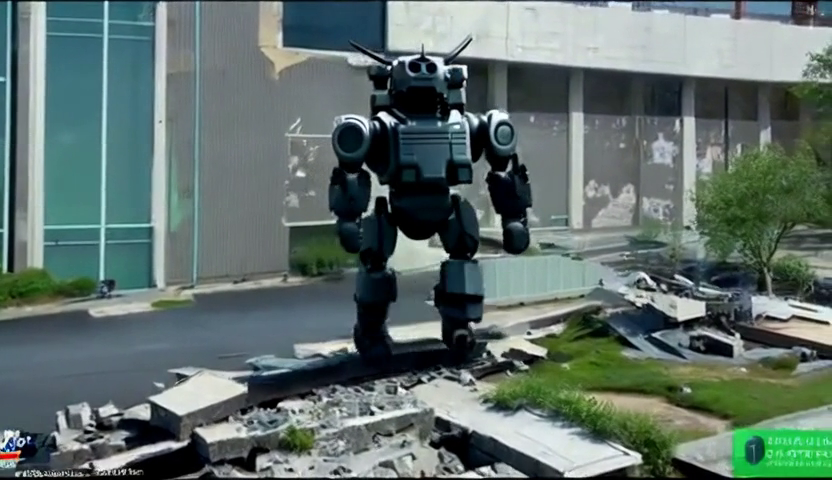} 
        \includegraphics[width=3.9cm,height=2.25cm]{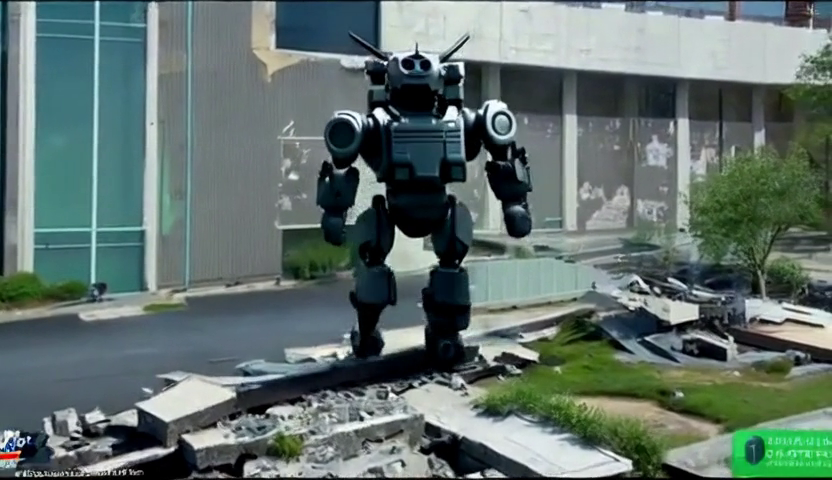} 
        \vspace{0.1cm}
        \includegraphics[width=3.9cm,height=2.25cm]{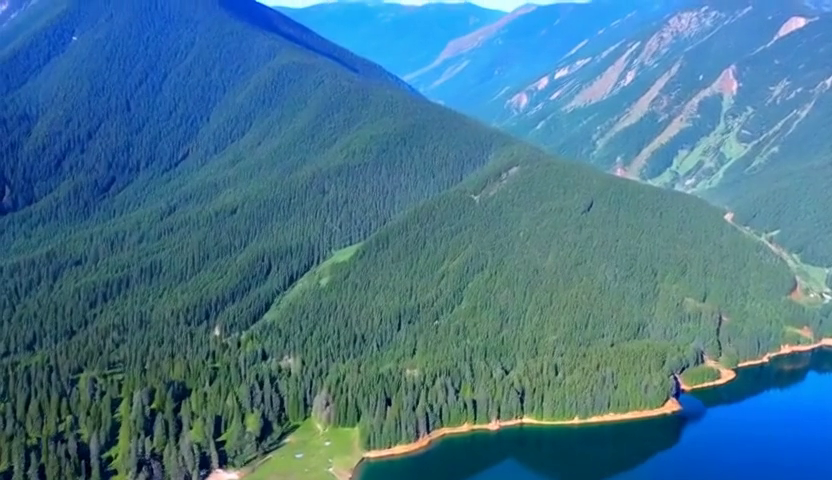} 
        \includegraphics[width=3.9cm,height=2.25cm]{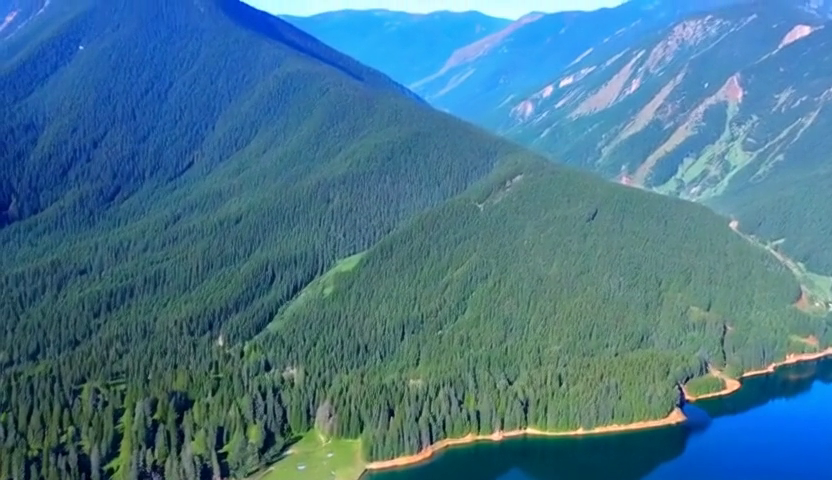} 
        \includegraphics[width=3.9cm,height=2.25cm]{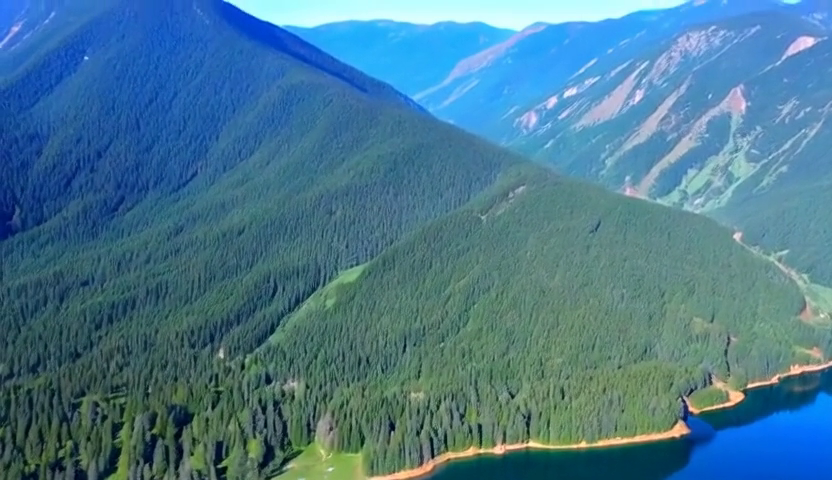} 
        \includegraphics[width=3.9cm,height=2.25cm]{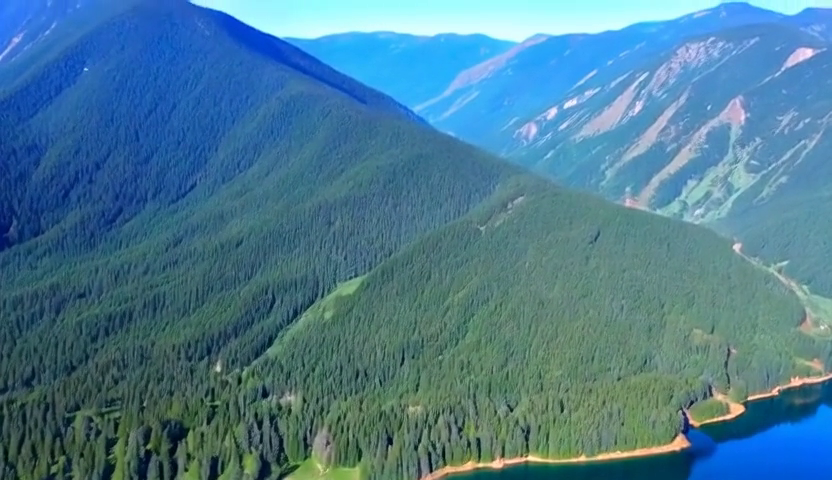} 
        \vspace{0.1cm}
        \includegraphics[width=3.9cm,height=2.25cm]{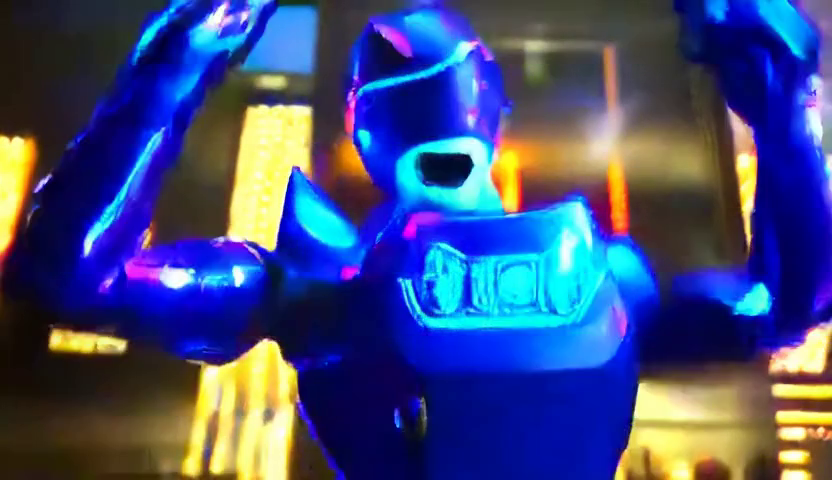} 
        \includegraphics[width=3.9cm,height=2.25cm]{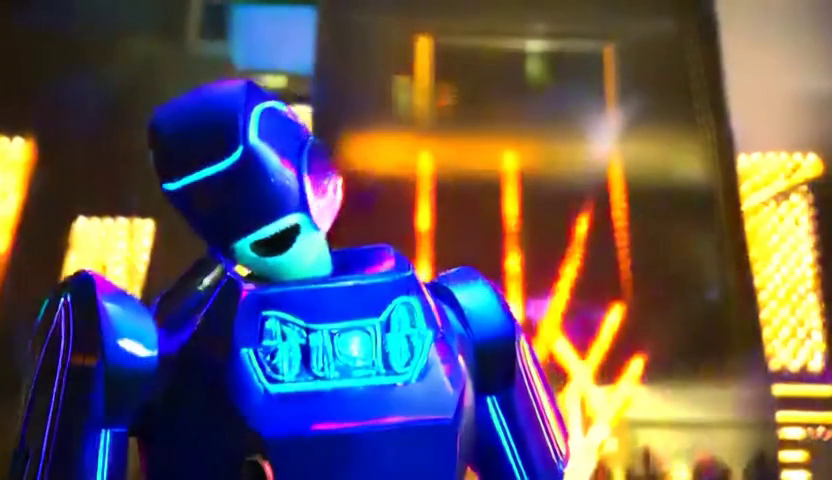} 
        \includegraphics[width=3.9cm,height=2.25cm]{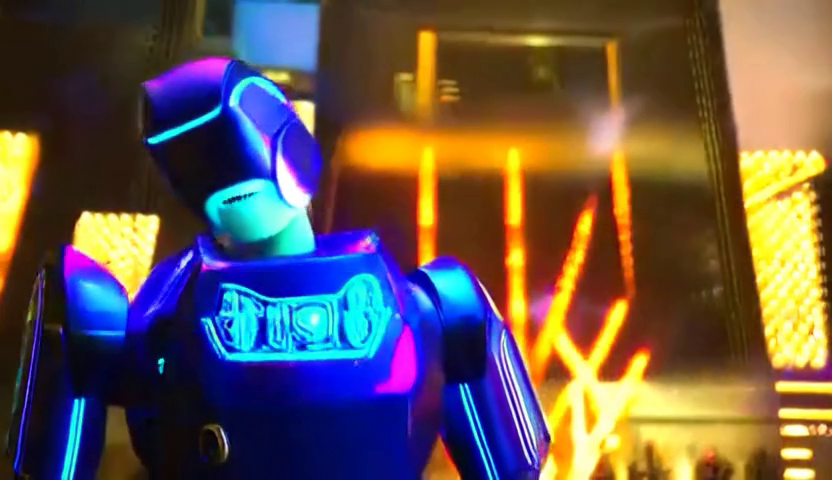} 
        \includegraphics[width=3.9cm,height=2.25cm]{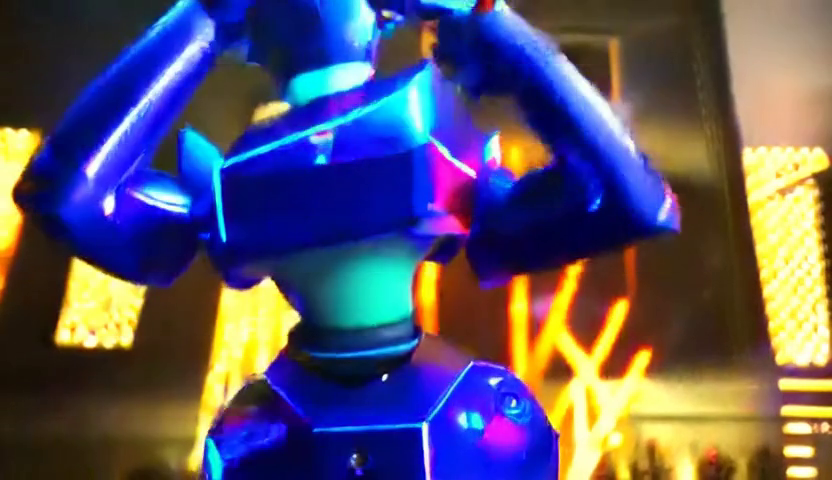} 
        \vspace{0.1cm}
        \includegraphics[width=3.9cm,height=2.25cm]{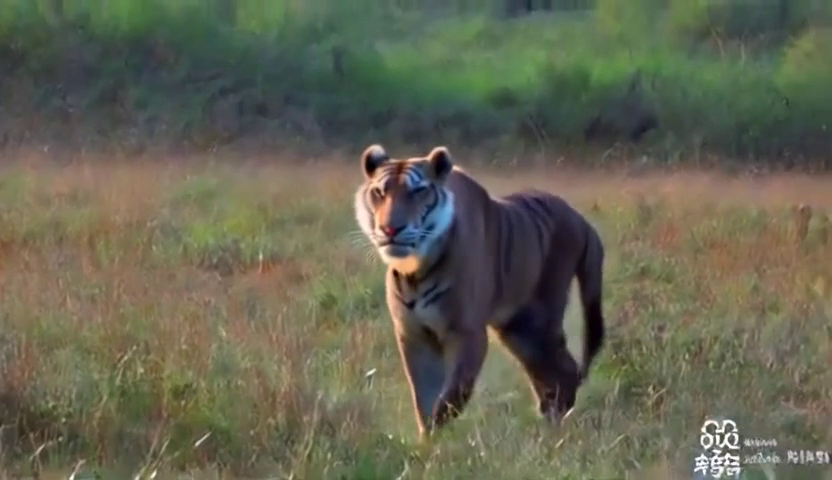} 
        \includegraphics[width=3.9cm,height=2.25cm]{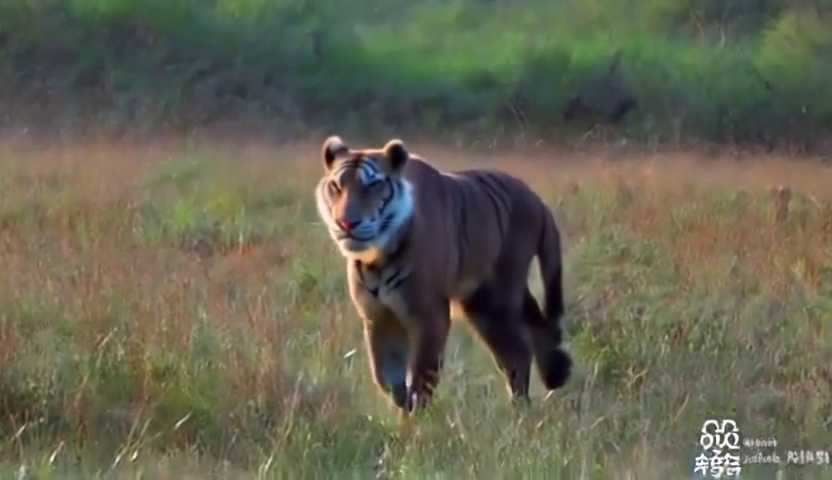} 
        \includegraphics[width=3.9cm,height=2.25cm]{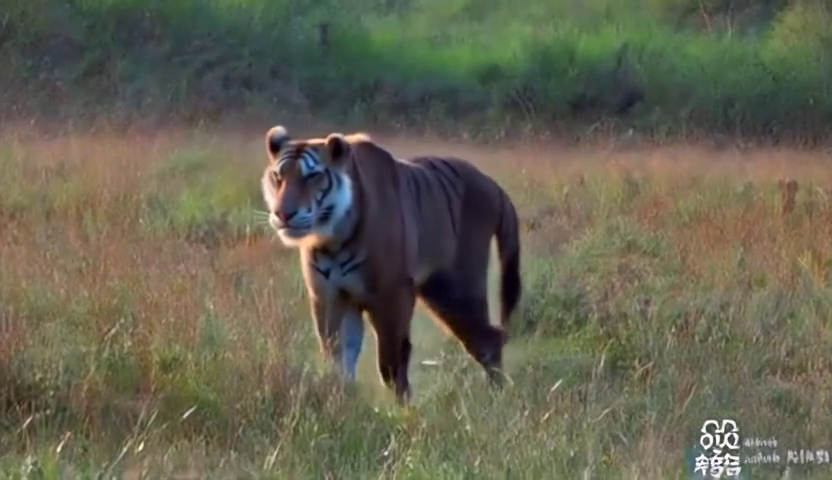} 
        \includegraphics[width=3.9cm,height=2.25cm]{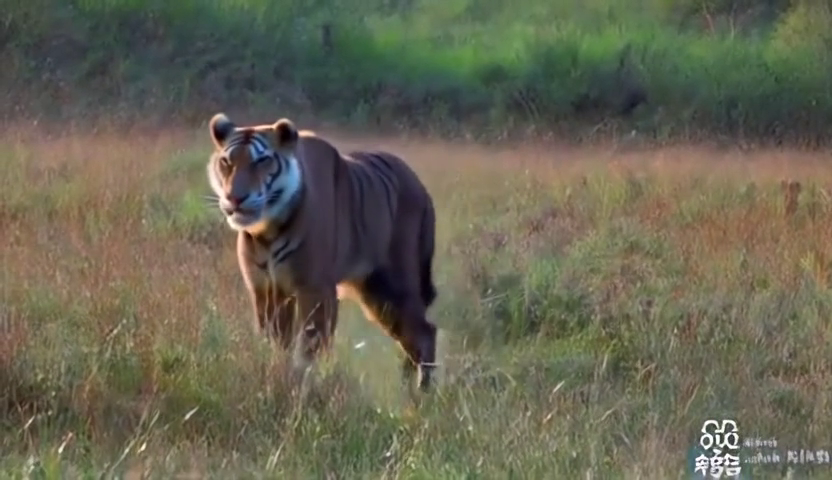} 
        \vspace{0.1cm}
        \includegraphics[width=3.9cm,height=2.25cm]{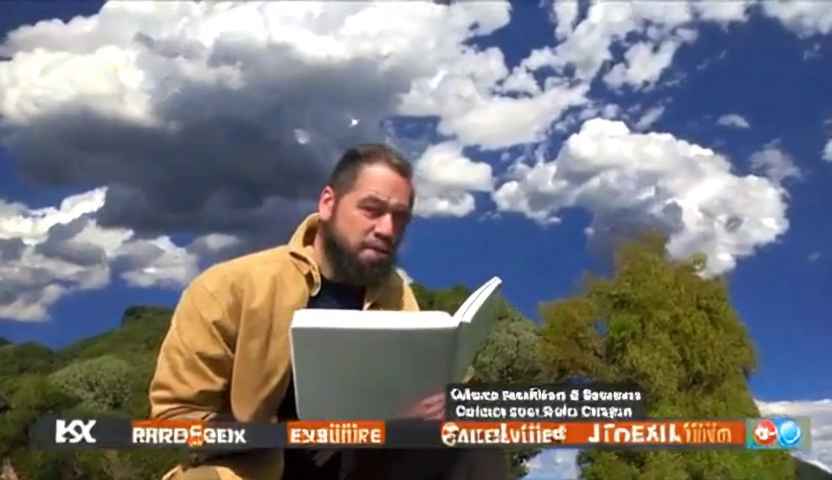} 
        \includegraphics[width=3.9cm,height=2.25cm]{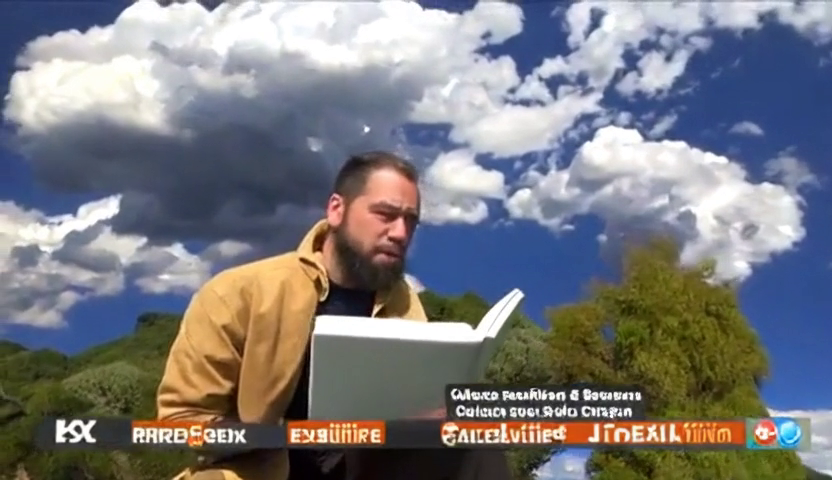} 
        \includegraphics[width=3.9cm,height=2.25cm]{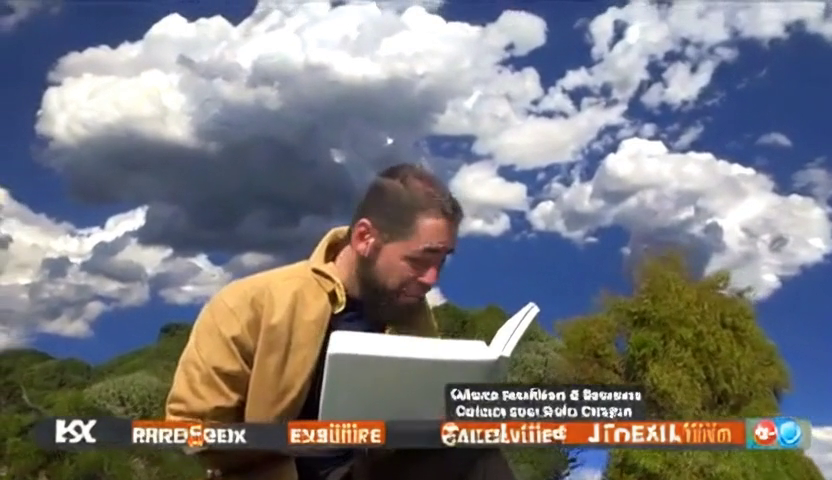} 
        \includegraphics[width=3.9cm,height=2.25cm]{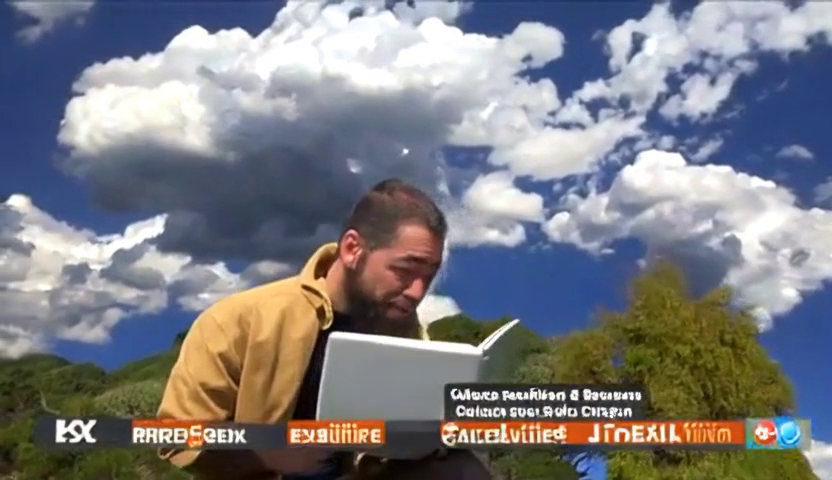} 
        \vspace{0.1cm}
        \includegraphics[width=3.9cm,height=2.25cm]{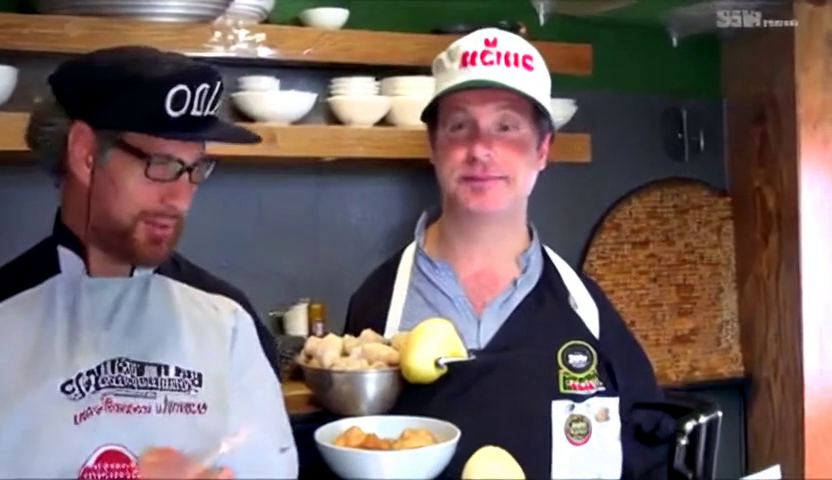} 
        \includegraphics[width=3.9cm,height=2.25cm]{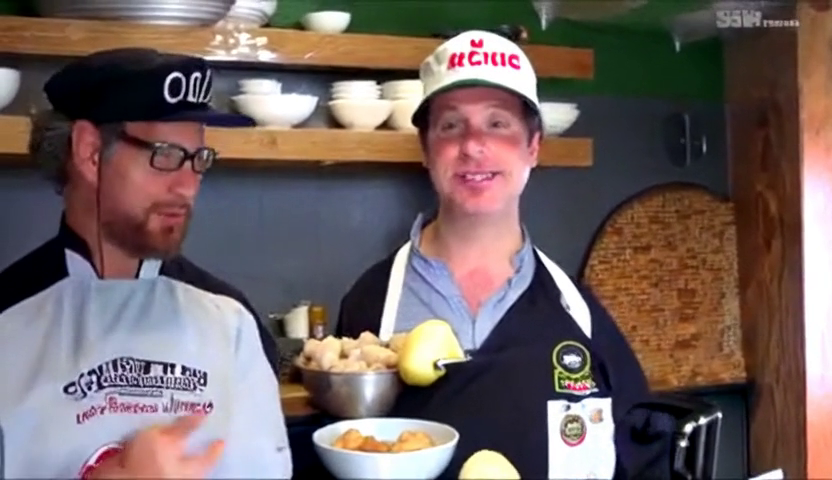} 
        \includegraphics[width=3.9cm,height=2.25cm]{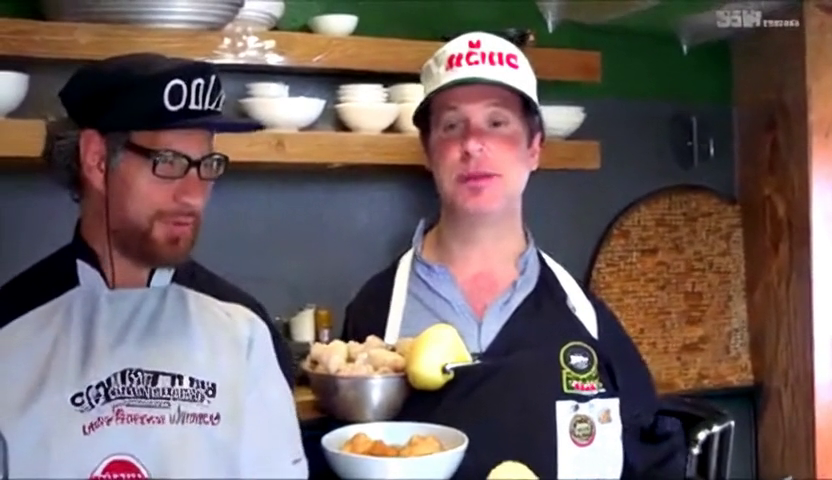} 
        \includegraphics[width=3.9cm,height=2.25cm]{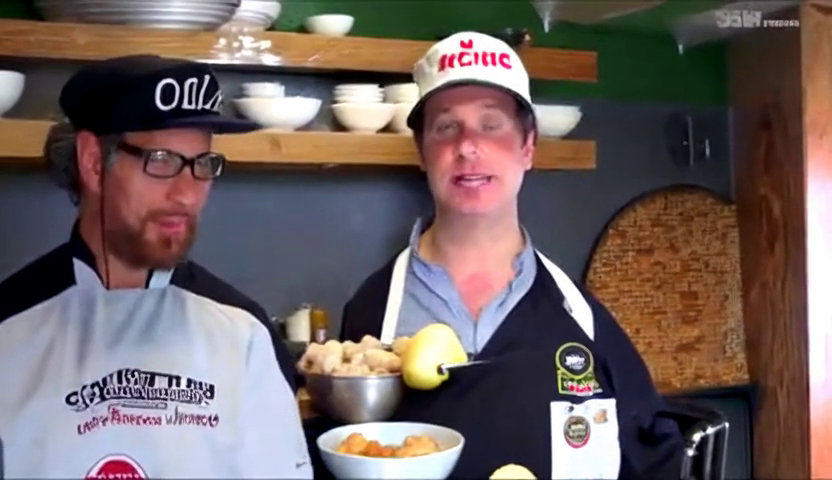} 
    
    \caption{Qualitative Examples \textbf{HLA-3F-R1-10}}
    \label{fig:qualitative.examples.hla.3f.r1.10}
\end{figure*}

\section{Qualitative Examples}
We present further qualitative examples of all three variants in Figs.~\ref{fig:qualitative.examples.hla.3f.r1.10}, \ref{fig:qualitative.examples.hla.3f.r1.21} and \ref{fig:qualitative.examples.hla.3f.r2.15}. As can be seen, all three methods achieve to generate high-quality videos. Please also see the original mp4 files in the supplementary material.

\begin{figure*}[htbp]
    \centering
    
        \includegraphics[width=3.9cm,height=2.25cm]{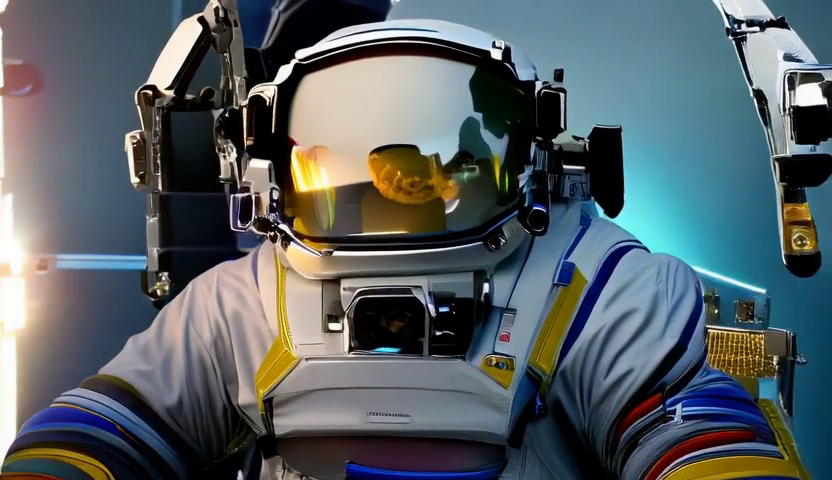} 
        \includegraphics[width=3.9cm,height=2.25cm]{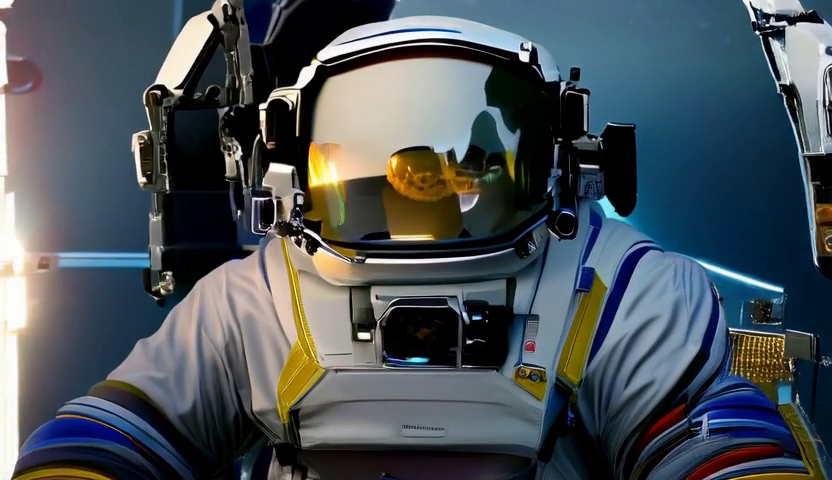} 
        \includegraphics[width=3.9cm,height=2.25cm]{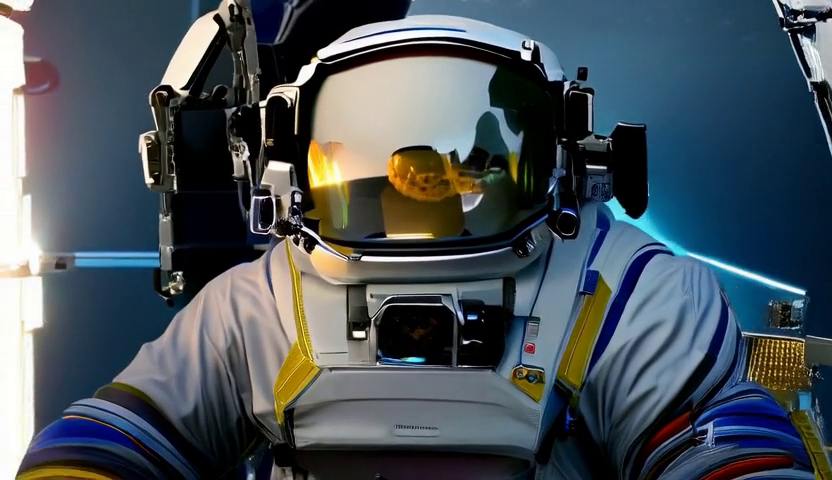} 
        \includegraphics[width=3.9cm,height=2.25cm]{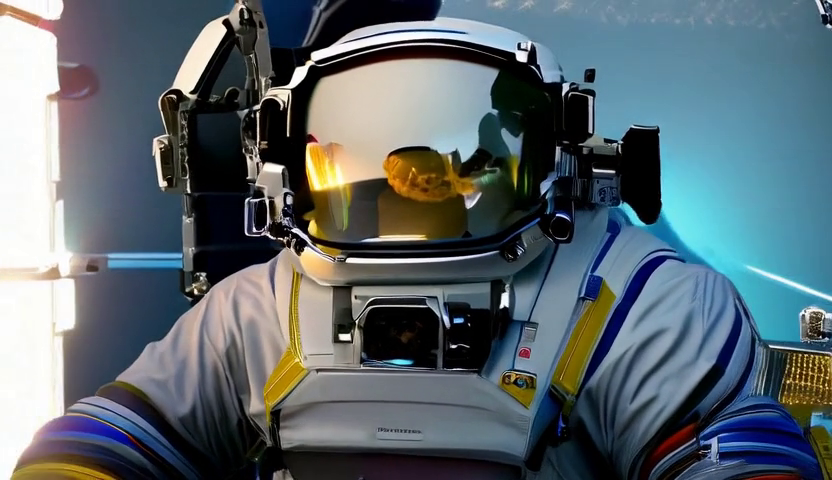} 
        \vspace{0.1cm}
        \includegraphics[width=3.9cm,height=2.25cm]{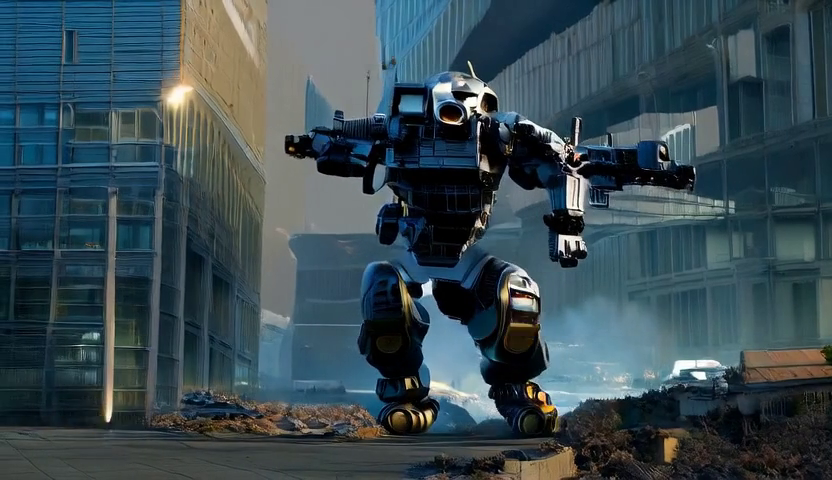} 
        \includegraphics[width=3.9cm,height=2.25cm]{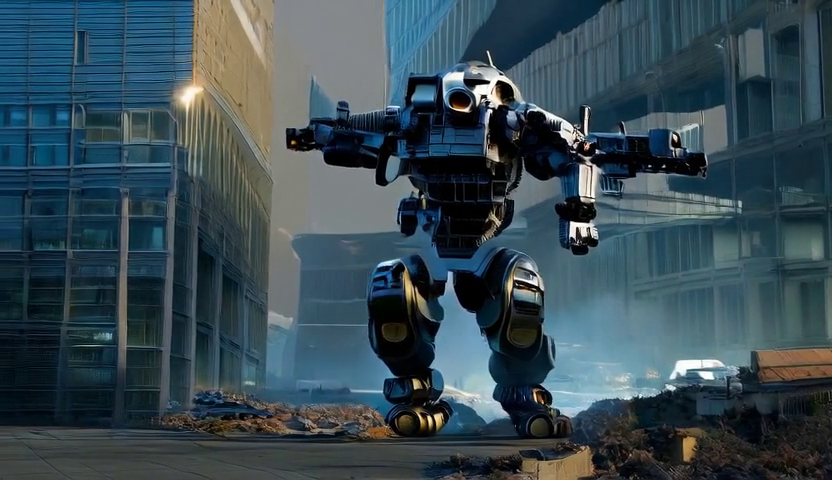} 
        \includegraphics[width=3.9cm,height=2.25cm]{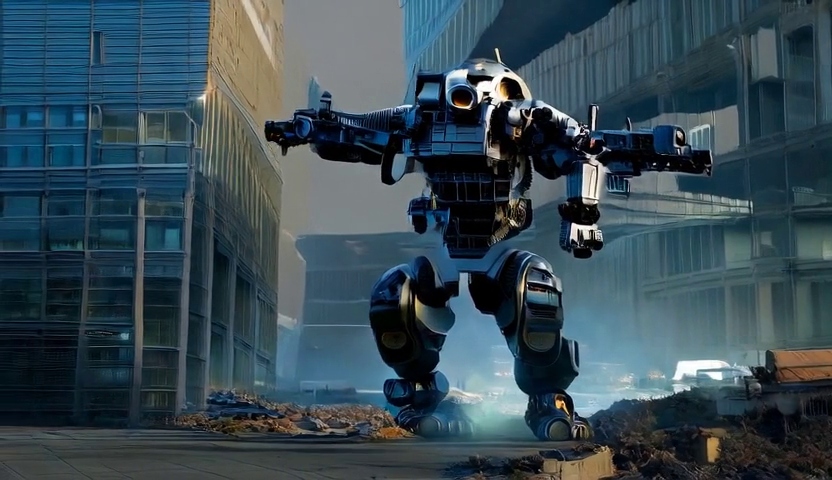} 
        \includegraphics[width=3.9cm,height=2.25cm]{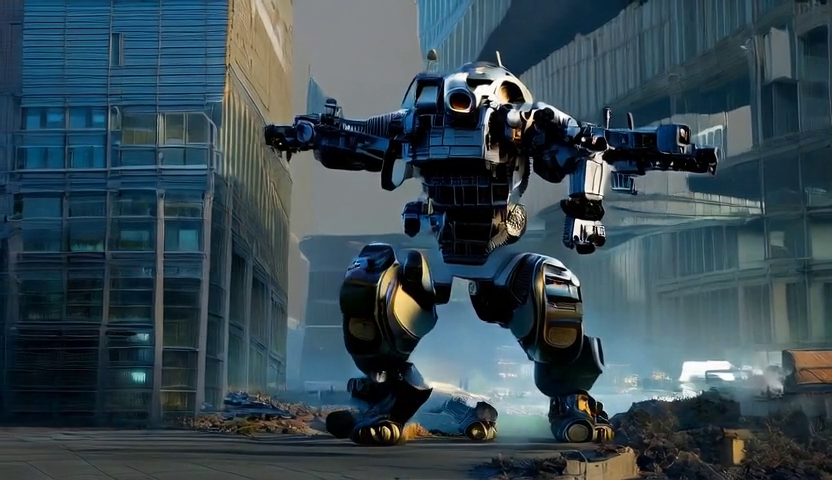} 
        \vspace{0.1cm}
        \includegraphics[width=3.9cm,height=2.25cm]{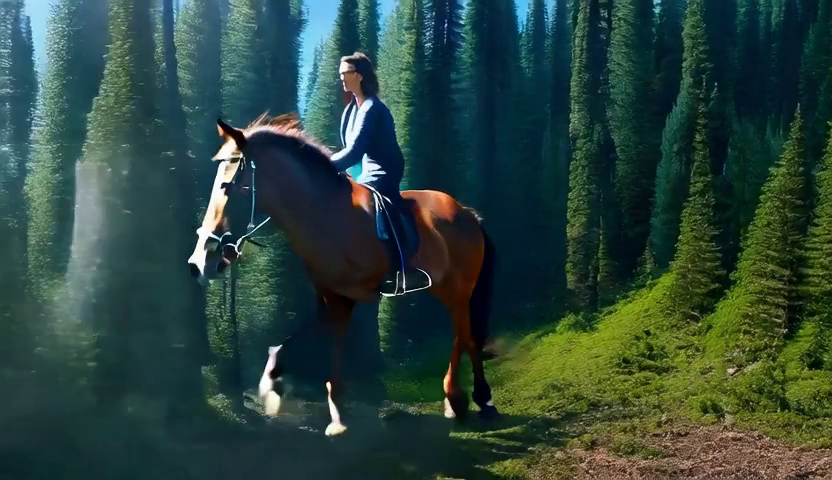} 
        \includegraphics[width=3.9cm,height=2.25cm]{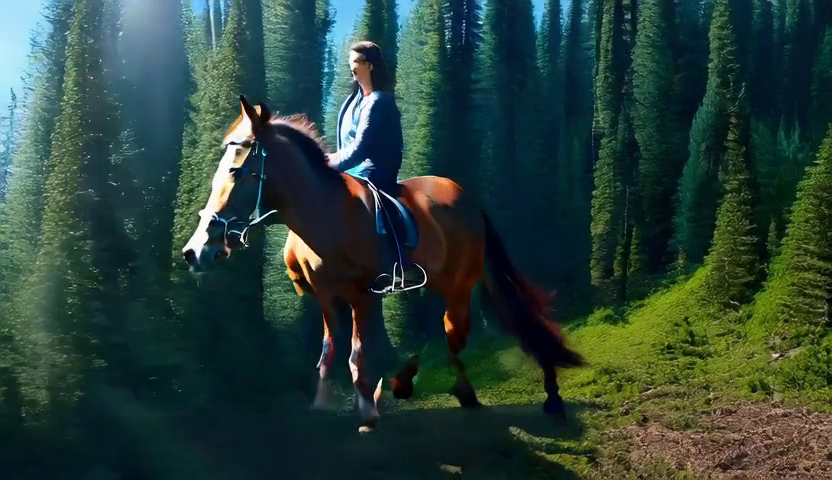} 
        \includegraphics[width=3.9cm,height=2.25cm]{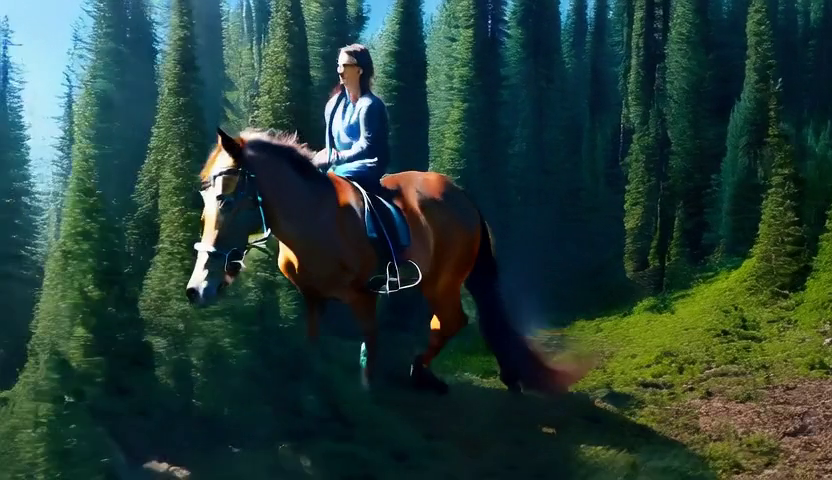} 
        \includegraphics[width=3.9cm,height=2.25cm]{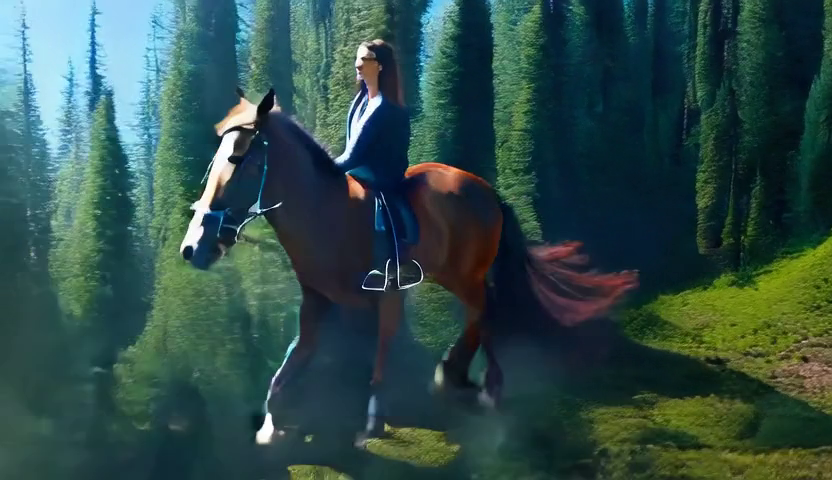} 
        \vspace{0.1cm}
        \includegraphics[width=3.9cm,height=2.25cm]{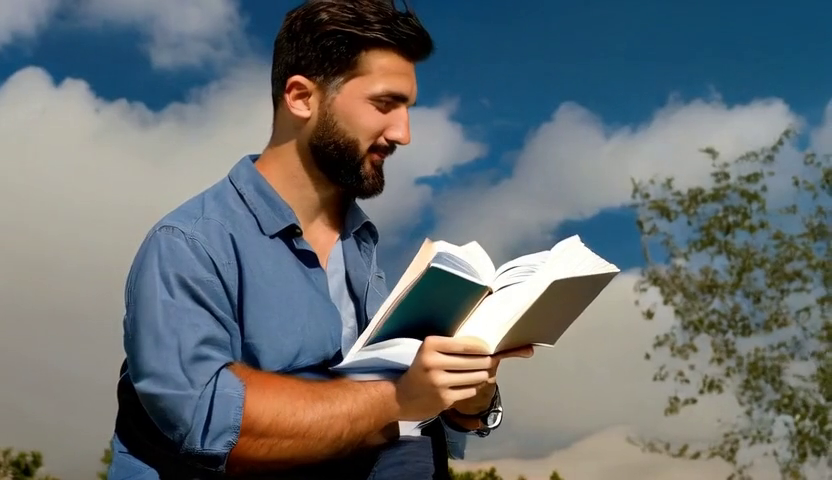} 
        \includegraphics[width=3.9cm,height=2.25cm]{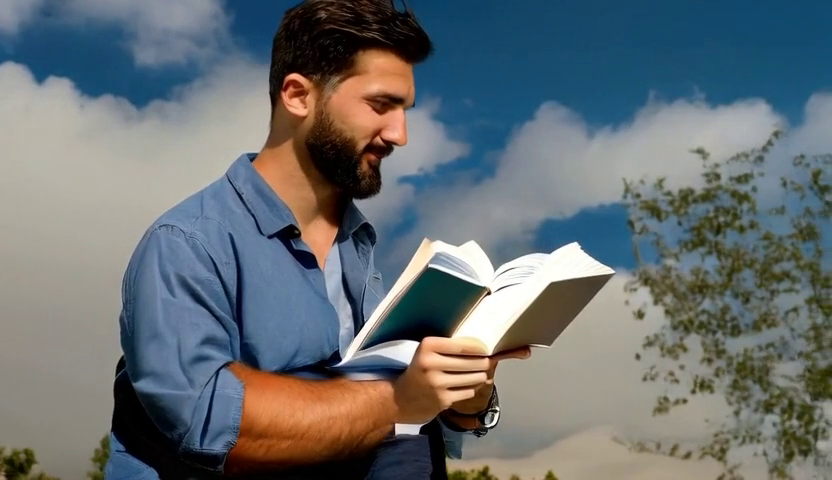} 
        \includegraphics[width=3.9cm,height=2.25cm]{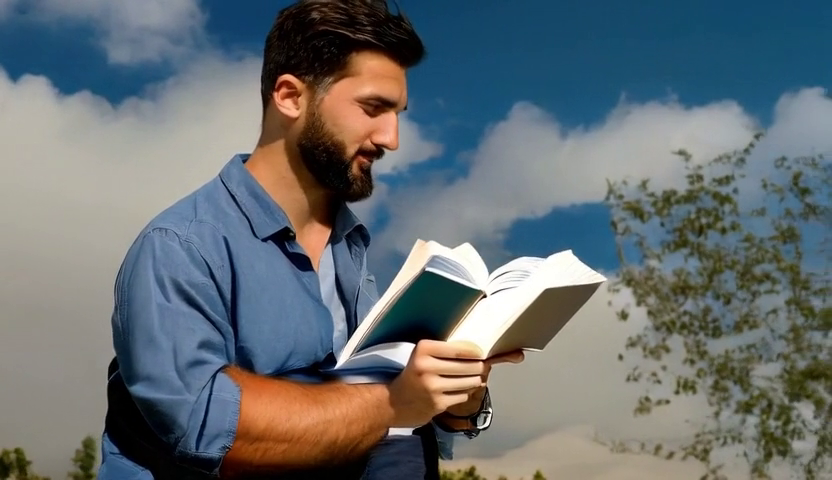} 
        \includegraphics[width=3.9cm,height=2.25cm]{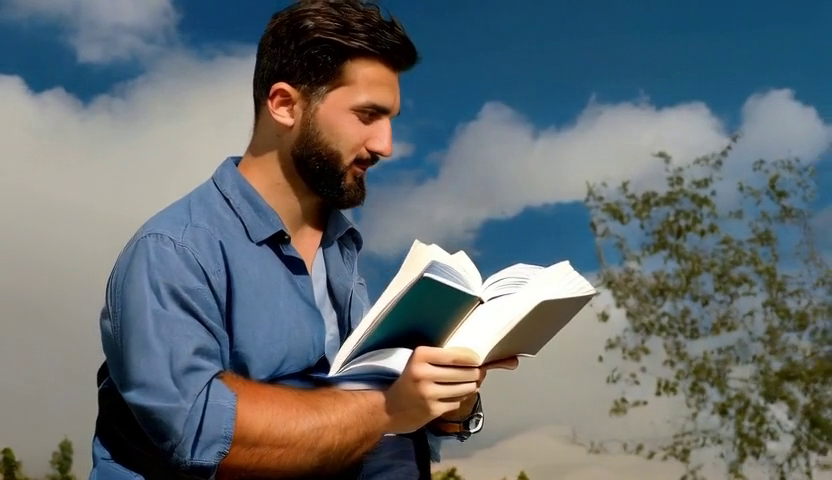} 
        \vspace{0.1cm}
        \includegraphics[width=3.9cm,height=2.25cm]{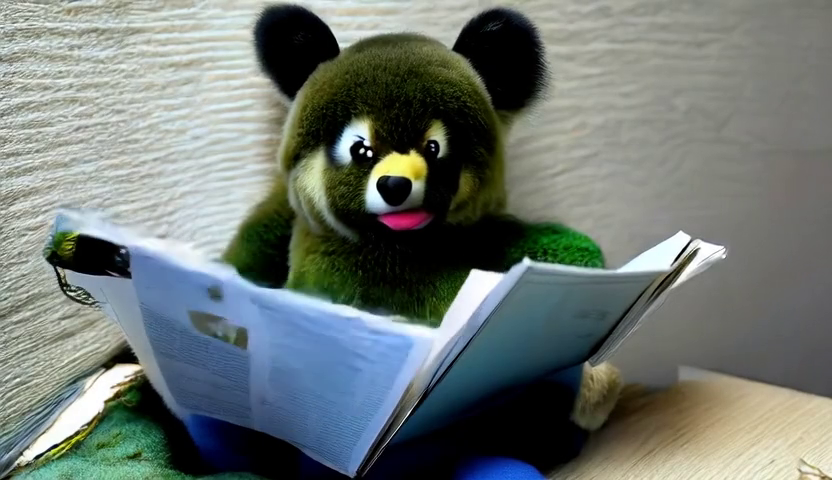} 
        \includegraphics[width=3.9cm,height=2.25cm]{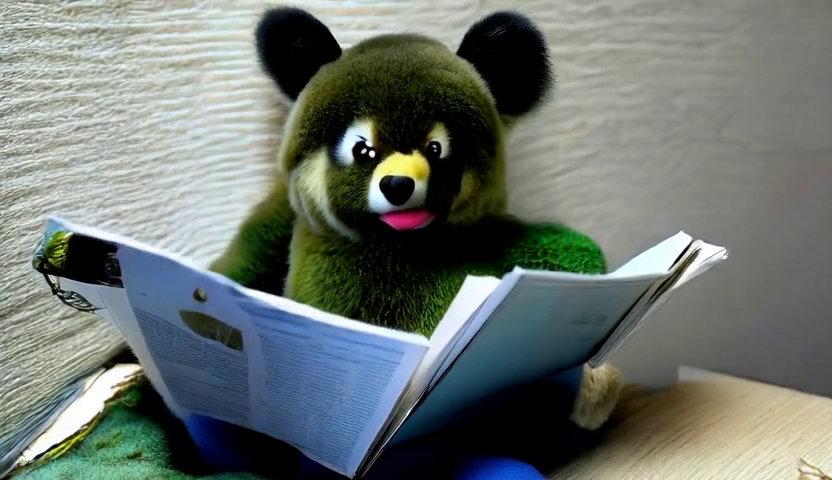} 
        \includegraphics[width=3.9cm,height=2.25cm]{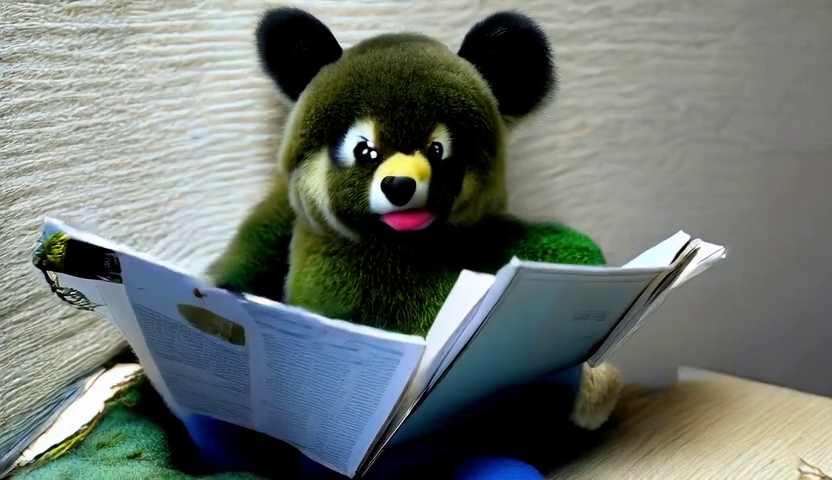} 
        \includegraphics[width=3.9cm,height=2.25cm]{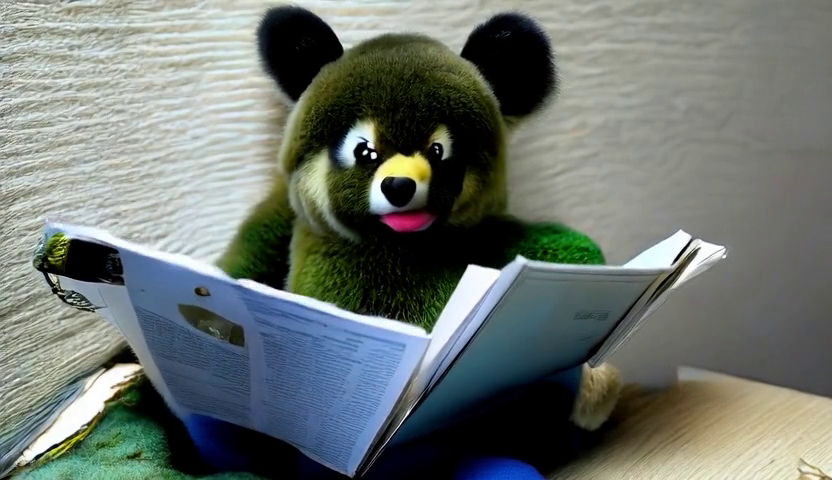} 
        \vspace{0.1cm}
        \includegraphics[width=3.9cm,height=2.25cm]{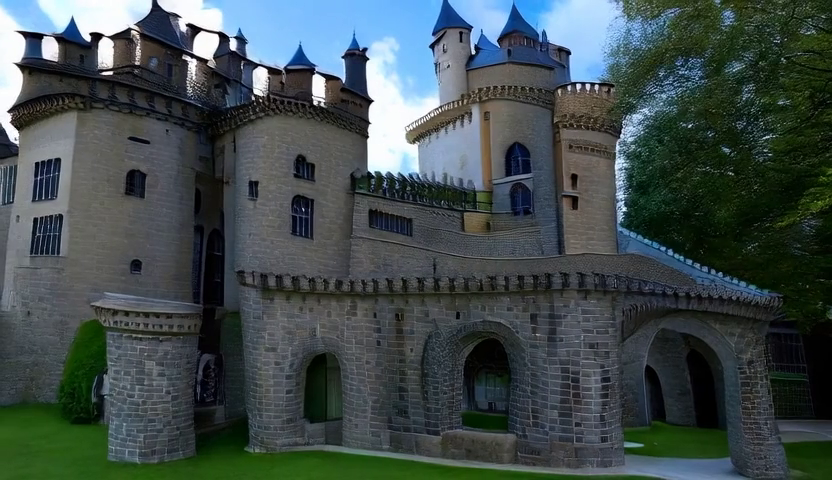} 
        \includegraphics[width=3.9cm,height=2.25cm]{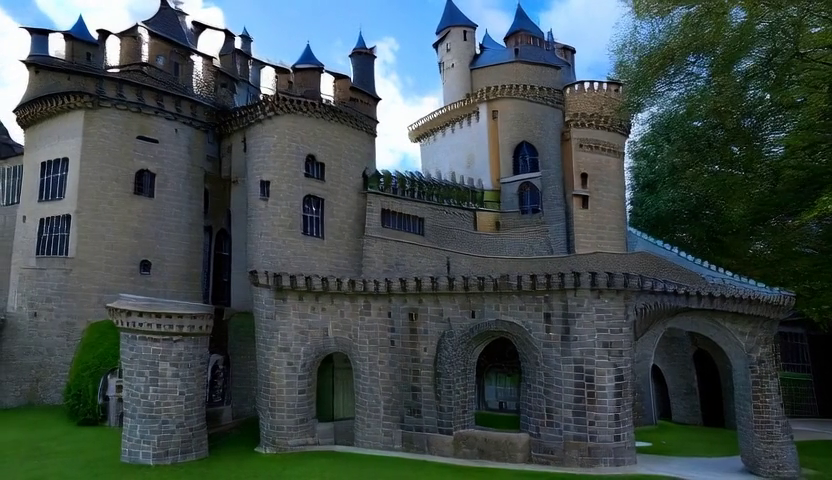} 
        \includegraphics[width=3.9cm,height=2.25cm]{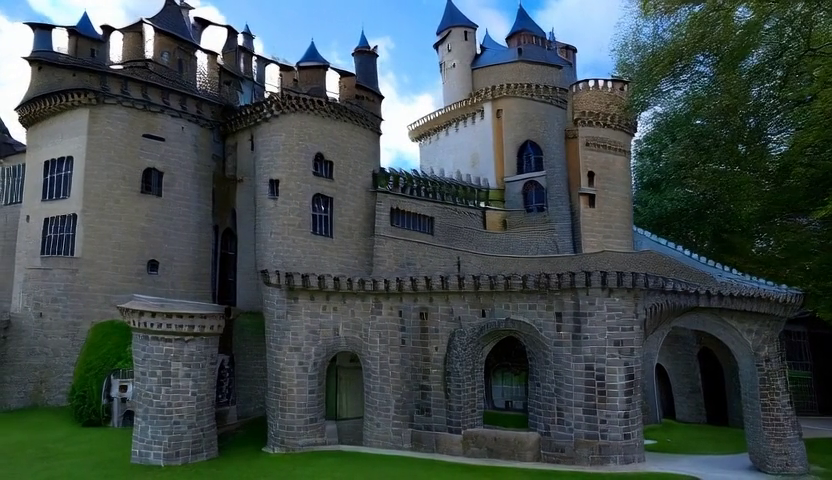} 
        \includegraphics[width=3.9cm,height=2.25cm]{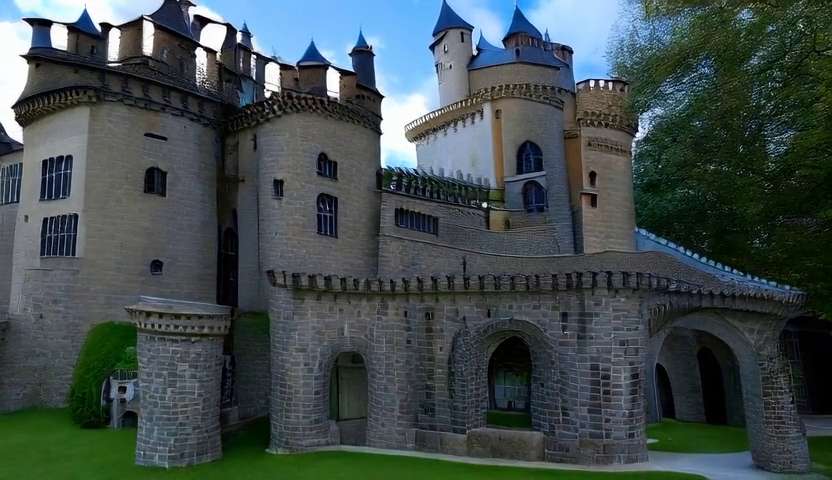} 
    
    \caption{Qualitative Examples \textbf{HLA-3F-R1-21}}
    \label{fig:qualitative.examples.hla.3f.r1.21}
\end{figure*}

\begin{figure*}[htbp]
    \centering
    
        \includegraphics[width=3.9cm,height=2.25cm]{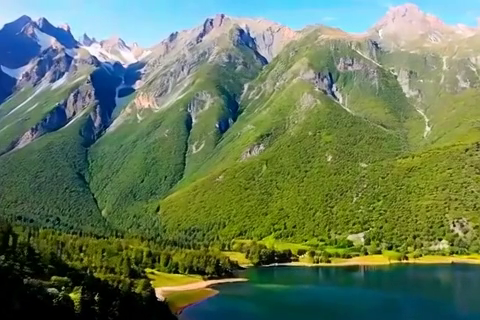} 
        \includegraphics[width=3.9cm,height=2.25cm]{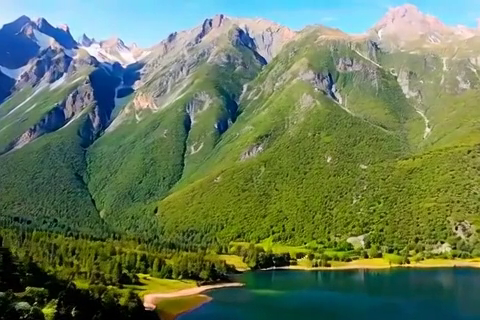} 
        \includegraphics[width=3.9cm,height=2.25cm]{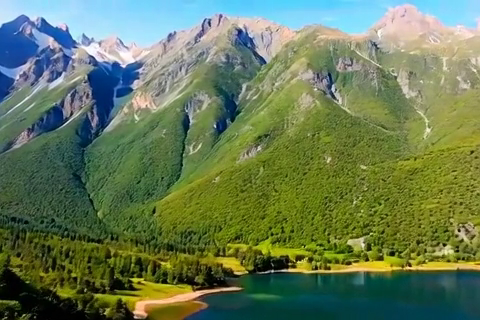} 
        \includegraphics[width=3.9cm,height=2.25cm]{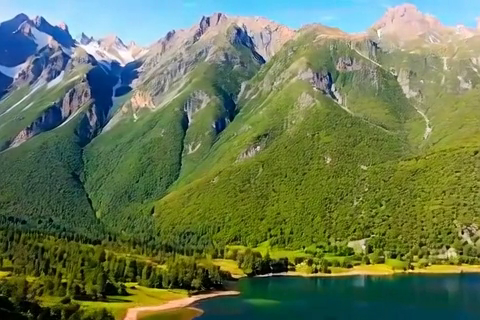} 
        \vspace{0.1cm}
        \includegraphics[width=3.9cm,height=2.25cm]{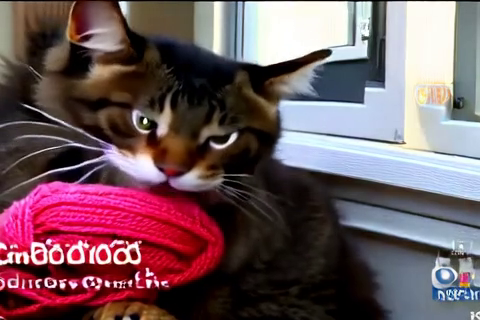} 
        \includegraphics[width=3.9cm,height=2.25cm]{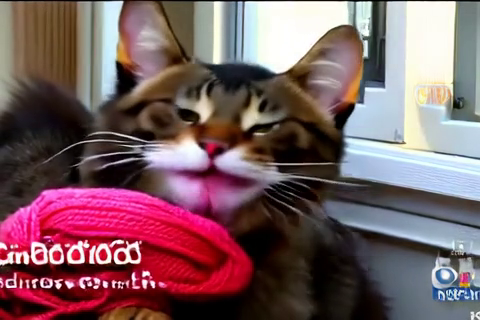} 
        \includegraphics[width=3.9cm,height=2.25cm]{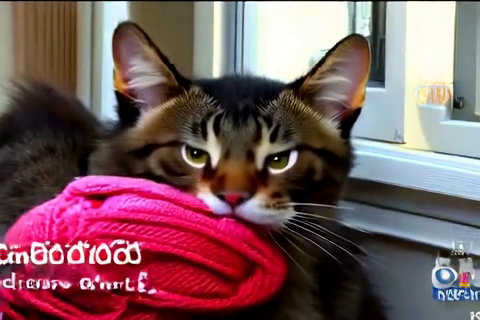} 
        \includegraphics[width=3.9cm,height=2.25cm]{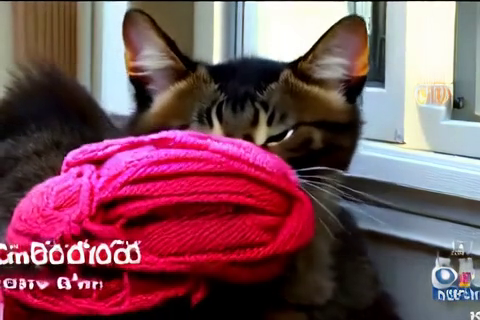} 
        \vspace{0.1cm}
        \includegraphics[width=3.9cm,height=2.25cm]{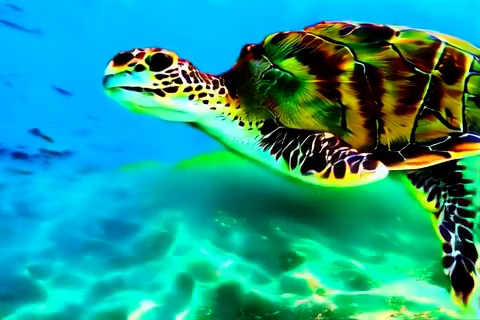} 
        \includegraphics[width=3.9cm,height=2.25cm]{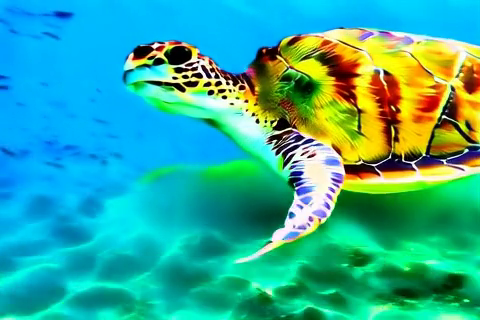} 
        \includegraphics[width=3.9cm,height=2.25cm]{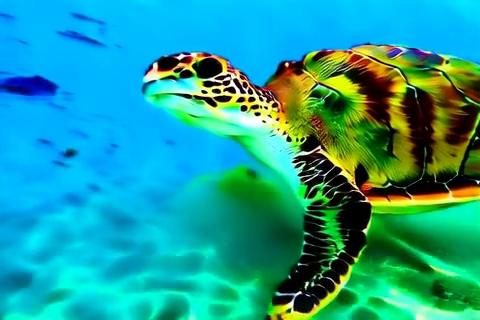} 
        \includegraphics[width=3.9cm,height=2.25cm]{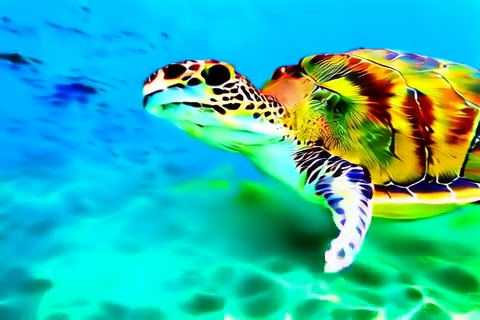} 
        \vspace{0.1cm}
        \includegraphics[width=3.9cm,height=2.25cm]{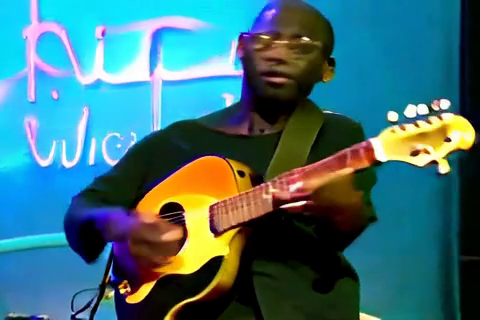} 
        \includegraphics[width=3.9cm,height=2.25cm]{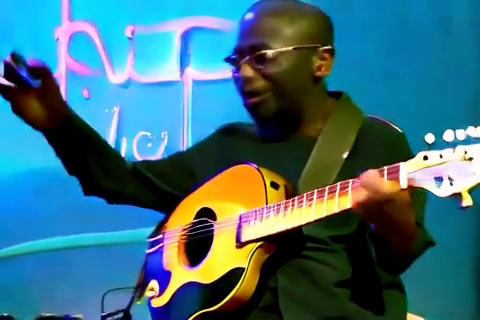} 
        \includegraphics[width=3.9cm,height=2.25cm]{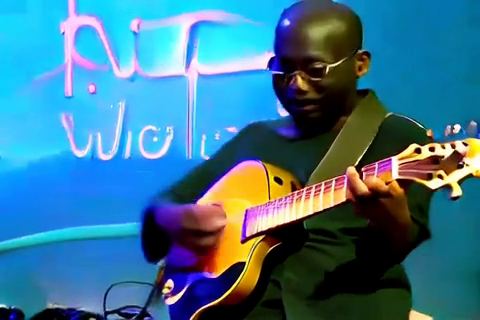} 
        \includegraphics[width=3.9cm,height=2.25cm]{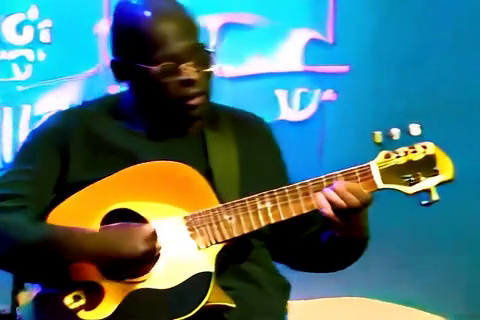} 
        \vspace{0.1cm}
        \includegraphics[width=3.9cm,height=2.25cm]{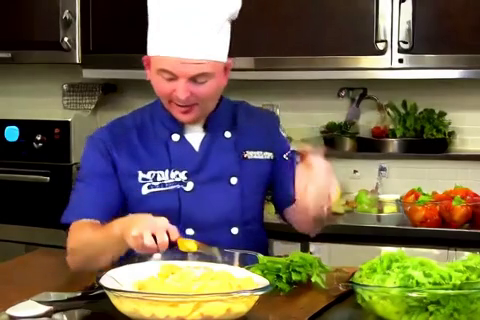} 
        \includegraphics[width=3.9cm,height=2.25cm]{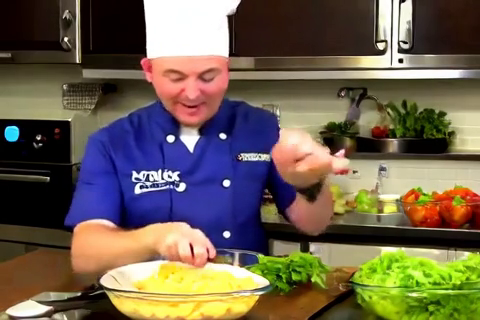} 
        \includegraphics[width=3.9cm,height=2.25cm]{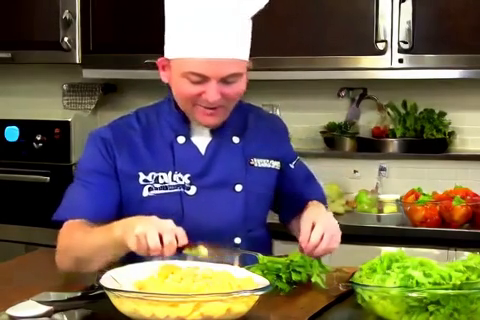} 
        \includegraphics[width=3.9cm,height=2.25cm]{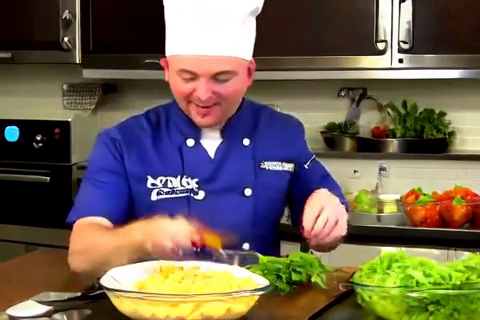} 
        \vspace{0.1cm}
        \includegraphics[width=3.9cm,height=2.25cm]{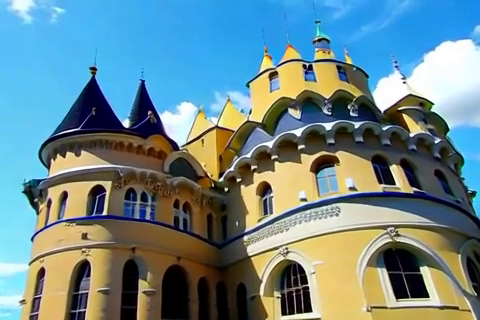} 
        \includegraphics[width=3.9cm,height=2.25cm]{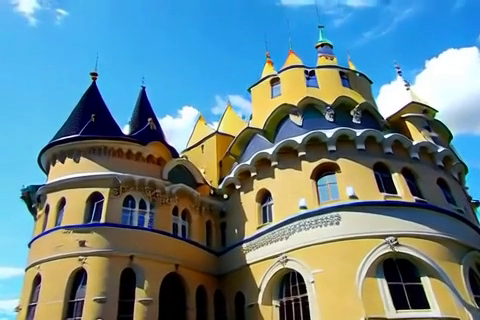} 
        \includegraphics[width=3.9cm,height=2.25cm]{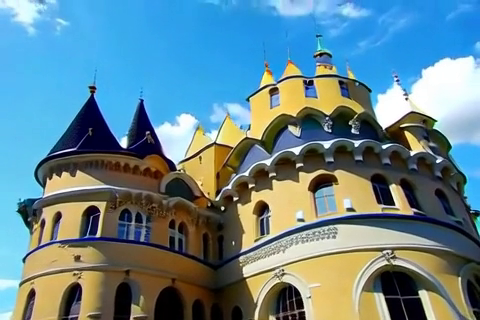} 
        \includegraphics[width=3.9cm,height=2.25cm]{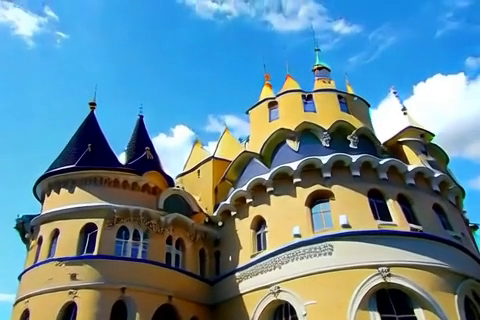} 
    
    \caption{Qualitative Examples \textbf{HLA-3F-R2-15}}
    \label{fig:qualitative.examples.hla.3f.r2.15}
\end{figure*}

\section{Code Listings}
We provide PyTorch-like code in listings~\ref{lst:hla-2f} and \ref{lst:hla-3f}. These algorithms do not require time-consuming tensor reshaping. They only require elementary computational operations.

\begin{figure*}[!h]
\centering
\begin{minipage}{0.95\linewidth}
\begin{pytorchbox}
def HLA_2_factors(query, key1, key2, value):
    # equivalent to ((Q@K1.t()) * (Q@K2.t())) @ V
    # (omitting normalization for simplicity)

    # batch size, number of heads, sequence length, d_phi
    bs, nh, sl, dp = query.shape
                 d = value.shape[-1]

    # compute outer products between query and key1, key2 tokens
    k_outerprod_k = torch.einsum('bhsd,bhse->bhsde',key1, key2)
    context       = torch.einsum(
                        'bhmde,bhmg->bhdeg',
                        k_outerprod_k, value
                    )
    q_outerprod_q = torch.einsum('bhsd,bhse->bhsde',query, query)

    # normalization
    sum_over_k    = k_outerprod_k.sum(dim=2, keepdim=True)
    eta           = torch.sum(
                        q_outerprod_q.view(bs,nh,sl,-1) 
                        * 
                        sum_over_k.view(bs,nh,1,-1),
                        dim=-1, keepdim=True
                    )
    q_outerprod_q = q_outerprod_q / (eta.view(bs,nh,sl,1,1)+eps)

    # memory efficient computation
    q_outerprod_q = q_outerprod_q.view(bs, nh, sl, dp**2)
    context       = context.view(bs, nh, dp**2, d)

    # output of linear attention (before ouput projection)
    attention     = torch.einsum(
                        'bhsd,bhde->bhse',
                        q_outerprod_q, context
                    )

    return attention
\end{pytorchbox}
\captionof{figure}{PyTorch-like code of Hadamard Linear Attention with 2 factors.}
\label{lst:hla-2f}
\end{minipage}
\end{figure*}

\begin{figure*}[ht]
\centering
\begin{minipage}{0.99\linewidth}
\begin{pytorchbox}
def HLA_3_factors(query, key1, key2, key3, value):
    # equivalent to ((Q@K1.t()) * (Q@K2.t()) * (Q@K3.t())) @ V
    # (omitting normalization for simplicity)

    # batch size, number of heads, sequence length, d_phi
    bs, nh, sl, dp = query.shape
                 d = value.shape[-1]

    # compute outer products between query and key1, key2 tokens
    k_outerprods  = torch.einsum(
                        'bhsd,bhse,bhsf->bhsdef',
                        key1, key2, key3
                    )
    context       = torch.einsum(
                        'bhmdef,bhmg->bhdefg',
                        k_outerprods, value
                    )
    q_outerprods  = torch.einsum(
                        'bhsd,bhse,bhsf->bhsdef',
                        query, query, query
                    )

    # normalization
    sum_over_k    = k_outerprods.sum(dim=2, keepdim=True)
    eta           = torch.sum(
                        q_outerprods.view(bs,nh,sl,-1) 
                        * 
                        sum_over_k.view(bs,nh,1,-1),
                        dim=-1, keepdim=True
                    )
    q_outerprods  = q_outerprods / (eta.view(bs,nh,sl,1,1,1)+eps)

    # memory efficient computation
    q_outerprods  = q_outerprods.view(bs, nh, sl, dp**3)
    context       = context.view(bs, nh, dp**3, d)

    # output of linear attention (before ouput projection)
    attention     = torch.einsum(
                        'bhsd,bhde->bhse',
                        q_outerprods, context
                    )

    return attention
\end{pytorchbox}
\caption{PyTorch-like code of Hadamard Linear Attention with 3 factors.}
\label{lst:hla-3f}
\end{minipage}
\end{figure*}


\end{document}